%% file: main.tex
\documentclass{article}

\usepackage{microtype}
\usepackage{graphicx}
\usepackage{subcaption}
\usepackage{booktabs} 

\usepackage{hyperref}

 \usepackage[accepted]{icml2026}

\usepackage{amsmath}
\usepackage{amssymb}
\usepackage{mathtools}
\usepackage{amsthm}

\usepackage[capitalize,noabbrev]{cleveref}

\theoremstyle{plain}
\newtheorem{theorem}{Theorem}[section]

\newtheorem{lemma}[theorem]{Lemma}

\theoremstyle{definition}
\newtheorem{definition}[theorem]{Definition}

\theoremstyle{remark}
\newtheorem{remark}[theorem]{Remark}

\usepackage{amsmath,amssymb}
\usepackage{bbold}
\usepackage{bm}
\usepackage{enumitem}
\usepackage{multirow}
\usepackage{multicol}

\usepackage[T1]{fontenc}

\usepackage{etoc}       

\makeatletter
\providecommand\@currentHref{}
\def\addcontentsline#1#2#3{%
  \addtocontents{#1}{\protect\contentsline{#2}{#3}{\thepage}{\@currentHref}}%
}
\makeatother

\usepackage{xcolor}
\usepackage{upquote}      
\usepackage{inconsolata}  
\usepackage{listings}
\usepackage{adjustbox}    

\lstdefinestyle{py}{
  language=Python,
  basicstyle=\ttfamily\footnotesize,      
  keywordstyle=\bfseries\color{blue!60!black},
  commentstyle=\itshape\color{teal!60!black},
  stringstyle=\color{orange!60!black},
  showstringspaces=false,
  numbers=left,
  numbersep=6pt,
  frame=single,
  rulecolor=\color{black!40},
  framesep=3pt,
  tabsize=2,
  breaklines=true,
  breakatwhitespace=false,
  columns=fullflexible,
  upquote=true
}

\lstnewenvironment{pycode}[1][]%
  {\lstset{style=py,#1}}%
  {}

\definecolor{purp}{RGB}{88,24,124}

\newcommand{\secref}[1]{Sec.~\ref{#1}}
\newcommand{\figref}[1]{Fig.~\ref{#1}}

\newcommand{\hpow}[2]{#1^{\circ #2}}                   

\newcommand{\by}{\mathbf{y}}
\newcommand{\bt}{\mathbf{t}}
\newcommand{\bff}{\mathbf{f}}
\newcommand{\bx}{\mathbf{x}}
\newcommand{\bxref}{\mathbf{x}_{\rm ref}}
\newcommand{\bz}{\mathbf{z}}
\newcommand{\bZ}{\mathbf{Z}}
\newcommand{\bX}{\mathbf{X}}

\newcommand{\DD}{\mathcal{D}}
\newcommand{\XX}{\mathcal{X}}
\newcommand{\OO}{\mathcal{O}}
\newcommand{\NN}{\mathcal{N}}
\newcommand{\RR}{\mathbb{R}}

\newcommand{\bbeta}{\bm{\beta}}
\newcommand{\bomega}{\bm{\omega}}
\newcommand{\bomegat}{\tilde{\bm{\omega}}}
\newcommand{\omegat}{\tilde{{\omega}}}
\newcommand{\bh}{\mathbf{h}}
\newcommand{\bht}{\tilde{\mathbf{h}}}
\newcommand{\bW}{\mathbf{W}}
\newcommand{\TT}{\mathcal{T}}
\newcommand{\bC}{\mathbf{C}}
\newcommand{\bL}{\mathbf{L}}
\newcommand{\bS}{\mathbf{S}}
\newcommand{\bU}{\mathbf{U}}
\newcommand{\bD}{\mathbf{D}}

\newcommand{\SSS}{\mathcal{S}}

\newcommand{\klin}{k_{\rm lin}}
\newcommand{\knl}{k_{\rm nl}}
\newcommand{\kLlin}{k_{\rm lin}^{\text{\tiny LOCK}}}
\newcommand{\kLlinp}{\tilde{k}_{\rm lin}^{\text{\tiny LOCK}}}
\newcommand{\kLnl}{k_{\rm nl}^{\text{\tiny LOCK}}}
\newcommand{\kL}{k^\text{\tiny LOCK}}

\newcommand{\sprod}{\mathop{\textstyle\prod}}

\newcommand{\eqnref}[1]{Eqn.~\ref{#1}}

\icmltitlerunning{Flexible Kernels for Protein Property Prediction}

\begin{document}

\twocolumn[
\icmltitle{Flexible Kernels for Protein Property Prediction}


\icmlsetsymbol{equal}{*}

\begin{icmlauthorlist}
\icmlauthor{Martin Jankowiak}{genb}  
\icmlauthor{Yerdos Ordabayev}{genb}
\icmlauthor{Rudraksh Tuwani}{genb}
\icmlauthor{Henry N.~Ward}{genb}
\icmlauthor{Hunter Nisonoff}{genb}
\icmlauthor{James M.~McFarland}{genb}
\icmlauthor{Gevorg Grigoryan}{genb}
\end{icmlauthorlist}

\icmlaffiliation{genb}{Generate Biomedicines, Somerville, MA, USA}

\icmlcorrespondingauthor{Martin Jankowiak}{jankowiak@gmail.com}
\icmlcorrespondingauthor{Gevorg Grigoryan}{gevorg@generatebiomedicines.com}

\icmlkeywords{Gaussian processes, sequence kernels, protein property prediction}

\vskip 0.3in
]



\printAffiliationsAndNotice{}  

\begin{abstract}
Despite its importance to applications in protein design, predicting protein properties like binding affinity and 
thermostability from sparse experimental data remains a significant challenge.
Accordingly, we introduce a class of sequence kernels that exploit evolutionary substitution matrices as well as local linearity and
demonstrate that the resulting Gaussian processes provide data-efficient models of protein property landscapes, frequently outperforming
alternatives that rely on foundation model embeddings.
Furthermore---by learning what are in effect structure-aware substitution matrices---we show that our kernels can readily incorporate
structural information from foundation models.
We demonstrate that these structure-conditioned kernels are well suited to multi-task learning across multiple protein property landscapes
and can decisively outperform local supervised learning methods. 
\end{abstract}

\section{Introduction}
\label{sec:intro}

\input{intro.tex}

\section{Background}
\label{sec:bg}

\input{bg.tex}

\section{Flexible Protein Sequence Kernels}
\label{sec:method}

\input{method.tex}

\section{Related Work}
\label{sec:related}

\input{related.tex}

\section{Experiments}
\label{sec:exp}

\input{exp.tex}

\section{Discussion}
\label{sec:disc}

\input{discussion.tex}

\section*{Acknowledgements}

We thank the anonymous reviewers for their constructive feedback and suggestions.

\section*{Impact Statement}

This work aims to improve data-efficient prediction of protein properties, 
with potential benefits for therapeutic discovery and industrial enzyme engineering. 
Like other methods that accelerate protein design, it could potentially be misused to facilitate harmful applications. 
Responsible use, appropriate oversight, and adherence to relevant regulations are therefore important.

\bibliography{ref}
\bibliographystyle{icml2026}


\newpage
\appendix
\onecolumn

\section{Appendix}
\label{sec:app}
\etocsettocstyle{\section*{Table of Contents}}{}
\etocsetnexttocdepth{subsection}
\localtableofcontents

\input{app.tex}

\end{document}

%% file: intro.tex
The ability to accurately predict protein properties in lieu of measuring them is an essential tool in protein design.
While much recent attention has focused on all-atom generative models, 
the utility of high-fidelity discriminators based on structure predictors like AlphaFold2 \cite{Jumper2021AlphaFold} is no less evident \cite{pacesa2025one},
and we contend that the entire generative-to-discriminative continuum is essential to the toolkit of the well-equipped protein designer.
With that motivation in mind, we set out to develop accurate models of protein property landscapes. 
We are particularly interested in methods that are: i) data-efficient; ii) endowed with high-quality uncertainty estimates; 
and iii) well-suited to multi-task learning across multiple landscapes. 
Below we argue theoretically and demonstrate empirically that Gaussian processes (GPs) with suitably constructed kernels are a good fit for all three criteria. 
Our contributions include:
\begin{enumerate}[itemsep=0pt, topsep=0pt]
    \item comprehensive benchmarking of $30+$ protein property predictors on $21$ datasets in three different regimes
    \item a sequence-only GP that frequently outperforms more complex (structure-conditioned) predictors that rely on 
        foundation models with millions of parameters
    \item a simple but powerful recipe for training multi-task GPs that leverage foundation model embeddings 
        to form zero-shot structure-conditioned kernels
\end{enumerate}

%% file: bg.tex
\subsection{Problem Description}
\label{sec:problem}

An \emph{empirical protein property landscape} consists of $N$ protein sequences $\{ \bx_n \}$, where each protein is associated with a scalar property $t_n$.
Each sequence is assumed to be of a fixed length\footnote{It is straightforward to handle variable length proteins, provided
that sequences are aligned and we introduce gap tokens.} $L$ over a fixed alphabet of size $A$.
Throughout we assume a one-hot sequence encoding in which $\bx$ is represented
as a binary matrix of shape $L \times A$ with entries $x_{\ell a}$.
Additionally, each landscape is optionally associated with
a reference sequence $\bx_{\rm ref}$ and 
a corresponding reference structure $\SSS$.
Our aim is to learn a predictive model that can predict $t$ for any 
sequence $\bx$.\footnote{In \secref{sec:multi} we consider multi-task models trained on multiple related protein property landscapes.}

\subsection{Gaussian Processes}
\label{sec:gpbg}

GPs offer powerful non-parametric function priors that often perform well in data scarce regimes.
A GP on the input space $\XX$ is specified
 by a covariance function or kernel $k:\XX \times \XX \to \mathbb{R}$ \cite{rasmussen2003gaussian}.
In order to be a valid kernel the $N\times N$ matrix $k(\bX, \bX)$ must be a 
valid covariance matrix---i.e.~positive semidefinite---for all $\bX = \{ \bx_n \}_{i=n}^N$ with $\bx_n \in \XX$.
For scalar regression\footnote{Apart from the experiments in \secref{sec:binary} we focus on regression.} $f: \XX \to \mathbb{R}$ the joint density of a GP with zero prior mean takes the form
\begin{equation}
\label{eqn:jointyf}
p(\bt, \bff | \bX) = \NN(\bt|\bff, \sigma_n^2 \mathbb{1}_N) \NN(\bff | \bm{0}, K_{\bX\bX})
\end{equation}
where $\bt$ are the real-valued targets, $\bff$ are the latent function values,
$\sigma_n^2$ is the variance of the Normal likelihood $\NN(\bt|\cdot)$,
and $K_{\bX\bX}$ is the $N \times N$ kernel matrix.
The marginal likelihood of the observed data can be computed in closed form
\begin{align}
\label{eqn:mll}
p(\bt|\bX) = \int \! d \bff \; p(\bt, \bff | \bX) = \NN(\bt | \bm{0}, K_{\bX\bX} + \sigma_n^2 \mathbb{1}_N)
\end{align}
and can be maximized with gradient methods to fit kernel hyperparameters.
The posterior predictive distribution of the GP at a query point $\bx^* \in \XX$ is the
Normal distribution $\NN(\mu_\bff(\bx^*), \sigma_\bff(\bx^*)^2)$
where $\mu_\bff(\cdot)$ and $\sigma_\bff(\cdot)^2$ are given by
\begin{align}
\label{eqn:gppredmean}
\mu_\bff(\bx^*) &= {k_{* \bX}}^{\rm T}  {(K_{\bX\bX} + \sigma_n^2 \mathbb{1}_N)}^{-1}\bt    \\
\label{eqn:gppredvar}
\sigma_\bff(\bx^*)^2 &=  k_{**} - {k_{* \bX}}^{\rm T}  {(K_{\bX\bX} + \sigma_n^2 \mathbb{1}_N)}^{-1}    {k_{* \bX}}
\end{align}
and where e.g.~$k_{**} \equiv k(\bx^*,\bx^*)$.
Computing \eqnref{eqn:mll}-\ref{eqn:gppredvar} is cubic in the number of data points $N$. 
Unless noted otherwise, we use the posterior mean \eqnref{eqn:gppredmean} for prediction.

\subsection{Elementary Sequence Kernels}
\label{sec:simpleseq}

Two of the simplest kernels that can be defined on one-hot sequence space are the linear and RBF kernel.
The (isotropic) linear kernel is defined as 
\begin{equation}
    \label{eqn:isolin}
    \klin(\bx, \by) = \bx^{\rm T} \by = \sum_{\ell=1}^L \bx_\ell^{\rm T} \by_\ell = 
    \sum_{\ell=1}^L \sum_{a=1}^A x_{\ell a} y_{\ell a}
\end{equation}
and can be written in terms of the Hamming distance $d_{\rm H}$ as  
\begin{equation}
    \label{eqn:isolin2}
    \klin(\bx, \by) = L - d_{\rm H}(\bx, \by) \in \{0, 1, ..., L\}
\end{equation}
The kernel in \eqnref{eqn:isolin} is called linear, since it corresponds to Bayesian linear 
regression with $f(\bx) = \bbeta \cdot \bx$, where an isotropic Gaussian prior has been placed on the coefficients
$\bbeta \in \RR^{L \times A}$. 
In other words it assumes no prior structure among the coefficients $\beta_{\ell a}$.
This means that, for example, observing a mutation $8A\to 8V$ does not inform the effects of other mutations like $8A\to 8I$.

In contrast to \eqnref{eqn:isolin}, which has an additive structure, the RBF kernel has a multiplicative structure:
\begin{equation}
    \label{eqn:rbfkernel}
    k_{\rm rbf}(\bx, \by) = \prod_{\ell=1}^L \exp \left(-\tfrac{1}{2} ||\bx_\ell - \by_\ell||^2 / \tau_\ell^2\right)
\end{equation}
where $\tau_\ell >0$ is a lengthscale. 
Unlike $\klin$, the RBF kernel decays exponentially as $d_{\rm H}(\bx, \by)$ increases.
Since $||\bx_\ell - \by_\ell||^2 \in \{0, 2\}$ for one-hot sequences, \eqnref{eqn:rbfkernel} is more naturally written as 
\begin{equation}
    \label{eqn:rbfkernel2}
    k_{\rm rbf}(\bx, \by) = \prod_{\ell=1}^L \exp (\tfrac{-1}{\tau_\ell^2})^{\mathbb{1}(\bx_\ell \ne \by_\ell)} 
    = \prod_{\ell=1}^L \rho_\ell^{\mathbb{1}(\bx_\ell \ne \by_\ell)}
\end{equation}
where $\rho_\ell\equiv \exp (-1 / \tau_\ell^2) < 1$ and $\mathbb{1}(\bx_\ell \ne \by_\ell)\in \{0,1\}$ encodes whether 
sequences $\bx$ and $\by$ have the same amino acid at position $\ell$. That is to compute $k_{\rm rbf}(\bx, \by)$ you
take all the positions where $\bx$ and $\by$ differ and multiply together the corresponding $\rho_\ell$, with small $\rho_\ell$
corresponding to low similarity between $\bx_\ell$ and $\by_\ell$. 
Since the RBF kernel does not distinguish between different amino acids at a given position, 
observing a mutation $8A\to 8V$ does not inform the effects of other mutations like $8A\to 8I$.

To connect with familiar concepts, we have introduced the RBF kernel via its canonical form in \eqnref{eqn:rbfkernel}, 
which is equally applicable to euclidean inputs $\bx$. 
In the context of one-hot encoded sequences $\bx$, however, this framing is a bit misleading, since it obscures that for such 
discrete $\bx$ 
the RBF kernel is best viewed in terms of a product over covariance matrices:
\begin{align}
    \label{eqn:rbfkernel3}
    k_{\rm rbf}(\bx, \by) = \prod_{\ell=1}^L S_{\ell a_\ell(\bx) a_\ell(\by)} 
     = \prod_{\ell=1}^L \bx_\ell^{\rm T} \bS_{\ell} \by_\ell
\end{align}
Here each $\bS_\ell$ is a covariance matrix of size $A \times A$, and
$a_\ell(\bx)$ is the amino acid index of sequence $\bx$ at position $\ell$.
Viewed through this lens \eqnref{eqn:rbfkernel2} is a special case of \eqnref{eqn:rbfkernel3} where $\bS_\ell$ is a correlation matrix
with ones along the diagonal and all off-diagonal entries given by $\rho_\ell$.
For generic correlation matrices $\bS_{\ell}$, \eqnref{eqn:rbfkernel3} can be understood as an \emph{anisotropic} RBF kernel;
we will make use of this straightforward generalization in \secref{sec:kernel}.

%% file: method.tex
Before we introduce our kernel in \secref{sec:kernel}-\ref{sec:kernelprop}, we lay important foundations in \secref{sec:sub}-\ref{sec:loclin}. 
We describe the priors we place on kernel hyperparameters in \secref{sec:hyper}. 
Then in \secref{sec:clock} we show how to generalize our construction to \emph{structure-conditioned} kernels.

\subsection{Substitution Matrices}
\label{sec:sub}

\begin{figure}[t]
  \centering
    \includegraphics[width=0.475 \columnwidth]{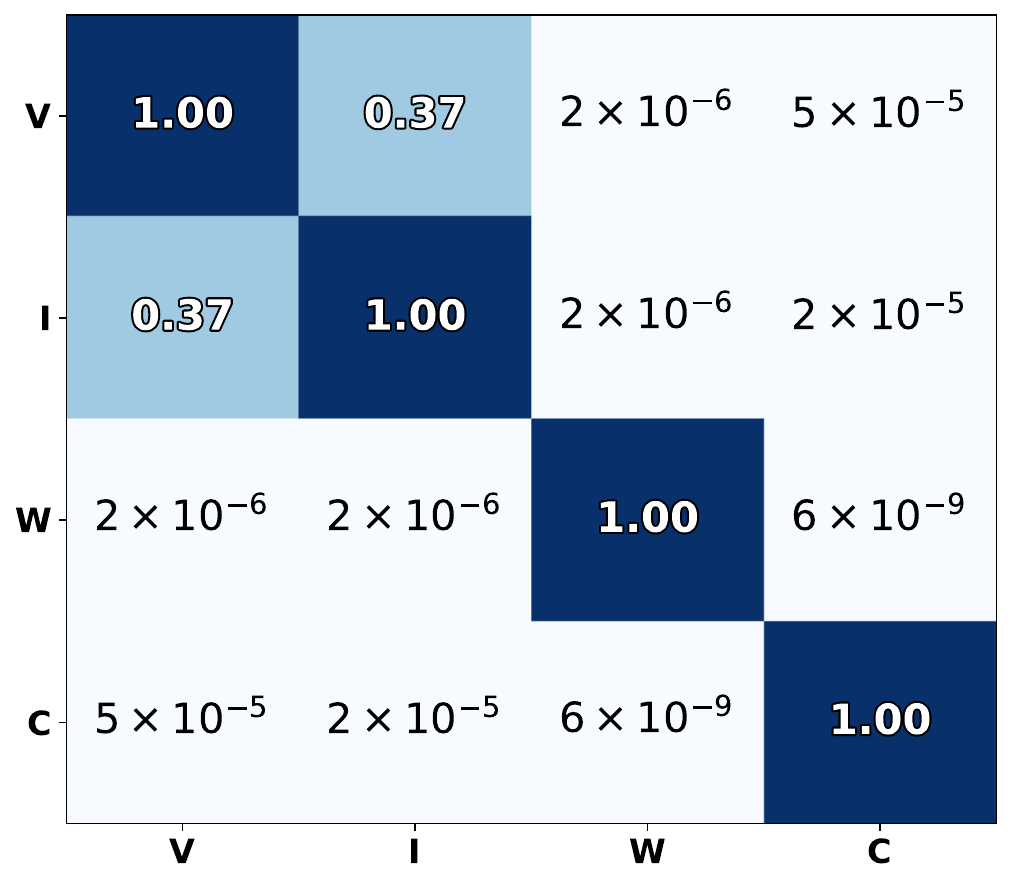}  
    \includegraphics[width=0.503 \columnwidth]{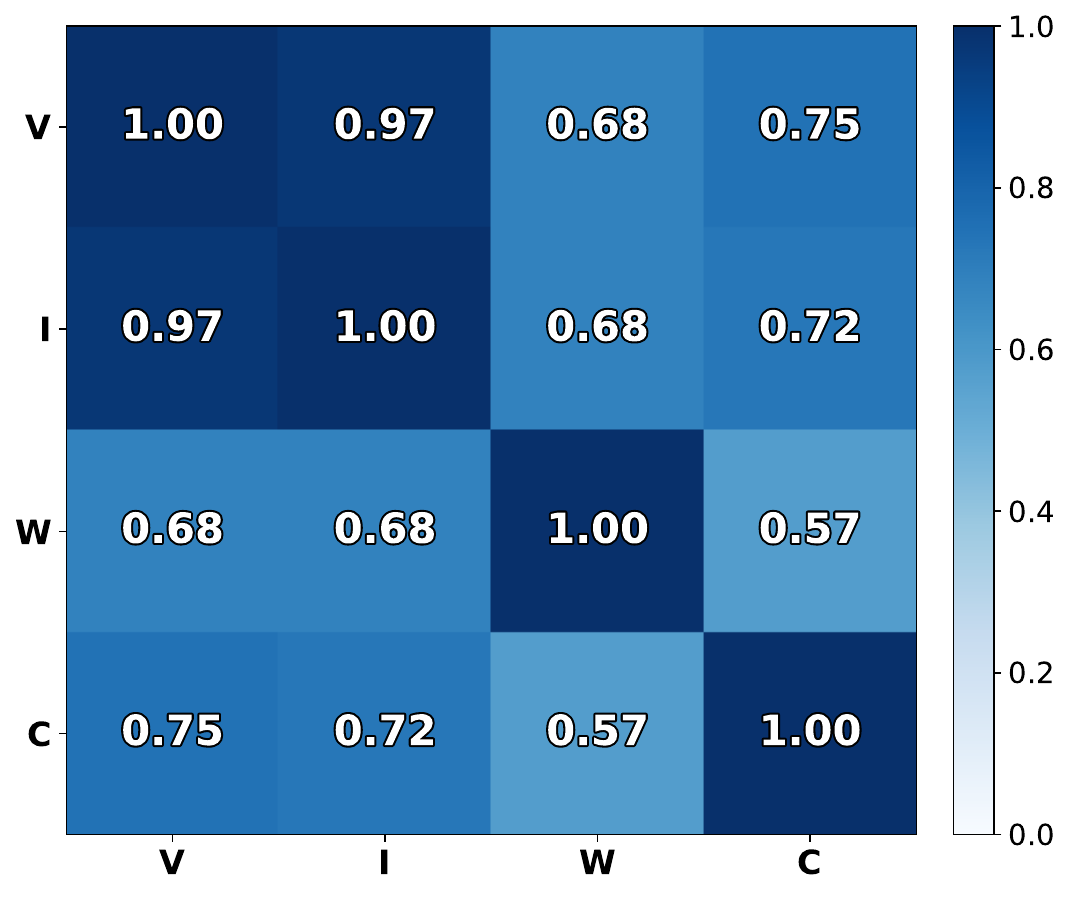}    
  \caption{The BLOSUM50 substitution matrix as a correlation matrix. 
    We zoom-in on four amino acids, two of which are biophysically similar (valine and isoleucine) and two
    of which are largely dissimilar to all other amino acids (tryptophan and cysteine). 
    The matrix on the right has been raised to the power $0.03$.
    For details on the normalization scheme we use refer to \secref{app:lock}. 
    For the full matrix see \figref{fig:app:blosum-corr-full}.
    }
  \label{fig:blosum-corr}
\end{figure}

We begin with a short discussion of substitution matrices,\footnote{In the bioinformatics literature substitution matrices are usually
reported as log-odds scores $M_{ij}=\log\!\big(\tfrac{q_{ij}}{p_i p_j}\big)$
\citep{altschul1991amino}. We work with their elementwise exponentiated form
$S_{ij}\equiv e^{M_{ij}}$ (sometimes called an odds-ratio matrix), and throughout we refer to $S$ as the substitution matrix.}
a classical bioinformatics 
tool that quantifies how likely one amino acid is to replace another during evolution.
Substitution matrices are derived empirically by counting substitutions in aligned homologous sequences and converting those frequencies into log-odds scores.
In practice we find it convenient to work with the correlation matrix form of substitution matrices, i.e.~for substitution
matrix $\bS$ we define the corresponding correlation matrix as
\begin{equation}
\label{eqn:corrdefn}
C_{aa^\prime} \equiv S_{aa^\prime} / \sqrt{S_{aa} S_{a^\prime a^\prime}} 
\end{equation}
In this way, an RBF-like kernel as in \eqnref{eqn:rbfkernel3} is equipped with a natural normalization in which
$k(\bx, \bx) =1$ for all $\bx$. 
Moreover, the diagonal entries of BLOSUM substitution matrices---which we use throughout---encode the \emph{absolute} level of conservation of each 
amino acid in evolutionary contexts, whereas we are interested in encoding \emph{relative} similarity \cite{henikoff1992amino}.
See \figref{fig:blosum-corr} for an illustration of a BLOSUM50-derived correlation matrix.

We make the interesting observation that while many substitution matrices (e.g.~most PAM matrices \citep{dayhoff1972model})
are positive semi-definite (PSD), some substitution matrices---in particular most of the BLOSUM family---have the
further property that they are \emph{infinitely divisible}.
That is for these matrices elementwise exponentiation by a positive power (i.e.~the Hadamard power $\hpow{S}{\alpha}$ 
given by $(\hpow{S}{\alpha})_{ij}=(S_{ij})^{\alpha}$) preserves positive semi-definiteness.\footnote{See 
Table~\ref{tab:substitution-matrices} for an inventory of substitution matrices and \secref{app:infdiv} 
for background on infinite divisibility.} 
The infinite divisibility of BLOSUM matrices makes them uniquely suited as ingredients in protein sequence kernels, since including
learnable exponents as part of the hyperparameters of a GP kernel offers a powerful and parsimonious mechanism
for modulating the covariance structure of the kernel. 
Powers greater than unity attenuate off-diagonal similarites $C_{a a^\prime}$, while powers less than unity amplify them.
Thus by learning landscape-specific exponents the biophysical similarity encoded by these substitution matrices can be adapted to the landscape at hand.
We return to these exponents in \secref{sec:hyper} when we discuss hyperparameter priors.

\subsection{Local Linearity}
\label{sec:loclin}

It has long been recognized that in many cases protein properties are approximately
additive w.r.t.~individual mutations \cite{wells1990additivity} and that pairwise epistasis effects
are often modest, at least once global epistasis is taken into account \cite{otwinowski2018inferring}.
We propose to incorporate this observation in our modeling approach by explicitly building local linearity into our kernel.
This is straightforward to do, since any kernel of the form $k^\prime \klin$---where $k^\prime$ is any kernel
and $\klin$ is a linear kernel---is naturally viewed as describing a \emph{locally linear} class of functions. 

In more detail, let $f(\bx) = \bbeta \cdot \bx$ be a linear model defined on one-hot sequences $\bx$ with coefficients $\bbeta$. 
If we promote $\bbeta$ to $\bx$-dependent coefficients $\bbeta(\bx)$ and place a 
GP prior on $\bbeta(\bx)$ controlled by $k^\prime$, 
then the resulting model is equivalent to a GP with product kernel $k^\prime \klin$.\footnote{This general observation
has been leveraged by \citet{yoshikawa2023gpx} in the context of explainable AI.} 
Indeed if $k^\prime$ is chosen so that $\bbeta(\bx)$ varies slowly across sequence space, $f(\bx) \approx \bbeta(\bx_0) \cdot \bx$ in
the vicinity of $\bx_0$, i.e.~$f(\bx)$ is locally linear.
For additional details and the elementary proof see \secref{app:loclin}.

\subsection{LOCK: Locally Linear Correlation Kernels}
\label{sec:kernel}

We combine the ingredients from \secref{sec:sub}-\ref{sec:loclin} to define our kernel:
the Locally Linear Correlation Kernel ({\bf LOCK}).
We first define the two base kernels from which the LOCK kernel is constructed. 
The first is the linear kernel,
\begin{equation}
    \label{eqn:linkernel}
    \kLlin(\bx, \by) = \Sigma_{\ell=1}^L \bx_\ell^{\rm T} \bC_\ell^{\alpha_\ell} \by_\ell
\end{equation}
where $\bC_\ell$ are correlation matrices as in \eqnref{eqn:corrdefn} and $\alpha_\ell>0$ are 
(optional) learnable exponents.\footnote{Note that in \eqnref{eqn:linkernel} we allow for the possibility that the 
correlation matrices vary from position to position.}
The kernel in \eqnref{eqn:linkernel} corresponds to Bayesian linear regression with 
coefficients $\bbeta  \in \RR^{L \times A}$ governed by a block diagonal covariance matrix with blocks $\bC_\ell^{\alpha_\ell}$.
The second can be seen as a generalization of the RBF kernel in \eqnref{eqn:rbfkernel}:
\begin{equation}
    \label{eqn:nlkernel}
    \kLnl(\bx, \by) = \sprod_{\ell=1}^L \bx_\ell^{\rm T} \bC_\ell^{\alpha_\ell} \by_\ell
\end{equation}
Evidently \eqnref{eqn:nlkernel} is the multiplicative analog of \eqnref{eqn:linkernel}.\footnote{Importantly, since the entries of $\bC_\ell$ are positive for infinitely divisible matrices, \eqnref{eqn:nlkernel}
can be computed in log space for improved numerical stability: 
    $\exp\left( \Sigma_{\ell=1}^L \alpha_\ell \bx_\ell^{\rm T} \log^{\circ}(\bC_\ell) \by_\ell \right)$
}

With \eqnref{eqn:linkernel}-\ref{eqn:nlkernel} in hand, we can define the \emph{locally linear correlation kernel} 
$\kL(\bx, \by)$ as
\begin{align}
    \label{eqn:lockdefn}
 \sigma_1^2 \kLnl(\bx, \by) \kLlin(\bx, \by)  + \sigma_2^2 \kLlinp(\bx, \by)
\end{align}
where we have introduced learnable kernel scales $\sigma_1, \sigma_2>0$. 
Due to the inclusion of the product kernel $\kLnl \kLlin$, the kernel $\kL$ is \emph{locally linear} 
as described in \secref{sec:loclin}. 
For more flexibility we also include the second linear kernel $\kLlinp$ (see \secref{sec:kernelprop} \& \secref{app:loclin} for additional motivation).
Besides depending on correlation matrices $\bC_\ell$, we emphasize that each of the three base kernels 
in \eqnref{eqn:lockdefn} optionally depend on additional learnable exponents, a modeling
choice that we discuss in more detail in \secref{sec:hyper}.
 \fbox{\parbox{0.97\linewidth}{Unless noted otherwise, we use the same {\bf BLOSUM50}-derived correlation matrix at each position.}}

\subsection{Kernel Properties}
\label{sec:kernelprop}

Above we have emphasized two properties of the LOCK kernel $\kL$: i) local linearity; 
and ii) how it incorporates prior information from substitution matrices.
Here we touch on a few additional properties of the kernel.\footnote{For a discussion of epistasis see \secref{app:epistasis}.}

\paragraph{Basis Functions} \!\!\!\! It is helpful to view $\kL$ as defining a function space built 
from $\mkern-2mu$`rotated'$\mkern-2mu$ amino acid basis functions
\begin{equation}
    f_{\ell a}(\bx) = (\bL_\ell^{\rm T} \bx_\ell)_a \qquad {\rm with} \qquad \bL_\ell \equiv {\rm Cholesky}(\bC_\ell)
    \nonumber
\end{equation}
so that $\bL_\ell \bL_\ell^{\rm T} = \bC_\ell$.
For example if we refer to the BLOSUM50 correlation matrix in \figref{fig:blosum-corr} we can infer that
one basis function puts significant weight on both valine and isoleucine, since these are highly correlated.
Thus the linear kernel $\kLlin$ can be seen as Bayesian linear regression w.r.t.~the
$LA$ basis functions $f_{\ell a}$ with an \emph{isotropic} prior covariance in the rotated basis. 
By contrast the non-linear kernel $\kLnl$ utilizes a \emph{tensor-product basis} made up of $A^L$ composite basis functions,
each of which is a $L$-wise product of $f_{\ell a}$.
This basis function perspective makes it manifest how 
observing a mutation like $8A\to 8V$ informs the effects of other mutations like $8A\to 8I$.

\paragraph{Kernel Decay} When using a model trained on fitness data in model-based protein design, 
an important consideration is how that model performs in interpolatory (close to the training data) and 
extrapolatory (away from the training data) regimes.
Consider a GP with a linear kernel like in \eqnref{eqn:linkernel}. 
It assumes perfect additivity, even for sequences that differ from the training data by tens of mutations. 
Put differently, since the kernel decays slowly (in particular linearly) as the Hamming distance to the training data increases, 
the linear kernel extrapolates aggressively. 
By contrast a multiplicative kernel like in \eqnref{eqn:nlkernel} decays exponentially as the Hamming distance to the 
training data increases. 
Such a kernel extrapolates more conservatively; in particular it is \emph{mean-reverting}, i.e.~the GP predictor
asymptotes towards the prior mean far away from the training data. 

Neither of these behaviors is particularly attractive in the context of model-based design,
and by construction the LOCK kernel in \eqnref{eqn:lockdefn} avoids both these extremes. Near the training
data ($\kLnl \gg 0$) it provides locally linear predictions that respond to the nuances of the local sequence data. 
Instead of reverting to the prior mean far away from the training data ($\kLnl \approx 0$), it reverts 
to a linear predictor controlled by $\kLlinp$.
This is arguably the best of both worlds: nuanced non-linear predictions in the vicinity of the training data 
that smoothly yield to a simple and robust linear predictor further away---all informed by the biophysical knowledge
encoded in substitution matrices.

\subsection{Hyperparameter Priors}
\label{sec:hyper}

With two kernel scales $\sigma_1$ and $\sigma_2$ and the noise scale $\sigma_n$ (see \eqnref{eqn:mll}) 
the LOCK-GP defined by \eqnref{eqn:lockdefn} has at least three hyperparameters. 
As for the exponents for each of the three base kernels, we can consider various levels of flexibility:
\begin{enumerate}[itemsep=3pt, topsep=0pt, parsep=0pt, partopsep=0pt]
\item no exponents (i.e.~all exponents are unity)
\item a global exponent $\alpha$ that is the same for all positions $\ell$
\item local exponents $\alpha_\ell$ that vary with $\ell$
\end{enumerate}
Crucially, these hyperparameters allow the LOCK kernel to adapt to the landscape at hand and are readily
fit by maximizing the GP marginal likelihood in \eqnref{eqn:mll} using gradient methods. 
Particularly in the case where the exponents $\alpha_\ell$ are allowed to vary with $\ell$,
it is important to regularize the hyperparameters, both for numerical stability as well as to mitigate
against possible overfitting. 
Unless noted otherwise, we utilize the following regularized hyperparameters: 
\begin{enumerate}[itemsep=3pt, topsep=0pt, parsep=0pt, partopsep=0pt]
\item $\kLnl$ is equipped with local exponents $\alpha_\ell$
\item $\kLlin$ and $\kLlinp$ each receive a global exponent 
\item $\sigma_1^2$, $\sigma_2^2$, and $\sigma_n^2$ are governed by weak Gamma priors 
\item global exponents are governed by weak priors
\item local exponents are governed by relatively strong priors 
\end{enumerate}
We find that these choices work well across a variety of scenarios; see \secref{app:lock} for the precise hyperpriors used. 
In experiments in \secref{sec:ablation} we ablate some of these choices.

\subsection{CLOCK: Structure-conditioned Kernels}
\label{sec:clock}

The LOCK kernel is a sequence kernel that depends on global protein data solely through pre-canned substitution matrices. 
In this section we show that $\kL$ yields a natural generalization to \emph{conditional} kernels that we dub {\bf CLOCK}.
The result is a powerful and elegant mechanism for incorporating protein foundation models into LOCK-GPs.

The core idea is that we map positional structure embeddings $\bh_{1:L}(\SSS)$ obtained from a pre-trained foundation model  
to positional correlation matrices $\bC_{1:L}$, each of size $A\times A$.\footnote{We could instead use protein language model embeddings that
do not rely on structure, but we do not explore that possibility here.}
These correlation matrices, which are specialized to the local structural context as encoded by each $\bh_\ell$, 
are then used to construct (zero-shot) LOCK kernels as in \secref{sec:kernel}.\footnote{The user can choose how many kernel hyperparameters to
adapt to the local landscape. In most cases learning the kernel and noise scales (at a minimum) is likely prudent.} 
Each correlation matrix is parameterized as 
\begin{equation}
    \label{eqn:corrrep}
    C_{\ell a a^\prime} = \exp\left(- || \bz_{\ell a} - \bz_{\ell a^\prime}||^2 \right)
\end{equation}
where $\bz_{\ell a}\in\RR^{A-1}$ is obtained from $\bh_\ell$ via a linear map:
\begin{equation}
\label{eqn:Wdefn}
    \bz_{\ell} = \bW \bh_\ell \qquad {\rm where} \qquad \bz_\ell \in \RR^{A \times (A-1)}
\end{equation}
Thus $\bC_\ell$ can be thought of as an RBF kernel defined on an abstract amino acid embedding space with embeddings $\bz_{\ell a}$.
Importantly, this parameterization is \emph{generic} in that each $\bC_\ell$ is infinitely divisible and all infinitely divisible correlation matrices are of the form in \eqnref{eqn:corrrep}, see \secref{app:infdiv}.

To learn $\bW$ we use a simple and robust training procedure based on the so-called concentrated form of the standard
GP training objective in which the overall kernel scale is `profiled' out (see \secref{app:clock} for details). 
Since $\bW$ has $\sim 49$k parameters for $A=20$ and 128-dimensional embeddings $\bh_\ell$,
this setup is most appropriate for settings with sufficient data. 
In particular it is an attractive approach for \emph{multi-task learning} across multiple related protein property landscapes, 
as we demonstrate empirically in \secref{sec:multi}.

%% file: related.tex
Quantitative modeling of protein sequence–function relationships has a long history, with early studies employing partial least squares 
regression (PLS) and Gaussian processes to model properties such as enzyme activity and thermostability \cite{Fox2003ProSAR,romero2013navigating}.
Much recent work has focused on zero-shot methods that utilize protein language models (PLMs), 
typically via masked-token pseudo-likelihoods or autoregressive log-likelihood ratios aggregated over mutated sites \cite{meier2021esm1v,rives2021pnas}. 
Alongside MSA-based generative models such as DeepSequence and EVE \citep{riesselman2018deepsequence,frazer2021eve}, 
these likelihood-based scores achieve strong zero-shot performance on ProteinGym \cite{notin2023proteingym}.

A popular approach to supervised learning on protein property landscapes uses PLMs like ESM-2 \cite{lin2023evolutionary} as feature extractors and fits a lightweight
supervised head \cite{Alley2019UniRep, Elnaggar2022ProtTransTPAMI}.
This approach can be extended to supervised fine-tuning, where the weights of the PLM are adapted to the supervised task \cite{rao2019evaluating, Brandes2022ProteinBERT},
and includes methods that fine-tune likelihood-based scores with ranking- or regression-based losses \cite{zhao2024contrastive,hawkins2024likelihood}.
Indeed several works in the literature benchmark a variety of such recipes, 
including for example those that employ parameter-efficient adapters \cite{schmirler2024fine,sledzieski2024democratizing,zhou2024enhancing,bikias2025plmfit}.
While recognizing the broad utility of this line of work, some authors have observed that improved performance on the PLM pretraining task
can fail to translate to the supervised task \cite{pmlr-v235-li24a,vieira2025medium}.
Likewise, despite the demonstrated value of this family of methods, a number of disadvantages must also be acknowledged, including
the large number of training hyperparameters, the high computational cost, and the risk of overfitting.

As of Jan.~2026, the top performer on ProteinGym's supervised DMS substitutions benchmark is the Kermut GP \cite{groth2024kermut},
which utilizes ESM-2 embeddings, an ESM-2-based zero-shot prior mean, 3d structure coordinates, 
and ProteinMPNN inverse folding logits to construct a kernel that sums over pairs of 
mutations.\footnote{See \secref{app:kermut} for a more detailed discussion of Kermut.}
As shown in \citet{groth2024kermut}, Kermut convincingly outperforms strong baselines like ProteinNPT \cite{notin2023proteinnpt}.
Researchers have also explored other kernels, including e.g.~fingerprint kernels on BLOSUM-derived embeddings \cite{gessner2024active-mlsb} 
and string kernels on mutation codes \cite{benjamins2024bayesian}. 
For a theoretical discussion of biological sequence kernels please refer to \citet{amin2025biological}.
For a recent analysis of BLOSUM similarity scores in the context of antibody binding see \citet{ucar2025blosum}.

%% file: exp.tex
We provide an open-source implementation of \texttt{LOCK-GP} at \texttt{https://github.com/generatebio/lock\_gp}.

\subsection{Local Landscape Data}
\label{sec:data}

We curate $21$ datasets with reference structures for benchmarking that span a range of property types (thermostability, 
binding affinity, fluorescence, capsid viability, etc.), including $9$ from ProteinGym \cite{notin2023proteingym}.
To enable comprehensive benchmarking we choose datasets with $\ge\! 1800$ data points, 
$\ge\! 10$ variable positions, and an abundance of higher-order mutants. See Tables~\ref{tab:datasets}-\ref{tab:datasets_numeric} for additional information on each dataset.\footnote{For all datasets the maximum Hamming distance to the reference
sequence is $\ge\! 6$; for $19/21$ datasets it is $\ge\! 10$. Structures for $10/21$ datasets are experimental; the rest are predicted.} 
We evaluate model performance in three different regimes: 
i) i.i.d.~cross-validation; 
ii) Hamming-distance-based extrapolation; 
and iii) an `unseen mutations' regime, in which all test sequences have at least one mutation not present in the corresponding training dataset. 
For details see \secref{app:data}.

\begin{figure*}[t]
\centering
    \includegraphics[width=\textwidth]{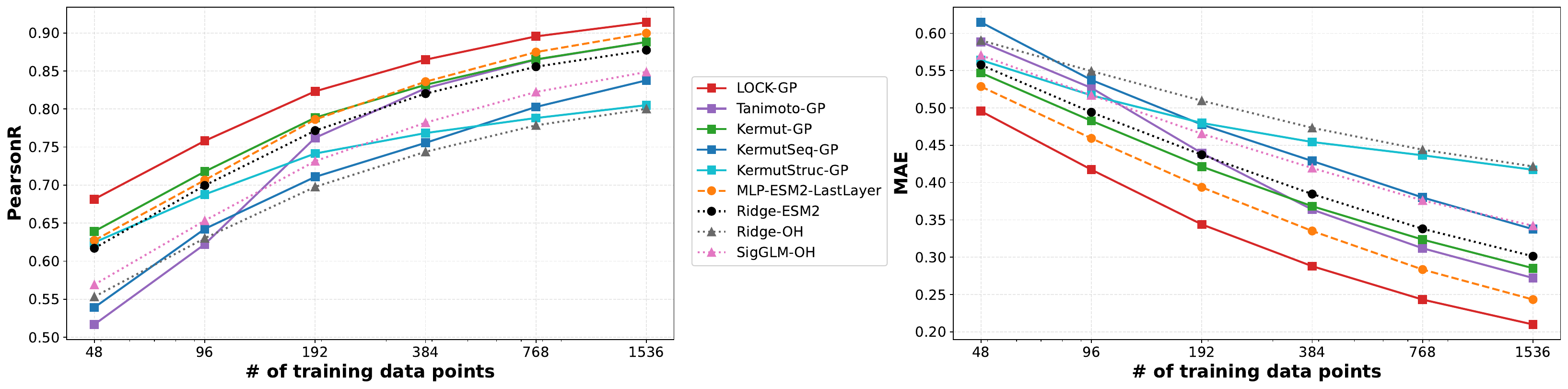}
    \caption{Predictive performance as a function of the number of training data points in the cross-validation setting.
    We depict Pearson R (left) and mean absolute error (right); metrics are averaged across $21$ datasets. 
    See \secref{sec:local} for discussion and \figref{fig:cvspearmanrmse} for Spearman R and RMSE.
    }
\label{fig:cvpearsonmae}
\end{figure*}

\begin{table*}[h]
    \caption{Model performance metrics for three different evaluation regimes, 
    with the number of training data points ranging from $48$ to $1536$.
    Metrics are averaged across $21$ datasets. For each column the best performing metric value is marked in bold. See also Table \ref{tab:ranks}.}
\label{tab:metric_summary}
\input{metric_summary_table.tex}
\end{table*}

\begin{figure}[h]
\centering
    \includegraphics[width=\columnwidth]{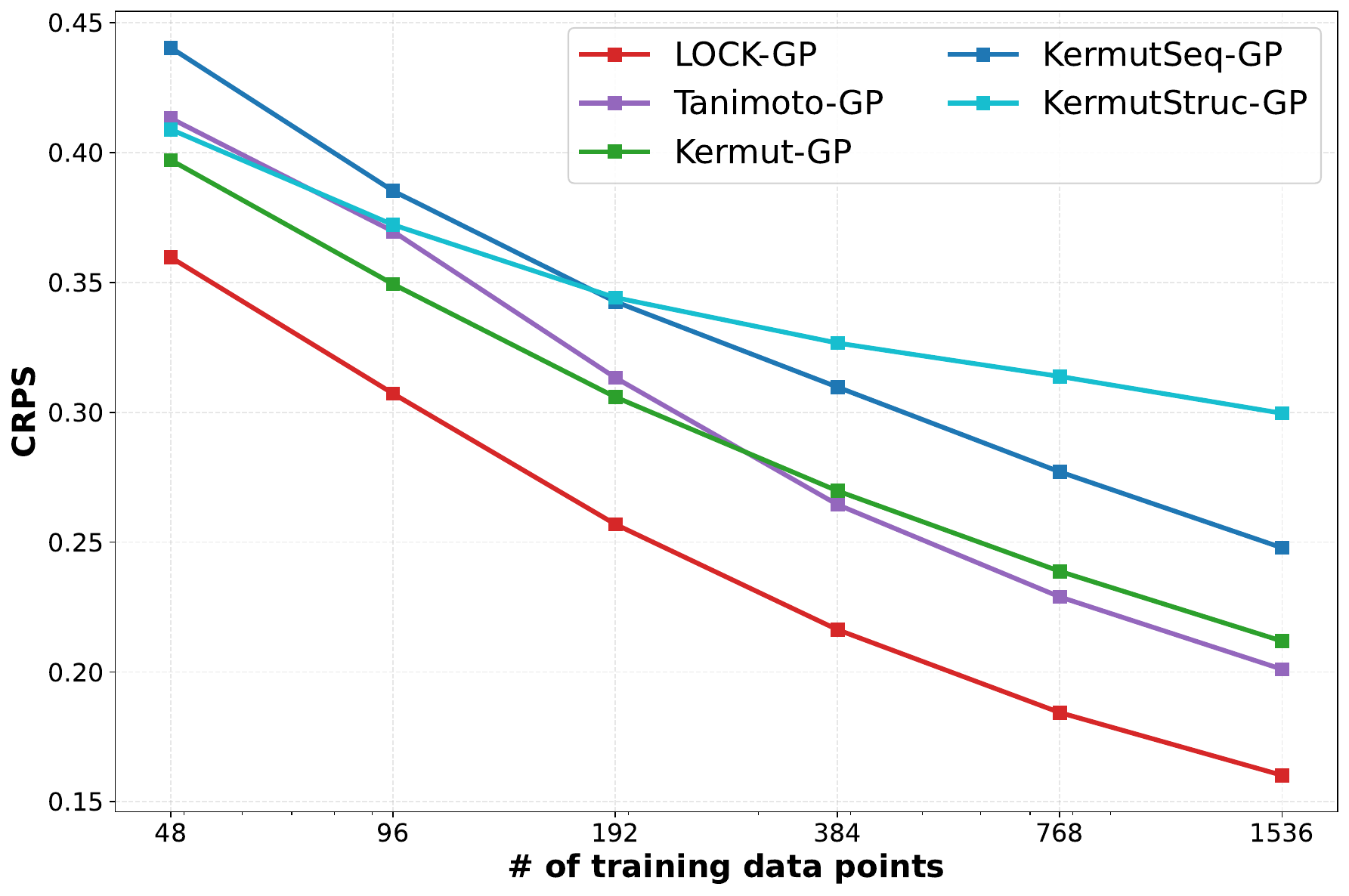}
    \caption{Predictive performance of GP models as a function of the number of training data points in the cross-validation setting.
    We depict the continuous ranked probability score (CRPS; \citet{gneiting2007strictly}),
    an MAE-like proper scoring rule that evaluates both accuracy and calibration of predictive distributions.
    Metrics are averaged across $21$ datasets.}
\label{fig:cvcrps}
\end{figure}

\input{model_table.tex}

\begin{figure}[t]
\centering
\includegraphics[width=0.942\columnwidth]{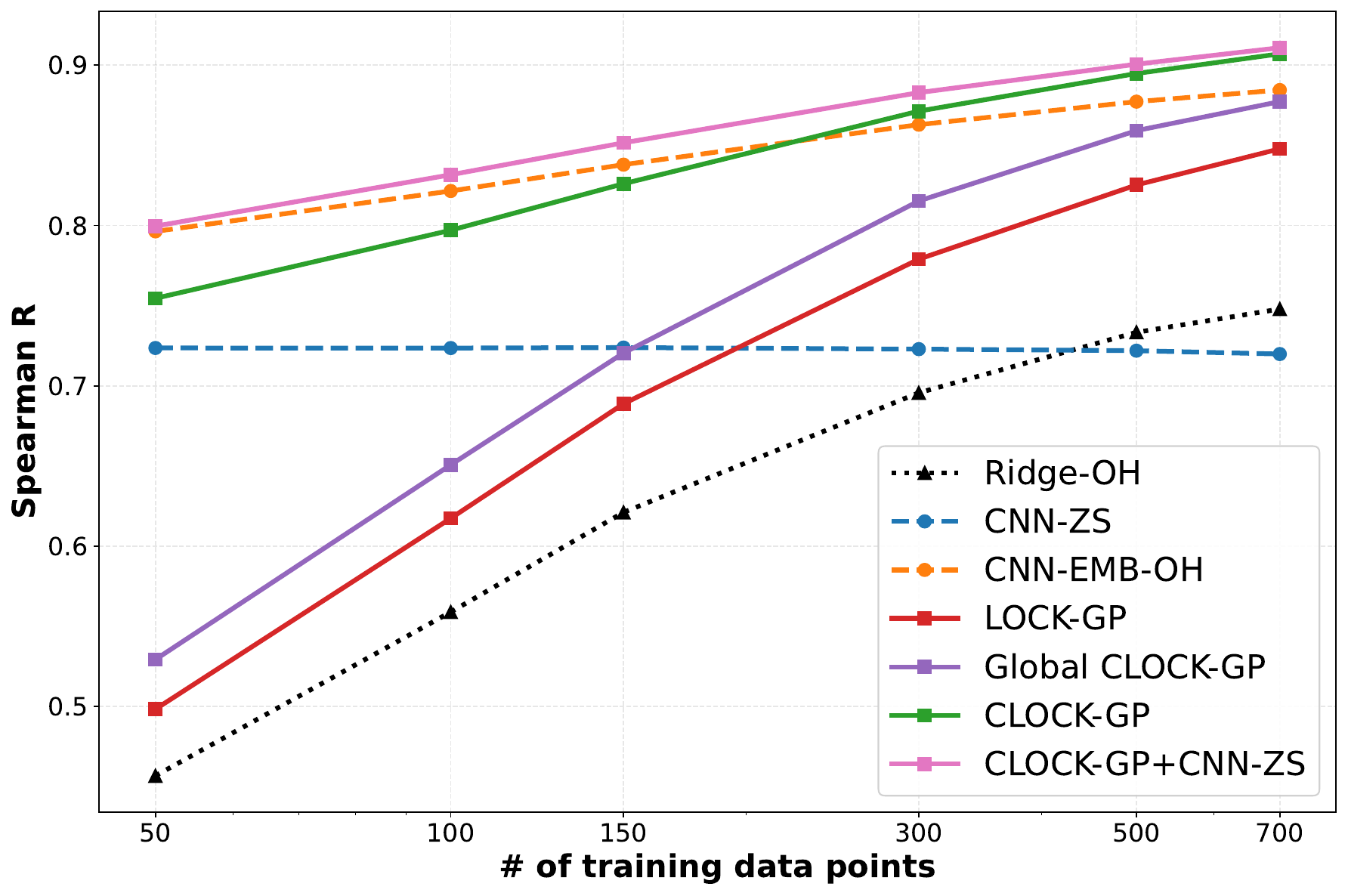}
\includegraphics[width=\columnwidth]{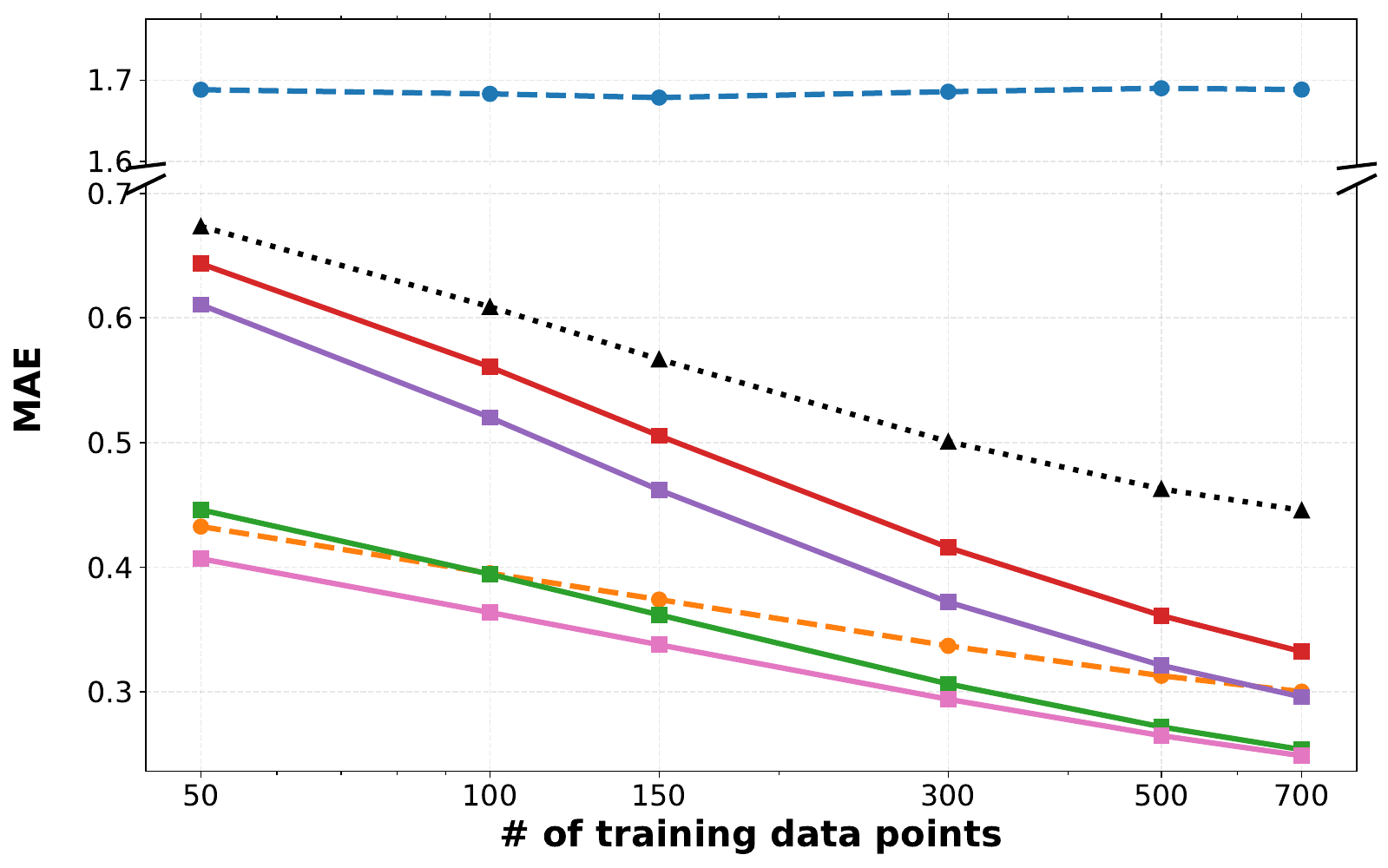}
    \caption{Predictive performance as a function of the number of (landscape-local) training data points on the multi-landscape thermostability 
    data described in \secref{sec:multi}. We depict both Spearman R (top) and mean absolute error (bottom); 
    metrics are averaged across 100 train/test splits in 50 held-out landscapes.
    The structure-conditioned CLOCK-GP augmented with an additional CNN-based kernel performs best across the board.
    Note the axis break for the MAE figure: the absolute scale of the zero-shot predictor is very poorly calibrated.
    See \figref{fig:app:tsuboperf} for additional metrics.}
\label{fig:tsuboperf}
\end{figure}

\subsection{Ablation Study}
\label{sec:ablation}

We conduct a systematic ablation study using the data from \secref{sec:data}.
We highlight a few conclusions: 
  i) dependence on the particular BLOSUM substitution matrix used is weak;
  ii) the non-linear kernel $\kLnl$ in \eqnref{eqn:nlkernel} consistently outperforms the RBF kernel in \eqnref{eqn:rbfkernel}, underscoring the value of incorporating substitution matrices;
  iii) the exponent priors in \secref{sec:hyper} are important for regularization;
  iv) using `local' residuewise exponents $\alpha_\ell$ improves fit noticeably;
  and v) assuming local linearity improves uncertainty quantification.
Taken together our analysis shows that the thoughtful inclusion of BLOSUM matrices is the single most 
critical ingredient in the good performance of LOCK.
For detailed results and additional analysis see \secref{app:ablation}.

\subsection{Local Learning}
\label{sec:local}

We use the data from \secref{sec:data} to benchmark \texttt{LOCK-GP} against $8$ different baselines; these include $4$ GPs with the following kernels: 
\texttt{Tanimoto-GP}) a Tanimoto kernel that uses BLOSUM62-derived embeddings \cite{gessner2024active-mlsb};
\texttt{KermutSeq-GP}) an isotropic RBF kernel that uses mean-pooled ESM-2 features;
\texttt{KermutStruc-GP}) a structure-conditioned kernel that exploits ProteinMPNN inverse folding logits \cite{dauparas2022robust};
and \texttt{Kermut-GP}) an additive combination of the previous two kernels \cite{groth2024kermut}.\footnote{In addition all three Kermut
variants use an ESM-2-derived zero-shot prior mean.}
We also include $4$ non-GP baselines:
\texttt{Ridge-OH}) ridge regression on one-hot features;
\texttt{Ridge-ESM2}) ridge regression on ESM-2 features;
\texttt{MLP-ESM2-LastLayer}) a single-layer neural network with ESM-2 features in which the last layer of the PLM is fine-tuned; and
\texttt{SigGLM-OH}) generalized linear regression on one-hot features with a sigmoidal link function \cite{Ding_2024}.
See Table~\ref{tab:models} for a high-level model taxonomy and \secref{app:sec:localdetails} for details on each baseline model.

We find that \texttt{LOCK-GP} exhibits the strongest cross-regime performance,
with \texttt{Kermut-GP} performing comparably on the unseen mutations benchmark.
See \figref{fig:cvpearsonmae} and Table \ref{tab:metric_summary} for the results 
and Table \ref{tab:metric_summary_long} for twelve additional baselines, including zero-shot methods and ConFit \cite{zhao2024contrastive}. 
In the cross-validation setting for $N=192$ training data points, for example, \texttt{LOCK-GP} has the best Pearson R 
(resp., MAE) for $16/21$ (resp., $17/21$) datasets; see Tables \ref{tab:landscape_pearsonr_cv}-\ref{tab:landscape_mae_extrapolation}
for these and other landscape-level results.
This strong performance is striking, since \texttt{LOCK-GP} makes do with the $210$ parameters of a 
BLOSUM50-derived correlation matrix,\footnote{Note that \texttt{Tanimoto-GP} also leverages BLOSUM matrices, but via an eigendecomposition in log-odds space, which is evidently less well-suited to typical protein landscapes.} while \texttt{Kermut-GP} relies on ESM-2 (650M parameters), ProteinMPNN (1.7M parameters) 
and the reference structure.
The low MAEs of \texttt{LOCK-GP} are matched by low CRPS---see \figref{fig:cvcrps}---reflecting
the high quality of \texttt{LOCK-GP} uncertainty estimates.
We find that \texttt{Ridge-ESM2} and \texttt{MLP-ESM2-LastLayer} perform rather poorly on the unseen mutations benchmark---indeed \texttt{Ridge-ESM2} 
is outperformed by the much simpler \texttt{Tanimoto-GP}---highlighting the potential fragility of high-dimensional embeddings in OOD settings.\footnote{We
find similar behavior for \texttt{ConFit}; see Table~\ref{tab:metric_summary_long}.}
Similarly in the extrapolation regime with $N=128$,
\texttt{LOCK-GP} (resp., \texttt{Tanimoto-GP}) outperforms \texttt{Kermut-GP} on $16/21$ (resp., $11/21$)
datasets w.r.t.~Pearson R, raising questions about the added value of using two foundation models.
Moreover, in the extrapolation regime with $N=512$ \texttt{LOCK-GP} exhibits a mean MAE that is $22\%$ lower than the next best model (\texttt{Tanimoto-GP}).
We also note that all three Kermut variants exhibit poor log likelihoods on some datasets, likely due to poorly regularized hyperparameters; 
see Table~\ref{tab:metric_summary_gp} and \secref{app:kermut} for additional discussion.
By contrast in the extrapolation regime with $N=128$ \texttt{LOCK-GP} exhibits the best negative log likelihood (NLL) on $12/21$ datasets,
whereas the corresponding numbers for \texttt{Tanimoto-GP} and \texttt{Kermut-GP} are $4/21$ and $3/21$, respectively; see Table \ref{tab:landscape_nll_combined}.
Finally we note that \texttt{LOCK-GP} can be much faster ($\sim \! 2-140$x) at training and inference time than methods like \texttt{MLP-ESM2-LastLayer} and \texttt{Kermut-GP} that rely on foundation models; see \secref{app:speed} for details.

\subsection{Multi-task Learning}
\label{sec:multi}

We demonstrate the utility of the CLOCK-GP introduced in \secref{sec:clock} by evaluating a variety of multi-task 
models on $371$ thermostability landscapes from \cite{Tsuboyama_2023}. 
Each landscape has $\sim \! 1000$ sequences, most of which are double mutants; the typical sequence length is $L \sim 50$.
Each landscape has an AlphaFold2 predicted structure \cite{Jumper2021AlphaFold}. 
We use 280 landscapes for training, while holding out $41+50$ landscapes for validation and testing.

As our main baseline we train a multi-scale CNN with $2.53$M parameters
that utilizes $128$-dimensional positional sequence-and-structure embeddings provided by Chroma \cite{ingraham2023illuminating}.
We use the CNN both as a zero-shot predictor (\texttt{CNN-ZS}) as well as a supervised learner (\texttt{CNN-EMB-OH}) in 
which we combine features from the CNN with one-hot sequences and feed the concatenated features into a ridge regression head. 
For \texttt{CLOCK-GP} we use $128$-dimensional positional structure embeddings from Chroma.
In addition to \texttt{CLOCK-GP} we consider two GP baselines: i) \texttt{LOCK-GP} with a BLOSUM50 substitution
matrix; and ii) \texttt{Global CLOCK-GP} in which we fit a global correlation matrix that is deployed 
at all positions.\footnote{That is we remove positional indices from \eqnref{eqn:corrrep}
and learn a single $\bz\in\RR^{A, A-1}$ 
that does \emph{not} depend on $\bh_{1:L}$.}
Finally we include ridge regression on one-hot sequences (\texttt{Ridge-OH}) as well as \texttt{CLOCK-GP} augmented
with an additional RBF kernel that leverages the CNN zero-shot predictor (\texttt{CLOCK-GP+CNN-ZS}).
For additional details see \secref{app:exp}.

The value of a structure-conditioned function prior is clearly demonstrated by comparing \texttt{CLOCK-GP} to \texttt{Ridge-OH}: the
former model trained on $N=50$ (landscape-local) data points outperforms the latter trained on $N=700$ data points; see \figref{fig:tsuboperf}.
Likewise the Spearman R of \texttt{CLOCK-GP} at $N=50$ is $0.755$, while it is only $0.498$ for \texttt{LOCK-GP}.
Setting aside the hybrid \texttt{CLOCK-GP+CNN-ZS},
the best performing model in the scarce data regime is \texttt{CNN-EMB-OH}, while \texttt{CLOCK-GP} is the strongest performer
once the number of training data points increases ($N \gtrsim 200$). 
Not surprisingly, we see the best performance from \texttt{CLOCK-GP+CNN-ZS}, which combines the flexibility of neural networks with the carefully constructed function prior of \texttt{CLOCK-GP}. 
Notably at $N=700$ \texttt{Global CLOCK-GP}, which does not use structure embeddings, matches the performance of \texttt{CNN-EMB-OH}, which does.
As expected \texttt{LOCK-GP}
is quite effective, although it is outperformed
by \texttt{Global CLOCK-GP} which learns a global correlation matrix adapted to thermostability.
See Table~\ref{tab:tsuboperf} for additional quantitative results.

\begin{figure}[t]
\centering
\includegraphics[width=\columnwidth]{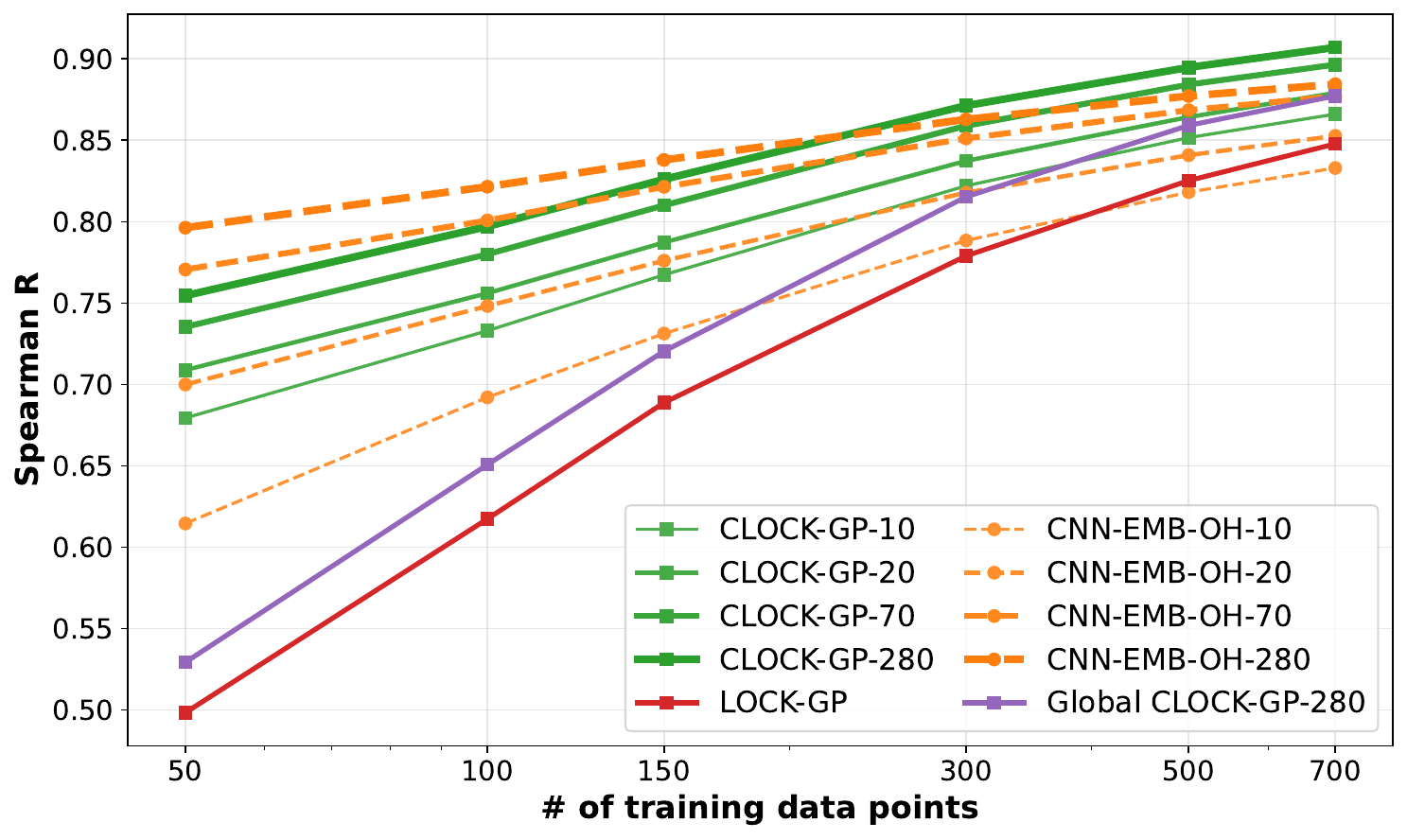}
    \caption{We depict how multi-task model performance changes as we vary the number of training landscapes from $10$ to $280$.
    Spearman R is averaged across $100$ train/test splits in $50$ held-out landscapes. 
    \texttt{CLOCK-GP} performs best in the low landscape regime: e.g.~\texttt{CLOCK-GP-10}, which is trained on $10$ landscapes, roughly matches the 
    performance of \texttt{CNN-EMB-OH-20}. 
    See \secref{sec:multi} for details and \figref{fig:app:clock_gp_land}-\ref{fig:app:landscape_pearson_mae} for more figures.
    }
    \label{fig:landscape_spearman}
\end{figure}

\paragraph{Landscape Subsampling}

Above we trained models on $280$ landscapes, whereas
in many applications we may have significantly less data. 
This motivates a simple question: can we make do with (far) fewer landscapes?
To investigate this systematically we train models on $n_{\rm landscapes} \in \{10, 20, 35, 70, 140 \}$.
For the results see \figref{fig:landscape_spearman}. Encouragingly, we find that \texttt{CLOCK-GP} performs
best in the low-landscape regime. This is arguably expected, since GPs can be very data efficient. 
Put differently, GPs can still provide good predictive performance with a suboptimal kernel, and a modest
number of landscapes may be sufficient to learn a kernel that is ``good enough.'' 
Indeed with just $10$ landscapes\footnote{Since $L\sim50$ this corresponds to $\sim500$ local contexts $\bh_\ell$.} \texttt{CLOCK-GP} outperforms \texttt{Global CLOCK-GP} trained on all $280$ landscapes, 
highlighting the value of structure-conditioned correlation matrices (even imperfect ones). 
We highlight that \texttt{LOCK-GP}---a landscape-local model
that uses a pre-canned substitution matrix---outperforms \texttt{CNN-EMB-OH} trained on $10$ landscapes for $N\!\gtrsim\!500$.
By contrast \texttt{CLOCK-GP} trained on $10$ landscapes outperforms \texttt{LOCK-GP} across the board.
Notably all multi-task models improve as the number of training landscapes increases
(although the improvements for \texttt{Global CLOCK-GP} are modest, see \figref{fig:app:global_clock_gp_land}).

\paragraph{Interpretation}

\begin{figure}[t]
\centering
\includegraphics[width=\columnwidth]{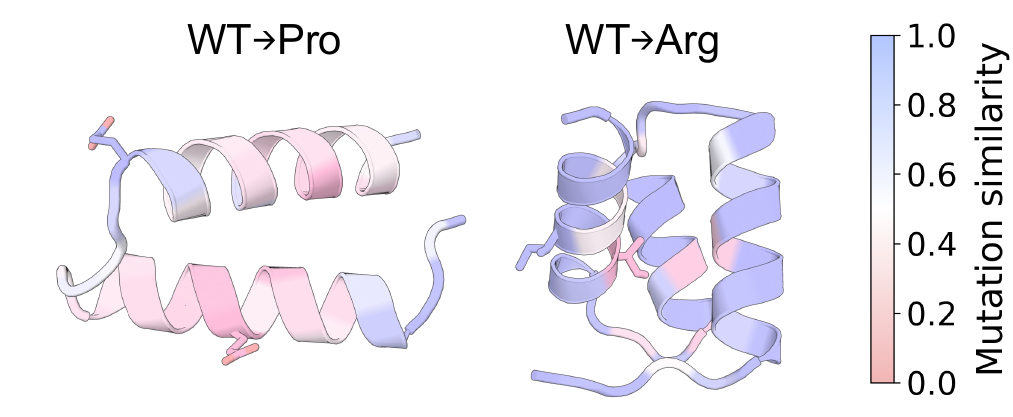}
    \caption{Structure-conditioned amino acid correlations learned by CLOCK on representative structures from \citet{Tsuboyama_2023}.
    Left ({\scriptsize PDB ID: 2JVD}): correlation between the wild-type amino acid and proline at each position.
    Right ({\scriptsize PDB ID: 2JWD}): correlation between the wild-type amino acid and arginine.
    Each structure highlights a pair of sites (namely those with sidechains pictured) with the \emph{same} amino-acid identity 
    but sharply different kernel correlation to the reference amino acid, illustrating how amino acids with identical 
    BLOSUM-derived correlation can have distinct structure-conditioned correlations.
    Blue (resp., red) indicates substitutions judged (dis)similar to the reference residue.}
    \label{fig:kernel_interp}
\end{figure}
As shown in \figref{fig:kernel_interp}, the correlations learned by \texttt{CLOCK-GP} reveal interpretable structural patterns: 
protein {\scriptsize 2JVD} illustrates that CLOCK captures a well-known preference to substitute proline at the N-terminal region of alpha helices \citep{richardson1988amino}, while protein {\scriptsize 2JWD} shows that substitutions to arginine are preferred on the protein surface rather than in the core.
See \secref{app:clock_interpretation} for additional analysis.
For visual representations of the correlation matrices learned by \texttt{(Global) CLOCK-GP} see \figref{fig:app:global-clock-corr}-\ref{fig:app:local-clock-corr-quantiles}, which highlight the special role played by proline in thermostability.
\figref{fig:app:global-clock-corr}-\ref{fig:app:local-clock-corr-quantiles} also reveal that the thermostability data contain sequences with gap tokens, 
confirming the ability of \texttt{(C)LOCK-GP}s to handle landscapes with variable length sequences.

\subsection{Thompson Sampling}
\label{sec:thompson}

We demonstrate the utility of \texttt{LOCK-GP} uncertainty estimates by using Thompson sampling (TS) to 
trade off between exploration and exploitation in an optimization-based protein design problem
with parallel candidate selection.
See \secref{sec:app:ts} and \figref{fig:app:diversity} for the results, which
illustrate how TS offers a natural lever for modulating sequence diversity.
We note that Thompson sampling of GP surrogate models is by no means the only way to leverage uncertainty estimates in the context
of protein design, and we refer the reader to the growing literature on this topic for 
alternative approaches \cite{stanton2022accelerating,hu2023protein,hawkins2023preferential,khan2023toward,rapp2024self}.

\subsection{Binary Classification}
\label{sec:binary}

We demonstrate that \texttt{LOCK-GP} can readily be applied to landscapes with binary labels.
In our benchmark we evaluate most of the model classes considered in \secref{sec:local}.
See \figref{fig:binary} for the results and \secref{sec:app:scalability} and \ref{app:sec:binarydetails} for additional details.

\begin{figure}[t!]
\centering
\includegraphics[width=\columnwidth]{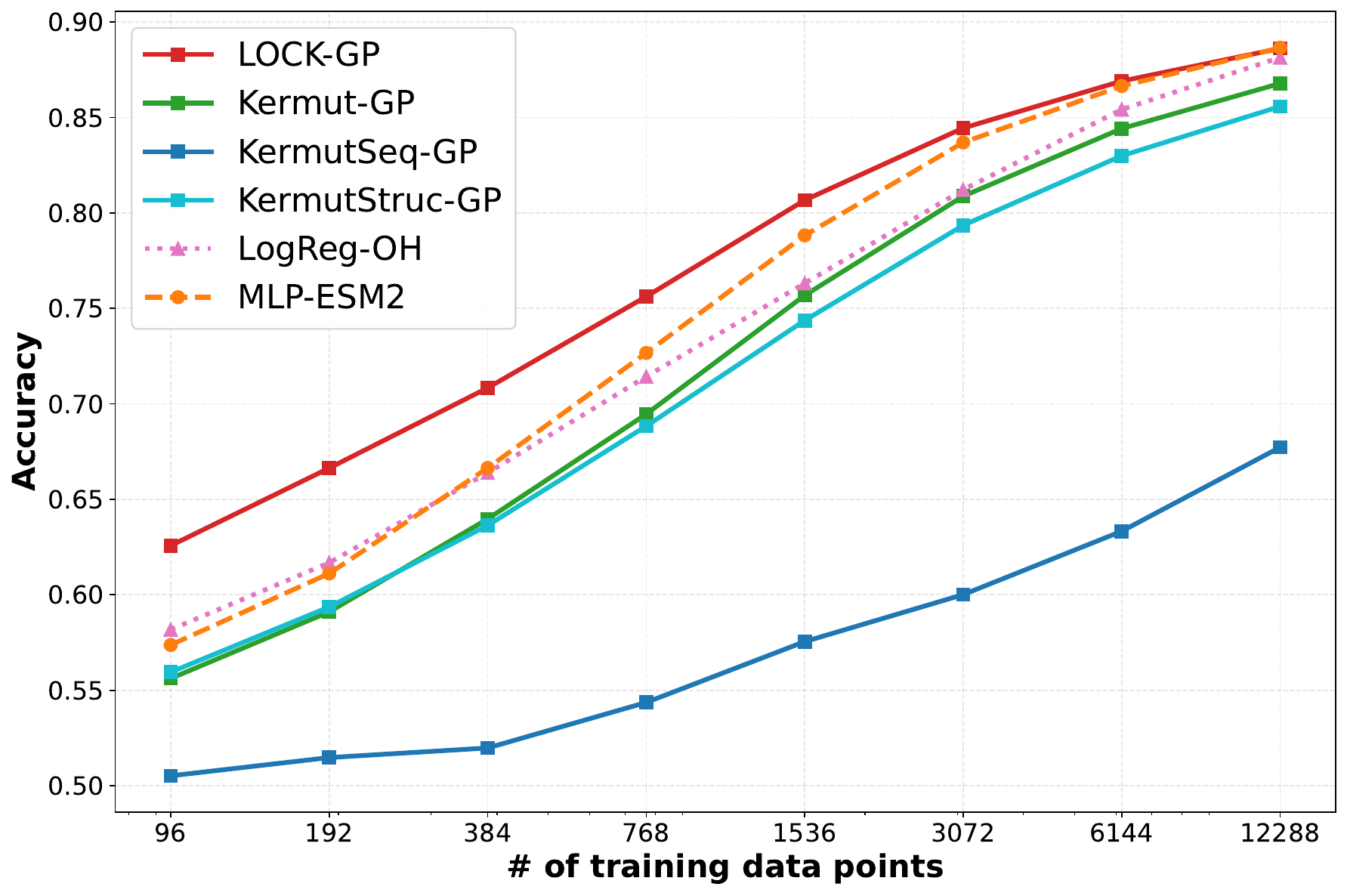}
    \caption{We report binary classification accuracy for models trained 
    on (quantized) fluorescence readouts from \citet{Gonzalez_Somermeyer_2022}
    as a function of the number of training data points. Accuracy is averaged across three replicates.
    \texttt{LogReg-OH} is logistic regression on one-hot features and is the direct analog of \texttt{Ridge-OH} in \secref{sec:local}.
}
    \label{fig:binary}
\end{figure}

%% file: metric_summary_table.tex
\centering
\resizebox{\linewidth}{!}{
\begin{tabular}{lccccccccccccccc}
\toprule
 & \multicolumn{3}{c}{\textbf{Cross-validation} (48 data points)} & \multicolumn{3}{c}{\textbf{Cross-validation} (1536 data points)} & \multicolumn{3}{c}{\textbf{Unseen mutations} (96 data points)} & \multicolumn{3}{c}{\textbf{Extrapolation} (128 data points)} & \multicolumn{3}{c}{\textbf{Extrapolation} (512 data points)} \\
 & Spearman & Pearson & MAE & Spearman & Pearson & MAE & Spearman & Pearson & MAE & Spearman & Pearson & MAE & Spearman & Pearson & MAE \\
\cmidrule(lr){2-4} \cmidrule(lr){5-7} \cmidrule(lr){8-10} \cmidrule(lr){11-13} \cmidrule(lr){14-16}
\texttt{LOCK-GP} & \textbf{0.655} & \textbf{0.682} & \textbf{0.496} & \textbf{0.867} & \textbf{0.914} & \textbf{0.210} & 0.610 & 0.622 & \textbf{0.591} & \textbf{0.669} & \textbf{0.711} & \textbf{0.592} & \textbf{0.759} & \textbf{0.807} & \textbf{0.439} \\
\texttt{Tanimoto-GP} & 0.520 & 0.517 & 0.588 & 0.846 & 0.888 & 0.272 & 0.555 & 0.560 & 0.675 & 0.632 & 0.654 & 0.762 & 0.739 & 0.769 & 0.574 \\
\texttt{Kermut-GP} & 0.638 & 0.639 & 0.547 & 0.850 & 0.888 & 0.285 & \textbf{0.629} & \textbf{0.632} & 0.617 & 0.639 & 0.654 & 0.845 & 0.750 & 0.767 & 0.670 \\
\texttt{KermutSeq-GP} & 0.541 & 0.539 & 0.615 & 0.794 & 0.838 & 0.338 & 0.532 & 0.537 & 0.681 & 0.518 & 0.505 & 0.836 & 0.605 & 0.614 & 0.714 \\
\texttt{KermutStruc-GP} & 0.622 & 0.625 & 0.564 & 0.809 & 0.805 & 0.417 & 0.614 & 0.613 & 0.634 & 0.624 & 0.641 & 0.847 & 0.706 & 0.713 & 0.847 \\
\texttt{MLP-ESM2-LastLayer} & 0.607 & 0.627 & 0.529 & 0.851 & 0.900 & 0.243 & 0.558 & 0.573 & 0.626 & 0.628 & 0.619 & 0.771 & 0.738 & 0.742 & 0.618 \\
\texttt{Ridge-ESM2} & 0.606 & 0.617 & 0.558 & 0.837 & 0.877 & 0.301 & 0.476 & 0.490 & 1.628 & 0.629 & 0.653 & 0.781 & 0.719 & 0.740 & 0.685 \\
\texttt{Ridge-OH} & 0.547 & 0.553 & 0.590 & 0.802 & 0.800 & 0.422 & 0.514 & 0.519 & 0.682 & 0.589 & 0.599 & 0.939 & 0.694 & 0.698 & 0.877 \\
\texttt{SigGLM-OH} & 0.550 & 0.569 & 0.570 & 0.804 & 0.849 & 0.342 & 0.515 & 0.535 & 0.661 & 0.587 & 0.610 & 0.800 & 0.697 & 0.724 & 0.669 \\
\bottomrule
\end{tabular}
}

%% file: model_table.tex
\begin{table}[t]
\centering
\caption{\small Overview of models benchmarked in \secref{sec:local}. 
    See \secref{app:sec:localdetails} for details on each model and Table \ref{tab:metric_summary_long} for
    performance metrics of additional baseline models.}
\label{tab:models}
\resizebox{\linewidth}{!}{%
\begin{tabular}{lll}
\toprule
\small Model Name & \small Model Class & \small Prior Information \\
\midrule
\small \texttt{LOCK-GP}               & \small Gaussian process              & \small BLOSUM \\
\small \texttt{Tanimoto-GP}           & \small Gaussian process              & \small BLOSUM \\
\small \texttt{Kermut-GP}             & \small Gaussian process              & \small ESM-2 + ProteinMPNN \\
\small \texttt{KermutSeq-GP}          & \small Gaussian process              & \small ESM-2 \\
\small \texttt{KermutStruc-GP}        & \small Gaussian process              & \small ESM-2 + ProteinMPNN \\
\small \texttt{MLP-ESM2-LastLayer}    & \small Neural network                & \small ESM-2 \\
\small \texttt{Ridge-ESM2}            & \small Neural network                & \small ESM-2 \\
\small \texttt{Ridge-OH}              & \small Linear                        & --- \\
\small \texttt{SigGLM-OH}             & \small Linear + global non-linearity & --- \\
\bottomrule
\end{tabular}%
}
\end{table}

%% file: discussion.tex
We have seen that Gaussian processes are a powerful tool for modeling protein property landscapes, at least once sufficient care
is taken to build a fit-to-purpose kernel. 
Notably, \texttt{LOCK-GP} frequently outperforms more complex methods that rely on foundation models with millions of parameters.
Moreover, the technology introduced in \secref{sec:method} is of wide application, as we have illustrated 
by, for example, training hybrid GPs that utilize neural zero-shot predictors as well as structure-conditioned kernels that leverage
foundation model embeddings. 
A limitation of our kernels is their restriction to settings in which individual landscapes are either fixed-length or can be aligned to a common length.
Extending our approach to relax this requirement is a natural direction for future work, as it would make (C)LOCK kernels applicable in a wider range of settings.

In \secref{sec:multi} we introduced multi-task GPs that employ zero-shot structure-conditioned kernels. 
While these models do not inherit the extreme parameter efficiency of \texttt{LOCK-GP}---since they leverage the rich features provided by foundation models to enable cross-landscape learning---they can decisively outperform landscape-local models 
with as few as $10$ landscapes.
In some applications, however, even $10$ landscapes may be unavailable.
This raises the intriguing question of whether CLOCK kernels can be pre-trained on a given set of property landscapes 
and then fine-tuned on partially related landscapes---for example, transferring learned thermostability kernels to binding affinity landscapes.
Since $\bW$ in \eqnref{eqn:Wdefn} is a structured tensor of shape $A \times A - 1 \times D_{\rm emb}$---with $\sim 54$k parameters in our experiments---it seems plausible that low-rank updates may be sufficient to adapt $\bW$ to a new set of landscapes.\footnote{Alternatively, one could instead fine-tune the foundation model that generates the embeddings $\bh_{1:L}$.}
We provide a proof-of-concept demonstration of this general approach in \secref{sec:app:aav} 
by pre-training $\bW$ on thermostability data and fine-tuning it on AAV capsid data 
(see \figref{fig:app:aav_spearman_mae}-\ref{fig:app:aav_pearson_rmse}).
This initial positive result is encouraging, 
and we believe this broader line of inquiry is a fruitful direction for future work.

%% file: app.tex
\subsection{Infinitely Divisible Matrices}
\label{app:infdiv}

\input{app_infdiv.tex}

\subsection{Local Linearity}
\label{app:loclin}

\input{app_loclin.tex}

\subsection{Detailed LOCK Kernel Description}
\label{app:lock}

\input{app_lock.tex}

\subsection{Epistasis and LOCK}
\label{app:epistasis}

\input{app_epistasis.tex}

\subsection{GP Ablation Experiment}
\label{app:ablation}

\input{ablation.tex}

\subsection{Injecting Diversity with Thompson Sampling}
\label{sec:app:ts}

\input{ts.tex}

\subsection{CLOCK Kernel Interpretation}
\label{app:clock_interpretation}

\input{clock_interpretation.tex}

\subsection{CLOCK Training Objective}
\label{app:clock}

\input{app_loss.tex}

\subsection{Proof-of-concept CLOCK Fine-tuning Experiment}
\label{sec:app:aav}

\input{aav.tex}

\subsection{Generalizing CLOCK}
\label{sec:app:genclock}

\input{genclock.tex}

\subsection{On the scalability of (C)LOCK GPs}
\label{sec:app:scalability}

\input{app_scalability.tex}

\subsection{Inventory of Substitution Matrices}
\label{app:submat}

\input{submat.tex}

\subsection{Kermut Kernel}
\label{app:kermut}

\input{kermut.tex}

\subsection{Description of Datasets and Evaluation Strategies}
\label{app:data}

\input{data.tex}

\subsection{Speed and Memory Usage Benchmark}
\label{app:speed}

\input{speed.tex}

\subsection{Experimental Details}
\label{app:exp}

\input{app_exp.tex}

\subsection{Additional Figures and Tables for the Local Learning Experiment}
\label{app:add_loc}

\input{app_add_local.tex}

\subsection{Additional Figures and Tables for the Multi-task Learning Experiment}
\label{app:add_multi}

\input{app_add_multi.tex}

\subsection{Code Snippets}
\label{app:code}

See \figref{code:corrmachine} and \figref{code:loss}.

\input{correlation_machine.tex}
\input{loss.tex}

%% file: app_infdiv.tex
\subsubsection{Definitions and Useful Facts}

We begin with some definitions.
\begin{definition}[Positive semidefinite matrix]
A real symmetric matrix $M\in\mathbb{R}^{n\times n}$ is \emph{positive semidefinite} or PSD
if
\[
x^\top M x \;\ge\; 0 \quad\text{for all } x\in\mathbb{R}^n.
\]
We write $M\succeq 0$.
\end{definition}

\begin{definition}[Elementwise exponential]
For $A\in\mathbb{R}^{n\times m}$, the elementwise exponential
$\exp^{\circ}(A)\in\mathbb{R}^{n\times m}$ is defined by
\[
\big(\exp^{\circ}(A)\big)_{ij} = e^{A_{ij}}\qquad(1\le i\le n,\;1\le j\le m).
\]
It is important to emphasize that this is \emph{not} the matrix exponential.
\end{definition}

\begin{definition}[Hadamard (elementwise) power]
Let $A=(A_{ij})\in\mathbb{R}^{n\times m}$ and let $t>0$. The Hadamard (elementwise) power
$A^{\circ t}\in\mathbb{R}^{n\times m}$ is defined by
\[
  \big(A^{\circ t}\big)_{ij} \;=\; (A_{ij})^{\,t}\qquad(1\le i\le n,\;1\le j\le m),
\]
i.e., exponentiation is applied to each entry separately.  
For integer $t$, this is defined for any real $A$; for non-integer real $t$, we require
$A_{ij}\ge 0$ to remain in the real numbers. 
\end{definition}

\begin{definition}[Schur (elementwise) logarithm]
Let $A=(A_{ij})\in\mathbb{R}^{n\times m}$ have strictly positive entries ($A_{ij}>0$).
The Schur (elementwise) logarithm of $A$ is
\[
  \log^{\circ}(A) \;\in\; \mathbb{R}^{n\times m},\qquad
  \big(\log^{\circ}(A)\big)_{ij} \equiv \log(A_{ij}).
\]
It is defined elementwise and should be distinguished from the matrix logarithm.
\end{definition}

\begin{definition}[Infinitely divisible matrix]
A symmetric matrix $K \in \mathbb{R}^{n\times n}$ with strictly positive entries is said to be
\emph{infinitely divisible} if its Hadamard (elementwise) powers 
$K^{\circ t}$ are positive semidefinite for all $t > 0$.
\end{definition}

\begin{definition}[Conditionally positive semidefinite matrix]
A real symmetric matrix $K\in\mathbb{R}^{n\times n}$ is \emph{conditionally positive semidefinite} or CPSD
if
\[
x^\top K x \;\ge\; 0 \quad\text{for all } x\in\mathbb{R}^n
\text{ with } \sum_{i=1}^n x_i=0.
\]
Equivalently, letting $J \equiv I-\frac{1}{n}\mathbf{1}\mathbf{1}^\top$ denote the orthogonal projector
onto the zero-sum subspace, $K$ is CPSD iff $J K J \succeq 0$.
A real symmetric matrix $K$ is \emph{conditionally negative semidefinite} or CNSD if $-K$ is CPSD.
\end{definition}
Finally, we state Schoenberg's theorem, as we will use it below.
\begin{theorem}[\citet{Schoenberg1938}]
\label{thm:schoenberg}
Let $\psi$ be symmetric with $\psi(i,i)=0$. Then the following are equivalent:
\begin{enumerate}
\item $\psi$ is conditionally negative semidefinite;
\item for every $t>0$, the kernel $k_t(i,j)=\exp\!\big(-t\,\psi(i,j)\big)$ is positive semidefinite;
\item for finite index sets, $\psi$ is a squared Euclidean distance.
\end{enumerate}
\end{theorem}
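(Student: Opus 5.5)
The plan is to prove the cycle of implications $(3)\Rightarrow(1)\Rightarrow(3)$ and $(1)\Rightarrow(2)\Rightarrow(1)$. All index sets are finite, say $\{1,\dots,n\}$. The one device used throughout is to anchor at index $1$ and define the matrix
\[
G_{ij} \equiv \tfrac12\big(\psi(i,1)+\psi(j,1)-\psi(i,j)\big).
\]
This gives the identity $-\psi(i,j) = 2G_{ij} - \psi(i,1) - \psi(j,1)$, which converts conditional negativity of $\psi$ into ordinary positivity of $G$.

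\textbf{$(3)\Rightarrow(1)$.} Suppose $\psi(i,j)=\|u_i-u_j\|^2$. Expand this as $\|u_i\|^2+\|u_j\|^2-2u_i^\top u_j$. For $x$ with $\sum_i x_i=0$, the two norm terms are killed by the zero-sum constraint. What remains is
\[
x^\top\psi x=-2\big\|\textstyle\sum_i x_iu_i\big\|^2\le 0 .
\]

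\textbf{$(1)\Rightarrow(3)$.} First show $G\succeq 0$. Given any $c\in\RR^n$, set $x=c-(\sum_i c_i)e_1$, which is zero-sum. Expanding $x^\top\psi x$ and using $\psi(1,1)=0$ and symmetry, one checks that $-x^\top\psi x = 2\,c^\top G c$. CNSD therefore gives $c^\top Gc\ge 0$. Next factor $G=U^\top U$, for instance by Cholesky or an eigendecomposition, with columns $u_i$. Then
\[
\|u_i-u_j\|^2=G_{ii}+G_{jj}-2G_{ij}.
\]
Substituting the definition of $G$ and using $\psi(i,i)=0$ (so $G_{ii}=\psi(i,1)$), this equals $\psi(i,j)$.

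\textbf{$(1)\Rightarrow(2)$.} Use the identity above to write
\[
e^{-t\psi(i,j)} = e^{-t\psi(i,1)}\,e^{-t\psi(j,1)}\,\big(\exp^{\circ}(2tG)\big)_{ij}.
\]
That is, $k_t = D\,\exp^{\circ}(2tG)\,D$, where $D$ is the positive diagonal matrix with entries $e^{-t\psi(i,1)}$. Congruence by $D$ preserves PSD. It then suffices that $\exp^{\circ}(M)$ is PSD whenever $M\succeq0$. For this, expand $\exp^{\circ}(M)=\sum_{m\ge0}M^{\circ m}/m!$. Each Hadamard power $M^{\circ m}$ is PSD by the Schur product theorem, applied inductively, with $M^{\circ 0}=\mathbf 1\mathbf 1^\top\succeq0$. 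The PSD cone is closed, so the convergent sum is PSD. This Schur-product step is the only nontrivial ingredient, and I expect it to be the main point needing care; if the Schur product theorem is not taken as known, I would prove it by writing $M=\sum_k v_kv_k^\top$ so that $M\circ N=\sum_{k,l}(v_k\circ w_l)(v_k\circ w_l)^\top$.

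\textbf{$(2)\Rightarrow(1)$.} Take a zero-sum $x$. Then $x^\top\mathbf 1\mathbf 1^\top x=0$, so
\[
0\le \frac{x^\top k_t x}{t}=x^\top\frac{k_t-\mathbf 1\mathbf 1^\top}{t}\,x .
\]
Entrywise, $\big(e^{-t\psi(i,j)}-1\big)/t\to-\psi(i,j)$ as $t\to0^+$. Passing to the limit gives $-x^\top\psi x\ge0$, which is CNSD.
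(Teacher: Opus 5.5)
Your proof is correct and complete. The key identities check: $-x^\top\psi x=2c^\top Gc$ for $x=c-(\sum_i c_i)e_1$, and $\|u_i-u_j\|^2=\psi(i,j)$ via $G_{ii}=\psi(i,1)$.

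The paper gives no proof of this theorem. It quotes \citet{Schoenberg1938} and uses it as a black box, in two places: the (CPSD $\Rightarrow$ ID) half of the proof sketch of Lemma~\ref{lem:inf-div}, and the (2)$\Leftrightarrow$(3) step of Theorem~\ref{thm:IDcorr-euclid}. The natural comparison is therefore with that neighbouring lemma.

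\begin{itemize}
\item Your (2)$\Rightarrow$(1) is the same computation as the paper's (ID $\Rightarrow$ CPSD) argument, with $K=\exp^{\circ}(-\psi)$. On zero-sum $x$ the $t=0$ value $\mathbf{1}\mathbf{1}^\top$ drops out, and the first-order term as $t\downarrow 0$ must be nonnegative.
\item Your (1)$\Rightarrow$(2) uses the same diagonal-congruence device as the paper's $B^{(t)}=D_t K^{\circ t}D_t$, but in the reverse direction. The paper turns a CPSD matrix $A$ into a zero-diagonal CNSD matrix $d(i,j)=\tfrac12(A_{ii}+A_{jj})-A_{ij}$, and then needs Schoenberg to conclude. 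You instead turn the CNSD $\psi$ into a genuinely PSD Gram matrix $G$ by anchoring at index $1$. After that, the Schur product theorem and closedness of the PSD cone finish the argument.
\end{itemize}

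What your route buys is self-containment: combined with the paper's reduction $A\mapsto d$, it makes Appendix~\ref{app:infdiv} rest only on the Schur product theorem rather than an external citation.

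It also justifies the embedding dimension that Theorem~\ref{thm:IDcorr-euclid} asserts without proof. Since $G_{1j}=\tfrac12\bigl(\psi(1,1)+\psi(j,1)-\psi(1,j)\bigr)=0$, the matrix $G$ always has rank at most $n-1$. Because $G$ is always singular, an eigendecomposition is a cleaner choice than textbook Cholesky. It yields points $u_i\in\RR^{n-1}$ with $u_1=0$, which is exactly the bound $m\le A-1$ in that theorem.
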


\subsubsection{Characterizing Infinitely Divisible Matrices}

With these definitions in hand, we can give a precise characterization of infinitely divisible matrices.
\begin{lemma}[\citet{berg1984harmonic}]
\label{lem:inf-div}
Let $K$ be a symmetric matrix with strictly positive entries. Then
$K$ is infinitely divisible if and only if $\log^{\circ} K$ is
\emph{conditionally positive semidefinite}, i.e.
\[
x^\top \big(\log^{\circ} K\big)\, x \;\ge\; 0
\quad \text{for all } x \in \mathbb{R}^n
\text{ with } \sum_i x_i = 0.
\]
\end{lemma}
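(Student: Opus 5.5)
The plan is to reduce the lemma to Schoenberg's theorem (\cref{thm:schoenberg}) by writing the entries of $K$ in exponential form and normalizing away the diagonal. Set $L \equiv \log^{\circ} K$, which is well defined because $K$ has strictly positive entries. Then $K^{\circ t} = \exp^{\circ}(tL)$ for every $t>0$. Schoenberg's theorem needs a kernel with vanishing diagonal, so I will introduce
\[
\psi_{ij} \equiv \tfrac12\big(L_{ii}+L_{jj}\big) - L_{ij},
\]
which is symmetric and satisfies $\psi_{ii}=0$. Writing $d_i \equiv e^{L_{ii}/2}>0$ and $D=\mathrm{diag}(d_1,\dots,d_n)$ gives
\[
K^{\circ t} = D^{t}\,\exp^{\circ}(-t\psi)\,D^{t},
\]
because $e^{tL_{ij}} = d_i^t d_j^t e^{-t\psi_{ij}}$.

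\textbf{Step 1: transfer positive semidefiniteness.} A congruence by the invertible diagonal matrix $D^t$ preserves positive semidefiniteness in both directions. Hence $K^{\circ t}\succeq 0$ for all $t>0$ if and only if $\exp^{\circ}(-t\psi)\succeq0$ for all $t>0$.

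\textbf{Step 2: apply Schoenberg.} By \cref{thm:schoenberg} (items 1 and 2), the condition from Step 1 holds if and only if $\psi$ is conditionally negative semidefinite.

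\textbf{Step 3: translate back to $L$.} For $x$ with $\sum_i x_i = 0$,
\[
x^\top \psi x = \tfrac12 \sum_{i,j} x_i x_j (L_{ii}+L_{jj}) - x^\top L x .
\]
The first sum vanishes: the $L_{ii}$ term factors as $\big(\sum_j x_j\big)\big(\sum_i x_i L_{ii}\big)=0$, and the $L_{jj}$ term vanishes by symmetry. So on the zero-sum subspace $x^\top\psi x = -x^\top L x$. Therefore $\psi$ is CNSD if and only if $L$ is CPSD, which closes the chain of equivalences.

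The main obstacle is Step 2, which depends entirely on \cref{thm:schoenberg}. If a self-contained argument were wanted there, the two directions would go as follows.

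For CPSD $\Rightarrow$ infinitely divisible, I would show directly that $\exp^{\circ}(tL)\succeq 0$. Fix index $1$ and define $M_{ij} = L_{ij}-L_{i1}-L_{1j}+L_{11}$. Testing $L$'s conditional positivity on vectors $x - (\sum_i x_i)e_1$ shows $M\succeq0$. Then $e^{tL_{ij}} = e^{tL_{11}}\,a_i a_j\, e^{tM_{ij}}$ with $a_i \equiv e^{t(L_{i1}-L_{11})}$. The factor $\exp^{\circ}(tM)$ is PSD by the Schur product theorem applied to its power series, and the factor $a_i a_j$ is a positive diagonal congruence.

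For the converse, differentiate at $t=0$. If $x$ sums to zero, then
\[
0 \le \frac{x^\top K^{\circ t} x}{t} = \frac{x^\top(\mathbf 1\mathbf 1^\top + tL + O(t^2))x}{t} \to x^\top L x,
\]
since the $\mathbf 1\mathbf 1^\top$ term vanishes on zero-sum $x$.
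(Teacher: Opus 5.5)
Your proof is correct and follows the paper's argument. You use the same zero-diagonal kernel $\psi$ (the paper's $d$), the same diagonal-congruence factorization $K^{\circ t}=D^{t}\exp^{\circ}(-t\psi)D^{t}$, and Schoenberg's theorem; your optional self-contained converse is exactly the paper's derivative-at-$t=0$ argument. The only differences are that you obtain both directions at once from the Schoenberg equivalence, whereas the paper uses Schoenberg only for CPSD $\Rightarrow$ ID, and that your optional Schur-product argument with $M_{ij}=L_{ij}-L_{i1}-L_{1j}+L_{11}$ gives a Schoenberg-free proof of that direction.
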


\begin{proof}[Proof sketch]
We give two standard arguments; see \citet{berg1984harmonic} for a detailed proof.

\paragraph{(CPSD $\Rightarrow$ ID via Schoenberg)}
Assume $A\equiv\log^{\circ} K$ is CPSD. Define
\[
d(i,j) \;\equiv\; \frac{A_{ii}+A_{jj}}{2}-A_{ij}\,.
\]
Then $d(i,i)=0$, and for any $x$ with $\sum_i x_i=0$,
\[
\sum_{i,j} x_i x_j\, d(i,j)
= -\, x^\top A x \;\le\; 0,
\]
so $d$ is CNSD. By Schoenberg's theorem, the kernel $B^{(t)}$ with entries
$B^{(t)}_{ij} = \exp\!\big(-t\, d(i,j)\big)$ is PSD for every $t>0$.
Noting the factorization
\[
B^{(t)}_{ij}
= \exp\!\Big(-\tfrac{t}{2}A_{ii}\Big)\; \exp\!\big(t A_{ij}\big)\; \exp\!\Big(-\tfrac{t}{2}A_{jj}\Big)
= \big(D_t\, K^{\circ t}\, D_t\big)_{ij},\quad
D_t\equiv\operatorname{diag}\!\big(e^{-t A_{11}/2},\dots,e^{-t A_{nn}/2}\big),
\]
we have $B^{(t)} = D_t\, K^{\circ t}\, D_t$. Since sandwiching with $D_t^{-1}$ preserves PSD,
$K^{\circ t}\succeq 0$ for every $t>0$, i.e., $K$ is infinitely divisible.

\paragraph{(ID $\Rightarrow$ CPSD)}
Assume $K$ is infinitely divisible. Fix $x\in\mathbb{R}^n$ with $\sum_i x_i=0$ and set
\[
g_x(t) \;\equiv\; x^\top K^{\circ t} x \;=\; \sum_{i,j} x_i x_j\, K_{ij}^{\,t}\qquad (t\ge 0).
\]
For $t>0$, $K^{\circ t}\succeq 0$ so $g_x(t)\ge 0$. As $t\downarrow 0$,
$K^{\circ t}\to \mathbf{1}\mathbf{1}^\top$ elementwise, hence
$g_x(0)\equiv\lim_{t\downarrow 0}g_x(t)=(\sum_i x_i)^2=0$.
Thus $g_x$ has a right minimum at $t=0$, so $g'_x(0^+)\ge 0$.
Differentiating termwise at $t=0$ we obtain
\[
g'_x(0^+) \;=\; \sum_{i,j} x_i x_j \log K_{ij}
\;=\; x^\top \big(\log^{\circ} K\big)\, x \;\ge\; 0,
\]
which is exactly conditional positive semidefiniteness of $\log^{\circ} K$.
\end{proof}

\subsubsection{Characterizing Infinitely Divisible Correlation Matrices}

We can also characterize infinitely divisible \emph{correlation} matrices.

\begin{theorem}[Characterization via squared Euclidean distances]
\label{thm:IDcorr-euclid}
Let $K\in\mathbb{R}^{A\times A}$ be a correlation matrix with strictly positive entries.
The following are equivalent:
\begin{enumerate}
\item $K$ is infinitely divisible.
\item $\log^{\circ} K$ is CPSD (equivalently $D\equiv-\log^{\circ} K$ is CNSD).
\item There exist points $x_1,\dots,x_A$ in $\mathbb{R}^m$ (for some $m\le A-1$) such that
\[
K_{ij} \;=\; \exp\!\big(-\|x_i-x_j\|^2\big)\qquad\text{for all }i,j.
\]
\end{enumerate}
In particular, every infinitely divisible correlation matrix is of the form
$K=\exp^{\circ}(-D)$ where $D$ is a \emph{squared Euclidean distance matrix}.
\end{theorem}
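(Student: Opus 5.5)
The equivalence (1)$\Leftrightarrow$(2) is exactly Lemma~\ref{lem:inf-div}, which applies to any symmetric matrix with strictly positive entries. So the only new content is (2)$\Leftrightarrow$(3). The plan is to use the correlation-matrix normalization $K_{ii}=1$, which gives $(\log^{\circ}K)_{ii}=0$, and then invoke Schoenberg's theorem (Theorem~\ref{thm:schoenberg}) with $\psi \equiv D = -\log^{\circ}K$.

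For (2)$\Rightarrow$(3), set $D_{ij} = -\log K_{ij}$. This matrix is symmetric with $D_{ii}=-\log 1 = 0$, and by (2) it is CNSD. By the third item of Theorem~\ref{thm:schoenberg}, applied to the finite index set $\{1,\dots,A\}$, $D$ is a squared Euclidean distance matrix. So there are points $x_i$ with $D_{ij}=\|x_i-x_j\|^2$, and hence $K_{ij}=\exp(-\|x_i-x_j\|^2)$.

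For the dimension bound $m\le A-1$, I would make the classical construction explicit rather than rely on the black box. Let $G = -\tfrac12 J D J$, where $J$ is the centering projector from the CPSD definition. Then $G \succeq 0$ precisely because $D$ is CNSD. Moreover $G\mathbf{1}=0$, so $\operatorname{rank} G \le A-1$. Write $G = X X^\top$ with $X\in\mathbb{R}^{A\times m}$ and $m=\operatorname{rank}G\le A-1$, and take $x_i$ to be the rows of $X$. A routine expansion using $D_{ii}=0$ gives $\|x_i-x_j\|^2 = G_{ii}+G_{jj}-2G_{ij} = D_{ij}$. This double-centering identity is the one genuinely computational step; I expect it to be the main (if mild) obstacle. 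It is also needed to justify $m\le A-1$, which Schoenberg's theorem as stated does not supply.

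For (3)$\Rightarrow$(2), take $D_{ij}=\|x_i-x_j\|^2$ and any $c$ with $\sum_i c_i=0$. Expanding gives
\[
\sum_{i,j} c_i c_j \|x_i-x_j\|^2 = -2\,\Big\|\sum_i c_i x_i\Big\|^2 \le 0,
\]
because the terms in $\|x_i\|^2$ and $\|x_j\|^2$ are killed by $\sum_i c_i=0$. Hence $D$ is CNSD, so $\log^{\circ}K=-D$ is CPSD. Alternatively, (3)$\Rightarrow$(1) follows directly: $K^{\circ t}_{ij}=\exp(-\|\sqrt t\,x_i-\sqrt t\,x_j\|^2)$ is a Gaussian RBF Gram matrix and therefore PSD.

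The closing remark, that $K=\exp^{\circ}(-D)$ with $D$ a squared Euclidean distance matrix, is just a restatement of (3).
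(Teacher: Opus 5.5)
Your proposal is correct and follows the same route as the paper: Lemma~\ref{lem:inf-div} gives (1)$\Leftrightarrow$(2), and Schoenberg's theorem, applied with $D=-\log^{\circ}K$ and using $K_{ii}=1$ so that $D_{ii}=0$, gives (2)$\Leftrightarrow$(3). Your double-centering construction $G=-\tfrac12 JDJ$ and your direct expansion for (3)$\Rightarrow$(2) fill in details that the paper's one-line proof delegates to Schoenberg's theorem, including the bound $m\le A-1$, which the paper does not justify explicitly.
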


\begin{proof}[Proof]
By Lemma~\ref{lem:inf-div}, \emph{(1)}$\;\Leftrightarrow\;$\emph{(2)}.
By Schoenberg’s Theorem~\ref{thm:schoenberg}, \emph{(2)}$\;\Leftrightarrow\;$\emph{(3)}:
$D$ is CNSD iff $e^{-tD}$ is PSD for all $t>0$, and for finite sets such a CNSD $D$
is precisely a squared Euclidean distance.
\end{proof}

\begin{remark}
    For all $n\times n$ doubly nonnegative matrices $A$ (i.e.~PSD matrices with all nonnegative entries), 
    the Hadamard power $A^{\circ t}$ preserves positive semidefiniteness iff $t\in\mathbb{N}$ or $t\ge n-2$ \cite{FitzGeraldHorn1977}.
    This severe dimension-dependent restriction on $t$ motivates our use of infinitely divisible matrices, which
    enable learning the exponent $t$ with gradient methods for all positive $t$.
\end{remark}

%% file: app_loclin.tex
Suppose $\bx \in \RR^D$ and let $f(\bx) = \bbeta(\bx) \cdot \bx = \sum_{d=1}^D \beta_d(\bx) x_d$. 
We place an independent GP prior on each component $\beta_d(\bx)$ of $\bbeta(\bx)$ with zero mean and with kernel $k_d(\cdot, \cdot)$,
i.e.~$\langle \beta_d(\bx) \rangle=0$ and $\langle \beta_d(\bx) \beta_{d}(\by) \rangle=k_d(\bx, \by)$.
Since each $\beta_d(\bx)$ is Gaussian distributed and $f(\bx)$ is linear in $\bbeta(\bx)$, $f(\bx)$ is evidently Gaussian.
To determine the GP prior that controls $f(\bx)$ we need to compute the corresponding mean and covariance.
First compute the mean 
\begin{equation}
\langle f(\bx) \rangle = \langle \bbeta(\bx) \cdot \bx \rangle  = \langle \bbeta(\bx) \rangle  \cdot \bx  = 0
\end{equation}
Next compute the covariance
\begin{equation}
    \label{eqn:prodkernel}
    \langle f(\bx) f(\by) \rangle = \langle \left (\bbeta(\bx) \cdot \bx \right)  \left(\bbeta(\by) \cdot \by \right)\rangle
    = \sum_{d,d^\prime} x_d y_{d^\prime} \langle \beta_d(\bx) \beta_{d^\prime}(\by) \rangle = 
    \sum_{d} x_d y_d \langle \beta_d(\bx) \beta_{d}(\by) \rangle = \sum_{d} x_d y_d k_d(\bx, \by)
\end{equation}
If we specialize to the case that $k_d(\bx, \by)=\knl(\bx, \by)$ for $d=1,...,D$ then the kernel in \eqnref{eqn:prodkernel} simplifies
to $\knl(\bx, \by)  \klin(\bx, \by)$ where $\klin(\bx, \by) \equiv \bx^{\rm T} \by = \bx \cdot \by$ is the canonical linear kernel. 
So we conclude that $f(\bx) = \bbeta(\bx) \cdot \bx$ is a zero-mean Gaussian process governed by the product kernel $\knl(\bx, \by) \klin(\bx, \by)$. 

Now suppose that $\bbeta(\bx)$ is a vector-valued Gaussian process with zero mean and with kernel
$\langle \beta_d(\bx) \beta_{d^\prime}(\by) \rangle=U_{d d^\prime} \knl(\bx, \by)$ where $\bU$ is a PSD matrix. 
Then \eqnref{eqn:prodkernel} becomes
\begin{equation}
    \label{eqn:prodkernel2}
    \langle f(\bx) f(\by) \rangle = \sum_{d,d^\prime} x_d y_{d^\prime} \langle \beta_d(\bx) \beta_{d^\prime}(\by) \rangle 
    = \sum_{d,d^\prime} x_d y_{d^\prime} U_{d d^\prime} \knl(\bx, \by) = \left( \bx^{\rm T} \bU \by \right)  \knl(\bx, \by)
\end{equation}
In other words the resulting kernel is still a product of a linear kernel and $\knl$, only now the linear kernel is 
$\bx^{\rm T} \bU \by$ instead of the isotropic kernel $\bx^{\rm T} \by$. 
This is precisely the scenario we have in \secref{sec:kernel}, where $\bU$ is a large block diagonal covariance matrix of 
size $(LA) \times (LA)$ where there are $L$ blocks of size $A \times A$ and each block is a correlation matrix $\bC_\ell$, 
see \eqnref{eqn:linkernel}.\footnote{Of course in the LOCK kernel correlation matrices $\bC_\ell$ \emph{also appear} in the 
non-linear kernel \eqnref{eqn:nlkernel}.}

Finally, we consider the case where $\bbeta(\bx)$ is a vector-valued Gaussian process with zero mean and with kernel given by
\begin{equation}
\langle \beta_d(\bx) \beta_{d^\prime}(\by) \rangle=U_{d d^\prime} \left(\kappa + \knl(\bx, \by) \right)
\end{equation}
where $\kappa > 0$ is a fixed constant and we note that a constant covariance function is equivalent to including an intercept term in  
the function class (with its prior covariance set by $\kappa$). 
In this case we find that the effective kernel for $f(\bx)$ is given by 
\begin{equation}
    \label{eqn:kappakernel}
    \left( \bx^{\rm T} \bU \by \right) \left(\kappa + \knl(\bx, \by) \right) = \kappa \left( \bx^{\rm T} \bU \by \right) +  \knl(\bx, \by)  \left( \bx^{\rm T} \bU \by \right)
\end{equation}
In other words we have a kernel that is of the form ``linear kernel plus non-linear kernel times linear kernel.''
This is precisely the form of the LOCK kernel in \eqnref{eqn:lockdefn2}, where the only difference is that in the latter 
we have decoupled the hyperparameters used in each kernel in the sum.\footnote{In particular we introduce separate exponents 
$\alpha^{(1)}$, $\alpha^{(2)}$, and $\alpha_\ell$; see \secref{app:lock}.}
This decoupling does not dramatically expand the class of functions under consideration, but it does make the kernel a bit more flexible. 
Put differently, we note that, depending on the application, it may be undesirable for the covariance function that controls $\bbeta(\bx)$ to revert to zero far
away from the training data.
By including $\kappa$ we instead revert to a learnable constant $\kappa \bU$, and the result is an `extra' linear kernel in \eqnref{eqn:kappakernel}.

Whatever its precise form, if the kernel $\knl(\bx, \by)$ is such that $\bbeta(\bx)$ varies
slowly across the input space $\bx$, then $f(\bx)$ is naturally viewed as locally linear. We note that nothing in
our derivation requires $\knl(\bx, \by)$ to be non-linear, but this is the most natural ansatz in practice.

%% file: app_lock.tex
We restate the definitions of the two base kernels used in LOCK (see \eqnref{eqn:linkernel}-\ref{eqn:nlkernel})
\vspace{-1mm}
\begin{equation}
    \label{eqn:basekernels}
    \kLlin(\bx, \by) = \Sigma_{\ell=1}^L \bx_\ell^{\rm T} \bC_\ell^{\alpha_\ell} \by_\ell \qquad \qquad
    \kLnl(\bx, \by) = \prod_{\ell=1}^L \bx_\ell^{\rm T} \bC_\ell^{\alpha_\ell} \by_\ell
\end{equation}
\vspace{-1mm}
as well as the full LOCK kernel
\begin{align}
    \label{eqn:lockdefn2}
    \kL(\bx, \by) = \sigma_1^2 \kLnl(\bx, \by) \kLlin(\bx, \by)  + \sigma_2^2 \kLlinp(\bx, \by)
\end{align}
The first linear kernel $\kLlin$ is equipped with a global exponent $\alpha^{(1)}$, while the second
linear kernel $\kLlinp$ is equipped with a global exponent $\alpha^{(2)}$. 
The non-linear kernel $\kLnl$ is equipped with positionwise exponents $\alpha_\ell$.
To be explicit the first linear kernel is given by
\vspace{-1mm}
\begin{equation}
    \label{eqn:linkernel1defn}
    \kLlin(\bx, \by) = \Sigma_{\ell=1}^L \bx_\ell^{\rm T} \bC_\ell^{\alpha^{(1)}} \by_\ell
\end{equation}
\vspace{-1mm}
and the second linear kernel is given by
\vspace{-1mm}
\begin{equation}
    \label{eqn:linkernel2defn}
    \kLlinp(\bx, \by) = \Sigma_{\ell=1}^L \bx_\ell^{\rm T} \bC_\ell^{\alpha^{(2)}} \by_\ell
\end{equation}
while the non-linear kernel is as in \eqnref{eqn:basekernels}.
In the regression case we also have the noise scale $\sigma_n$ (see \eqnref{eqn:mll}).
Thus we have $L+5$ hyperparameters: $\{ \sigma_1, \sigma_2, \sigma_n, \alpha^{(1)}, \alpha^{(2)}, \alpha_{1:L} \}$.
While it would of course be possible to introduce local exponents for each linear kernel---or even additional positive multiplicative parameters
that modulate the overall scale of each correlation matrix $\bC_\ell$---we find that this choice strikes a good balance between
flexibility and parsimony.

\subsubsection{Normalization}
\label{app:norm}
In practice we use the following normalization scheme for correlation matrices. 
First we start with a raw substitution matrix, e.g.~BLOSUM50, denoted by $\bS$ (assumed to be exponentiated, i.e.~not in log-odds space). 
We then define the corresponding correlation matrix $C_{aa^\prime} \equiv S_{aa^\prime} / \sqrt{S_{aa} S_{a^\prime a^\prime}}$.
In order to make it straightforward to define priors over exponent hyperparameters we do one final normalization step.
In particular we raise $\bC$ to the unique power that makes the median off-diagonal entry of $\bC$ equal to $\exp(-\tfrac{1}{4})$.
In other words $\bC \rightarrow \bC^{\circ t}$ where $t=-\tfrac{1}{4 \log m}$ and $m$ is the median of the off-diagonal entries of $\bC$.
An example of such a normalized correlation matrix can be seen in the right panel of \figref{fig:blosum-corr}.

\subsubsection{Hyperparameter Priors}
\label{app:hyper}
We place ${\rm Gamma}(2, 2)$ priors on both kernel variances $\sigma_1^2$ and $\sigma_2^2$ and 
place a ${\rm Gamma}(2, 2)$ on the observation noise variance $\sigma_n^2$.\footnote{We also use these priors for \texttt{Tanimoto-GP}; for Kermut we
use the priors in \citet{groth2024kermut}.}
We place a ${\rm LogNormal}(0, 1)$ prior on global exponents $\alpha^{(1)}$ and $\alpha^{(2)}$.
We use an overcomplete parameterization for local exponents, i.e.~$\alpha_\ell = \tilde{\alpha} \tilde{\alpha}_\ell$,
where $ \tilde{\alpha}$ is a scalar and where $\tilde{\alpha}_{1:L} \in \RR^L$.
We then place a ${\rm LogNormal}(0, 1)$ prior on the scalar $\tilde{\alpha}$ and a 
${\rm LogNormal}(0, \tfrac{1}{4})$ prior on each $\tilde{\alpha}_\ell$.
\emph{Marginally}, this choice corresponds to a ${\rm LogNormal}(0, \tfrac{5}{4})$ prior on $\alpha_\ell$. 
We note that choosing an overcomplete parameterization has an impact on the \emph{optimization dynamics}; in particular
by introducing $\tilde{\alpha}$ we expect it to be easier to take larger steps in $\alpha_\ell$ space.
These priors were chosen based on regression experiments conducted with datasets disjoint from those
used in our experiments in \secref{sec:exp}.

%% file: app_epistasis.tex
\input{gb1_spearman_mae_cv_extrapolation_table.tex}

The LOCK base kernels in \eqnref{eqn:linkernel}-\ref{eqn:nlkernel} are defined residuewise and do not introduce 
explicit learnable parameters for specific pairwise or higher-order interactions. 
Nevertheless, the full LOCK kernel in \eqnref{eqn:lockdefn} can induce strongly non-additive behavior, as follows. 
First, the non-linear kernel $\knl$ has a multiplicative structure across positions, which couples residuewise contributions and makes motif-level effects natural to represent. 
Second, because the full kernel is locally linear, the effective coefficient of a mutation can vary smoothly with sequence context. 
In this sense, LOCK is best viewed as modeling smooth, context-dependent epistasis rather than explicit sparse interaction terms.
As described in \secref{sec:kernelprop}, the linear kernel uses rotated residue-level basis functions, 
while the non-linear kernel uses their tensor products. 
Consequently, observing that a particular multi-residue motif is associated with high fitness tends to increase predicted fitness for 
nearby motifs composed of biophysically similar residues. 
This allows LOCK to capture a broad class of higher-order effects that are mediated by shared residue similarities and local smoothness.
Accordingly, LOCK is particularly well-suited to modeling epistasis between classes of biophysically similar substitutions, 
rather than to arbitrary interactions between mutations.
Notably, LOCK does not separately parameterize each pairwise, three-way, or higher-order interaction. 
The induced epistatic structure is constrained: it is determined implicitly by the residuewise correlation matrices $\bC_{1:L}$, the learned exponents, and the multiplicative/local-linear kernel structure. 
This is arguably a feature in the low-data regime, where attempting to learn unconstrained higher-order epistasis directly is generally infeasible.
A useful analogy might be the following. 
Suppose $z$ is a gaussian random variable with mean zero and covariance matrix $\Sigma$. 
Then higher-order correlations like $\langle z_i z_j z_k z_l \rangle$ are entirely 
determined by $\Sigma_{ij} = \langle z_i z_j \rangle$. 
Similarly, in LOCK, the induced higher-order epistatic structure is governed by the lower-order correlation matrices $\bC_{1:L}$.

In Table~\ref{tab:gb1_spearman_mae} we benchmark \texttt{LOCK-GP} against two baselines on the GB1 dataset \cite{wu2016adaptation},
which has four variable positions and is known to exhibit significant pairwise as well as higher-order epistasis in these positions.
We find that all three models struggle to provide good fits, especially in the low-data regime, although
the neural model \texttt{MLP-ESM2-LastLayer} achieves somewhat better performance, at least for $N=1536$ training data points.
These results highlight the challenge of obtaining good predictive models for highly epistatic landscapes in the low-data regime.

%% file: gb1_spearman_mae_cv_extrapolation_table.tex
\begin{table}[t]
\centering
\begin{tabular}{@{}lcccccc@{}}
\toprule
 & \multicolumn{3}{c}{Spearman R} & \multicolumn{3}{c}{MAE} \\
\cmidrule(lr){2-4} \cmidrule(lr){5-7}
 & \texttt{LOCK-GP} & \texttt{Kermut-GP} & \texttt{MLP-ESM2-LastLayer} & \texttt{LOCK-GP} & \texttt{Kermut-GP} & \texttt{MLP-ESM2-LastLayer} \\
\midrule
Cross-validation ($N=48$) & 0.22 & 0.13 & 0.25 & 0.31 & 0.32 & 0.27 \\
Cross-validation ($N=1536$) & 0.42 & 0.45 & 0.53 & 0.26 & 0.36 & 0.14 \\
Extrapolation ($N=128$) & 0.32 & 0.21 & 0.36 & 0.41 & 1.14 & 0.42 \\
Extrapolation ($N=512$) & 0.34 & 0.26 & 0.44 & 0.62 & 1.03 & 0.33 \\
\bottomrule
\end{tabular}
    \caption{We report Spearman R and MAE metrics for the GB1 dataset in both the cross-validation and extrapolation regime,
    with the number of training data points ranging from $N=48$ to $N=1536$. 
    The extrapolation training and test sets are defined using a Hamming distance cutoff of $D=2$; c.f.~\secref{app:data}.}
\label{tab:gb1_spearman_mae}
\end{table}

%% file: ablation.tex
\input{ablation_tables.tex}

To better understand the effect of different choices that enter into the LOCK kernel in \eqnref{eqn:lockdefn}, 
we perform a systematic ablation experiment in which we train $14$ additional models. 
For the results see Tables 
\ref{tab:ablation_MLDR-011740_96}-\ref{tab:ablation_MLDR-011736_side_by_side}. 
To make cross-comparison easier, we also report results for the three Kermut variants and \texttt{Tanimoto-GP}.
In the following we dissect the results in detail.

While we use the BLOSUM50 substitution matrix by default, 
there are many more BLOSUM variants (see Table \ref{tab:substitution-matrices}). 
In the ablation we compare to BLOSUM45, BLOSUM62, and BLOSUM80. 
We see that the differences are generally quite small, which is probably to be expected since the normalization step in \eqnref{eqn:corrdefn} substantially reduces the differences between different BLOSUM variants. 
While we do not show the results here, we find that this weak dependence also holds for individual landscapes.

Next we consider the hyperparameter priors described in \secref{sec:hyper}, 
focusing on the exponents $\alpha_\ell$ that enter $\kLnl$.
Instead of removing the priors entirely, we instead place a much weaker prior on $\alpha_\ell$, 
as removing the prior entirely leads to numerical instabilities.\footnote{In particular we
place a ${\rm LogNormal}(0, 16)$ prior on the scalar $\tilde{\alpha}$ and a
${\rm LogNormal}(0, 16)$ prior on each $\tilde{\alpha}_\ell$; c.f.~\secref{app:hyper}.}
Under the weak prior---see \texttt{LOCK-WeakPrior}---predictive performance is impaired across the board.

Next we consider the single linear kernel that enters into the LOCK kernel in \eqnref{eqn:lockdefn}, i.e.~the
term $\sigma_2^2 \kLlinp(\bx, \by)$. 
We remove this kernel component entirely, resulting in \texttt{LOCK-NoLinear}.
We find that while uncertainty-unaware metrics like SpearmanR and MAE are very similar for \texttt{LOCK} and \texttt{LOCK-NoLinear},
the NLL of the latter is degraded for the most challenging OOD regimes. 
In particular in the unseen mutations regime (resp., extrapolation regime with $128$ data points) \texttt{LOCK} obtains a lower
NLL than \texttt{LOCK-NoLinear} on $15/21$ (resp., $14/21$) landscapes.
For this reason, and for the reasons outlined in \secref{sec:kernelprop}, we prefer to keep the term $\sigma_2^2 \kLlinp(\bx, \by)$,
although we would generally expect to get good predictive performance without it.

With reference to \secref{app:lock}, we see that the only `local'---i.e.~residuewise---parameters in LOCK are the exponents
$\alpha_\ell$ that modulate the non-linear kernel $\kLnl$. 
In \texttt{LOCK-DoublyLocal} we also include residuewise exponents in the single linear kernel $\sigma_2^2 \kLlinp(\bx, \by)$.
From Tables \ref{tab:ablation_MLDR-011740_96}-\ref{tab:ablation_MLDR-011736_side_by_side} we can see that
this additional flexibility can lead to slightly improved performance in some (but not all) cases.
Since the differences are small, and since this change approximately doubles the total number of parameters (from $L+5$ to $2L+4$),
we prefer the more parsimonious choice in LOCK.

Next we consider the base kernels $\kLnl$ and $\kLlin$ in isolation, considering both global and local versions of each (i.e.~with scalar and residuewise exponents, respectively; c.f.~\secref{sec:hyper}). 
First we note that it is generally advantageous to include the additional flexibility of residuewise parameters.
Next we note the superiority of the non-linear kernel $\kLnl$ to the linear kernel $\kLlin$.
Finally we note that while $\kLnl$ in isolation approaches the performance of LOCK for uncertainty-unaware metrics 
like SpearmanR and MAE, it has noticeably worse NLL for the more challenging OOD regimes.
In particular in the unseen mutations regime (resp., extrapolation regime with $128$ data points) \texttt{LOCK} obtains a lower
NLL than \texttt{LOCK-NonLinearOnly-Local} on $18/21$ (resp., $14/21$) landscapes.
For this reason we prefer the locally linear LOCK kernel, although we would generally expect good performance
from $\kLnl$ alone (with the caveat that using $\kLnl$ in isolation results in mean-reverting predictions away from the training data, which may be undesirable depending on the application).
Taken together the results in Tables \ref{tab:ablation_MLDR-011740_96}-\ref{tab:ablation_MLDR-011736_side_by_side}  
demonstrate that the most crucial ingredient in LOCK is arguably fit-to-purpose incorporation of biophysical similarity as encoded by 
(infinitely divisible) BLOSUM substitution matrices.

Next we consider an isotropic linear kernel as in \eqnref{eqn:isolin}. 
We find that \texttt{Linear} is uniformly outperformed by \texttt{LOCK-LinearOnly-Global},
highlighting the value of incorporating substitution matrices into the 
kernel---and as will be confirmed further in the non-linear case in the following paragraph. 

Next we consider an RBF kernel as in \eqnref{eqn:rbfkernel} with learnable lengthscales for each residue. 
We place a ${\rm Gamma}(4, 2)$ prior on each lengthscale $\tau_\ell$, i.e.~the prior probability is maximized for
$\tau_\ell=2$ so that the typical `similarity' of nonidentical amino acids is $\sim \exp(-1/4) \approx 0.78$.
The anisotropic RBF kernel performs relatively poorly across the board, 
emphasizing the value of incorporating substitution matrices into the kernel (we note that these improvements tend to be larger than in the linear case
explored in the previous paragraph).
Indeed on some datasets the NLL is catastrophically bad ($> 10^8$).
Moreover the RBF kernel can be numerically unstable: note that we did not obtain complete results for the RBF kernel
in Table \ref{tab:ablation_MLDR-011736_side_by_side}.

Finally we consider whether the Tanimoto kernel exhibits strong dependence on the BLOSUM substitution matrix used.
We consider BLOSUM62 (the default) as well as BLOSUM50. 
We find little difference in predictive performance.
We also compare to the original Tanimoto kernel (i.e.~without eigenvalue clamping) described in \citet{gessner2024active-mlsb} (see \secref{app:sec:localdetails} for discussion),
which we find to perform slightly worse on average.

\paragraph{Additional Discussion}

We have also considered LOCK variants that utilize matrix exponentiation instead of Hadamard exponentiation to modulate
correlation matrices $\bC_\ell$. While we find that this is viable, it is less numerically stable.
As such we prefer using Hadamard exponentiation; this choice is particularly important for CLOCK, where numerical stability
is crucial for robust learning of kernels parameterized by tens of thousands of parameters. 

We have also considered LOCK variants analogous to \texttt{LOCK-DoublyLocal} described above in which the residuewise parameters
are tied together. We find that this can work well on some datasets, but works poorly on others. 
Consequently we prefer the simpler LOCK.

More broadly, instead of selecting a single LOCK variant, we could do a small kernel search and choose the best kernel architecture for each dataset. 
While we do not pursue this option here, it is likely that by doing so we could squeeze some additional predictive performance from LOCK, at least on some datasets.

%% file: ablation_tables.tex
\begin{table}[htbp]
\centering
\input{table_MLDR-011740_96.tex}
\normalsize
\caption{Performance metrics in the unseen mutations regime with $96$ training data points for the GP ablation experiment.
RMSE is root mean squared error; CRPS is continuous ranked probability score \cite{gneiting2007strictly}; NLL is negative log likelihood;
ECE is expected calibration error.
    Note that we obtained constant predictions for some \texttt{Linear} predictors in this regime, so we do not include results
    for this model.
See \secref{app:ablation} for discussion.
    }
\label{tab:ablation_MLDR-011740_96}
\end{table}

\begin{table}[htbp]
\centering
\input{table_MLDR-011737_side_by_side.tex}
\normalsize
\caption{Performance metrics in the Hamming-distance-based extrapolation regime for the GP ablation experiment, with the number
    of training data points equal to either $128$ or $512$.
RMSE is root mean squared error; CRPS is continuous ranked probability score \cite{gneiting2007strictly}; NLL is negative log likelihood;
ECE is expected calibration error.
See \secref{app:ablation} for discussion.
    }
\label{tab:ablation_MLDR-011737_side_by_side}
\end{table}

\begin{table}[htbp]
\centering
\input{table_MLDR-011736_side_by_side.tex}
\normalsize
\caption{Performance metrics in the cross-validation regime for the GP ablation experiment, 
    with the number of training data points ranging from $48$ to $1536$.
RMSE is root mean squared error; CRPS is continuous ranked probability score \cite{gneiting2007strictly}; NLL is negative log likelihood;
ECE is expected calibration error. Note we did not obtain complete results for the RBF kernel due to numerical issues.
See \secref{app:ablation} for discussion.
    }
\label{tab:ablation_MLDR-011736_side_by_side}
\end{table}

%% file: table_MLDR-011740_96.tex
\resizebox{0.66\linewidth}{!}{
\begin{tabular}{lccccccc}
\toprule
Model & SpearmanR & PearsonR & MAE & RMSE & CRPS & NLL & ECE \\
\midrule
\texttt{LOCK} & 0.610 & 0.622 & 0.591 & 0.823 & 0.456 & 2.560 & 0.113 \\
\midrule
\texttt{LOCK-45} & 0.607 & 0.620 & 0.592 & 0.824 & 0.457 & 2.570 & 0.113 \\
\texttt{LOCK-62} & 0.609 & 0.621 & 0.592 & 0.825 & 0.458 & 2.624 & 0.115 \\
\texttt{LOCK-80} & 0.609 & 0.621 & 0.592 & 0.824 & 0.457 & 2.604 & 0.113 \\
\midrule
\texttt{LOCK-WeakPrior} & 0.574 & 0.575 & 0.591 & 0.884 & 0.491 & 8.933 & 0.161 \\
\texttt{LOCK-NoLinear} & 0.609 & 0.622 & 0.596 & 0.824 & 0.456 & 3.004 & 0.106 \\
\texttt{LOCK-DoublyLocal} & 0.612 & 0.623 & 0.587 & 0.821 & 0.454 & 2.554 & 0.117 \\
\midrule
\texttt{LOCK-LinearOnly-Global} & 0.527 & 0.538 & 0.670 & 0.890 & 0.503 & 2.525 & 0.114 \\
\texttt{LOCK-LinearOnly-Local} & 0.534 & 0.544 & 0.664 & 0.890 & 0.497 & 2.665 & 0.107 \\
\texttt{LOCK-NonLinearOnly-Global} & 0.598 & 0.607 & 0.625 & 0.842 & 0.473 & 2.735 & 0.118 \\
\texttt{LOCK-NonLinearOnly-Local} & 0.608 & 0.622 & 0.599 & 0.824 & 0.459 & 3.304 & 0.108 \\
\midrule
\texttt{Linear} & N/A & N/A & N/A & N/A & N/A & N/A & N/A \\
\texttt{RBF} & 0.565 & 0.564 & 1.710 & 6.509 & 1.544 & $8.3 \times 10^{8}$ & 0.176 \\
\midrule
\texttt{Tanimoto} & 0.555 & 0.560 & 0.675 & 0.881 & 0.499 & 2.586 & 0.125 \\
\texttt{Tanimoto-50} & 0.556 & 0.562 & 0.674 & 0.880 & 0.498 & 2.497 & 0.124 \\
\texttt{Tanimoto-Orig} & 0.547 & 0.552 & 0.678 & 0.883 & 0.501 & 2.550 & 0.127 \\
\midrule
\texttt{Kermut} & 0.629 & 0.632 & 0.617 & 0.827 & 0.465 & 2.773 & 0.097 \\
\texttt{KermutSeq} & 0.532 & 0.537 & 0.681 & 0.894 & 0.507 & 46.657 & 0.109 \\
\texttt{KermutStruc} & 0.614 & 0.613 & 0.634 & 0.846 & 0.478 & 2.982 & 0.104 \\
\bottomrule
\end{tabular}
}

%% file: table_MLDR-011737_side_by_side.tex
\resizebox{\linewidth}{!}{
\begin{tabular}{lcccccccccccc}
\toprule
 & \multicolumn{6}{c}{$128$ training data points} & \multicolumn{6}{c}{$512$ training data points} \\
 & SpearmanR & PearsonR & MAE & CRPS & NLL & ECE & SpearmanR & PearsonR & MAE & CRPS & NLL & ECE \\
\cmidrule(lr){2-7} \cmidrule(lr){8-13}
\midrule
\texttt{LOCK} & 0.669 & 0.711 & 0.592 & 0.442 & 1.954 & 0.120 & 0.759 & 0.807 & 0.440 & 0.333 & 1.262 & 0.097 \\
\midrule
\texttt{LOCK-45} & 0.668 & 0.710 & 0.593 & 0.444 & 1.957 & 0.125 & 0.757 & 0.805 & 0.442 & 0.334 & 1.270 & 0.095 \\
\texttt{LOCK-62} & 0.668 & 0.710 & 0.594 & 0.443 & 1.952 & 0.121 & 0.758 & 0.806 & 0.441 & 0.334 & 1.258 & 0.098 \\
\texttt{LOCK-80} & 0.668 & 0.710 & 0.596 & 0.445 & 1.963 & 0.125 & 0.757 & 0.805 & 0.444 & 0.336 & 1.271 & 0.101 \\
\midrule
\texttt{LOCK-WeakPrior} & 0.649 & 0.687 & 0.568 & 0.466 & 9.912 & 0.161 & 0.740 & 0.784 & 0.428 & 0.335 & 2.228 & 0.110 \\
\texttt{LOCK-NoLinear} & 0.668 & 0.709 & 0.591 & 0.443 & 2.043 & 0.126 & 0.759 & 0.805 & 0.441 & 0.333 & 1.295 & 0.098 \\
\texttt{LOCK-DoublyLocal} & 0.671 & 0.712 & 0.589 & 0.438 & 1.942 & 0.112 & 0.753 & 0.800 & 0.447 & 0.336 & 1.263 & 0.084 \\
\midrule
\texttt{LOCK-LinearOnly-Global} & 0.609 & 0.617 & 0.868 & 0.668 & 3.074 & 0.137 & 0.713 & 0.714 & 0.822 & 0.646 & 3.962 & 0.150 \\
\texttt{LOCK-LinearOnly-Local} & 0.611 & 0.620 & 0.867 & 0.668 & 3.193 & 0.134 & 0.716 & 0.717 & 0.810 & 0.637 & 4.020 & 0.155 \\
\texttt{LOCK-NonLinearOnly-Global} & 0.659 & 0.682 & 0.663 & 0.497 & 2.158 & 0.113 & 0.749 & 0.786 & 0.507 & 0.385 & 1.632 & 0.100 \\
\texttt{LOCK-NonLinearOnly-Local} & 0.667 & 0.709 & 0.595 & 0.447 & 2.259 & 0.131 & 0.759 & 0.806 & 0.440 & 0.333 & 1.376 & 0.098 \\
\midrule
\texttt{Linear} & 0.586 & 0.596 & 0.916 & 0.700 & 3.092 & 0.144 & 0.698 & 0.698 & 0.854 & 0.667 & 3.974 & 0.149 \\
\texttt{RBF} & 0.631 & 0.668 & 0.671 & 0.512 & 3.123 & 0.193 & 0.646 & 0.683 & 7.484 & 7.363 & $1.5 \times 10^{10}$ & 0.189 \\
\midrule
\texttt{Tanimoto} & 0.632 & 0.654 & 0.762 & 0.573 & 2.119 & 0.120 & 0.739 & 0.769 & 0.574 & 0.424 & 1.552 & 0.087 \\
\texttt{Tanimoto-50} & 0.633 & 0.655 & 0.755 & 0.568 & 2.109 & 0.117 & 0.744 & 0.772 & 0.558 & 0.414 & 1.543 & 0.088 \\
\texttt{Tanimoto-Orig} & 0.630 & 0.652 & 0.764 & 0.575 & 2.115 & 0.122 & 0.736 & 0.767 & 0.582 & 0.431 & 1.587 & 0.088 \\
\midrule
\texttt{Kermut} & 0.639 & 0.654 & 0.845 & 0.657 & 3.867 & 0.159 & 0.750 & 0.767 & 0.670 & 0.498 & 2.297 & 0.108 \\
\texttt{KermutSeq} & 0.518 & 0.505 & 0.836 & 0.635 & 159.803 & 0.138 & 0.605 & 0.614 & 0.714 & 0.540 & 2.982 & 0.122 \\
\texttt{KermutStruc} & 0.624 & 0.641 & 0.847 & 0.665 & 4.243 & 0.159 & 0.706 & 0.713 & 0.847 & 0.670 & 4.808 & 0.167 \\
\bottomrule
\end{tabular}
}

%% file: table_MLDR-011736_side_by_side.tex
\resizebox{\linewidth}{!}{
\begin{tabular}{lcccccccccccc}
\toprule
 & \multicolumn{6}{c}{$48$ training data points} & \multicolumn{6}{c}{$1536$ training data points} \\
 & SpearmanR & PearsonR & MAE & CRPS & NLL & ECE & SpearmanR & PearsonR & MAE & CRPS & NLL & ECE \\
\cmidrule(lr){2-7} \cmidrule(lr){8-13}
\midrule
\texttt{LOCK} & 0.653 & 0.678 & 0.498 & 0.361 & 0.971 & 0.056 & 0.867 & 0.914 & 0.210 & 0.160 & 0.112 & 0.108 \\
\midrule
\texttt{LOCK-45} & 0.653 & 0.678 & 0.499 & 0.361 & 0.973 & 0.055 & 0.866 & 0.913 & 0.211 & 0.161 & 0.117 & 0.108 \\
\texttt{LOCK-62} & 0.652 & 0.676 & 0.500 & 0.362 & 0.975 & 0.055 & 0.867 & 0.914 & 0.210 & 0.160 & 0.110 & 0.108 \\
\texttt{LOCK-80} & 0.652 & 0.677 & 0.500 & 0.362 & 0.974 & 0.055 & 0.867 & 0.914 & 0.210 & 0.160 & 0.113 & 0.109 \\
\midrule
\texttt{LOCK-WeakPrior} & 0.619 & 0.634 & 0.478 & 0.394 & 121.508 & 0.135 & 0.860 & 0.908 & 0.205 & 0.160 & 0.368 & 0.113 \\
\texttt{LOCK-NoLinear} & 0.655 & 0.680 & 0.494 & 0.358 & 0.967 & 0.060 & 0.867 & 0.915 & 0.210 & 0.161 & 0.099 & 0.108 \\
\texttt{LOCK-DoublyLocal} & 0.654 & 0.679 & 0.497 & 0.360 & 0.971 & 0.053 & 0.867 & 0.914 & 0.210 & 0.160 & 0.108 & 0.109 \\
\midrule
\texttt{LOCK-LinearOnly-Global} & 0.554 & 0.563 & 0.586 & 0.414 & 1.116 & 0.047 & 0.809 & 0.806 & 0.415 & 0.298 & 0.749 & 0.040 \\
\texttt{LOCK-LinearOnly-Local} & 0.562 & 0.573 & 0.578 & 0.410 & 1.111 & 0.046 & 0.811 & 0.807 & 0.412 & 0.296 & 0.746 & 0.039 \\
\texttt{LOCK-NonLinearOnly-Global} & 0.623 & 0.636 & 0.548 & 0.389 & 1.054 & 0.059 & 0.855 & 0.902 & 0.254 & 0.189 & 0.240 & 0.076 \\
\texttt{LOCK-NonLinearOnly-Local} & 0.651 & 0.675 & 0.496 & 0.359 & 0.974 & 0.061 & 0.867 & 0.915 & 0.210 & 0.161 & 0.107 & 0.107 \\
\midrule
\texttt{Linear} & 0.506 & 0.521 & 0.603 & 0.424 & 1.142 & 0.054 & 0.802 & 0.800 & 0.420 & 0.302 & 0.762 & 0.040 \\
\texttt{RBF} & 0.610 & 0.602 & 0.561 & 0.421 & $3.7 \times 10^{5}$ & 0.102 & N/A & N/A & N/A & N/A & N/A & N/A \\
\midrule
\texttt{Tanimoto} & 0.514 & 0.511 & 0.592 & 0.415 & 1.106 & 0.075 & 0.846 & 0.888 & 0.272 & 0.201 & 0.281 & 0.068 \\
\texttt{Tanimoto-50} & 0.515 & 0.513 & 0.590 & 0.414 & 1.103 & 0.075 & 0.846 & 0.888 & 0.271 & 0.200 & 0.275 & 0.068 \\
\texttt{Tanimoto-Orig} & 0.510 & 0.507 & 0.593 & 0.415 & 1.106 & 0.074 & 0.845 & 0.887 & 0.273 & 0.202 & 0.282 & 0.067 \\
\midrule
\texttt{Kermut} & 0.636 & 0.638 & 0.549 & 0.398 & 1.415 & 0.048 & 0.850 & 0.888 & 0.285 & 0.212 & 0.358 & 0.066 \\
\texttt{KermutSeq} & 0.540 & 0.537 & 0.619 & 0.443 & 45.840 & 0.053 & 0.794 & 0.838 & 0.338 & 0.248 & 0.476 & 0.061 \\
\texttt{KermutStruc} & 0.615 & 0.618 & 0.569 & 0.413 & 1.752 & 0.050 & 0.809 & 0.805 & 0.417 & 0.300 & 0.763 & 0.040 \\
\bottomrule
\end{tabular}
}

%% file: ts.tex
\begin{figure*}[t]
\centering
\includegraphics[width=0.45\textwidth]{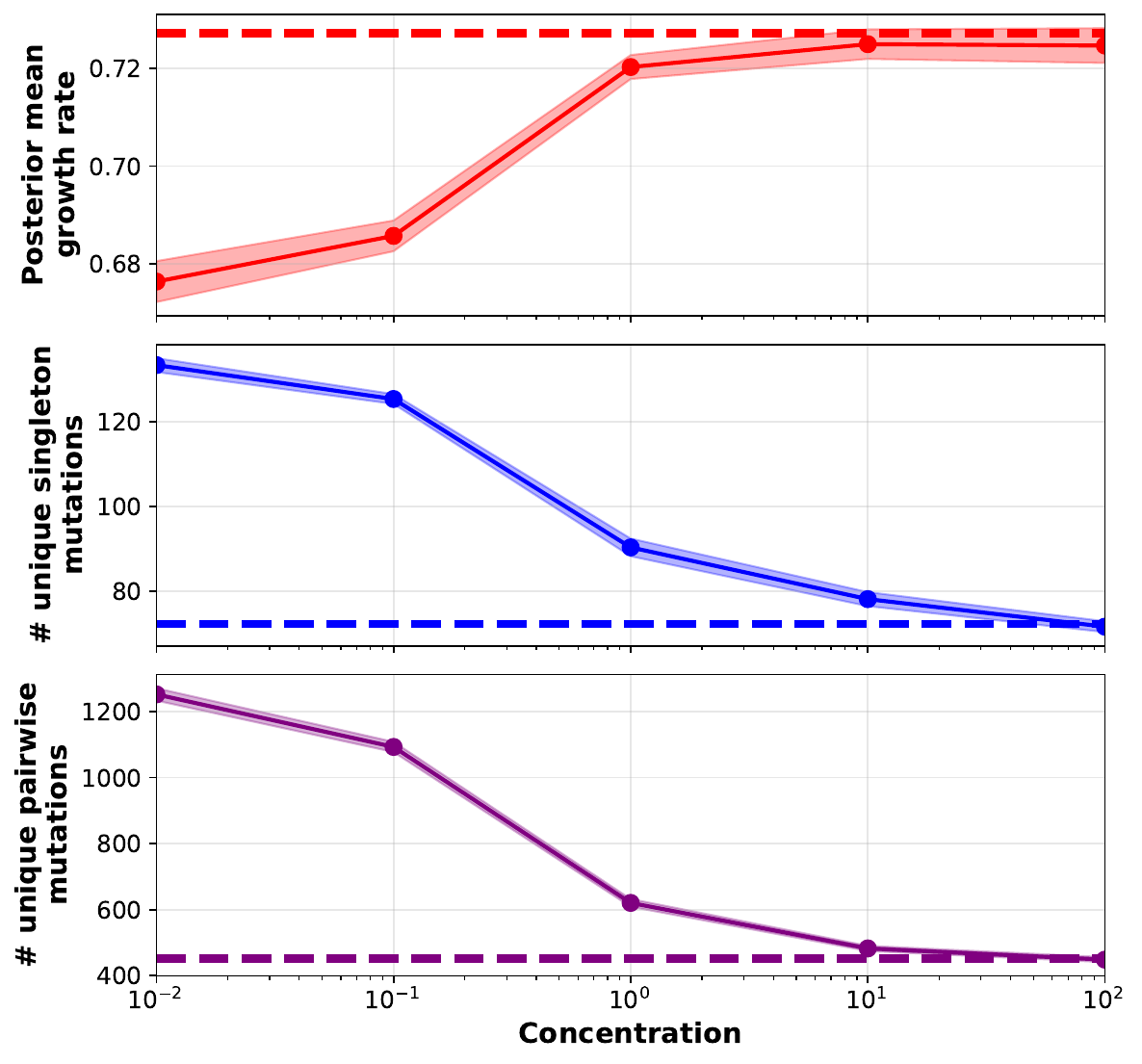}
    \caption{
    We depict results for the Thompson sampling experiment in \secref{sec:app:ts}.
    Solid lines depict properties of the sequences obtained from solving \eqnref{eqn:objectiven} at five distinct concentrations $\alpha$; 
    uncertainty bands reflect standard errors.
    Dashed lines depict properties of the sequences obtained from solving \eqnref{eqn:objective}, which
    corresponds to the $\alpha \rightarrow \infty$ limit.
    We average across $10$ replicate experiments.
    As the concentration $\alpha$ goes to zero and thus the diversity of the Thompson ensemble increases, 
    the sequence diversity---as measured by the total number of unique singleton and 
    pairwise mutations w.r.t.~the wild-type sequence---increases substantially, with e.g.~the
    number of pairwise mutations approximately tripling.
    This increased diversity comes at a relatively small cost in average predicted growth rate.
    }
\label{fig:app:diversity}
\end{figure*}

We consider the ParD3 dataset from \citet{Ding_2024}, which contains sequences of length $L=93$ and
has a variable region of length $L^\prime =10$.
In the following in silico exercise we set out to generate ParD3 antitoxin variants that exhibit improved 
neutralization of the cognate toxin ParE3.
To do so we train a LOCK GP predictor on the experimental growth rates
and use the resulting predictor as a surrogate for neutralization ability in an optimization problem.

Before considering Thompson sampling, we first consider a multi-sequence optimization problem of the form
\begin{equation}
    \label{eqn:objective}
    \max_{\bx_{1:N}} \Phi(\bx_{1:N}) \qquad {\rm with} \qquad \Phi(\bx_{1:N}) \equiv \Psi(\bx_{1:N}) + \sum_{n=1}^N \phi(\bx_n)
\end{equation}
defined on $N=800$ sequences $\bx_{1:N} \equiv \{\bx_1, ..., \bx_N \}$.
We limit the optimization to the $L^\prime$ residues of the variable region (i.e.~all other residues are fixed 
to the wild-type amino acid) and include all $20$ canonical amino acids in the design space. 
Here $\phi$ is a LOCK GP predictor trained on all the data from \citet{Ding_2024}, and higher values of
$\phi(\cdot)$ correspond to higher predicted growth rates.
Furthermore, $\Psi(\bx_{1:N})$ is a diversity-promoting objective that serves the following role:
\begin{enumerate}
\item it heavily penalizes sets of sequences $\bx_{1:N}$ that contain duplicate (i.e.~non-unique) sequences
\item it strongly encourages the sequences $\bx_{1:N}$ to spread out in a balanced manner across a nested set of $8$ Hamming shells centered
    around the wild-type sequence $\bx_{\rm wt}$
\end{enumerate}
In particular $\Psi(\bx_{1:N})$ strongly encourages 
exactly $100$ of the $800$ sequences to reside in the $2$-Hamming shell of $\bx_{\rm wt}$, 
exactly $100$ of the $800$ sequences to reside in the $3$-Hamming shell of $\bx_{\rm wt}$, and so on through
the $9$-Hamming shell of $\bx_{\rm wt}$. 
As such $\Psi(\bx_{1:N})$ encourages mutational diversity in solutions to \eqnref{eqn:objective}.
Nevertheless, since our design space is extremely large,\footnote{The design space for a single sequence consists of 
$20^{10} \approx 10^{13}$ sequences, and we consider a joint design problem defined over $N=800$ sequences.} the diversity effects
of $\Psi(\bx_{1:N})$ are somewhat modest. 
Indeed consider two stylized putative solutions.
In solution A, the $100$ unique sequences placed in the $9$-Hamming shell of $\bx_{\rm wt}$ are very different 
from one another, with typical pairwise Hamming distances of $\sim8-9$. 
In solution B, the $100$ unique sequences placed in the $9$-Hamming shell of $\bx_{\rm wt}$ are quite similar to one
another, with typical pairwise Hamming distances of $\sim2$. 
By assumption both A and B optimally meet the conditions set by $\Psi$ and so the properties of
$\phi(\cdot)$ determine whether solution A or B is preferred. 
In other words, the diversity of the solution to \eqnref{eqn:objective} can vary dramatically as $\phi(\cdot)$ changes.
Consequently, from the point of view of diversity the optimization problem in \eqnref{eqn:objective} is somewhat undetermined.
If ensuring sequence diversity in the solution is important, additional mechanisms for injecting diversity must be considered.

Here we explore how to use (pseudo-)Thompson sampling to inject a controllable amount of diversity into the optimization problem
in  \eqnref{eqn:objective}. To do so we generalize \eqnref{eqn:objective} to leverage
$N=800$ predictors $\{ \phi_n \}$ instead of a single common $\phi$:
\begin{equation}
    \label{eqn:objectiven}
    \max_{\bx_{1:N}} \Phi(\bx_{1:N} | \alpha) \qquad {\rm with} \qquad \Phi(\bx_{1:N} | \alpha) \equiv \Psi(\bx_{1:N}) + \sum_{n=1}^N \phi_n(\bx_n|\alpha)
\end{equation}
Here $\alpha$ is a hyperparameter that controls the diversity of the ensemble $\{ \phi_n \}$.
By construction the diversity of the solution to \eqnref{eqn:objectiven} increases as the diversity of the ensemble increases.
While the diversity in $\{ \phi_n \}$ could come from any number of sources, in the following
each $\phi_n$ is obtained by a (pseudo-)Thompson sampling procedure that we now describe. 

Let $\alpha>0$ be the concentration parameter of a symmetric Dirichlet distribution on the $N-1$ simplex. 
We draw a sample $\bomega \sim {\rm Dir}(\alpha)$ so that each $\omega_k$ has mean $\tfrac{1}{N}$.
We then define $\bomegat \equiv \bomega / {\rm median}(\bomega)$. Thus typically $\omegat_k \approx 1$ and the spread
around $1$ increases as $\alpha \rightarrow 0$. 
We then modify the GP posterior mean prediction given in \eqnref{eqn:gppredmean} as follows
\begin{align}
\label{eqn:gppredmeanmod}
    \mu_\bff(\bx^* | \bomegat) \equiv {k_{* \bX}}^{\rm T}  {(K_{\bX\bX} + \sigma_n^2 {\rm diag}(\bomegat))}^{-1}\bt 
\end{align}
where ${\rm diag}(\bomegat)$ is the $N \times N$ diagonal matrix with entries $\omegat_k$ along the diagonal.
In other words we stochastically modulate the assumed observation noise of each training data point in $(\bX, \bt)$,
while keeping other hyperparameters fixed.
We do this for each predictor $\phi_n$, i.e.~each $\phi_n$ uses the formula in \eqnref{eqn:gppredmeanmod} 
with a distinct sample $\bomegat_n$. 
This procedure results in an ensemble of predictors, each of which modulates the amount of smoothing in different 
amounts of sequence space as controlled by $\bomegat$, but in a way that does not differ dramatically from the 
canonical posterior mean prediction with $\omegat_k \rightarrow 1$.\footnote{An alternative procedure would be
to introduce hyperpriors on the kernel hyperparameters and do MCMC inference over the kernel hyperparameters. One could
then sample hyperparameters from the approximate posterior and define GP posterior mean predictors conditioned on different
hyperparameter samples. While this procedure would be perfectly viable, for simplicity we adopt the noise-modulating
procedure instead. In either case the non-parametric nature of GPs makes exact Thompson sampling computationally
intractable: we simply cannot sample posterior GP function values across the entirety of the sequence space in our 
optimization problem, since it has cardinality $20^{10}$.}

To demonstrate the effect of modulating the concentration $\alpha$, we do an experiment
in which we obtain approximate solutions to \eqnref{eqn:objectiven} using a MCMC-and-annealing-based optimization algorithm
that leverages the gradient-based proposal from \citet{zhang2022langevin}. See \figref{fig:app:diversity} for the results.
As expected, as the concentration goes to zero and the variability in $\bomegat$ increases, the sequence
diversity w.r.t.~the wild-type sequence increases substantially. 
While this comes at a cost in average predicted growth rate $\phi(\cdot)$, this cost is moderate.
For example at $\alpha=0.1$ the average predicted growth rate---as estimated by the canonical GP posterior mean---is reduced
by about $0.04$. This should be compared to the typical standard deviation of predicted growth rates within each of 
the eight Hamming shells, which is about $0.30$. 
In other words the shift of $0.04$ is a small fraction of the variability in the predicted growths rates for the $800$ sequences in $\bx_{1:N}$.

Thus we have demonstrated the viability of using the uncertainty information encoded by LOCK GP to trade-off exploration and exploitation in optimization-based protein design.
The essential point is that Thompson sampling allows us to inject a controllable amount of \emph{model-based} diversity into generation
in a way that reflects data-driven uncertainty about predicted fitness. 
Being able to tunably trade-off exploration and exploitation is a key component of many high-performing Bayesian optimization algorithms, and
so it is attractive to have a predictive model like LOCK GP that can provide high-quality uncertainty estimates---see e.g.~\figref{fig:cvcrps} and Table \ref{tab:metric_summary_gp}---that can be leveraged in design.

%% file: clock_interpretation.tex
\begin{figure*}[ht!]
    \centering
        \includegraphics[width=0.55\textwidth]{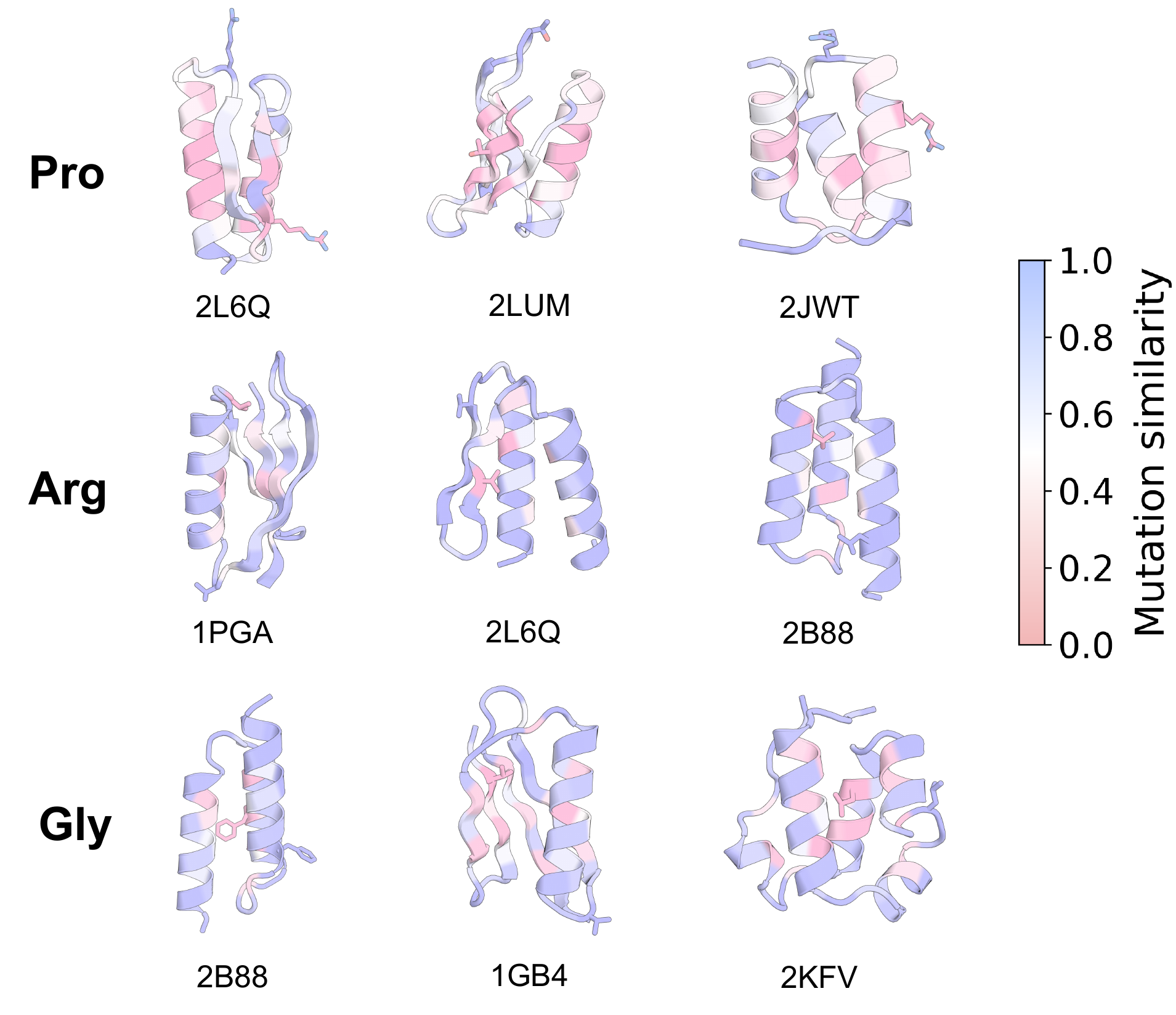}
        \caption{Structure-conditioned amino acid correlations learned by the CLOCK kernel.  
        We show representative structures from \citet{Tsuboyama_2023}, 
        coloring the wild-type residue at each position by its CLOCK correlation
        $C_{\ell a a_{\rm ref}}$ (\eqnref{eqn:corrrep}) to one of three reference amino acids (Pro, Arg, Gly; rows).
        Each structure highlights a pair of sites (namely those with sidechains pictured) with the \emph{same} amino-acid identity but sharply 
        different kernel correlation to the reference amino acid, illustrating how the learned mapping $\bW$ (\eqnref{eqn:Wdefn})
        encodes amino-acid preferences that reflect local structure. 
        Blue indicates substitutions judged similar to the reference residue; red denotes dissimilar, 
        penalized substitutions. 
        The PDB ID associated with each fitness landscape is provided directly under the structure 
        of the corresponding protein. See \secref{app:clock_interpretation} for discussion.
        }
    \label{fig:app:clock_interpretation}
\end{figure*}

See \figref{fig:app:clock_interpretation} for a visual representation of CLOCK correlations.
The top row illustrate that the CLOCK kernel has learned to prefer a substitution
to proline in loops rather than secondary structure elements such as helices and sheets. 
The examples in the second row illustrate that the CLOCK kernel has learned to prefer a substitution to arginine
on the surface of the protein rather than in the interior of the protein or at positions pointing inwards.
The first structure in the final row illustrates that the CLOCK kernel has learned to prefer substitutions
to glycine at the C-terminal region of alpha helices over the interior of alpha helices \citep{richardson1988amino}. 
The second and third structures illustrate the general preference for glycine in loops over beta sheets and alpha helices.

We also note that our structure-conditioned substitution matrices can be seen as an ML-driven descendent of
the `environment-specific substitution tables' (ESSTs) derived in \citet{overington1992environment}---see also \citet{koshi1995context}.
While in our case substitution matrices are regressed on continuous-valued embeddings $\bh_\ell$,
ESSTs are defined for a small number of manually-chosen semantic categories derived from e.g.~secondary structure or solvent accessibility.
The general motivation, however, is similar.

%% file: app_loss.tex
Take the standard GP marginal log likelihood:
\begin{align}
\label{eqn:appmll}
    \log p(\bt|\bX) = \log \NN(\bt |\bm{0}, K_{\bX\bX} + \sigma_n^2 \mathbb{1}_N)
\end{align}
and write 
\begin{align}
K_{\bX\bX} + \sigma_n^2 \mathbb{1}_N = \sigma_f^2 \left(\hat{K}_{\bX\bX} + \hat{\sigma}_n^2 \mathbb{1}_N \right)
\end{align}
i.e.~we factor out the overall kernel scale $\sigma_f^2$ and introduce the inverse signal-to-noise ratio $\hat{\sigma}_n^2 = \sigma_n^2 / \sigma_f^2$.
The MLE estimator for $\sigma_f^2$ can be computed in closed-form:
\begin{align}
    \hat{\sigma}_f^2 = \tfrac{1}{N} \bt^{\rm T}  \left(\hat{K}_{\bX\bX} + \hat{\sigma}_n^2 \mathbb{1}_N \right)^{-1} \bt
\end{align}
If this is plugged into \eqnref{eqn:appmll} and we drop irrelevant constants, we obtain 
\begin{align}
    \log p(\bt|\bX) &= -\tfrac{N}{2} \log \left(2 \pi \hat{\sigma}_f^2 \right) -\tfrac{1}{2}\log \left| \hat{K}_{\bX\bX} + \hat{\sigma}_n^2 \mathbb{1}_N \right| - \tfrac{N}{2} \\
    &\Rightarrow -\tfrac{N}{2} \log \hat{\sigma}_f^2 -\tfrac{1}{2}\log \left| \hat{K}_{\bX\bX} + \hat{\sigma}_n^2 \mathbb{1}_N \right|
\end{align}
in which we have effectively reduced the total number of kernel hyperparameters by one by eliminating $\sigma_f$. 
We use this `concentrated' log likelihood as the training objective for CLOCK. 
This is very convenient in our context, since it obviates the need to learn the overall kernel scale for each landscape. 
Instead we can focus on learning the signal-to-noise ratio. 
Empirically on the data from \citet{Tsuboyama_2023} we find that we can get away with learning a global signal-to-noise ratio that is shared
by all landscapes. 
It's plausible that this ansatz may be less effective for other datasets, but we generically expect this approach to work well in many scenarios,
since we do not need the kernel scale and the noise scale to be perfectly specified in order to extract useful signal from the data. 
In any case learning landscape-specific signal-to-noise ratios is easy, and we find equally good performance when we do so for 
the \citet{Tsuboyama_2023} dataset.

We note that the approach of profiling out $\sigma_f$  has been adopted by many authors in the literature, 
see e.g.~\citet{park2001efficient,moore2016fast,ober2021promises}.
Indeed in \citet{moore2016fast} the authors go a step further and place a prior on $\sigma_f$, in which case the overall kernel scale can 
be integrated out analytically.

%% file: aav.tex
\begin{figure*}[t]
\centering
    \includegraphics[width=0.8\textwidth]{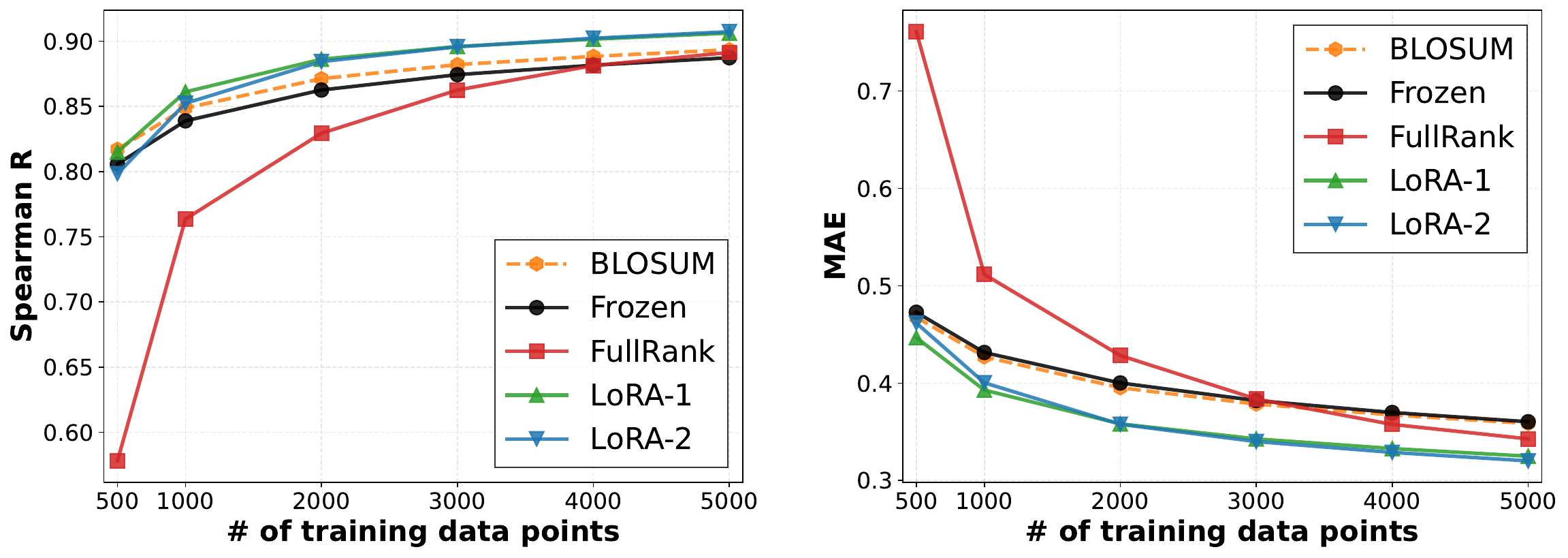}
    \caption{We compare several GPs trained on the AAV dataset from \citet{Sinai_2021}. 
    We plot both Spearman R (left) and MAE (right); metrics are averaged across $10$ i.i.d.~train/test splits.
    All GPs are CLOCK-GPs except for the BLOSUM GP, which uses the same correlation matrix at each position.
    All CLOCK-GPs use a $\bW$ tensor (see \eqnref{eqn:Wdefn}) that is pre-trained on thermostability data (see \secref{sec:multi}).
    All CLOCK-GPs apart from \texttt{Frozen} are then fine-tuned on the AAV data. 
    For $N\gtrsim 1000$ the LoRA-fine-tuned CLOCK-GPs outperform the BLOSUM GP, suggesting performance-enhancing transfer learning from
    the thermostability data.
    }
\label{fig:app:aav_spearman_mae}
\end{figure*}

\begin{figure*}[t]
\centering
    \includegraphics[width=0.8\textwidth]{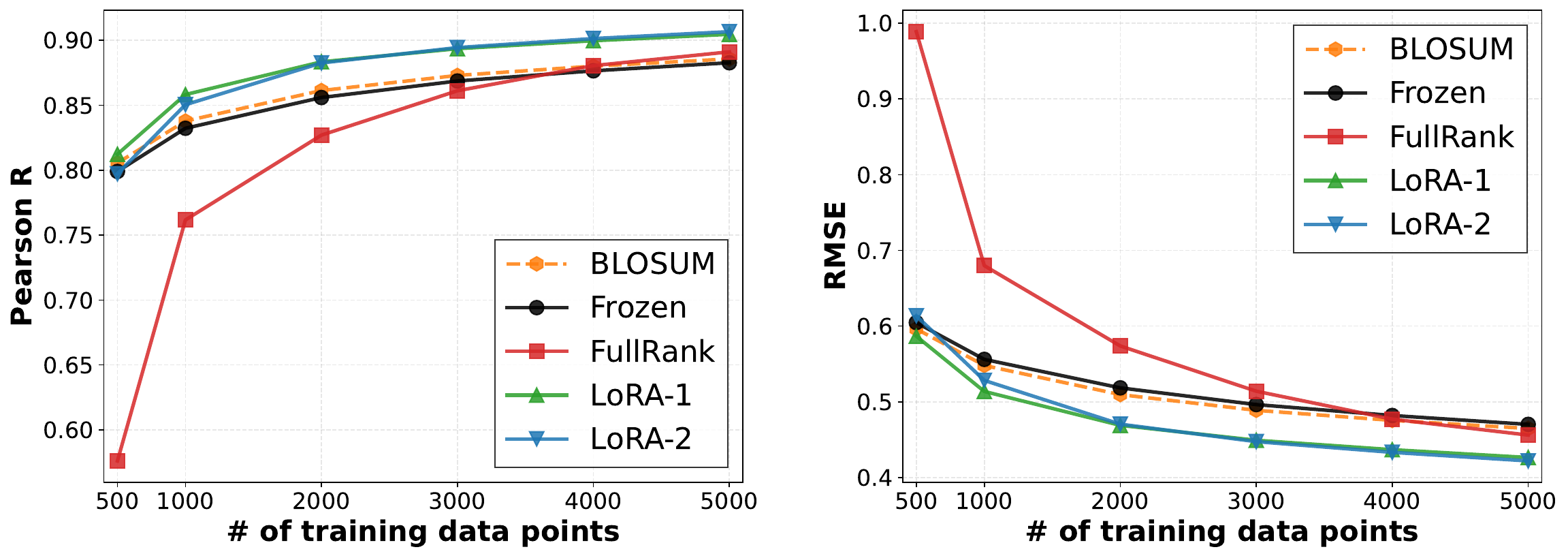}
    \caption{We compare several GPs trained on the AAV dataset from \citet{Sinai_2021}. 
    We plot both Pearson R (left) and RMSE (right); metrics are averaged across $10$ i.i.d.~train/test splits.
    All GPs are CLOCK-GPs except for the BLOSUM GP, which uses the same correlation matrix at each position.
    All CLOCK-GPs use a $\bW$ tensor (see \eqnref{eqn:Wdefn}) that is pre-trained on thermostability data (see \secref{sec:multi}).
    All CLOCK-GPs apart from \texttt{Frozen} are then fine-tuned on the AAV data. 
    For $N\gtrsim 1000$ the LoRA-fine-tuned CLOCK-GPs outperform the BLOSUM GP, suggesting performance-enhancing transfer learning from
    the thermostability data.
    }
    \label{fig:app:aav_pearson_rmse}
\end{figure*}

We investigate the basic viability of fine-tuning the $\bW$ tensor in CLOCK (see \eqnref{eqn:Wdefn}) and adapting it to new landscapes. 
We would expect the most promise for such an approach in one of two scenarios: either i) the new landscape is for a property
that is closely related to the property that $\bW$ is pre-trained on; or ii) we can jointly fine-tune $\bW$ on a considerable 
number of closely related landscapes. 
For simplicity, here we choose to focus on an a priori more difficult scenario in which the property
in the new landscape is not particularly closely related to the property used in pre-training and in which we only have a single 
new landscape. 
In particular we focus on the AAV dataset from \citet{Sinai_2021}, where the property is a measure of capsid viability.
We note that the length of the protein in this dataset is $L=735$, whereas for the thermostability data $L \sim 50$.

To fine-tune $\bW$ we simply optimize the vanilla GP training objective, i.e.~the marginal log likelihood (see \eqnref{eqn:mll}). 
We use the Adam optimizer with an initial learning rate of $0.01$. We train for $1500$ steps and decimate the learning rate after
$500$ and $1000$ steps.
For simplicity in all cases we consider GPs with a non-linear kernel $\kLnl$ (see \eqnref{eqn:nlkernel}) together with a learned global exponent
(in addition to the noise scale $\sigma_n$ and the overall kernel scale). 
We consider $5$ different methods.
In \texttt{BLOSUM} we utilize a BLOSUM50 correlation matrix at all positions.
In \texttt{Frozen} we use $\bW$ obtained from the thermostability pre-training and leave $\bW$ fixed during training. 
In \texttt{FullRank} we initialize with $\bW$ obtained from the thermostability pre-training and train $\bW$ jointly with the kernel hyperparameters. 
In \texttt{LoRA-k} we use $\bW$ obtained from the thermostability pre-training and train a rank-$k$ LoRA adapter \cite{hu2022lora} jointly with the kernel hyperparameters (so that $\bW$ remains fixed).\footnote{While $\bW$ is most naturally viewed as a tensor of shape $A \times A - 1 \times D_{\rm emb}$, here we conceptualize it as a two-dimensional tensor of shape $A(A - 1) \times D_{\rm emb}$ when applying LoRA. Of course other low-rank parameterizations are also possible.}
These parameter-efficient adapters have $548$ (for $k=1$) and $1096$ (for $k=2$) trainable parameters, much reducing the risk of overfitting.
For the results see \figref{fig:app:aav_spearman_mae}-\ref{fig:app:aav_pearson_rmse}.

As we might expect, \texttt{FullRank} performs poorly unless there is sufficient training data from the AAV landscape, 
since it is easy to overfit the $\sim\!54$k parameters in $\bW$.
\texttt{Frozen} performs quite well---and comparably to \texttt{BLOSUM}---which is perhaps surprising, since thermostability is a priori
quite different from capsid viability.
For $N\gtrsim 1000$ the strongest performers are \texttt{LoRA-1} and \texttt{LoRA-2}, 
suggesting performance-enhancing transfer learning from the thermostability data. 
While the performance improvement is relatively small, we emphasize that good performance on \texttt{LoRA-k} requires mapping previously
unseen structural contexts $\bh_{1:L}$ to correlation matrices that are adapted to the AAV landscape.
For example this presumably requires `unlearning' the idiosyncratic role played by proline in the thermostability data, 
see \figref{fig:app:local-clock-corr}.
Thus these empirical results demonstrate the basic viability of fine-tuning $\bW$ to adapt to new landscapes.

%% file: genclock.tex
We highlight some of the general features of the CLOCK kernel introduced in \secref{sec:clock} and describe how it can be readily generalized
to other kernel constructions.
In more detail CLOCK can be understood as a special case of the following more general framework.
For each landscape and corresponding reference structure $\SSS$ we either compute structure embeddings $\bh_{1:L}(\SSS)$\footnote{Alternatively, we compute structure-agnostic embeddings $\bh_{1:L}(\bxref)$. The essential point is that we get a single set of embeddings $\bh_{1:L}$ that is shared across the entire landscape.} or
sequence-and-structure embeddings $\bh_{1:L}(\SSS, \bx)$. 
It is important to emphasize that doing inference with models that rely on structure embeddings will generally be much faster than those that
rely on sequence-and-structure embeddings, since structure embeddings are shared across the entire landscape.
We treat each of these two possibilities in turn.

\paragraph{Structure Embeddings}
We use a sequence-to-sequence model to map $\bh_{1:L}$ to transformed embeddings $\bht_{1:L}$ with some mapping $\TT$.
We then formulate a kernel that aggregates information from residue-level embeddings $\bht_{1:L}$ in a way that naturally supports variable length sequences. 
In particular we can leverage kernel addition or kernel multiplication or some combination of the two:
\begin{itemize}
\item Each embedding $\bht_\ell$ is mapped to a residue-specific subkernel $k_\ell$---i.e.~an $A \times A$ PSD matrix---and the full kernel is given by addition, i.e.~$k = \sum_\ell k_\ell$
\item Each embedding $\bht_\ell$ is mapped to a residue-specific subkernel $k_\ell$---i.e.~an $A \times A$ PSD matrix---and the full kernel is given by multiplication, i.e.~$k = \prod_\ell k_\ell$
\end{itemize}

\paragraph{Sequence-and-structure Embeddings}

We use a sequence-to-sequence model to map $\bh_{1:L}$ to transformed embeddings $\bht_{1:L}$ with some mapping $\TT$.\footnote{In the case of 
sequence-and-structure embeddings we note that $\TT$ can also depend explicitly on $\bx$.}
We treat $\bht_{1:L}$ as features and feed them into any kernel that takes euclidean features as input,\footnote{In more
detail suppose we have features $\bz_{\ell} \in \RR^D$ where $\ell$ ranges across positions and where each positional feature
is of dimension $D$. We can flatten $\{ \bz_{\ell} \}$ to get a vector $\bZ$ of length $LD$ and provide $\bZ$ as an input to a, say, 
Tanimoto kernel \cite{tripp2023tanimoto}.
Alternatively, we can define $L$ Tanimoto kernels $k_\ell$, where each subkernel takes features $\bz_{\ell}$ as input; the complete kernel
is then formed via kernel addition or multiplication.}
e.g.~an RBF or Tanimoto kernel.\footnote{Indeed we have experimented with both of these constructions on the thermostability data from \citet{Tsuboyama_2023}. 
Both kernels resulted in good predictive models, but we found that CLOCK performed better at larger training sizes.}
That is in the case of sequence-and-structure embeddings the essential aggregation step happens at the feature level instead of at the kernel level.

\paragraph{Kernel Learning}

In both cases to learn the mapping $\TT$ as well as any other parameters that feed into the kernel construction, we use gradient-based methods to optimize
the concentrated form of the standard GP training objective in which the overall kernel scale is `profiled' out. 
See \secref{app:clock} and \secref{app:sec:multitaskdetails}.

\paragraph{Discussion}

CLOCK is a special case of this broader framework in which the sequence-to-sequence model $\TT$ is given by applying the $\bW$ matrix
in \eqnref{eqn:Wdefn} residue-by-residue to obtain correlation matrices $\bC_{1:L}$ using \eqnref{eqn:corrrep}.
These correlation matrices are then used to define both linear and non-linear subkernels as in \eqnref{eqn:linkernel} and \eqnref{eqn:nlkernel}.
These subkernels can then be combined as in LOCK, i.e.~as in \eqnref{eqn:lockdefn}, which utilizes both kernel addition and kernel multiplication
to achieve local linearity.\footnote{An interesting feature of CLOCK is that we learn linear kernels at training time, but at inference time we
can deploy both linear and non-linear kernels. This is one of the advantages of unifying around correlation matrices, which can be readily deployed
in various ways. Since linear kernel learning is straightforward, this flexibility contributes to the speed and robustness of CLOCK training.
See \secref{app:sec:multitaskdetails}.}
Alternatively we can discard the linear kernels and use only the non-linear kernels or vice versa; indeed we explicitly benchmark against
these constructions in \secref{app:ablation} in the non-structure-conditioned case. 

This general framework makes it evident that there is considerable room for defining additional CLOCK-like variants. For example,
instead of using a mapping $\TT$ that is applied independently at each residue, we could use a true sequence-to-sequence model like
a RNN or Transformer.

%% file: app_scalability.tex
\begin{figure}[h!]
\centering
\includegraphics[width=0.55\columnwidth]{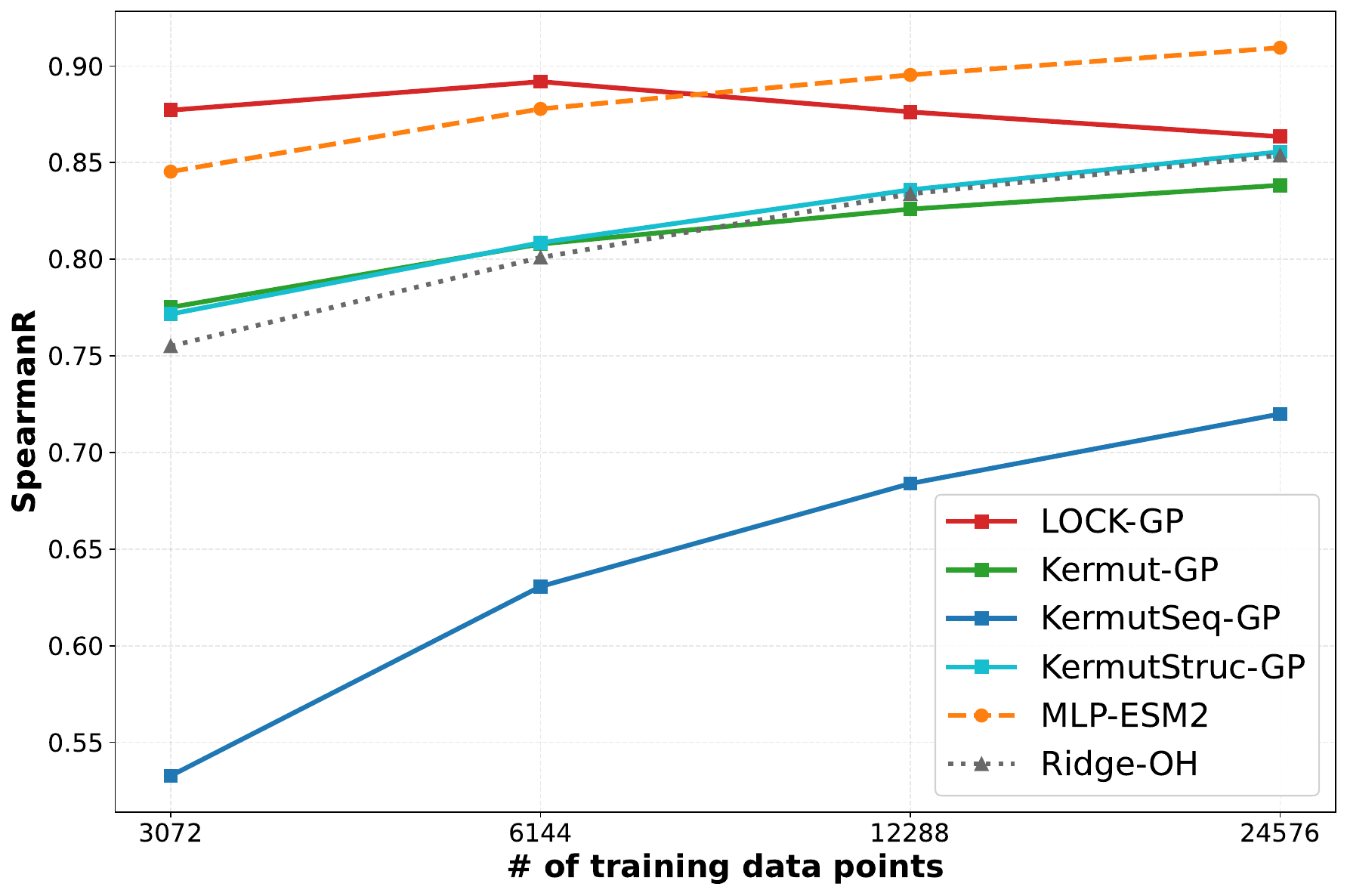}
    \caption{We report Spearman R for models trained on the amacGFP dataset from \citet{Gonzalez_Somermeyer_2022}
    as a function of the number of training data points. Spearman R is averaged across three replicates.
}
    \label{fig:cgscaling}
\end{figure}

Apart from the binary classification experiment in \secref{sec:binary}, 
the GP experiments in \secref{sec:exp} focus
on the regression setting with small-to-moderate sized datasets, a regime in which exact inference is viable.
In this section we briefly touch upon some of the possibilities for scaling LOCK GPs to much larger datasets.

For regression, using the conjugate gradient (CG) method as in \citet{wang2019exact} formally reduces training 
from $\OO(N^3)$ to $\OO(N^2)$ and can result in large speed-ups in practice, making $N \sim 10^5-10^6$ viable on commercial GPUs. 
For the LOCK kernel in particular, this approach can be made significantly more memory 
efficient by utilizing kernel-vector multiplies that are implemented 
in a streaming fashion.\footnote{See \texttt{https://github.com/getkeops/keops}.}

Beyond CG, there are many powerful approximate inference methods that leverage so-called inducing points \cite{hensman2013gaussian}.
Many of these methods are mini-batch-friendly so that they can be pushed to arbitrarily large $N$ in principle.
We provided an example of this kind of approach in the binary classification experiment in \secref{sec:binary} and \figref{fig:binary},
which considered up to $N=12288$ training data points and which uses the inference algorithm from \citet{wenzel2019efficient}. 
For the regression setting we suggest that the method from \citet{cazelles2026efficient} is particularly well-suited for LOCK GPs, 
since it avoids the introduction of inducing points.\footnote{These methods work by utilizing $M$ (with $M \ll N$) inducing 
points that provide a `compressed' view of the $N$ training inputs. As such they can be a poor fit for discrete input spaces. 
This is for at least two reasons: i) the discrete nature of the input space makes it challenging to optimize the inducing
points; and ii) the compression might be excessively lossy (consider e.g.~rare mutations).}
While we do not report any results with this method here, we find that it can work well in practice.

As a demonstration of the scalability of LOCK GPs in the regression setting, 
we report the results of an additional experiment that mirrors
the binary classification experiment in \secref{sec:binary}.
In particular we use CG to train four GPs (and two additional baselines) on the amacGFP dataset from \citet{Gonzalez_Somermeyer_2022} on
up to $24576$ training data points, see \figref{fig:cgscaling}.
This is done using a NVIDIA A10G GPU, which has $24$ GB of memory, i.e.~only 12.5\% of the memory available on a modern NVIDIA B200 GPU.

Finally, we emphasize that these general considerations also translate to CLOCK.
In more detail, we note that in CLOCK (see \secref{app:sec:multitaskdetails} for training details) we randomly subsample within landscapes during training of the matrix $\bW$ (see \eqnref{eqn:Wdefn}) so that each landscape has $512$ training data points. 
As such during training of $\bW$, the cubic cost of exact GP inference is not an issue. 
When deploying CLOCK on a new (potentially large) landscape, the various approaches described above---for example CG---can be used to fit any kernel hyperparameters such as the overall kernel scale to the landscape at hand. 
As such CLOCK can be readily deployed in the large $N$ setting.

%% file: submat.tex
See Table~\ref{tab:substitution-matrices}.

%
%
%
\begin{table}[h]
\centering
\scriptsize
\begin{tabular}{lc@{\qquad}lc@{\qquad}lc}
\toprule
\textbf{Substitution Matrix} & \textbf{ID} & \textbf{Substitution Matrix} & \textbf{ID} & \textbf{Substitution Matrix} & \textbf{ID} \\
\midrule
BLOSUM100                      & $\checkmark$ & BLOSUM85                       & $\checkmark$ & PAM180                         &  \\
BLOSUM30                       &  & BLOSUM90                       & $\checkmark$ & PAM190                         &  \\
BLOSUM35                       &  & BLOSUMN                        & $\checkmark$ & PAM20                          &  \\
BLOSUM40                       & $\checkmark$ & CorBLOSUM49\_5.0               & $\checkmark$ & PAM200                         &  \\
BLOSUM45                       & $\checkmark$ & CorBLOSUM57\_13p               & $\checkmark$ & PAM210                         &  \\
BLOSUM50                       & $\checkmark$ & CorBLOSUM57\_14.3              & $\checkmark$ & PAM250                         &  \\
BLOSUM50\_13p                  & $\checkmark$ & CorBLOSUM61\_5.0               & $\checkmark$ & PAM30                          &  \\
BLOSUM50\_14.3                 & $\checkmark$ & CorBLOSUM66\_13p               & $\checkmark$ & PAM40                          &  \\
BLOSUM50\_5.0                  & $\checkmark$ & CorBLOSUM67\_14.3              & $\checkmark$ & PAM50                          &  \\
BLOSUM55                       & $\checkmark$ & DAYHOFF                        &  & PAM60                          &  \\
BLOSUM60                       & $\checkmark$ & PAM10                          &  & PAM70                          &  \\
BLOSUM62                       & $\checkmark$ & PAM100                         &  & PAM80                          &  \\
BLOSUM62\_13p                  & $\checkmark$ & PAM110                         &  & PAM90                          &  \\
BLOSUM62\_14.3                 & $\checkmark$ & PAM120                         &  & RBLOSUM52\_5.0                 & $\checkmark$ \\
BLOSUM62\_5.0                  & $\checkmark$ & PAM130                         &  & RBLOSUM59\_13p                 & $\checkmark$ \\
BLOSUM65                       & $\checkmark$ & PAM140                         &  & RBLOSUM59\_14.3                & $\checkmark$ \\
BLOSUM70                       & $\checkmark$ & PAM150                         &  & RBLOSUM64\_5.0                 & $\checkmark$ \\
BLOSUM75                       & $\checkmark$ & PAM160                         &  & RBLOSUM69\_13p                 & $\checkmark$ \\
BLOSUM80                       & $\checkmark$ & PAM170                         &  & RBLOSUM69\_14.3                & $\checkmark$ \\
\bottomrule
\end{tabular}
    \caption{We provide an inventory of $57$ substitution matrices available from Biotite \cite{Kunzmann2023}, 
    focusing on the $20 \times 20$ principle submatrices that correspond to the $20$ canonical amino acids. 
    We list only those matrices that are PSD. 
    For each matrix we indicate whether it is infinitely divisible (ID).
    We note that all listed BLOSUM matrices are ID except for BLOSUM30 and BLOSUM35.}
\label{tab:substitution-matrices}
\end{table}

%% file: kermut.tex
The Kermut kernel has two components, a sequence kernel $k_{\rm seq}(\bx, \by)$ and a structure kernel $k_{\rm struct}(\bx, \by)$.
The former is an isotropic RBF kernel defined on mean-pooled ESM-2 embeddings.
Let $M(\bx) \subseteq \{1,\dots,L\}$ denote the mutated positions of sequence $\bx$ w.r.t.~the user-specified reference (e.g.~wild-type)
sequence $\bxref$. 
Then the structure kernel is a set kernel defined over pairs of mutations:
\begin{equation}
    k_{\rm struct}(\bx, \by) = \sum_{i\in M(\bx)}\sum_{j\in M(\by)} k_{\rm struct}^\prime(\bx_i, \by_j)
\end{equation}
where $k_{\rm struct}^\prime(\bx_i, \by_j)$ is a base kernel that acts on pairs of residues. 
In Kermut $k_{\rm struct}^\prime(\bx_i, \by_j)$ is a product kernel made up of three subkernels and leverages ProteinMPNN inverse
folding probabilities and 3d structure coordinates. 
In addition Kermut GP utilizes an ESM-2-derived zero-shot mean function.
For more details see \citet{groth2024kermut}.

Here we would like to draw attention to some of the counterintuitive properties of $k_{\rm struct}(\bx, \by)$:
\begin{itemize}
\item \textbf{Combinatorial explosion:} $k_{\rm struct}$ scales quadratically as the number of mutations increases, which tends to inflate similarities for high-order mutants.
\item \textbf{Variable behavior as we go further away from the reference sequence:} 
Choose a sequence $\by$ that is everywhere mutated away from the reference
sequence, i.e.~the Hamming distance is given by $d_{\rm H}(\bxref, \by) = L$.  
Now let $\{\bx^{(0)}\equiv\bxref, \bx^{(1)}, ..., \bx^{(L)}\equiv\by \}$ be a series
of sequences that interpolate from $\bxref$ to $\by$ with the property that $d_{\rm H}(\bxref, \bx^{(\ell)}) = \ell$.
In other words $\bx^{(\ell)}$ is like $\bxref$ except that it has adopted $\ell$ mutations from $\by$.
Then under the assumption that $k_{\rm struct}^\prime(\bx_i^{(\ell)}, \bx_j^{(\ell^\prime)})$ is always strictly greater than zero---which 
is the case for Kermut---we find that
\begin{equation}
k_{\rm struct}(\bx^{(0)}, \bx^{(1)}) < k_{\rm struct}(\bx^{(1)}, \bx^{(2)}) < \dots < k_{\rm struct}(\bx^{(L-1)}, \bx^{(L)})
\end{equation}
since the sum that defines $k_{\rm struct}(\bx^{(\ell)}, \bx^{(\ell+1)})$ contains all the terms that enter into 
$k_{\rm struct}(\bx^{(\ell-1)}, \bx^{(\ell)})$ plus new (positive) terms. 
This means, for example, that $k_{\rm struct}(\bx^{(L-1)}, \bx^{(L)})$ is generally going to be \emph{much} larger
than $k_{\rm struct}(\bx^{(0)}, \bx^{(1)})$, even though both similarities are between pairs of sequences that differ
by a single mutation. 
This highly non-intuitive behavior would seem to suggest that, while Kermut may perform well for datasets where most sequences are in the vicinity
of the reference sequence, it may exhibit unusual behavior for datasets with lots of higher-order mutants.\footnote{As such this behavior is essentially a special case of the `combinatorial explosion' mentioned above.}
Indeed this may be a contributing factor to the relatively poor performance of Kermut in the extrapolation setting, see e.g.~Table \ref{tab:metric_summary}.
\item \textbf{Wild-type degeneracy:} $k_{\rm struct}(\bxref, \by) = 0$ for all sequences $\by$. 
One consequence is that---in the absence of the sequence kernel---the posterior mean prediction at $\bxref$ is given by the prior mean (cf.~\eqnref{eqn:gppredmean}), 
i.e.~is largely uninformed by other sequences.\footnote{Other sequences enter only insofar as they influence learned 
parameters in the prior mean. In this case there are two such parameters, a shift and a scale, which are used to adjust the zero-shot 
pseudo-likelihood from ESM-2.} 
Moreover posterior (function) uncertainty {\bf collapses to 0} at $\bxref$ (see \eqnref{eqn:gppredvar}).
By contrast the LOCK-GP functions as a \emph{smoother}, just like GPs with familiar kernels like the RBF kernel.\footnote{Here we assume noisy observations like in \eqnref{eqn:jointyf}.}
That is predictions for an observed sequence $\bx$ are influenced not only by the observed value $t$ at $\bx$ but also by the observed function values of nearby sequences (as determined by the kernel), and the function uncertainty is generically non-zero everywhere.
\end{itemize}

\paragraph{Implementation}

We use our own implementation of Kermut that is lightly adapted from the authors' code at \texttt{https://github.com/petergroth/kermut}.
Since Kermut is one of our strongest baseline methods, we take special care to verify that our implementation is equivalent
to the authors'. 
To do so we perform a head-to-head comparison of the two implementations, comparing Spearman R metrics obtained with our implementation
against metrics provided by the authors. 
We do so on $8$ ProteinGym datasets, using the exact same train/test splits and metric computations.
See \figref{fig:app:kermut_comp} for the results. 
We find very good agreement, with the largest difference appearing for the dataset with only $33$ data points, where we
find more variability from training run to training run.
In addition we checked our computation of ESM-2 embeddings as well as masked marginal zero-shot pseudolikelihoods
against those provided by the authors and found excellent agreement.

\begin{figure*}[t]
\centering
\includegraphics[width=\textwidth]{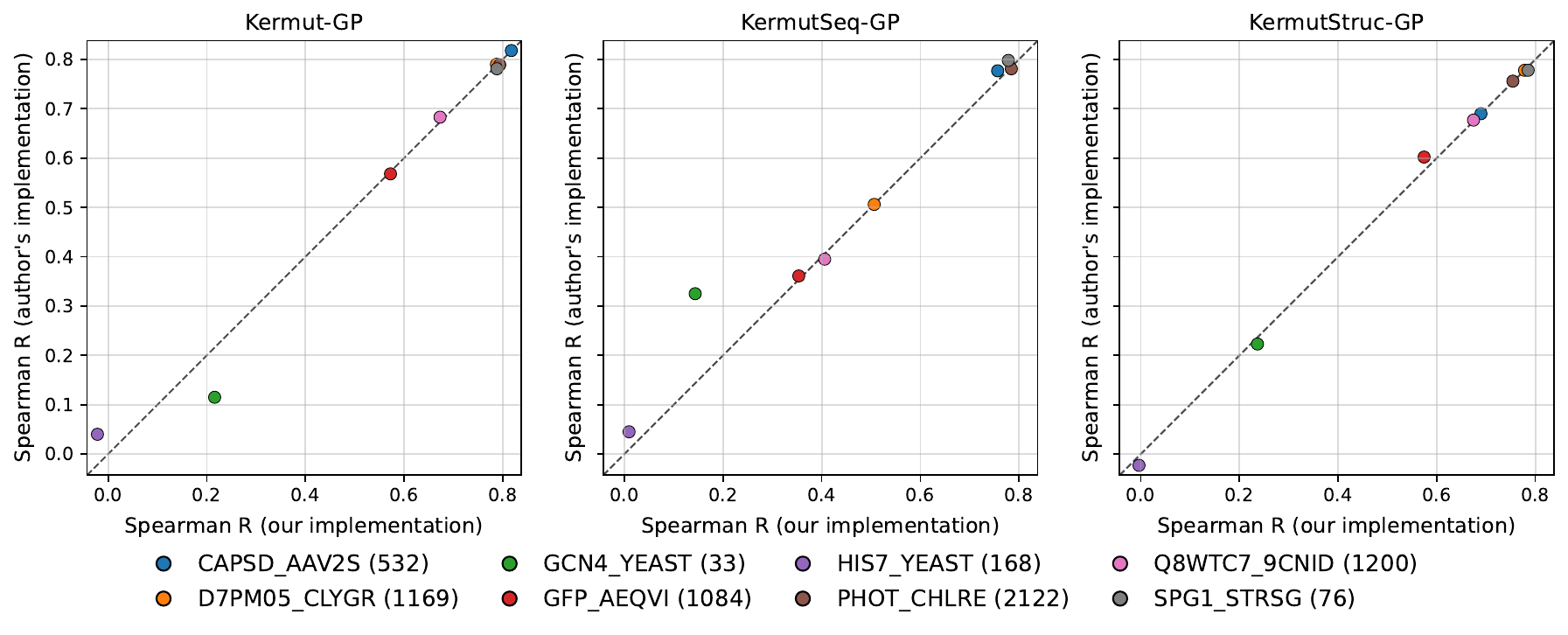}
    \caption{
    We depict Spearman R metrics obtained for $8$ ProteinGym datasets for all three Kermut variants,
    comparing our implementation to the results provided by \citet{groth2024kermut}.
    The number in parentheses after each landscape indicates the total number of data points in the given landscape.
    }
\label{fig:app:kermut_comp}
\end{figure*}

\paragraph{Discussion of Kermut Log Likelihoods} 

We briefly discuss the Kermut uncertainty results summarized in Table~\ref{tab:landscape_nll_combined} and Table~\ref{tab:metric_summary_gp}.
If we define a ``catastrophic NLL'' as one larger than $100$, we find that \texttt{LOCK-GP} and \texttt{Tanimoto-GP} do not exhibit
catastrophic NLLs for any datasets, while all three Kermut variants demonstrate catastrophic NLLs on at least one dataset.
Moreover, we find examples of this in all three evaluation regimes: cross-validation, unseen mutations, and extrapolation.
These catastrophic NLLs result from very narrow posterior predictive distributions, which appears to be driven by several distinct phenomena. 
In some cases, the lengthscale of the RBF kernel collapses; in other cases, the observation noise $\sigma_n$ becomes anomalously small. 
It is possible that this undesirable behavior could be rescued by modifying the hyperpriors and/or hyperparameter initialization used by Kermut.
However, as we can see from Table~\ref{tab:landscape_nll_combined}, the poor average NLL for the three Kermut variants is not solely due to a few outliers.
Indeed \texttt{LOCK-GP} has the lowest NLL on $17$/$21$ landscapes in the cross-validation regime with $192$ training data points.
It also has the lowest NLL on $12$/$21$ landscapes in the extrapolation regime with $128$ training data points
as well as the lowest NLL on $13$/$21$ landscapes in the extrapolation regime with $96$ training data points.

%% file: data.tex
See Table~\ref{tab:datasets} and Table~\ref{tab:datasets_numeric} for details on each dataset, including summary statistics like sequence length.

\input{dataset_table.tex}
\input{dataset_table_numeric.tex}

\paragraph{Dataset Pre-processing}

We minimally processed each ProteinGym dataset \cite{notin2023proteingym}, downloaded as DMS substitution datasets from the ProteinGym website directly in August 2025. 
For each dataset, we removed residues from either end of the associated predicted structure or reference sequence that were not present in the 
other to ensure their sequence lengths matched. 
We additionally log10-normalized DMS scores for the three Somermeyer et al.~datasets (\texttt{D7PM05\_CLYGR\_Somermeyer\_2022}, \texttt{Q6WV12\_9MAXI\_Somermeyer\_2022}, and \texttt{Q8WTC7\_9CNID\_Somermeyer\_2022}; \cite{Gonzalez_Somermeyer_2022}). 
For all other ProteinGym datasets, we performed no pre-processing on DMS scores.

We processed non-ProteinGym datasets separately. 
For the three Phillips et al.~datasets (\texttt{CH65 - MA90}, \texttt{CH65 - SI06}, and \texttt{CH65 - G189E}), 
we associated reported negative log KD scores with the reference 5UGY PDB structure \cite{Phillips_2023}. 
For the three Moulana et al.~datasets (\texttt{Moulana Regn10987}, \texttt{Moulana CB6}, and \texttt{Moulana Cov555}), 
we associated reported log KD scores with their respective reference PDB structure - 6XDG for Regen10987, 7C01 for CB6, and 7KMG for COV555 \cite{Moulana_2023}. 
To define full-length reference sequences for these datasets, we added constant residues to each set of originally-reported wildtype residues 
based on the constant residues in each respective structure's sequence. 
For the two Tsuboyama et al.~datasets \cite{Tsuboyama_2023}, we associated each dataset with its respective AlphaFold-predicted structure after 
taking the mean stability score across duplicated sequences.

Lastly, for the four Shanker et al.~datasets (\texttt{CR9114 - H1}, \texttt{CR9114 - H3}, \texttt{CR6261 - H1}, \texttt{CR6261 - H9}), 
we associated each dataset with their respective PDB structure - 3GBN for CR6261 datasets and 4FQI for CR9114 datasets \cite{Shanker_2024}. 
While these are experimental structures, it is important to note that the measured experimental values for these datasets do not precisely match the 
subtype of the complexed influenza virus measured in the structure for all experiments---please refer to \citet{Shanker_2024} for more information. 
For the reference 3GBN structure, we also removed the first residue of the sequence as it is unstructured. 
As above, we then added constant residues to each set of originally-reported wildtype residues 
based on the constant residues in each respective structure's sequence to define full-length reference sequences. 
For the CR9114 and CR6261 datasets, respectively, we also removed sequences with any missing score in the associated pair of H1/H3 or H1/H9 scores.

\paragraph{Cross-validation}

We do $7$-fold cross-validation. That is for each landscape we randomly partition the data into $7$ disjoint and approximately-equally-sized subsets.
We then define $7$ train/test sets, where each test set is one of the $7$ subsets and the training set is the remaining data points.
This procedure is done once for each landscape. Then, to modulate the number of training data points, we do i.i.d.~subsampling within
each training set, while simultaneously keeping the test set fixed. To compute metrics we aggregate test predictions across all $7$ test sets, making
sure that predictions are computed in the same target space. 
We then compute predictive metrics like Spearman R or MAE on these aggregated predictions.
In this manner we obtain a single metric for each landscape and for each given number of training data points. 
For ease of comparison of metrics across landscapes, we use the standard deviation
of all target scores in the given landscape to normalize true and predicted scores as well as the predictive distribution when available. 
This additional normalization step has no effect on metrics like Spearman or Pearson R, but it ensures that scale-dependent metrics like MAE and CRPS are
more readily comparable across landscapes. 
For $N=48$ training data points we do three subsampling replicates to mitigate against random variability that is more prevalent for small training data sets;\footnote{For the GP ablation experiments in \secref{app:ablation} we do a single replicate.} for $N > 48$ we do a single replicate.

\paragraph{Unseen Mutations}

In order to investigate the ability of different models to generalize to unseen mutations, we devise train/test splits in which all test sequences
have at least one mutation that is not present in the corresponding training set.
Given the diverse nature of our $21$ datasets, we utilize two distinct strategies to do so.

The first strategy is used for landscapes that have $20$ or fewer variable positions and for which the majority amino acid
at each position has a frequency of at least $50\%$. These happen to be the $10$ antibody landscapes: i) $3$ CH65 landscapes;
ii) $4$ CR landscapes; and iii) $3$ Moulana landscapes. 
For these landscapes in each distinct train/test split we randomly select $4$ distinct residues.
The initial training set consists of all sequences that have the majority amino acid at the selected residues.
The initial test set consists of all remaining sequences.
We then subsample each training set down to exactly $96$ data points
and remove all sequences in the test set that do not have any unseen mutations, i.e.~mutations not present in the training set. 
We define $3$ distinct train/test splits for each landscape, 
ensuring that there is minimal overlap between the `selected' residues between the three splits for each landscape.
We also require that the standard deviation of the training targets is at least a tenth of the standard deviation of the targets across the entire landscape.

For the remaining landscapes we utilize a different strategy.
First we identify ``rare'' mutations by ranking mutations by their frequency and choosing those in the bottom decile. 
We then construct a training set with $96$ data points by drawing sequences from a weighted distribution that heavily favors
sequences with one or more rare mutations. 
The test set is then defined as all remaining sequences that have at least one unseen mutation, i.e.~mutations not present in the training set.
As above we define $3$ distinct train/test splits for each landscape and require that  the standard deviation of the training targets is at least a tenth of the standard deviation of the targets across the entire landscape.

With these choices each of the $21 \times 3$ test sets has between $1684$ and $175,693$ sequences.
Metrics are computed for each train/test split and for each landscape we report averages across the three distinct splits. 

\paragraph{Extrapolation}

For each landscape we determine the reference/wild-type sequence $\bxref$.
For ProteinGym datasets this is the \texttt{target\_seq} field in the \texttt{DMS\_substitutions.csv} file,
while for the remaining datasets we take the reference sequence from the corresponding publication.
For each variant $\bx$ in a given landscape we compute its Hamming distance to the target, $d_{\rm H}(\bx, \bxref)$.
We adaptively select a Hamming distance cutoff $D$ for each landscape as the smallest integer satisfying $3 \le D \le 5$ such that 
at least $512$ sequences have $d_{\rm H}(\bx, \bxref) \le D$ and at least $384$ sequences have $d_{\rm H}(\bx, \bxref) > D$.
The initial training set is all sequences within a Hamming distance of $D$ of the 
target sequence (i.e.~we choose $\bx$ such that $d_{\rm H}(\bx, \bxref) \le D$).
The training set is then subsampled uniformly at random to the desired number of training data points (either $128$ or $512$). 
The test set is all sequences more than a Hamming distance of $D$ from the 
target sequence (i.e.~we choose $\bx$ such that $d_{\rm H}(\bx, \bxref) > D$) and is not subsampled.
By construction all test sets have at least $384$ sequences.

%% file: dataset_table.tex
\begin{table*}[t]
\centering
\scriptsize
    \caption{We summarize the datasets used in experiments in \secref{sec:ablation} and \secref{sec:local}.
    We selected datasets that are sufficiently large and have considerable sequence diversity, including an abundance
    of higher-order mutants (see Table \ref{tab:datasets_numeric}). Datasets range in size from $1,\!812$ sequences to $496,\!137$ sequences. 
    The $9$ datasets from ProteinGym \cite{notin2023proteingym} are referred to
 by their ProteinGym IDs (i.e.~from \texttt{D7PM05\_CLYGR\_Somermeyer\_2022} to \texttt{GCN4\_YEAST\_Staller\_2018}).
    Assay types are categorized by their corresponding ProteinGym category: activity, binding, organismal fitness, stability, and expression;
    note that we did not include any expression datasets.
}
\label{tab:datasets}
\begin{tabular}{lrllll}
\toprule
\textbf{Name} & \textbf{N} & \textbf{Ref. structure} & \textbf{Readout type} & \textbf{Structure in complex} & \textbf{Reference} \\
\midrule
D7PM05\_CLYGR\_Somermeyer\_2022 & 24515 & Predicted & Activity & No & \citet{Gonzalez_Somermeyer_2022} \\
Q6WV12\_9MAXI\_Somermeyer\_2022 & 31401 & Predicted & Activity & No & \citet{Gonzalez_Somermeyer_2022} \\
Q8WTC7\_9CNID\_Somermeyer\_2022 & 33510 & Predicted & Activity & No & \citet{Gonzalez_Somermeyer_2022} \\
GFP\_AEQVI\_Sarkisyan\_2016 & 51714 & Predicted & Activity & No & \citet{Sarkisyan_2016} \\
CAPSD\_AAV2S\_Sinai\_2021 & 42328 & Predicted & OrganismalFitness & No & \citet{Sinai_2021} \\
F7YBW8\_MESOW\_Ding\_2023 & 7922 & Predicted & OrganismalFitness & No & \citet{Ding_2024} \\
HIS7\_YEAST\_Pokusaeva\_2019 & 496137 & Predicted & OrganismalFitness & No & \citet{Pokusaeva_2019} \\
PHOT\_CHLRE\_Chen\_2023 & 167529 & Predicted & Activity & No & \citet{Chen_2023} \\
GCN4\_YEAST\_Staller\_2018 & 2638 & Predicted & Activity & No & \citet{Staller_2018} \\
CH65 - MA90 & 65530 & Experimental & Binding & Yes & \citet{Phillips_2023} \\
CH65 - SI06 & 64619 & Experimental & Binding & Yes & \citet{Phillips_2023} \\
CH65 - G189E & 63840 & Experimental & Binding & Yes & \citet{Phillips_2023} \\
CR9114 - H1 & 65093 & Experimental & Binding & Yes & \citet{Shanker_2024} \\
CR9114 - H3 & 65093 & Experimental & Binding & Yes & \citet{Shanker_2024} \\
CR6261 - H1 & 1812 & Experimental & Binding & Yes & \citet{Shanker_2024} \\
CR6261 - H9 & 1812 & Experimental & Binding & Yes & \citet{Shanker_2024} \\
Moulana Regn10987 & 30479 & Experimental & Binding & Yes & \citet{Moulana_2023} \\
Moulana CB6 & 32112 & Experimental & Binding & Yes & \citet{Moulana_2023} \\
Moulana Cov555 & 29892 & Experimental & Binding & Yes & \citet{Moulana_2023} \\
Tsuboyama - 2B88 & 2937 & Predicted & Stability & No & \citet{Tsuboyama_2023} \\
Tsuboyama - 2HBB & 4494 & Predicted & Stability & No & \citet{Tsuboyama_2023} \\
\bottomrule
\end{tabular}
\end{table*}

%% file: dataset_table_numeric.tex
\begin{table*}[t]
\centering
\scriptsize
    \caption{We provide detailed summary statistics for the $21$ datasets used for the experiments in 
    \secref{sec:ablation} and \secref{sec:local}.
    Each dataset has a maximum Hamming distance from reference ($d_{\rm H}$) of at least $6$,
    an average $d_{\rm H}$ of at least $2.7$, and at least $10$ variable positions.}
\label{tab:datasets_numeric}
\begin{tabular}{lrrrr}
\toprule
\textbf{Name} & \textbf{Avg.~$d_{\rm H}$} & \textbf{Max $d_{\rm H}$} & \textbf{Total sequence length} & \textbf{\# of variable positions} \\
\midrule
D7PM05\_CLYGR\_Somermeyer\_2022 & 3.0 & 23 & 235 & 234 \\
Q6WV12\_9MAXI\_Somermeyer\_2022 & 2.7 & 13 & 222 & 221 \\
Q8WTC7\_9CNID\_Somermeyer\_2022 & 3.1 & 43 & 238 & 237 \\
GFP\_AEQVI\_Sarkisyan\_2016 & 3.9 & 15 & 238 & 233 \\
CAPSD\_AAV2S\_Sinai\_2021 & 4.7 & 28 & 735 & 28 \\
F7YBW8\_MESOW\_Ding\_2023 & 5.4 & 10 & 93 & 10 \\
HIS7\_YEAST\_Pokusaeva\_2019 & 7.0 & 28 & 220 & 171 \\
PHOT\_CHLRE\_Chen\_2023 & 8.0 & 15 & 118 & 118 \\
GCN4\_YEAST\_Staller\_2018 & 17.1 & 44 & 281 & 44 \\
CH65 - MA90 & 8.0 & 16 & 760 & 16 \\
CH65 - SI06 & 8.0 & 16 & 760 & 16 \\
CH65 - G189E & 8.0 & 16 & 760 & 16 \\
CR9114 - H1 & 8.0 & 16 & 730 & 16 \\
CR9114 - H3 & 8.0 & 16 & 730 & 16 \\
CR6261 - H1 & 5.5 & 11 & 731 & 11 \\
CR6261 - H9 & 5.5 & 11 & 731 & 11 \\
Moulana Regn10987 & 7.5 & 15 & 1043 & 15 \\
Moulana CB6 & 7.5 & 15 & 1256 & 15 \\
Moulana Cov555 & 7.6 & 15 & 1255 & 15 \\
Tsuboyama - 2B88 & 3.6 & 6 & 54 & 54 \\
Tsuboyama - 2HBB & 3.3 & 7 & 48 & 48 \\
\bottomrule
\end{tabular}
\end{table*}

%% file: speed.tex
\begin{table*}[h!]
\centering
\footnotesize
\setlength{\tabcolsep}{3pt}
\input{speed_tables.tex}
    \caption{We report training and inference times as well as the corresponding peak GPU memory usage for 
    six models on the \citet{Chen_2023} dataset. 
    We vary the number of training points as $N \in \{1024,16384\}$ and keep the inference workload fixed at $1024$ predictions.
    Results are obtained using a NVIDIA A10G GPU.}
\label{tab:speed-benchmark}
\end{table*}

We benchmark the compute time\footnote{Note that to benchmark inference compute time we skip any computations 
that can be cached and subsequently reused.
For example, computing ProteinMPNN inverse folding logits for \texttt{Kermut-GP} need only be done once, since the reference
structure $\SSS$ is assumed fixed. Similarly, for training we make sure to avoid unnecessary recomputation wherever possible.} and peak memory usage of six representative models; see Table \ref{tab:speed-benchmark} for the results.
We find that \texttt{LOCK-GP} training is $\sim 3-140$x faster than the other five models, all of which make use of at least one foundation model. 
Similarly \texttt{LOCK-GP} inference is $\sim 2-80$x faster than the other models considered.
By contrast the picture for peak memory usage is more complex, and in any case this quantity can depend on the mini-batch size used
for computing foundation model embeddings---this applies to both training and inference---as well as other implementation details. 
Notably peak memory usage for GP models becomes substantial for $N \gtrsim 10^4$.
While this can be ameliorated by using inducing point methods or other approximate GP inference techniques (see \secref{sec:app:scalability}), kernel methods can be prone to significant memory usage unless care is taken to avoid realizing large kernel matrices. 
We note that we can estimate the inference time of other models in Table \ref{tab:metric_summary_long} by considering ESM-2 usage and 
comparing to Table \ref{tab:speed-benchmark}.
For example, the inference time of \texttt{Ridge-ESM2-650M} is roughly comparable to \texttt{MLP-ESM2-650M}, since computing
ESM-2 embeddings is the dominant computational cost. 

We also note that a similar inference time hierarchy exists for the multi-task models benchmarked in \secref{sec:multi}.
In particular \texttt{CLOCK-GP} is dramatically faster at inference time than \texttt{CNN-EMB-OH},
since the positional structure embeddings $\bh_{1:L}$ used by \texttt{CLOCK-GP} can be computed once and cached,
while for \texttt{CNN-EMB-OH} positional structure-and-sequence embeddings must be computed for each new test sequence.

%% file: speed_tables.tex
\begin{minipage}[t]{0.49\textwidth}
\centering
\textbf{Training time}\\[0.4ex]
\begin{tabular}{@{}lcc@{}}
\toprule
\textbf{Model} & $N = 1024$ & $N = 16384$ \\
\midrule
\texttt{LOCK-GP} & 12.8\,s & 1.8\,m \\
\texttt{Kermut-GP} & 1.8\,m & 13.9\,m \\
\texttt{MLP-ESM2-8M} & 40.7\,s & 5.8\,m \\
\texttt{MLP-ESM2-650M} & 1.7\,m & 11.3\,m \\
\texttt{MLP-ESM2-LastLayer} & 2.7\,m & 41.1\,m \\
\texttt{MLP-SaProt-LastLayer} & 21.0\,m & 249.3\,m \\
\bottomrule
\end{tabular}
\end{minipage}
\hfill
\begin{minipage}[t]{0.49\textwidth}
\centering
\textbf{Training peak memory}\\[0.4ex]
\begin{tabular}{@{}lcc@{}}
\toprule
\textbf{Model} & $N = 1024$ & $N = 16384$ \\
\midrule
\texttt{LOCK-GP} & 154\,MB & 11.0\,GB \\
\texttt{Kermut-GP} & 5.8\,GB & 16.6\,GB \\
\texttt{MLP-ESM2-8M} & 146\,MB & 358\,MB \\
\texttt{MLP-ESM2-650M} & 3.2\,GB & 3.4\,GB \\
\texttt{MLP-ESM2-LastLayer} & 215\,MB & 361\,MB \\
\texttt{MLP-SaProt-LastLayer} & 425\,MB & 562\,MB \\
\bottomrule
\end{tabular}
\end{minipage}

\vspace{0.75ex}

\begin{minipage}[t]{0.49\textwidth}
\centering
\textbf{Inference time}\\[0.4ex]
\begin{tabular}{@{}lcc@{}}
\toprule
\textbf{Model} & $N = 1024$ & $N = 16384$ \\
\midrule
\texttt{LOCK-GP} & 0.2\,s & 0.7\,s \\
\texttt{Kermut-GP} & 40.3\,s & 43.1\,s \\
\texttt{MLP-ESM2-8M} & 1.5\,s & 1.5\,s \\
\texttt{MLP-ESM2-650M} & 20.6\,s & 19.7\,s \\
\texttt{MLP-ESM2-LastLayer} & 1.1\,s & 1.1\,s \\
\texttt{MLP-SaProt-LastLayer} & 4.0\,s & 4.0\,s \\
\bottomrule
\end{tabular}
\end{minipage}
\hfill
\begin{minipage}[t]{0.49\textwidth}
\centering
\textbf{Inference peak memory}\\[0.4ex]
\begin{tabular}{@{}lcc@{}}
\toprule
\textbf{Model} & $N = 1024$ & $N = 16384$ \\
\midrule
\texttt{LOCK-GP} & 83\,MB & 5.2\,GB \\
\texttt{Kermut-GP} & 5.8\,GB & 14.3\,GB \\
\texttt{MLP-ESM2-8M} & 141\,MB & 152\,MB \\
\texttt{MLP-ESM2-650M} & 3.2\,GB & 3.3\,GB \\
\texttt{MLP-ESM2-LastLayer} & 1.6\,GB & 1.7\,GB \\
\texttt{MLP-SaProt-LastLayer} & 846\,MB & 976\,MB \\
\bottomrule
\end{tabular}
\end{minipage}
\label{tab:speed2-benchmark}

%% file: app_exp.tex
In our experiments we leverage Pytorch \cite{DBLP:conf/nips/PaszkeGMLBCKLGA19}, Gpytorch \cite{DBLP:conf/nips/GardnerPWBW18}, 
Botorch \cite{DBLP:conf/nips/BalandatKJDLWB20}, and Pytorch Lightning \cite{falcon_pytorch_lightning_2019}.
In particular, unless noted otherwise we use \texttt{fit\_gpytorch\_mll} from Botorch to fit GPs, which makes use of the quasi-Newton method L-BFGS under the hood.

\input{app_exp_local.tex}
\input{app_exp_multi.tex}

\input{app_exp_binary.tex}

%% file: app_exp_local.tex
\subsubsection{Local Learning Experiment}
\label{app:sec:localdetails}

We describe the various baselines used in the experiment in \secref{sec:local}.
We also describe some additional baselines for which we did not report results in the main text but which appear in Table \ref{tab:metric_summary_long}:
{\small
\begin{multicols}{4}
\begin{itemize}[leftmargin=*]
    \item \texttt{ESM2-650M-ZeroShot}
    \item \texttt{BLOSUM50-ZeroShot}
    \item \texttt{Ridge-ESM2-650M}
    \item \texttt{Ridge-ESM2-8M-MeanPool}
    \item \texttt{Ridge-OH-ESM2-650M-Aug}
    \item \texttt{MLP-ESM2-650M}
    \item \texttt{MLP-ESM2-8M-MeanPool}
    \item \texttt{MLP-OH-ESM2-Aug}
    \item \texttt{MLP-ESM2-8M-RandInit}
    \item \texttt{MLP-OH}
    \item \texttt{ConFit}
    \item \texttt{MLP-SaProt-LastLayer}
\end{itemize}
\end{multicols}
}

\paragraph{\texttt{Tanimoto-GP}}

We provide a brief description of the kernel in \citet{gessner2024active-mlsb}.
Let $\log^{\circ} \bS \in \mathbb{R}^{A \times A}$ be a BLOSUM substitution matrix in log-odds space.
Compute its eigendecomposition $\log^{\circ} \bS = \bU \bD \bU^\top$.
Since $\log^{\circ} \bS$ is not PSD, we cannot take the square root of $\bD$. 
In \citet{gessner2024active-mlsb} the authors replace $\bD$ with its entrywise absolute value $|\bD|$ before taking the square root. 
We find slightly better performance (see \secref{app:ablation}) by using elementwise clamping instead: $\bD_+ = \max(\bD, 0)$.
For one-hot sequence $\bx \in \mathbb{R}^{L \times A}$ we then encode $\bx$ as $\phi(\bx) = \bx \bU \sqrt{\bD_+} \in \mathbb{R}^{L \times A}$.
We then define the resulting Tanimoto kernel \cite{tripp2023tanimoto} as
\begin{equation}
k_{\rm Tani}(\bx, \bx') = \frac{\langle \phi(\bx), \phi(\bx') \rangle_F}{\|\phi(\bx)\|_F^2 + \|\phi(\bx')\|_F^2 - \langle \phi(\bx), \phi(\bx') \rangle_F}
\end{equation}
where $\langle \cdot, \cdot \rangle_F$ is the Frobenius inner product. 
\citet{gessner2024active-mlsb} does not specify which BLOSUM matrix is used; we use BLOSUM62.
As shown in \secref{app:ablation}, there is only weak dependence on the particular substitution matrix used.

\paragraph{\texttt{Kermut-GP}}

We follow the detailed model specification described in \citet{groth2024kermut}. 
The authors' released code initializes the two parameters that define the prior mean by sampling both the scalar weight and bias from $\mathcal{N}(0, 1)$.
We instead sample the weight from $\mathcal{N}(0, 0.01)$ and initialize the bias to $0$.

\paragraph{\texttt{KermutSeq-GP}}

Like \texttt{Kermut-GP} but without the structure kernel.

\paragraph{\texttt{KermutStruc-GP}}

Like \texttt{Kermut-GP} but without the RBF sequence kernel.
See \secref{app:kermut} for an extended discussion of the structure kernel.
As described in \citet{groth2024kermut}, the structure kernel includes an overall kernel scale.
The authors' released code omits this scaling for the structure-only kernel.
Following the paper, we include this learnable scale in our implementation.

\paragraph{\texttt{Ridge-OH}}

We use \texttt{RidgeCV} from sklearn on one-hot encoded sequences,
which selects from $32$ ridge parameters $\alpha_{\rm ridge}$ that are geometrically evenly spaced between $10^{-4}$ and $10^{4}$ (i.e.~{\texttt{numpy.logspace(-4, 4, 32)}).

\paragraph{\texttt{Ridge-ESM2}}

We use \texttt{RidgeCV} from sklearn on (non-mean-pooled) features from ESM-2-8M. 
We extract features by selecting the variable positions from the last hidden layer, flattening into a vector, and then elementwise centering and unit-scaling.
We use $32$ ridge parameters $\alpha_{\rm ridge}$ that are geometrically evenly spaced between $10^{-4}$ and $10^{4}$ (i.e.~{\texttt{numpy.logspace(-4, 4, 32)}).
In Table~\ref{tab:metric_summary_long} we also reports results for \texttt{Ridge-ESM2-650M}.

\paragraph{\texttt{MLP-ESM2-8M} and \texttt{MLP-ESM2-650M}}

We train a single-layer neural network with $128$ hidden units and a ReLU non-linearity on (non-mean-pooled) features from ESM-2.
We extract features by selecting the variable positions from the last hidden layer, flattening into a vector, and then elementwise centering and unit-scaling.
We use a dropout probability of $0.5$ and a L2 regularizer $\lambda=0.1$ (i.e.~relatively strong weight decay).
We train for $1000$ epochs in batch mode using Adam with default settings (i.e.~$\texttt{lr}=10^{-3}$, $\beta_1=0.9$, $\beta_2=0.999$) 
and a linear learning-rate schedule that decays to $0$ over $1000$ epochs.

\paragraph{\texttt{MLP-ESM2-LastLayer}}

We use the same feature extraction pipeline as \texttt{MLP-ESM2-8M} but fine-tune the last layer of ESM-2-8M jointly with the MLP head.
We use $32$ hidden units, a dropout probability of $0.5$, and a LeakyReLU non-linearity with negative slope $0.1$.
We train for $100$ epochs using AdamW with a batch size of $24$, gradient clipping with a maximum norm of $0.5$, 
and a learning-rate schedule with $30$ epochs of linear warmup (starting at a factor of $0.01$) followed by cosine decay to a minimum of $10^{-6}$.
We use a peak learning rate of $10^{-4}$ for training ESM-2 parameters and $10^{-3}$ for the MLP head, with weight decay $0.1$ (except for LayerNorm parameters, which have no weight decay); other AdamW settings are set to default values.

\paragraph{\texttt{SigGLM-OH}}

The model is of the form $f(\bx) = a \; \sigma(\bbeta \cdot \bx + b) + c$ where $\sigma(\cdot)$ is a logistic link function (i.e.~\texttt{torch.sigmoid}),
$a,b,c\in \RR$ are scalars, $\bbeta \in \RR^{L \times A}$ are coefficients,
and $\bx$ is a one-hot encoded sequence. 
We fit the model by minimizing a MSE loss function that also includes a L2 regularizer on $\bbeta$; we use the Adam optimizer.
The strength of the L2 regularizer is chosen via cross-validation.

\paragraph{\texttt{ConFit}}

ConFit (Contrastive Fitness Learning) fine-tunes a pre-trained PLM using a contrastive Bradley–Terry loss and
parameter-efficient LoRA adapters \cite{zhao2024contrastive}. 
Like \cite{zhao2024contrastive} we use an ensemble of five distinct ESM-1v checkpoints \cite{meier2021esm1v};
we do not use MSA context retrieval.
Our implementation is closely based on the authors' implementation at \texttt{https://github.com/luo-group/ConFit}, with a few adaptations.
We removed the KL regularization term from the loss, since we found no benefit from including it. Since zero-shot ESM-1v performance is quite
poor on most of our datasets, this is not unexpected. 
Since some of the datasets in \secref{sec:data} exhibit ties or near ties w.r.t.~the targets (i.e.~fitness scores), we modify the Bradley–Terry loss
as follows: i) we explicitly model near-ties (defined with a threshold of $10^{-5}$) using a soft target of $0.5$; and
ii) we suppress the influence of ties by downweighting their contribution to the loss by a factor of $10^{3}$.
We validated our implementation by running the author's evaluation script and comparing the resulting Spearman correlations to those obtained with our implementation,
finding that the Spearman correlations were within $0.001$ (or better) than those obtained with the authors' implementation.

Since ConFit uses a ranking-based loss instead of a regression loss, it does not yield calibrated fitness predictions;
indeed the absolute scale of the predictions is arbitrary.
Consequently, in order to compute reasonable regression metrics like MAE and RMSE, we apply an additional linear calibration step to ConFit predictions 
in which we fit two parameters---a scalar intercept and scalar coefficient---to the raw ConFit predictions.\footnote{Additionally, we normalize
the raw predictions from each ensemble member by their standard deviation before averaging predictions across the ensemble.}
We fit these parameters using \texttt{RidgeCV} from sklearn. 
This calibration step has minimal impact on Pearson or Spearman correlation metrics.

\paragraph{\texttt{ESM2-650M-ZeroShot}}

Zero-shot scores are computed using the masked marginal scoring method as described in \citet{meier2021esm1v}.

\paragraph{\texttt{BLOSUM50-ZeroShot}}

For each sequence $\bx$, zero-shot scores are computed by summing BLOSUM50 (log-odds) substitution scores\footnote{That is we compute the sum using entries 
$\log S_{a a_{\rm ref}}$ for amino acids $a$ and $a_{\rm ref}$, where we note that in our notation $\bS$ is the exponentiated substitution 
matrix and $\log^{\circ} \bS$ is the raw matrix with log-odds.} position-by-position against the reference sequence $\bxref$ \cite{altschul1991amino}.

\paragraph{\texttt{Ridge-OH-ESM2-Aug}}

Like \texttt{Ridge-OH} above except the linear model uses one-hot features concatenated with the normalized (scalar) zero-shot scores used in \texttt{ESM2-650M-ZeroShot}.
This baseline is inspired by \citet{hsu2022learning}.

\paragraph{\texttt{MLP-OH}}

We use a single-layer MLP like in \texttt{MLP-ESM2-8M} above except with one-hot features and with $32$ instead of $128$ hidden units.

\paragraph{\texttt{MLP-OH-ESM2-Aug}}

We use a single-layer MLP like in \texttt{MLP-ESM2-8M} above and with the same zero-shot-augmented features as in \texttt{Ridge-OH-ESM2-Aug}.
We use $32$ instead of $128$ hidden units.

\paragraph{\texttt{Ridge-ESM2-8M-MeanPool}}

Like \texttt{Ridge-ESM2} above, except the input features are mean-pooled across variable positions, rather than being flattened.

\paragraph{\texttt{MLP-ESM2-8M-MeanPool}}

Like \texttt{MLP-ESM2-8M} above, except the MLP uses $32$ hidden units and the input features are mean-pooled across variable positions, rather than being flattened.

\paragraph{\texttt{MLP-SaProt-LastLayer}}

Like \texttt{MLP-ESM2-LastLayer} above, except the structure-conditioned PLM SaProt-35M \cite{su2024saprot} 
is used instead of ESM-2-8M.

\paragraph{PLM Usage}

ESM-2 and ESM-1v were pre-trained on single chain sequences, while some of our datasets consist of multi-chain proteins.
Therefore, when computing ESM-2 embeddings or fine-tuning ESM-1v we proceed as follows.
For each sequence we: i) obtain the chain boundaries from the associated reference structure;
ii) compute independent forward passes for each chain; and
iii) concatenate the outputs. 
Note that we use two ESM-2 variants, namely a smaller model with $8$ million parameters and a larger model with $650$ million parameters.

SaProt is used as follows. Residues with incomplete backbone coordinates are masked using the \# token. 
If only C$\beta$ is missing, mini3di reconstructs a virtual C$\beta$ from the N, C$\alpha$, and C coordinates.
For predicted structures, pLDDT-based masking with a threshold of $70$ is applied after 3Di encoding: 
tokens are first computed for all structurally valid residues, and then positions with low pLDDT are replaced with \#. 
This preserves accurate 3Di tokens for neighboring residues.

\paragraph{Brief Discussion of Additional Baselines}

The ESM-2-based zero-shot baseline \texttt{ESM2-650M-ZeroShot} has very variable performance, with some Pearson/Spearman correlation
coefficients being markedly negative; see Tables~\ref{tab:landscape_pearsonr_cv}-\ref{tab:landscape_mae_extrapolation}.
This kind of variable performance is hardly surprising given the complex relationship between evolutionary preferences and fitness, 
especially for antibodies, which are subject to stochastic recombination and somatic hypermutation; see \cite{chungyoun2024flab} for similar observations.
By contrast BLOSUM50-based zero-shot scores are consistently positive except for the two thermostability landscapes from \citet{Tsuboyama_2023}
and some of the antibody landscapes from \citet{Moulana_2023}. 
Swapping ESM2-8M for ESM-650M in ridge regression and MLP models has mixed effects on performance.
While \texttt{Ridge-ESM2-650M} outperforms \texttt{Ridge-ESM2-8M} in the cross-validation setting, predictive performance is roughly comparable in the two OOD regimes.
Models that leverage scalar zero-shot features like \texttt{Ridge-OH-ESM2-Aug} and \texttt{MLP-OH-ESM2-Aug} generally
outperform their non-augmented counterparts, i.e.~\texttt{Ridge-OH} and \texttt{MLP-OH}.
We find that mean-pooling features like in \texttt{Ridge-ESM2-8M-MeanPool} and \texttt{MLP-ESM2-8M-MeanPool} degrades performance.
We find that \texttt{ConFit} and \texttt{MLP-ESM2-8M} perform similarly to \texttt{MLP-ESM2-LastLayer},
although \texttt{MLP-ESM2-LastLayer} performs best over all.
In particular all three of these methods perform quite well in the cross-validation setting but exhibit degraded performance in OOD settings. 
As we would expect, since \texttt{ConFit} is a ranking-based method, it exhibits poorer Pearson R and MAE metrics.
We find that using ESM2-8M as a random feature generator as in \texttt{MLP-ESM2-8M-RandInit} leads to surprisingly good results---for example,
often outperforming \texttt{MLP-ESM2-8M-MeanPool}---suggesting that a non-trivial portion of the good performance of methods 
like \texttt{MLP-ESM2-8M} should be attributed to the general representational capacity
of high-dimensional transformer-based sequence-aware n-gram-like features that is orthogonal to any particular learnings 
from global protein data obtained via pre-training.
See \cite{pmlr-v235-li24a} for related benchmarks that involve randomly initialized protein language models. 
We also find that, while the overall picture is somewhat mixed, \texttt{MLP-OH} outperforms \texttt{Ridge-OH} in most regimes w.r.t.~MAE, 
highlighting the value of making the predictive model non-linear. 
Finally in \texttt{MLP-SaProt-LastLayer} we include an additional model beyond \texttt{Kermut-GP} and \texttt{KermutStruc-GP} that
utilizes the reference structure $\SSS$. 
We find that, while \texttt{MLP-SaProt-LastLayer} outperforms \texttt{MLP-ESM2-LastLayer} on average, especially in the OOD setting, 
the gains are quite modest overall.

%% file: app_exp_multi.tex
\subsubsection{Multi-task Experiment}
\label{app:sec:multitaskdetails}

We first describe the general training and evaluation setup. 
We assume a set of training landscapes $\{\DD_k\}$ where each $\DD_k = \{\bX_k, \bt_k, \SSS_k \}$ consists of $N_k$ 
sequences $\bX_k = \{ \bx_{k,n} \}$, $N_k$ properties/targets $\bt_k = \{ t_{k,n} \}$, and a reference structure $\SSS_k$.
We are given a heldout landscape $\DD^\prime = \{\bX^\prime, \bt^\prime, \SSS^\prime \}$ with $N^\prime$ data points.
Then we use the training landscapes $\{\DD_k\}$ to fit $\bW$ (in the case of \texttt{CLOCK-GP}) or 
neural network weights (in the case of \texttt{CNN-EMB-OH}). 
The resulting model artifact is then used in conjunction with a subset of data from $\DD^\prime$ 
(say $\{\bX_{\rm train}^\prime, \bt_{\rm train}^\prime, \SSS^\prime \}$ with $N_{\rm train}^\prime < N^\prime$ data points) to
make predictions on test data from $\DD^\prime$ (in particular the complement $\{\bX_{\rm test}^\prime, \bt_{\rm test}^\prime, \SSS^\prime \}$ with 
$N_{\rm test}^\prime = N^\prime - N_{\rm train}^\prime$ data points).
It is the performance of these model predictions for $N_{\rm test}^\prime$ test data points that we report.

In more detail, we randomly split the $371$ landscapes into $280$ training landscapes, $41$ validation landscapes, and $50$ test landscapes.
The split is i.i.d.~except that we require test landscapes to have at least $800$ sequences to enable training on subsets
of size $700$.
The target label we use is \texttt{ddG\_ML}. 
We use predicted structures provided by \citet{Tsuboyama_2023}.
The thermostability data has a small number of duplicate sequences; we aggregate duplicates by computing the mean \texttt{ddG\_ML}. 
To make aggregate MAE/RMSE metrics easier to interpret, we preprocess all landscapes so that their targets are zero-centered and have unit
standard deviation.

For evaluation we define two i.i.d.~train/test splits for each of the $50$ heldout landscapes for a total of $100$ unique train/test pairs. 
For example, if a given landscape has $1000$ data points, and the training size is $N=100$, then the test set is given by 
the remaining $900$ data points.
The model is then trained on $N=100$ data points and evaluated on $900$ data points.
For this reason the test set changes as $N$ increases; this is why the zero-shot prediction metrics are not constant w.r.t.~the training
data size (though they are nearly constant).
The same $100$ train/test splits are used for all model evaluations.

For the landscape subsampling experiment we train models on i.i.d.~subsets of the $280$ training landscapes and consider
$n_{\rm landscapes} \in \{10, 20, 35, 70, 140 \}$. 
For models trained on fewer than $280$ landscapes, we train two models on distinct random 
subsets of the $280$ training landscapes, in which case metrics are averaged across both models.
This reduces variance w.r.t.~which of the $280$ training landscapes are chosen.

\paragraph{CLOCK-GP Variants}

A basic implementation of the PyTorch module that we use to compute correlation matrices is provided in \figref{code:corrmachine}.
A basic implementation of the concentrated-likelihood-based loss function is provided in \figref{code:loss}.
The signal-to-noise ratio is initialized to $40$. We use the Adam optimizer and train for $300$ epochs. 
Chroma embeddings are computed at diffusion time $t=0.01$.
We use an initial learning rate of $0.005$ and decimate the learning rate twice (once after epoch $100$ and once after epoch $200$). 
In each epoch we evaluate the loss on two mini-batches and thus take two gradient steps per epoch. 
Each mini-batch consists of a random subset of half the available landscapes. 
We randomly choose $512$ sequences from each landscape, and we center the targets in each sampled landscape. 
In contrast to the CNN, we do not use the validation data in any way, as we did not observe any evidence for overfitting.\footnote{This conclusion
may not hold in the low landscape regime with $\lesssim 20$ landscapes (where more additional regularization or early stopping may be prudent), but we have not systematically investigated overfitting in this regime.}
Indeed we found the \texttt{CLOCK-GP} very robust to various choices, for example: i) whether the signal-to-noise ratio is fixed or learned; 
ii) whether the signal-to-noise ratio is global or landscape-specific; iii) optimizer settings; 
iv) number of landscapes that enter into each gradient step; etc.

The RBF kernel used in \texttt{CLOCK-GP+CNN-ZS} is given by $k_{\rm rbf}(\bx, \by) = \exp \left(-\tfrac{1}{2}({\rm pred}(\bx) - {\rm pred}(\by))^2 / \tau^2 \right)$,
where $\tau>0$ is a learnable length scale and ${\rm pred}(\cdot)$ is the CNN-based zero-shot predictor.
This kernel is equipped with a learnable kernel scale and added to the LOCK kernel in \eqnref{eqn:lockdefn}.

\paragraph{CNN Baselines}

The CNN baseline architecture takes positional structure-sequence embeddings of dimension $128$ and projects them to $256$ channels via a $1 \times 1$ convolution. 
Next multi-scale convolutional processing is performed using three parallel branches with kernel sizes of $3$, $5$, and $7$, where each branch applies two sequential convolutional layers with batch normalization, ReLU activation functions, and dropout (with a rate of $0.05$). 
The outputs from all three branches are globally average-pooled and
concatenated, yielding a $768$ feature vector that is passed through a two-layer MLP ($768 \to 512 \to 256 \to 1$) to produce the final fitness prediction.
We note that while \texttt{CNN-EMB-OH} does not obviously \emph{need} its feature space to be augmented with one-hot sequences---since the positional
structure-sequence embeddings from Chroma (computed at diffusion time $t=0.01$) explicitly depend on sequence information---we found that explicit augmentation results in better performance.
Similarly, we also considered variants in which the CNN uses structure-only embeddings (as is done in \texttt{CLOCK-GP}) together with
one-hot sequence features, but found that this performs worse---as we would expect. 
In other words performance differences between the CNN-based methods and  \texttt{CLOCK-GP} cannot be explained by the quality of the embeddings used.

\paragraph{Ridge-OH}

We use \texttt{RidgeCV} from sklearn on one-hot encoded sequences,
which selects from $32$ ridge parameters $\alpha_{\rm ridge}$ that are geometrically evenly spaced between $10^{-4}$ and $10^{4}$ (i.e.~{\texttt{numpy.logspace(-4, 4, 32)}).

\paragraph{Ridge-EMB and Ridge-EMB-OH}
These are like \texttt{Ridge-OH} above but use different features: \texttt{Ridge-EMB} uses mean-pooled structure-and-sequence embeddings from Chroma,
while \texttt{Ridge-EMB-OH} concatenates mean-pooled structure-and-sequence embeddings with one-hot encoded sequences.

%% file: app_exp_binary.tex
\subsubsection{Binary Classification Experiment}
\label{app:sec:binarydetails}

We consider the \texttt{Q8WTC7\_9CNID\_Somermeyer\_2022} landscape from \citet{Gonzalez_Somermeyer_2022}.
We use the same cross-validation setup as in \secref{app:sec:localdetails} except we stratify to ensure class labels are approximately balanced.
We quantize the continuous fitness score from \texttt{Q8WTC7\_9CNID\_Somermeyer\_2022} using the median so that, for example, $50\%$ of binary labels are positive.
For GP inference we use the method from \citet{wenzel2019efficient}, closely following the GPyTorch implementation.\footnote{See {\scriptsize \texttt{https://docs.gpytorch.ai/en/v1.12/examples/04\_Variational\_and\_Approximate\_GPs/PolyaGamma\_Binary\_Classification.html}}.} 
For $N$ training data points we use $N_{\rm ind} = \rm{min}(N, 2048)$ inducing point locations.
Inducing points are chosen at random from the training data and kept fixed. 
To optimize the ELBO we use a maximum mini-batch size of $1024$ and train for $30$ epochs.
We decimate the learning rate after $10$ and $20$ epochs.

%% file: app_add_local.tex
\begin{figure*}[t]
\centering
    \includegraphics[width=0.49\columnwidth]{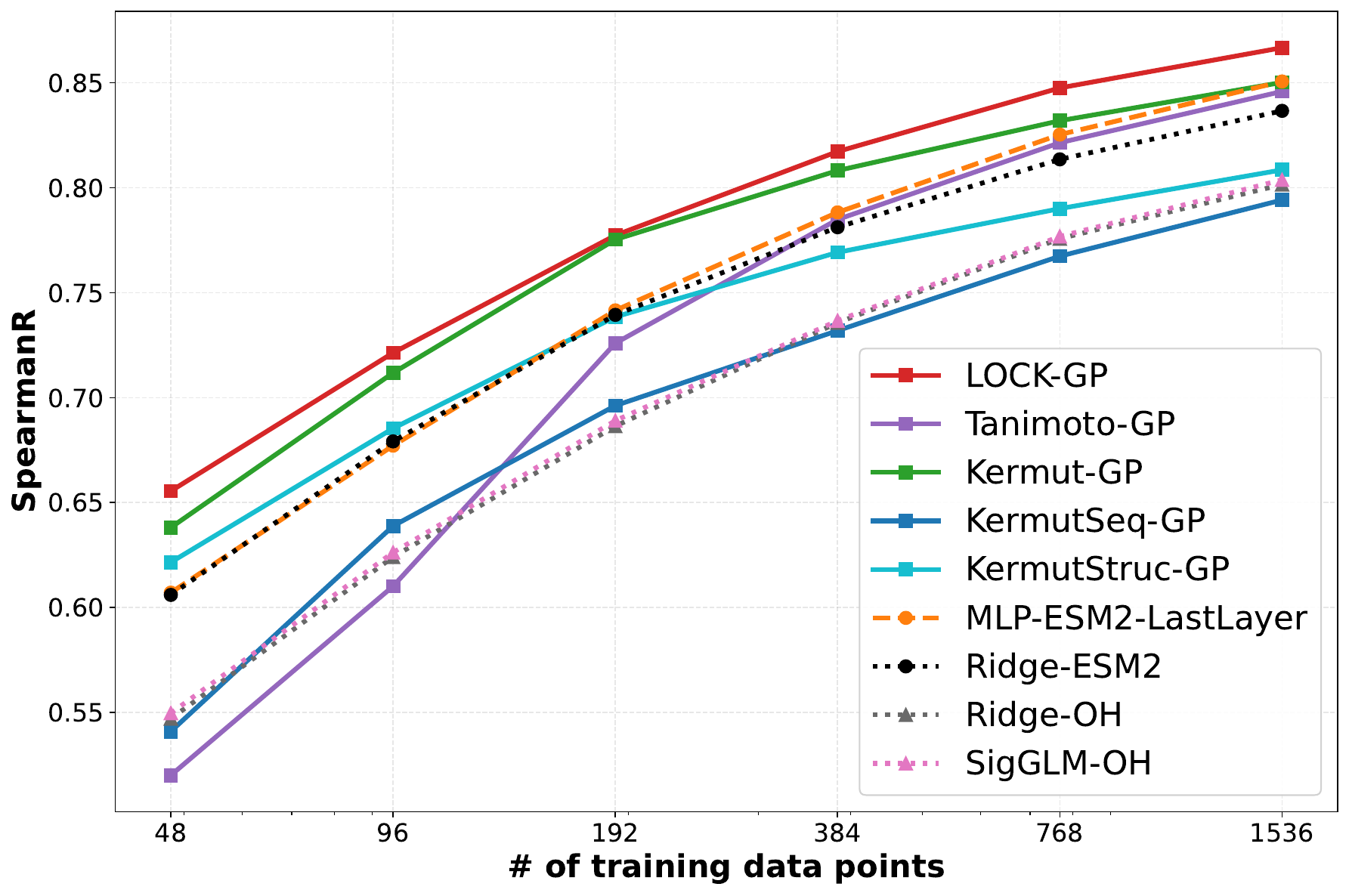}
    \includegraphics[width=0.49\columnwidth]{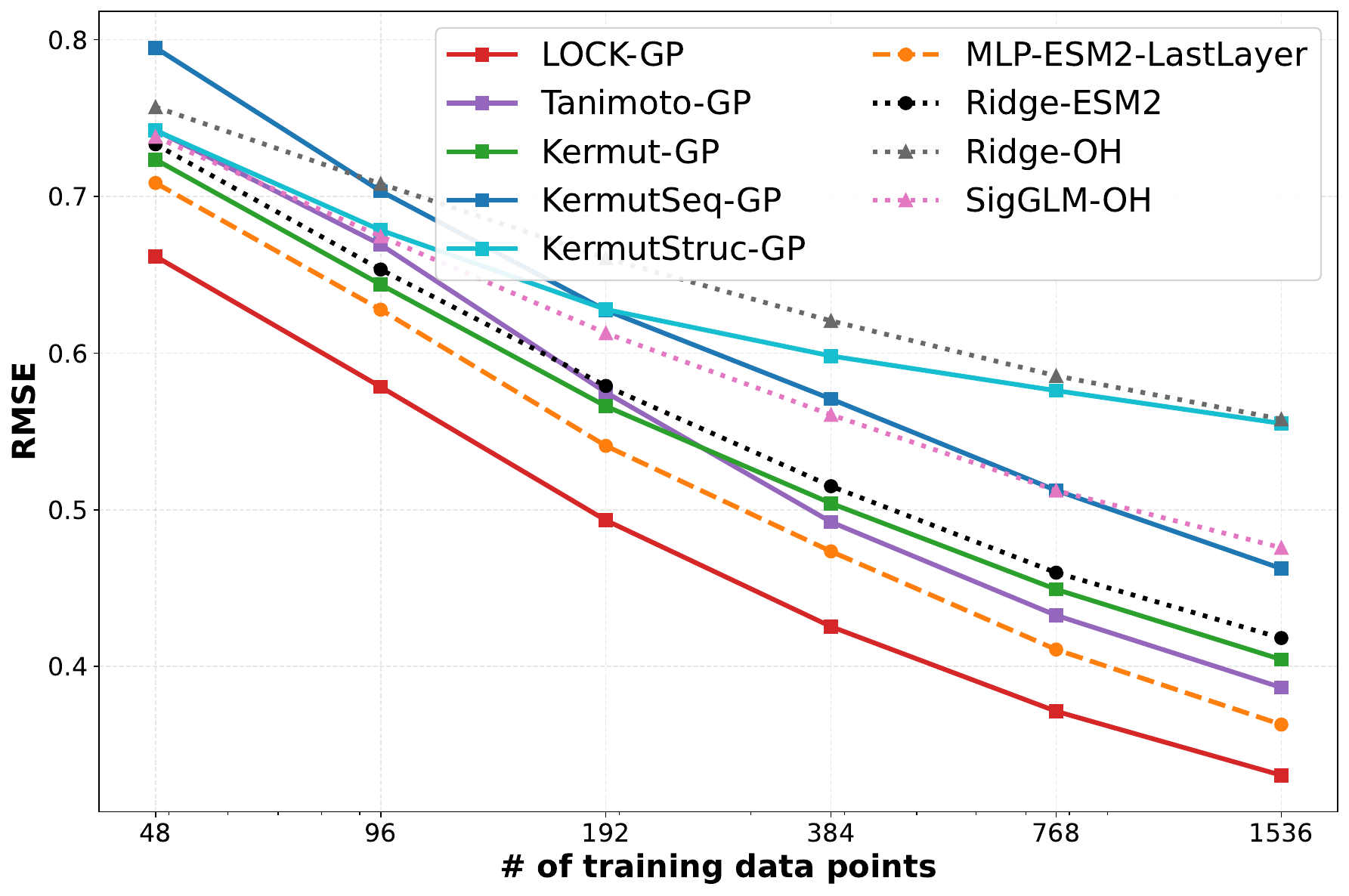}
    \caption{Predictive performance as a function of the number of training data points in the cross-validation setting.
    We depict Spearman R (left) and RMSE (right); metrics are averaged across $21$ datasets.
    See \secref{sec:local} for discussion and \secref{app:sec:localdetails} for details on each model.}
\label{fig:cvspearmanrmse}
\end{figure*}
\input{landscape_tables.tex}

We provide additional  results for the local learning experiment in \secref{sec:local}.
For a companion figure to \figref{fig:cvpearsonmae} that depicts Spearman R and RMSE metrics see \figref{fig:cvspearmanrmse}.
For a companion figure to \figref{fig:cvpearsonmae} that depicts CRPS for GP models see \figref{fig:cvcrps}.
Select results for individual landscapes are reported in Tables~\ref{tab:landscape_pearsonr_cv}-\ref{tab:landscape_nll_combined}.
For an extended version of Table \ref{tab:metric_summary} with additional models see Table \ref{tab:metric_summary_long}.
For uncertainty metrics for GP models see Table \ref{tab:metric_summary_gp}.
For average model ranks w.r.t.~three metrics see Table \ref{tab:ranks}.
\input{metric_and_rank_summaries.tex}


%% file: landscape_tables.tex
\begin{table*}
\input{landscape_table_cv-192_pearsonr.tex}
\caption{Pearson R metric for each landscape for $192$ training data points in the cross-validation regime.
    The best metric for each landscape is marked in bold.
    See \secref{sec:local} for discussion and \secref{app:sec:localdetails} for details on each model.}
\label{tab:landscape_pearsonr_cv}
\end{table*}
\begin{table*}
\input{landscape_table_cv-192_spearmanr.tex}
\caption{Spearman R metric for each landscape for $192$ training data points in the cross-validation regime.
    The best metric for each landscape is marked in bold.
    See \secref{sec:local} for discussion and \secref{app:sec:localdetails} for details on each model.}
\label{tab:landscape_spearmanr_cv}
\end{table*}
\begin{table*}
\input{landscape_table_cv-192_mae.tex}
\caption{MAE metric for each landscape for $192$ training data points in the cross-validation regime.
    The best metric for each landscape is marked in bold.
    See \secref{sec:local} for discussion and \secref{app:sec:localdetails} for details on each model.}
\label{tab:landscape_mae_cv}
\end{table*}
\begin{table*}
\input{landscape_table_unseen-96_spearmanr.tex}
\caption{Spearman R metric for each landscape for $96$ training data points in the unseen mutations regime.
    Metrics are averaged across three distinct train-test splits, and
    the best metric for each landscape is marked in bold.
    See \secref{sec:local} for discussion and \secref{app:sec:localdetails} for details on each model.}
\label{tab:landscape_spearmanr_unseen}
\end{table*}
\begin{table*}
\input{landscape_table_unseen-96_pearsonr.tex}
\caption{Pearson R metric for each landscape for $96$ training data points in the unseen mutations regime.
    Metrics are averaged across three distinct train-test splits, and
    the best metric for each landscape is marked in bold.
    See \secref{sec:local} for discussion and \secref{app:sec:localdetails} for details on each model.}
\label{tab:landscape_pearsonr_unseen}
\end{table*}

\begin{table*}
\input{landscape_table_unseen-96_mae.tex}
\caption{MAE metric for each landscape for $96$ training data points in the unseen mutations regime.
    Metrics are averaged across three distinct train-test splits, and
    the best metric for each landscape is marked in bold.
    See \secref{sec:local} for discussion and \secref{app:sec:localdetails} for details on each model.}
\label{tab:landscape_mae_unseen}
\end{table*}
\begin{table*}
\input{landscape_table_extrapolation-128_spearmanr.tex}
\caption{Spearman R metric for each landscape for $128$ training data points in the extrapolation regime.
    The best metric for each landscape is marked in bold.
    See \secref{sec:local} for discussion and \secref{app:sec:localdetails} for details on each model.}
\label{tab:landscape_spearmanr_extrapolation}
\end{table*}
\begin{table*}
\input{landscape_table_extrapolation-128_pearsonr.tex}
\caption{Pearson R metric for each landscape for $128$ training data points in the extrapolation regime.
    The best metric for each landscape is marked in bold.
    See \secref{sec:local} for discussion and \secref{app:sec:localdetails} for details on each model.}
\label{tab:landscape_pearsonr_extrapolation}
\end{table*}
\begin{table*}
\input{landscape_table_extrapolation-128_mae.tex}
\caption{MAE metric for each landscape for $128$ training data points in the extrapolation regime.
    The best metric for each landscape is marked in bold.
    See \secref{sec:local} for discussion and \secref{app:sec:localdetails} for details on each model.}
\label{tab:landscape_mae_extrapolation}
\end{table*}
\begin{table*}
    \input{landscape_table_nll_gp_combined.tex}
    \caption{We report the negative log likelihood (NLL; lower is better) for each GP variant on $21$ 
    landscapes in three different evaluation regimes.
    The best metric for each landscape is marked in bold, while NLLs larger than $10$ are marked red.}
    \label{tab:landscape_nll_combined}
\end{table*}

%% file: landscape_table_cv-192_pearsonr.tex
\centering
\resizebox{\linewidth}{!}{
\begin{tabular}{lccccccccccc}
\toprule
Landscape & \texttt{LOCK-GP} & \texttt{Tanimoto-GP} & \texttt{MLP-ESM2-LastLayer} & \texttt{Kermut-GP} & \texttt{KermutSeq-GP} & \texttt{KermutStruc-GP} & \texttt{Ridge-ESM2} & \texttt{Ridge-OH} & \texttt{SigGLM-OH} & \texttt{BLOSUM50-ZeroShot} & \texttt{ESM2-650M-ZeroShot} \\
\midrule
CAPSD\_AAV2S\_Sinai\_2021 & \textbf{0.748} & 0.699 & 0.663 & 0.695 & 0.633 & 0.610 & 0.655 & 0.581 & 0.585 & 0.297 & 0.280 \\
Q8WTC7\_9CNID\_Somermeyer\_2022 & \textbf{0.650} & 0.393 & 0.468 & 0.604 & 0.175 & 0.625 & 0.515 & 0.532 & 0.535 & 0.252 & -0.042 \\
CH65 - G189E & \textbf{0.948} & 0.936 & 0.932 & 0.886 & 0.878 & 0.841 & 0.890 & 0.842 & 0.880 & 0.543 & -0.394 \\
CH65 - MA90 & \textbf{0.969} & 0.967 & 0.963 & 0.947 & 0.943 & 0.857 & 0.950 & 0.856 & 0.872 & 0.532 & -0.443 \\
CH65 - SI06 & \textbf{0.952} & 0.926 & 0.945 & 0.881 & 0.876 & 0.854 & 0.882 & 0.854 & 0.892 & 0.478 & -0.359 \\
CR6261 - H1 & \textbf{0.975} & 0.957 & 0.960 & 0.970 & 0.968 & 0.864 & 0.948 & 0.855 & 0.932 & 0.665 & -0.738 \\
CR6261 - H9 & \textbf{0.957} & 0.926 & 0.928 & 0.953 & 0.951 & 0.885 & 0.917 & 0.884 & 0.899 & 0.683 & -0.687 \\
CR9114 - H1 & \textbf{0.949} & 0.908 & 0.933 & 0.835 & 0.853 & 0.788 & 0.898 & 0.786 & 0.884 & 0.428 & -0.338 \\
CR9114 - H3 & 0.877 & 0.710 & \textbf{0.901} & 0.533 & 0.602 & 0.494 & 0.702 & 0.499 & 0.782 & 0.399 & -0.179 \\
D7PM05\_CLYGR\_Somermeyer\_2022 & \textbf{0.695} & 0.458 & 0.472 & 0.637 & 0.360 & 0.621 & 0.496 & 0.426 & 0.431 & 0.523 & 0.054 \\
GCN4\_YEAST\_Staller\_2018 & 0.223 & 0.194 & 0.339 & 0.420 & \textbf{0.433} & 0.182 & 0.345 & 0.116 & 0.124 & 0.107 & 0.159 \\
GFP\_AEQVI\_Sarkisyan\_2016 & \textbf{0.738} & 0.617 & 0.597 & 0.725 & 0.536 & 0.692 & 0.630 & 0.502 & 0.502 & 0.619 & 0.114 \\
HIS7\_YEAST\_Pokusaeva\_2019 & \textbf{0.739} & 0.691 & 0.679 & 0.735 & 0.722 & 0.735 & 0.706 & 0.626 & 0.628 & 0.234 & 0.401 \\
PHOT\_CHLRE\_Chen\_2023 & \textbf{0.933} & 0.922 & 0.918 & 0.927 & 0.922 & 0.913 & 0.918 & 0.910 & 0.912 & 0.421 & 0.730 \\
F7YBW8\_MESOW\_Ding\_2023 & \textbf{0.814} & 0.803 & 0.789 & 0.775 & 0.777 & 0.752 & 0.768 & 0.758 & 0.755 & 0.486 & 0.364 \\
Q6WV12\_9MAXI\_Somermeyer\_2022 & \textbf{0.772} & 0.636 & 0.612 & 0.688 & 0.308 & 0.700 & 0.610 & 0.633 & 0.636 & 0.358 & 0.023 \\
Tsuboyama - 2B88 & 0.721 & 0.721 & 0.804 & 0.815 & 0.618 & 0.740 & \textbf{0.827} & 0.643 & 0.641 & 0.064 & 0.089 \\
Tsuboyama - 2HBB & 0.721 & 0.717 & 0.711 & \textbf{0.809} & 0.752 & 0.697 & 0.738 & 0.630 & 0.630 & 0.013 & 0.327 \\
Moulana CB6 & 0.989 & 0.986 & \textbf{0.991} & 0.981 & 0.957 & 0.980 & 0.982 & 0.980 & 0.987 & 0.268 & -0.206 \\
Moulana Cov555 & \textbf{0.965} & 0.921 & 0.961 & 0.861 & 0.816 & 0.854 & 0.918 & 0.854 & 0.938 & 0.401 & -0.123 \\
Moulana Regn10987 & \textbf{0.953} & 0.919 & 0.946 & 0.887 & 0.846 & 0.885 & 0.911 & 0.885 & 0.916 & 0.336 & 0.124 \\
\bottomrule
\end{tabular}
}

%% file: landscape_table_cv-192_spearmanr.tex
\centering
\resizebox{\linewidth}{!}{
\begin{tabular}{lccccccccccc}
\toprule
Landscape & \texttt{LOCK-GP} & \texttt{Tanimoto-GP} & \texttt{MLP-ESM2-LastLayer} & \texttt{Kermut-GP} & \texttt{KermutSeq-GP} & \texttt{KermutStruc-GP} & \texttt{Ridge-ESM2} & \texttt{Ridge-OH} & \texttt{SigGLM-OH} & \texttt{BLOSUM50-ZeroShot} & \texttt{ESM2-650M-ZeroShot} \\
\midrule
CAPSD\_AAV2S\_Sinai\_2021 & \textbf{0.754} & 0.718 & 0.701 & 0.719 & 0.640 & 0.644 & 0.686 & 0.610 & 0.612 & 0.178 & 0.247 \\
Q8WTC7\_9CNID\_Somermeyer\_2022 & \textbf{0.549} & 0.290 & 0.328 & 0.501 & 0.164 & 0.519 & 0.386 & 0.385 & 0.386 & 0.265 & -0.024 \\
CH65 - G189E & \textbf{0.961} & 0.952 & 0.958 & 0.917 & 0.908 & 0.877 & 0.928 & 0.878 & 0.876 & 0.558 & -0.404 \\
CH65 - MA90 & \textbf{0.966} & 0.965 & 0.965 & 0.952 & 0.947 & 0.857 & 0.959 & 0.856 & 0.856 & 0.531 & -0.438 \\
CH65 - SI06 & 0.881 & 0.877 & \textbf{0.885} & 0.874 & 0.870 & 0.855 & 0.878 & 0.854 & 0.856 & 0.498 & -0.359 \\
CR6261 - H1 & \textbf{0.967} & 0.938 & 0.949 & 0.947 & 0.941 & 0.912 & 0.921 & 0.917 & 0.916 & 0.680 & -0.724 \\
CR6261 - H9 & \textbf{0.971} & 0.958 & 0.967 & 0.965 & 0.963 & 0.924 & 0.951 & 0.927 & 0.926 & 0.710 & -0.704 \\
CR9114 - H1 & \textbf{0.913} & 0.870 & 0.896 & 0.833 & 0.833 & 0.846 & 0.854 & 0.843 & 0.846 & 0.532 & -0.458 \\
CR9114 - H3 & 0.527 & 0.512 & \textbf{0.529} & 0.488 & 0.472 & 0.491 & 0.510 & 0.493 & 0.516 & 0.396 & -0.197 \\
D7PM05\_CLYGR\_Somermeyer\_2022 & \textbf{0.699} & 0.447 & 0.456 & 0.689 & 0.364 & 0.672 & 0.496 & 0.425 & 0.428 & 0.547 & 0.052 \\
GCN4\_YEAST\_Staller\_2018 & 0.277 & 0.226 & 0.416 & 0.481 & \textbf{0.493} & 0.234 & 0.404 & 0.113 & 0.135 & 0.209 & 0.206 \\
GFP\_AEQVI\_Sarkisyan\_2016 & 0.708 & 0.603 & 0.579 & \textbf{0.727} & 0.523 & 0.695 & 0.614 & 0.477 & 0.478 & 0.604 & 0.094 \\
HIS7\_YEAST\_Pokusaeva\_2019 & 0.693 & 0.645 & 0.630 & \textbf{0.720} & 0.702 & 0.720 & 0.659 & 0.593 & 0.596 & 0.254 & 0.387 \\
PHOT\_CHLRE\_Chen\_2023 & \textbf{0.941} & 0.933 & 0.940 & 0.936 & 0.930 & 0.925 & 0.930 & 0.923 & 0.924 & 0.398 & 0.717 \\
F7YBW8\_MESOW\_Ding\_2023 & \textbf{0.728} & 0.726 & 0.707 & 0.715 & 0.715 & 0.701 & 0.717 & 0.703 & 0.700 & 0.582 & 0.423 \\
Q6WV12\_9MAXI\_Somermeyer\_2022 & \textbf{0.660} & 0.520 & 0.475 & 0.573 & 0.273 & 0.586 & 0.497 & 0.492 & 0.494 & 0.333 & 0.009 \\
Tsuboyama - 2B88 & 0.684 & 0.682 & 0.747 & 0.757 & 0.574 & 0.673 & \textbf{0.769} & 0.615 & 0.613 & 0.027 & 0.079 \\
Tsuboyama - 2HBB & 0.746 & 0.738 & 0.738 & \textbf{0.828} & 0.778 & 0.720 & 0.753 & 0.653 & 0.652 & -0.010 & 0.365 \\
Moulana CB6 & 0.918 & 0.915 & \textbf{0.920} & 0.914 & 0.881 & 0.915 & 0.912 & 0.915 & 0.917 & 0.287 & -0.210 \\
Moulana Cov555 & 0.857 & 0.832 & \textbf{0.858} & 0.830 & 0.785 & 0.826 & 0.827 & 0.828 & 0.834 & 0.420 & -0.087 \\
Moulana Regn10987 & 0.923 & 0.898 & \textbf{0.929} & 0.915 & 0.863 & 0.914 & 0.877 & 0.914 & 0.908 & 0.327 & 0.104 \\
\bottomrule
\end{tabular}
}

%% file: landscape_table_cv-192_mae.tex
\centering
\resizebox{\linewidth}{!}{
\begin{tabular}{lccccccccc}
\toprule
Landscape & \texttt{LOCK-GP} & \texttt{Tanimoto-GP} & \texttt{MLP-ESM2-LastLayer} & \texttt{Kermut-GP} & \texttt{KermutSeq-GP} & \texttt{KermutStruc-GP} & \texttt{Ridge-ESM2} & \texttt{Ridge-OH} & \texttt{SigGLM-OH} \\
\midrule
CAPSD\_AAV2S\_Sinai\_2021 & \textbf{0.528} & 0.579 & 0.611 & 0.573 & 0.618 & 0.644 & 0.612 & 0.672 & 0.661 \\
Q8WTC7\_9CNID\_Somermeyer\_2022 & \textbf{0.515} & 0.753 & 0.636 & 0.592 & 0.778 & 0.575 & 0.634 & 0.620 & 0.619 \\
CH65 - G189E & \textbf{0.220} & 0.252 & 0.254 & 0.354 & 0.363 & 0.429 & 0.359 & 0.429 & 0.352 \\
CH65 - MA90 & \textbf{0.193} & 0.202 & 0.218 & 0.256 & 0.265 & 0.432 & 0.248 & 0.432 & 0.411 \\
CH65 - SI06 & \textbf{0.206} & 0.282 & 0.240 & 0.364 & 0.383 & 0.407 & 0.373 & 0.405 & 0.332 \\
CR6261 - H1 & \textbf{0.138} & 0.207 & 0.197 & 0.167 & 0.175 & 0.410 & 0.236 & 0.424 & 0.269 \\
CR6261 - H9 & \textbf{0.196} & 0.285 & 0.260 & 0.208 & 0.220 & 0.372 & 0.315 & 0.377 & 0.335 \\
CR9114 - H1 & \textbf{0.202} & 0.305 & 0.247 & 0.407 & 0.370 & 0.481 & 0.331 & 0.484 & 0.342 \\
CR9114 - H3 & 0.220 & 0.418 & \textbf{0.197} & 0.504 & 0.414 & 0.538 & 0.463 & 0.532 & 0.387 \\
D7PM05\_CLYGR\_Somermeyer\_2022 & \textbf{0.568} & 0.911 & 0.761 & 0.643 & 0.853 & 0.653 & 0.738 & 0.794 & 0.792 \\
GCN4\_YEAST\_Staller\_2018 & 0.707 & 0.714 & 0.679 & 0.633 & \textbf{0.626} & 0.723 & 0.673 & 0.735 & 0.756 \\
GFP\_AEQVI\_Sarkisyan\_2016 & \textbf{0.529} & 0.696 & 0.662 & 0.549 & 0.723 & 0.578 & 0.624 & 0.738 & 0.735 \\
HIS7\_YEAST\_Pokusaeva\_2019 & \textbf{0.518} & 0.594 & 0.574 & 0.529 & 0.536 & 0.532 & 0.553 & 0.630 & 0.628 \\
PHOT\_CHLRE\_Chen\_2023 & \textbf{0.268} & 0.293 & 0.297 & 0.284 & 0.294 & 0.313 & 0.300 & 0.316 & 0.312 \\
F7YBW8\_MESOW\_Ding\_2023 & \textbf{0.435} & 0.453 & 0.455 & 0.472 & 0.476 & 0.498 & 0.490 & 0.495 & 0.500 \\
Q6WV12\_9MAXI\_Somermeyer\_2022 & \textbf{0.426} & 0.601 & 0.549 & 0.537 & 0.753 & 0.523 & 0.573 & 0.564 & 0.561 \\
Tsuboyama - 2B88 & 0.457 & 0.470 & 0.423 & 0.410 & 0.572 & 0.476 & \textbf{0.401} & 0.523 & 0.526 \\
Tsuboyama - 2HBB & 0.497 & 0.508 & 0.527 & \textbf{0.437} & 0.491 & 0.540 & 0.498 & 0.576 & 0.576 \\
Moulana CB6 & \textbf{0.067} & 0.102 & 0.076 & 0.140 & 0.217 & 0.145 & 0.132 & 0.144 & 0.103 \\
Moulana Cov555 & \textbf{0.174} & 0.311 & 0.214 & 0.428 & 0.471 & 0.443 & 0.316 & 0.445 & 0.289 \\
Moulana Regn10987 & \textbf{0.157} & 0.290 & 0.191 & 0.360 & 0.434 & 0.364 & 0.311 & 0.366 & 0.287 \\
\bottomrule
\end{tabular}
}

%% file: landscape_table_unseen-96_spearmanr.tex
\centering
\resizebox{\linewidth}{!}{
\begin{tabular}{lccccccccccc}
\toprule
Landscape & \texttt{LOCK-GP} & \texttt{Tanimoto-GP} & \texttt{MLP-ESM2-LastLayer} & \texttt{Kermut-GP} & \texttt{KermutSeq-GP} & \texttt{KermutStruc-GP} & \texttt{Ridge-ESM2} & \texttt{Ridge-OH} & \texttt{SigGLM-OH} & \texttt{BLOSUM50-ZeroShot} & \texttt{ESM2-650M-ZeroShot} \\
\midrule
CAPSD\_AAV2S\_Sinai\_2021 & \textbf{0.701} & 0.652 & 0.595 & 0.653 & 0.565 & 0.593 & 0.637 & 0.536 & 0.540 & 0.138 & 0.225 \\
Q8WTC7\_9CNID\_Somermeyer\_2022 & \textbf{0.404} & 0.183 & 0.187 & 0.225 & 0.037 & 0.226 & 0.230 & 0.192 & 0.203 & 0.262 & -0.033 \\
CH65 - G189E & \textbf{0.694} & 0.676 & 0.675 & 0.674 & 0.619 & 0.675 & 0.493 & 0.625 & 0.637 & 0.548 & -0.400 \\
CH65 - MA90 & 0.672 & 0.673 & 0.682 & 0.746 & \textbf{0.756} & 0.629 & 0.622 & 0.616 & 0.617 & 0.528 & -0.447 \\
CH65 - SI06 & 0.629 & 0.634 & 0.637 & \textbf{0.679} & 0.593 & 0.664 & 0.676 & 0.619 & 0.617 & 0.507 & -0.352 \\
CR6261 - H1 & 0.759 & 0.756 & 0.756 & 0.799 & \textbf{0.825} & 0.789 & 0.286 & 0.730 & 0.725 & 0.673 & -0.719 \\
CR6261 - H9 & 0.736 & 0.731 & 0.740 & 0.814 & \textbf{0.821} & 0.797 & 0.292 & 0.720 & 0.718 & 0.699 & -0.690 \\
CR9114 - H1 & 0.670 & 0.660 & 0.635 & 0.548 & 0.453 & \textbf{0.700} & 0.374 & 0.634 & 0.625 & 0.534 & -0.458 \\
CR9114 - H3 & 0.436 & 0.450 & \textbf{0.455} & 0.446 & 0.293 & 0.436 & 0.306 & 0.440 & 0.450 & 0.394 & -0.201 \\
D7PM05\_CLYGR\_Somermeyer\_2022 & \textbf{0.635} & 0.446 & 0.326 & 0.597 & 0.182 & 0.597 & 0.367 & 0.310 & 0.310 & 0.546 & 0.049 \\
GCN4\_YEAST\_Staller\_2018 & 0.207 & 0.178 & 0.297 & 0.266 & 0.261 & 0.178 & \textbf{0.316} & 0.119 & 0.119 & 0.156 & 0.166 \\
GFP\_AEQVI\_Sarkisyan\_2016 & 0.656 & 0.521 & 0.485 & 0.636 & 0.491 & \textbf{0.682} & 0.519 & 0.364 & 0.367 & 0.598 & 0.092 \\
HIS7\_YEAST\_Pokusaeva\_2019 & 0.451 & 0.252 & 0.350 & \textbf{0.633} & 0.613 & 0.607 & 0.389 & 0.227 & 0.231 & 0.374 & 0.375 \\
PHOT\_CHLRE\_Chen\_2023 & 0.413 & 0.404 & 0.311 & 0.643 & 0.606 & \textbf{0.695} & 0.296 & 0.280 & 0.260 & 0.244 & 0.673 \\
F7YBW8\_MESOW\_Ding\_2023 & 0.712 & 0.710 & \textbf{0.719} & 0.708 & 0.693 & 0.707 & 0.711 & 0.704 & 0.705 & 0.613 & 0.485 \\
Q6WV12\_9MAXI\_Somermeyer\_2022 & \textbf{0.534} & 0.397 & 0.320 & 0.468 & 0.236 & 0.476 & 0.379 & 0.390 & 0.389 & 0.335 & 0.010 \\
Tsuboyama - 2B88 & 0.502 & 0.423 & 0.637 & 0.608 & 0.477 & 0.438 & \textbf{0.655} & 0.434 & 0.436 & -0.021 & -0.078 \\
Tsuboyama - 2HBB & 0.566 & 0.492 & 0.481 & \textbf{0.706} & 0.644 & 0.610 & 0.501 & 0.432 & 0.436 & 0.008 & -0.351 \\
Moulana CB6 & 0.888 & 0.884 & 0.870 & 0.882 & 0.819 & \textbf{0.894} & 0.728 & 0.883 & 0.884 & -0.287 & 0.212 \\
Moulana Cov555 & 0.682 & 0.680 & 0.681 & 0.614 & 0.513 & 0.618 & 0.464 & 0.679 & \textbf{0.684} & -0.421 & 0.089 \\
Moulana Regn10987 & 0.854 & 0.843 & \textbf{0.887} & 0.874 & 0.682 & 0.879 & 0.756 & 0.870 & 0.867 & -0.322 & -0.107 \\
\bottomrule
\end{tabular}
}

%% file: landscape_table_unseen-96_pearsonr.tex
\centering
\resizebox{\linewidth}{!}{
\begin{tabular}{lccccccccccc}
\toprule
Landscape & \texttt{LOCK-GP} & \texttt{Tanimoto-GP} & \texttt{MLP-ESM2-LastLayer} & \texttt{Kermut-GP} & \texttt{KermutSeq-GP} & \texttt{KermutStruc-GP} & \texttt{Ridge-ESM2} & \texttt{Ridge-OH} & \texttt{SigGLM-OH} & \texttt{BLOSUM50-ZeroShot} & \texttt{ESM2-650M-ZeroShot} \\
\midrule
CAPSD\_AAV2S\_Sinai\_2021 & \textbf{0.689} & 0.633 & 0.554 & 0.617 & 0.560 & 0.546 & 0.603 & 0.505 & 0.508 & 0.267 & 0.257 \\
Q8WTC7\_9CNID\_Somermeyer\_2022 & \textbf{0.445} & 0.225 & 0.273 & 0.282 & 0.045 & 0.282 & 0.325 & 0.252 & 0.266 & 0.250 & -0.049 \\
CH65 - G189E & \textbf{0.656} & 0.655 & 0.649 & 0.648 & 0.593 & 0.651 & 0.479 & 0.606 & 0.636 & 0.535 & -0.392 \\
CH65 - MA90 & 0.681 & 0.683 & 0.689 & 0.753 & \textbf{0.758} & 0.640 & 0.639 & 0.628 & 0.636 & 0.530 & -0.451 \\
CH65 - SI06 & 0.629 & 0.632 & 0.624 & 0.665 & 0.573 & 0.656 & \textbf{0.677} & 0.611 & 0.613 & 0.492 & -0.353 \\
CR6261 - H1 & 0.687 & 0.707 & 0.709 & 0.795 & \textbf{0.842} & 0.769 & 0.297 & 0.688 & 0.717 & 0.660 & -0.738 \\
CR6261 - H9 & 0.694 & 0.684 & 0.698 & 0.786 & \textbf{0.802} & 0.770 & 0.291 & 0.689 & 0.692 & 0.674 & -0.676 \\
CR9114 - H1 & 0.637 & \textbf{0.637} & 0.600 & 0.516 & 0.430 & 0.633 & 0.352 & 0.588 & 0.635 & 0.429 & -0.339 \\
CR9114 - H3 & \textbf{0.656} & 0.550 & 0.602 & 0.426 & 0.374 & 0.412 & 0.333 & 0.420 & 0.572 & 0.400 & -0.183 \\
D7PM05\_CLYGR\_Somermeyer\_2022 & \textbf{0.636} & 0.439 & 0.343 & 0.573 & 0.192 & 0.573 & 0.388 & 0.320 & 0.321 & 0.523 & 0.052 \\
GCN4\_YEAST\_Staller\_2018 & 0.162 & 0.121 & 0.209 & 0.216 & 0.212 & 0.138 & \textbf{0.243} & 0.080 & 0.080 & 0.081 & 0.126 \\
GFP\_AEQVI\_Sarkisyan\_2016 & \textbf{0.686} & 0.534 & 0.506 & 0.639 & 0.503 & 0.675 & 0.535 & 0.394 & 0.397 & 0.609 & 0.109 \\
HIS7\_YEAST\_Pokusaeva\_2019 & 0.471 & 0.295 & 0.381 & \textbf{0.584} & 0.565 & 0.572 & 0.404 & 0.265 & 0.267 & 0.349 & 0.368 \\
PHOT\_CHLRE\_Chen\_2023 & 0.291 & 0.269 & 0.295 & 0.658 & 0.622 & 0.676 & 0.284 & 0.237 & 0.241 & 0.232 & \textbf{0.685} \\
F7YBW8\_MESOW\_Ding\_2023 & 0.814 & 0.816 & \textbf{0.817} & 0.815 & 0.781 & 0.802 & 0.789 & 0.792 & 0.793 & 0.587 & 0.458 \\
Q6WV12\_9MAXI\_Somermeyer\_2022 & \textbf{0.641} & 0.504 & 0.436 & 0.553 & 0.263 & 0.577 & 0.484 & 0.507 & 0.507 & 0.361 & 0.024 \\
Tsuboyama - 2B88 & 0.508 & 0.368 & 0.663 & 0.641 & 0.484 & 0.484 & \textbf{0.692} & 0.404 & 0.406 & -0.061 & -0.086 \\
Tsuboyama - 2HBB & 0.517 & 0.463 & 0.449 & \textbf{0.658} & 0.606 & 0.567 & 0.480 & 0.398 & 0.401 & -0.014 & -0.315 \\
Moulana CB6 & \textbf{0.984} & 0.981 & 0.979 & 0.965 & 0.840 & 0.972 & 0.792 & 0.977 & 0.980 & -0.269 & 0.206 \\
Moulana Cov555 & 0.701 & 0.699 & 0.693 & 0.614 & 0.526 & 0.618 & 0.465 & 0.677 & \textbf{0.705} & -0.401 & 0.126 \\
Moulana Regn10987 & \textbf{0.877} & 0.874 & 0.873 & 0.862 & 0.706 & 0.861 & 0.739 & 0.859 & 0.870 & -0.332 & -0.129 \\
\bottomrule
\end{tabular}
}

%% file: landscape_table_unseen-96_mae.tex
\centering
\resizebox{\linewidth}{!}{
\begin{tabular}{lccccccccc}
\toprule
Landscape & \texttt{LOCK-GP} & \texttt{Tanimoto-GP} & \texttt{MLP-ESM2-LastLayer} & \texttt{Kermut-GP} & \texttt{KermutSeq-GP} & \texttt{KermutStruc-GP} & \texttt{Ridge-ESM2} & \texttt{Ridge-OH} & \texttt{SigGLM-OH} \\
\midrule
CAPSD\_AAV2S\_Sinai\_2021 & \textbf{0.618} & 0.670 & 0.738 & 0.661 & 0.689 & 0.743 & 0.670 & 0.756 & 0.759 \\
Q8WTC7\_9CNID\_Somermeyer\_2022 & \textbf{0.647} & 0.771 & 0.704 & 0.706 & 0.775 & 0.705 & 0.704 & 0.741 & 0.733 \\
CH65 - G189E & 0.666 & 0.676 & 0.672 & 0.672 & 0.699 & \textbf{0.655} & 0.870 & 0.727 & 0.693 \\
CH65 - MA90 & 0.684 & 0.688 & 0.671 & \textbf{0.584} & 0.666 & 0.714 & 0.813 & 0.737 & 0.731 \\
CH65 - SI06 & \textbf{0.577} & 0.593 & 0.591 & 0.670 & 0.881 & 0.688 & 0.835 & 0.620 & 0.598 \\
CR6261 - H1 & 0.656 & 0.650 & 0.633 & 0.543 & \textbf{0.453} & 0.615 & 10.02 & 0.708 & 0.662 \\
CR6261 - H9 & 0.695 & 0.705 & 0.684 & 0.632 & \textbf{0.545} & 0.676 & 8.580 & 0.731 & 0.711 \\
CR9114 - H1 & \textbf{0.474} & 0.502 & 0.502 & 0.590 & 0.607 & 0.587 & 0.886 & 0.591 & 0.525 \\
CR9114 - H3 & \textbf{0.392} & 0.520 & 0.435 & 0.638 & 0.549 & 0.648 & 1.386 & 0.647 & 0.558 \\
D7PM05\_CLYGR\_Somermeyer\_2022 & \textbf{0.636} & 0.971 & 0.842 & 0.704 & 0.937 & 0.699 & 0.815 & 0.881 & 0.873 \\
GCN4\_YEAST\_Staller\_2018 & 0.729 & 0.738 & 0.777 & 0.725 & 0.729 & 0.781 & \textbf{0.710} & 0.766 & 0.762 \\
GFP\_AEQVI\_Sarkisyan\_2016 & \textbf{0.599} & 0.957 & 0.739 & 0.633 & 0.760 & 0.608 & 0.710 & 0.850 & 0.848 \\
HIS7\_YEAST\_Pokusaeva\_2019 & 0.776 & 0.901 & 0.830 & \textbf{0.687} & 0.715 & 0.702 & 0.809 & 0.900 & 0.899 \\
PHOT\_CHLRE\_Chen\_2023 & 0.918 & 0.932 & 0.943 & 0.822 & 0.774 & \textbf{0.710} & 0.911 & 0.904 & 0.874 \\
F7YBW8\_MESOW\_Ding\_2023 & 0.515 & 0.515 & \textbf{0.504} & 0.506 & 0.528 & 0.535 & 0.552 & 0.539 & 0.538 \\
Q6WV12\_9MAXI\_Somermeyer\_2022 & \textbf{0.565} & 0.820 & 0.650 & 0.632 & 0.757 & 0.613 & 0.655 & 0.639 & 0.642 \\
Tsuboyama - 2B88 & 0.599 & 0.666 & 0.517 & 0.544 & 0.621 & 0.651 & \textbf{0.508} & 0.655 & 0.658 \\
Tsuboyama - 2HBB & 0.665 & 0.735 & 0.740 & \textbf{0.600} & 0.618 & 0.644 & 0.731 & 0.823 & 0.818 \\
Moulana CB6 & \textbf{0.111} & 0.197 & 0.155 & 0.261 & 0.542 & 0.220 & 0.842 & 0.154 & 0.135 \\
Moulana Cov555 & 0.521 & 0.587 & \textbf{0.518} & 0.800 & 0.854 & 0.784 & 1.397 & 0.613 & 0.541 \\
Moulana Regn10987 & 0.374 & 0.384 & \textbf{0.295} & 0.344 & 0.603 & 0.331 & 0.770 & 0.332 & 0.321 \\
\bottomrule
\end{tabular}
}

%% file: landscape_table_extrapolation-128_spearmanr.tex
\centering
\resizebox{\linewidth}{!}{
\begin{tabular}{lccccccccccc}
\toprule
Landscape & \texttt{LOCK-GP} & \texttt{Tanimoto-GP} & \texttt{MLP-ESM2-LastLayer} & \texttt{Kermut-GP} & \texttt{KermutSeq-GP} & \texttt{KermutStruc-GP} & \texttt{Ridge-ESM2} & \texttt{Ridge-OH} & \texttt{SigGLM-OH} & \texttt{BLOSUM50-ZeroShot} & \texttt{ESM2-650M-ZeroShot} \\
\midrule
CAPSD\_AAV2S\_Sinai\_2021 & \textbf{0.710} & 0.518 & 0.399 & 0.357 & 0.600 & 0.318 & 0.499 & 0.212 & 0.218 & 0.444 & 0.342 \\
Q8WTC7\_9CNID\_Somermeyer\_2022 & \textbf{0.444} & 0.303 & 0.279 & 0.349 & 0.100 & 0.348 & 0.330 & 0.352 & 0.351 & 0.224 & -0.043 \\
CH65 - G189E & \textbf{0.857} & 0.846 & 0.849 & 0.802 & 0.689 & 0.801 & 0.836 & 0.792 & 0.777 & 0.548 & -0.391 \\
CH65 - MA90 & \textbf{0.896} & 0.879 & 0.895 & 0.806 & 0.737 & 0.797 & 0.893 & 0.795 & 0.782 & 0.520 & -0.425 \\
CH65 - SI06 & 0.823 & 0.822 & 0.791 & 0.810 & 0.759 & 0.810 & \textbf{0.849} & 0.808 & 0.804 & 0.485 & -0.345 \\
CR6261 - H1 & \textbf{0.947} & 0.919 & 0.922 & 0.928 & 0.924 & 0.896 & 0.889 & 0.877 & 0.894 & 0.596 & -0.705 \\
CR6261 - H9 & 0.918 & 0.918 & \textbf{0.923} & 0.917 & 0.910 & 0.874 & 0.902 & 0.882 & 0.880 & 0.564 & -0.594 \\
CR9114 - H1 & 0.702 & 0.627 & 0.659 & 0.730 & 0.486 & \textbf{0.730} & 0.624 & 0.664 & 0.641 & 0.524 & -0.451 \\
CR9114 - H3 & 0.517 & 0.518 & \textbf{0.524} & 0.520 & 0.424 & 0.520 & 0.512 & 0.520 & 0.523 & 0.382 & -0.184 \\
D7PM05\_CLYGR\_Somermeyer\_2022 & \textbf{0.443} & 0.314 & 0.217 & 0.422 & 0.031 & 0.422 & 0.250 & 0.271 & 0.274 & 0.332 & 0.031 \\
GCN4\_YEAST\_Staller\_2018 & 0.189 & 0.255 & 0.286 & 0.463 & \textbf{0.463} & 0.177 & 0.331 & 0.194 & 0.194 & 0.123 & 0.160 \\
GFP\_AEQVI\_Sarkisyan\_2016 & \textbf{0.516} & 0.332 & 0.346 & 0.294 & 0.139 & 0.474 & 0.384 & 0.295 & 0.295 & 0.435 & 0.047 \\
HIS7\_YEAST\_Pokusaeva\_2019 & 0.394 & 0.343 & 0.431 & \textbf{0.545} & 0.487 & 0.540 & 0.472 & 0.277 & 0.279 & 0.244 & 0.382 \\
PHOT\_CHLRE\_Chen\_2023 & 0.807 & \textbf{0.845} & 0.845 & 0.780 & 0.721 & 0.801 & 0.839 & 0.844 & 0.830 & 0.380 & 0.712 \\
F7YBW8\_MESOW\_Ding\_2023 & 0.594 & 0.564 & 0.577 & \textbf{0.629} & 0.594 & 0.561 & 0.521 & 0.523 & 0.548 & 0.517 & 0.347 \\
Q6WV12\_9MAXI\_Somermeyer\_2022 & \textbf{0.614} & 0.505 & 0.425 & 0.422 & 0.094 & 0.422 & 0.505 & 0.497 & 0.494 & 0.252 & 0.010 \\
Tsuboyama - 2B88 & 0.361 & 0.592 & \textbf{0.679} & 0.283 & 0.386 & 0.458 & 0.638 & 0.560 & 0.566 & 0.162 & -0.304 \\
Tsuboyama - 2HBB & 0.652 & 0.605 & 0.473 & \textbf{0.731} & 0.608 & 0.533 & 0.441 & 0.377 & 0.375 & -0.099 & -0.371 \\
Moulana CB6 & \textbf{0.917} & 0.915 & 0.904 & 0.913 & 0.850 & 0.913 & 0.886 & 0.914 & 0.913 & -0.269 & 0.229 \\
Moulana Cov555 & 0.826 & 0.804 & \textbf{0.831} & 0.812 & 0.349 & 0.812 & 0.826 & 0.810 & 0.812 & -0.367 & 0.137 \\
Moulana Regn10987 & 0.916 & 0.842 & \textbf{0.928} & 0.904 & 0.535 & 0.904 & 0.785 & 0.903 & 0.871 & -0.315 & -0.093 \\
\bottomrule
\end{tabular}
}

%% file: landscape_table_extrapolation-128_pearsonr.tex
\centering
\resizebox{\linewidth}{!}{
\begin{tabular}{lccccccccccc}
\toprule
Landscape & \texttt{LOCK-GP} & \texttt{Tanimoto-GP} & \texttt{MLP-ESM2-LastLayer} & \texttt{Kermut-GP} & \texttt{KermutSeq-GP} & \texttt{KermutStruc-GP} & \texttt{Ridge-ESM2} & \texttt{Ridge-OH} & \texttt{SigGLM-OH} & \texttt{BLOSUM50-ZeroShot} & \texttt{ESM2-650M-ZeroShot} \\
\midrule
CAPSD\_AAV2S\_Sinai\_2021 & \textbf{0.711} & 0.527 & 0.297 & 0.358 & 0.588 & 0.310 & 0.495 & 0.175 & 0.176 & 0.455 & 0.363 \\
Q8WTC7\_9CNID\_Somermeyer\_2022 & \textbf{0.495} & 0.341 & 0.342 & 0.409 & 0.111 & 0.415 & 0.390 & 0.405 & 0.404 & 0.210 & -0.052 \\
CH65 - G189E & \textbf{0.821} & 0.802 & 0.816 & 0.751 & 0.636 & 0.750 & 0.779 & 0.741 & 0.744 & 0.537 & -0.384 \\
CH65 - MA90 & \textbf{0.887} & 0.867 & 0.885 & 0.799 & 0.726 & 0.794 & 0.884 & 0.793 & 0.787 & 0.521 & -0.431 \\
CH65 - SI06 & 0.846 & 0.849 & 0.786 & 0.825 & 0.770 & 0.825 & \textbf{0.852} & 0.824 & 0.828 & 0.461 & -0.343 \\
CR6261 - H1 & \textbf{0.941} & 0.925 & 0.922 & 0.936 & 0.920 & 0.890 & 0.909 & 0.871 & 0.903 & 0.613 & -0.714 \\
CR6261 - H9 & 0.873 & 0.855 & 0.837 & \textbf{0.875} & 0.874 & 0.840 & 0.852 & 0.846 & 0.831 & 0.550 & -0.579 \\
CR9114 - H1 & \textbf{0.697} & 0.620 & 0.526 & 0.610 & 0.238 & 0.610 & 0.599 & 0.534 & 0.530 & 0.425 & -0.333 \\
CR9114 - H3 & \textbf{0.838} & 0.683 & 0.614 & 0.527 & 0.472 & 0.527 & 0.623 & 0.530 & 0.772 & 0.370 & -0.163 \\
D7PM05\_CLYGR\_Somermeyer\_2022 & \textbf{0.502} & 0.324 & 0.232 & 0.460 & 0.052 & 0.460 & 0.266 & 0.285 & 0.288 & 0.365 & 0.052 \\
GCN4\_YEAST\_Staller\_2018 & 0.175 & 0.230 & 0.200 & \textbf{0.424} & 0.424 & 0.162 & 0.292 & 0.183 & 0.182 & 0.068 & 0.124 \\
GFP\_AEQVI\_Sarkisyan\_2016 & \textbf{0.557} & 0.349 & 0.345 & 0.333 & 0.159 & 0.507 & 0.425 & 0.335 & 0.335 & 0.465 & 0.063 \\
HIS7\_YEAST\_Pokusaeva\_2019 & 0.449 & 0.399 & 0.405 & 0.552 & 0.487 & \textbf{0.564} & 0.526 & 0.352 & 0.356 & 0.224 & 0.397 \\
PHOT\_CHLRE\_Chen\_2023 & 0.805 & \textbf{0.848} & 0.830 & 0.786 & 0.730 & 0.807 & 0.841 & 0.847 & 0.835 & 0.390 & 0.722 \\
F7YBW8\_MESOW\_Ding\_2023 & 0.708 & 0.679 & 0.658 & \textbf{0.758} & 0.635 & 0.707 & 0.652 & 0.664 & 0.666 & 0.409 & 0.280 \\
Q6WV12\_9MAXI\_Somermeyer\_2022 & \textbf{0.624} & 0.520 & 0.464 & 0.470 & 0.119 & 0.470 & 0.539 & 0.522 & 0.521 & 0.266 & 0.011 \\
Tsuboyama - 2B88 & 0.598 & 0.667 & \textbf{0.714} & 0.482 & 0.489 & 0.639 & 0.690 & 0.603 & 0.605 & 0.080 & -0.259 \\
Tsuboyama - 2HBB & 0.630 & 0.598 & 0.499 & \textbf{0.704} & 0.569 & 0.521 & 0.479 & 0.400 & 0.398 & -0.106 & -0.353 \\
Moulana CB6 & \textbf{0.989} & 0.982 & 0.982 & 0.978 & 0.856 & 0.978 & 0.962 & 0.978 & 0.982 & -0.249 & 0.223 \\
Moulana Cov555 & \textbf{0.902} & 0.848 & 0.819 & 0.851 & 0.366 & 0.851 & 0.847 & 0.850 & 0.873 & -0.342 & 0.172 \\
Moulana Regn10987 & \textbf{0.893} & 0.824 & 0.821 & 0.842 & 0.384 & 0.842 & 0.818 & 0.838 & 0.788 & -0.326 & -0.113 \\
\bottomrule
\end{tabular}
}

%% file: landscape_table_extrapolation-128_mae.tex
\centering
\resizebox{\linewidth}{!}{
\begin{tabular}{lccccccccc}
\toprule
Landscape & \texttt{LOCK-GP} & \texttt{Tanimoto-GP} & \texttt{MLP-ESM2-LastLayer} & \texttt{Kermut-GP} & \texttt{KermutSeq-GP} & \texttt{KermutStruc-GP} & \texttt{Ridge-ESM2} & \texttt{Ridge-OH} & \texttt{SigGLM-OH} \\
\midrule
CAPSD\_AAV2S\_Sinai\_2021 & \textbf{0.582} & 0.927 & 1.338 & 1.112 & 0.680 & 1.248 & 0.970 & 1.488 & 1.541 \\
Q8WTC7\_9CNID\_Somermeyer\_2022 & 0.813 & 0.893 & 0.829 & 0.817 & 0.894 & 0.836 & 0.822 & \textbf{0.812} & 0.814 \\
CH65 - G189E & \textbf{0.627} & 0.734 & 0.738 & 0.700 & 0.763 & 0.701 & 0.704 & 0.722 & 0.776 \\
CH65 - MA90 & 0.689 & 0.748 & 0.671 & 0.594 & 0.724 & 0.683 & \textbf{0.537} & 0.707 & 0.753 \\
CH65 - SI06 & 0.559 & 0.489 & 0.724 & 0.535 & 0.500 & 0.535 & 0.454 & 0.516 & \textbf{0.448} \\
CR6261 - H1 & \textbf{0.242} & 0.364 & 0.319 & 0.330 & 0.382 & 0.593 & 0.330 & 0.627 & 0.395 \\
CR6261 - H9 & \textbf{0.364} & 0.431 & 0.429 & 0.369 & 0.387 & 0.509 & 0.414 & 0.538 & 0.447 \\
CR9114 - H1 & \textbf{0.461} & 0.567 & 0.576 & 0.598 & 0.691 & 0.598 & 0.525 & 0.669 & 0.664 \\
CR9114 - H3 & \textbf{0.506} & 1.893 & 1.853 & 3.369 & 1.606 & 3.369 & 2.996 & 4.045 & 1.464 \\
D7PM05\_CLYGR\_Somermeyer\_2022 & \textbf{0.624} & 1.283 & 1.208 & 0.775 & 1.281 & 0.775 & 1.132 & 1.165 & 1.159 \\
GCN4\_YEAST\_Staller\_2018 & 0.726 & 0.720 & 0.894 & 0.638 & \textbf{0.638} & 0.719 & 0.744 & 0.715 & 0.714 \\
GFP\_AEQVI\_Sarkisyan\_2016 & \textbf{0.761} & 1.243 & 1.110 & 1.071 & 1.152 & 0.914 & 0.991 & 1.155 & 1.153 \\
HIS7\_YEAST\_Pokusaeva\_2019 & 0.724 & 0.790 & 0.819 & \textbf{0.684} & 0.719 & 0.691 & 0.779 & 0.886 & 0.889 \\
PHOT\_CHLRE\_Chen\_2023 & 0.788 & \textbf{0.426} & 0.444 & 1.423 & 0.926 & 0.885 & 0.479 & 0.451 & 0.460 \\
F7YBW8\_MESOW\_Ding\_2023 & 0.811 & 0.905 & 0.900 & \textbf{0.600} & 1.042 & 0.825 & 0.994 & 0.970 & 0.911 \\
Q6WV12\_9MAXI\_Somermeyer\_2022 & \textbf{0.730} & 1.039 & 0.874 & 0.890 & 1.041 & 0.890 & 0.830 & 0.865 & 0.865 \\
Tsuboyama - 2B88 & 1.046 & \textbf{0.528} & 0.548 & 1.262 & 0.668 & 0.889 & 0.679 & 0.630 & 0.605 \\
Tsuboyama - 2HBB & \textbf{0.632} & 0.733 & 0.724 & 0.647 & 0.719 & 0.807 & 0.859 & 1.444 & 1.471 \\
Moulana CB6 & \textbf{0.091} & 0.211 & 0.233 & 0.199 & 0.581 & 0.199 & 0.214 & 0.195 & 0.156 \\
Moulana Cov555 & \textbf{0.349} & 0.513 & 0.492 & 0.446 & 1.328 & 0.446 & 0.457 & 0.446 & 0.403 \\
Moulana Regn10987 & \textbf{0.303} & 0.573 & 0.460 & 0.679 & 0.833 & 0.679 & 0.487 & 0.682 & 0.716 \\
\bottomrule
\end{tabular}
}

%% file: landscape_table_nll_gp_combined.tex
\centering
\resizebox{\linewidth}{!}{
\begin{tabular}{lccccccccccccccc}
\toprule
Landscape & \multicolumn{5}{c}{\textbf{Cross-validation} (192 data points)} & \multicolumn{5}{c}{\textbf{Unseen mutations} (96 data points)} & \multicolumn{5}{c}{\textbf{Extrapolation} (128 data points)} \\
 & \texttt{LOCK-GP} & \texttt{Tanimoto-GP} & \texttt{Kermut-GP} & \texttt{KermutSeq-GP} & \texttt{KermutStruc-GP} & \texttt{LOCK-GP} & \texttt{Tanimoto-GP} & \texttt{Kermut-GP} & \texttt{KermutSeq-GP} & \texttt{KermutStruc-GP} & \texttt{LOCK-GP} & \texttt{Tanimoto-GP} & \texttt{Kermut-GP} & \texttt{KermutSeq-GP} & \texttt{KermutStruc-GP} \\
\cmidrule(lr){2-6} \cmidrule(lr){7-11} \cmidrule(lr){12-16}
CAPSD\_AAV2S\_Sinai\_2021 & \textbf{1.05} & 1.12 & 1.11 & 1.20 & 1.19 & \textbf{1.19} & 1.24 & 1.23 & 1.29 & 1.37 & \textbf{1.19} & 1.52 & 1.82 & 1.27 & 2.24 \\
Q8WTC7\_9CNID\_Somermeyer\_2022 & \textbf{1.14} & 1.36 & 1.23 & 1.41 & 1.17 & \textbf{1.35} & 1.46 & 1.42 & \textcolor{red}{924} & 1.42 & \textbf{1.42} & 1.70 & 1.52 & 1.73 & 1.54 \\
CH65 - G189E & \textbf{0.298} & 0.369 & 0.666 & 0.699 & 0.814 & 3.13 & \textbf{2.50} & 3.29 & 3.39 & 2.88 & \textbf{7.83} & \textcolor{red}{10.1} & \textcolor{red}{29.4} & \textcolor{red}{17.0} & \textcolor{red}{29.5} \\
CH65 - MA90 & \textbf{0.011} & 0.059 & 0.287 & 0.322 & 0.758 & \textbf{1.39} & 1.54 & 1.41 & 2.08 & 2.07 & 3.50 & 4.29 & \textbf{2.95} & 5.45 & 8.75 \\
CH65 - SI06 & \textbf{0.265} & 0.446 & 0.676 & 0.688 & 0.770 & \textbf{2.93} & 3.11 & 6.02 & 4.58 & 6.01 & 1.52 & 1.05 & 3.70 & \textbf{1.04} & 3.70 \\
CR6261 - H1 & \textbf{-0.123} & 0.163 & -0.054 & 0.001 & 0.736 & 3.41 & 1.74 & \textbf{1.28} & 1.42 & 1.58 & \textbf{0.440} & 0.682 & 0.550 & 0.756 & 1.73 \\
CR6261 - H9 & \textbf{0.145} & 0.431 & 0.193 & 0.231 & 0.659 & 4.36 & 4.09 & 3.16 & 2.88 & \textbf{2.70} & \textbf{0.860} & 1.16 & 1.10 & 1.26 & 1.50 \\
CR9114 - H1 & \textbf{0.259} & 0.534 & 0.828 & 0.771 & 0.938 & \textbf{\textcolor{red}{19.4}} & \textcolor{red}{20.2} & \textcolor{red}{20.7} & \textcolor{red}{20.7} & \textcolor{red}{24.0} & 6.43 & \textbf{3.71} & 9.69 & 9.42 & 9.69 \\
CR9114 - H3 & \textbf{0.724} & 1.10 & 1.29 & 1.24 & 1.32 & \textbf{1.29} & 1.33 & 1.37 & 1.39 & 1.40 & \textbf{1.39} & 2.16 & 6.96 & 2.07 & 6.96 \\
D7PM05\_CLYGR\_Somermeyer\_2022 & \textbf{1.13} & 1.37 & 1.16 & 1.35 & 1.21 & \textbf{1.18} & 1.42 & 1.26 & 1.43 & 1.26 & \textbf{1.17} & 1.99 & 1.39 & \textcolor{red}{$3.3 \times 10^{3}$} & 1.39 \\
GCN4\_YEAST\_Staller\_2018 & 1.42 & 1.41 & 1.36 & \textbf{1.33} & 1.44 & 1.47 & 1.46 & \textbf{1.43} & 1.43 & 1.50 & 1.40 & 1.38 & 1.31 & \textbf{1.31} & 1.41 \\
GFP\_AEQVI\_Sarkisyan\_2016 & \textbf{1.03} & 1.19 & 1.05 & 1.24 & 1.14 & \textbf{1.11} & 1.41 & 1.19 & 1.28 & 1.18 & \textbf{1.59} & 2.67 & 1.94 & 2.42 & 1.64 \\
HIS7\_YEAST\_Pokusaeva\_2019 & \textbf{1.02} & 1.10 & 1.05 & 1.07 & 1.02 & 1.35 & 1.44 & \textbf{1.27} & 1.29 & 1.29 & 1.89 & 2.14 & \textbf{1.37} & 1.53 & 1.67 \\
PHOT\_CHLRE\_Chen\_2023 & \textbf{0.354} & 0.439 & 0.410 & 0.445 & 0.508 & 2.22 & 2.27 & 2.01 & 2.09 & \textbf{1.82} & 1.36 & \textbf{0.982} & 3.34 & 1.52 & 3.09 \\
F7YBW8\_MESOW\_Ding\_2023 & \textbf{0.923} & 0.935 & 0.971 & 0.990 & 1.01 & 1.09 & \textbf{1.07} & 1.12 & 1.15 & 1.13 & 1.42 & 1.56 & \textbf{1.18} & 1.70 & 1.51 \\
Q6WV12\_9MAXI\_Somermeyer\_2022 & \textbf{0.885} & 1.17 & 1.08 & 1.35 & 1.07 & \textbf{1.12} & 1.41 & 1.29 & 1.45 & 1.23 & \textbf{1.40} & 1.88 & 1.66 & 1.81 & 1.66 \\
Tsuboyama - 2B88 & 0.935 & 0.982 & \textbf{0.876} & 1.16 & 1.04 & \textbf{1.38} & 1.47 & 1.40 & 1.48 & 1.70 & 1.70 & \textbf{1.21} & 1.91 & 1.34 & 1.54 \\
Tsuboyama - 2HBB & 0.934 & 0.987 & \textbf{0.874} & 0.979 & 1.10 & 1.28 & 1.38 & \textbf{1.20} & 1.21 & 1.40 & \textbf{1.28} & 1.40 & 1.31 & 1.34 & 1.53 \\
Moulana CB6 & -0.204 & \textbf{-0.347} & -0.192 & 0.154 & -0.172 & \textbf{0.078} & 0.222 & 0.669 & 1.16 & 0.613 & \textbf{-0.413} & 0.069 & 2.33 & 1.17 & 2.33 \\
Moulana Cov555 & \textbf{0.062} & 0.466 & 0.746 & 0.890 & 0.766 & \textbf{2.23} & 2.54 & 3.18 & 2.81 & 3.23 & \textbf{0.691} & 1.15 & 1.50 & 2.83 & 1.50 \\
Moulana Regn10987 & \textbf{0.232} & 0.487 & 0.883 & 0.805 & 0.658 & \textbf{0.828} & 0.973 & 2.37 & 1.15 & 2.86 & 2.97 & \textbf{1.68} & 4.23 & 3.58 & 4.23 \\
\bottomrule
\end{tabular}
}

%% file: metric_and_rank_summaries.tex
\begin{table*}
\input{metric_summary_table_long.tex}
    \caption{Model performance metrics for three different evaluation regimes,
    with the number of training data points ranging from $48$ to $1536$.
    Metrics are averaged across $21$ datasets. For each column the best performing metric is marked in bold.
    This table is identical to Table \ref{tab:metric_summary}, except that it contains additional models (marked in \textcolor{purp}{purple}).
    We do not include MAE metrics for zero-shot methods, since they are wildly off-scale. 
    Note that \texttt{Ridge-ESM2-8M} is referred to as \texttt{Ridge-ESM2} in the main text. 
    We find that \texttt{ESM2-650M-ZeroShot} exhibits significant variability depending in the landscape; 
    see Tables~\ref{tab:landscape_pearsonr_cv}-\ref{tab:landscape_mae_extrapolation}.
    See \secref{sec:local} for discussion and \secref{app:sec:localdetails} for details on each model.
    }
\label{tab:metric_summary_long}
\end{table*}

\begin{table*}
\input{metric_summary_table_gp.tex}
    \caption{Uncertainty metrics for GP models for three different evaluation regimes.
    For each column the best performing metric is marked in bold.
    LOCK exhibits the best NLL (negative log likelihood) and CRPS (continuous ranked probability score; \citet{gneiting2007strictly}) across the board,
    while the picture for ECE (expected calibration error) is more mixed.
    In any case most of the GPs exhibit good (average) calibration, with most ECE values $\lesssim 0.1$, except for the extrapolation regime, in
    which some of the average ECEs for the three Kermut variants are $\sim 0.15$.
    See \secref{sec:local} for discussion and \secref{app:sec:localdetails} for details on each model.
    }
\label{tab:metric_summary_gp}
\end{table*}

\begin{table*}
\input{rank_table.tex}
    \caption{We report average performance ranks w.r.t.~three metrics for three different evaluation regimes and averaged across $21$ landscapes.
    Ranks range from $1$ (for the top performing model) to $9$ (for the worst performing model). 
    So for example a rank of $1.5$ would suggest that a model typically ranks first or second w.r.t.~the corresponding metric.
    Similarly a rank of $8.9$ would mean that the model almost always ranks last among the $9$ models w.r.t.~the corresponding metric.
    See \secref{sec:local} for discussion and \secref{app:sec:localdetails} for details on each model.
    }
\label{tab:ranks}
\end{table*}

%% file: metric_summary_table_long.tex
\centering
\resizebox{\linewidth}{!}{
\begin{tabular}{lccccccccccccccc}
\toprule
 & \multicolumn{3}{c}{\textbf{Cross-validation} (48 data points)} & \multicolumn{3}{c}{\textbf{Cross-validation} (1536 data points)} & \multicolumn{3}{c}{\textbf{Unseen mutations} (96 data points)} & \multicolumn{3}{c}{\textbf{Extrapolation} (128 data points)} & \multicolumn{3}{c}{\textbf{Extrapolation} (512 data points)} \\
 & Spearman & Pearson & MAE & Spearman & Pearson & MAE & Spearman & Pearson & MAE & Spearman & Pearson & MAE & Spearman & Pearson & MAE \\
\cmidrule(lr){2-4} \cmidrule(lr){5-7} \cmidrule(lr){8-10} \cmidrule(lr){11-13} \cmidrule(lr){14-16}
\texttt{LOCK-GP} & \textbf{0.655} & \textbf{0.682} & \textbf{0.496} & \textbf{0.867} & \textbf{0.914} & \textbf{0.210} & 0.610 & 0.622 & \textbf{0.591} & \textbf{0.669} & \textbf{0.711} & \textbf{0.592} & \textbf{0.759} & \textbf{0.807} & \textbf{0.439} \\
\texttt{Tanimoto-GP} & 0.520 & 0.517 & 0.588 & 0.846 & 0.888 & 0.272 & 0.555 & 0.560 & 0.675 & 0.632 & 0.654 & 0.762 & 0.739 & 0.769 & 0.574 \\
\texttt{Kermut-GP} & 0.638 & 0.639 & 0.547 & 0.850 & 0.888 & 0.285 & \textbf{0.629} & \textbf{0.632} & 0.617 & 0.639 & 0.654 & 0.845 & 0.750 & 0.767 & 0.670 \\
\texttt{KermutSeq-GP} & 0.541 & 0.539 & 0.615 & 0.794 & 0.838 & 0.338 & 0.532 & 0.537 & 0.681 & 0.518 & 0.505 & 0.836 & 0.605 & 0.614 & 0.714 \\
\texttt{KermutStruc-GP} & 0.622 & 0.625 & 0.564 & 0.809 & 0.805 & 0.417 & 0.614 & 0.613 & 0.634 & 0.624 & 0.641 & 0.847 & 0.706 & 0.713 & 0.847 \\
\midrule
\textcolor{purp}{\texttt{ConFit}} & 0.628 & 0.615 & 0.574 & 0.863 & 0.867 & 0.344 & 0.548 & 0.545 & 0.726 & 0.632 & 0.622 & 0.882 & 0.733 & 0.723 & 0.828 \\
\texttt{MLP-ESM2-LastLayer} & 0.607 & 0.627 & 0.529 & 0.851 & 0.900 & 0.243 & 0.558 & 0.573 & 0.626 & 0.628 & 0.619 & 0.771 & 0.738 & 0.742 & 0.618 \\
\midrule
\texttt{MLP-ESM2-8M} & 0.597 & 0.618 & 0.538 & 0.856 & 0.898 & 0.256 & 0.530 & 0.551 & 0.678 & 0.621 & 0.648 & 0.711 & 0.739 & 0.763 & 0.562 \\
\textcolor{purp}{\texttt{MLP-ESM2-650M}} & 0.552 & 0.563 & 0.652 & 0.828 & 0.873 & 0.288 & 0.496 & 0.511 & 0.760 & 0.568 & 0.576 & 0.767 & 0.677 & 0.702 & 0.640 \\
\textcolor{purp}{\texttt{MLP-ESM2-8M-MeanPool}} & 0.561 & 0.579 & 0.594 & 0.797 & 0.843 & 0.362 & 0.501 & 0.510 & 0.725 & 0.536 & 0.553 & 0.814 & 0.658 & 0.673 & 0.679 \\
\textcolor{purp}{\texttt{MLP-OH}} & 0.553 & 0.580 & 0.565 & 0.764 & 0.808 & 0.459 & 0.506 & 0.527 & 0.668 & 0.577 & 0.596 & 0.863 & 0.653 & 0.680 & 0.810 \\
\textcolor{purp}{\texttt{MLP-OH-ESM2-650M-Aug}} & 0.581 & 0.605 & 0.548 & 0.771 & 0.818 & 0.442 & 0.567 & 0.589 & 0.620 & 0.619 & 0.632 & 0.789 & 0.681 & 0.702 & 0.737 \\
\textcolor{purp}{\texttt{MLP-ESM2-8M-RandInit}} & 0.558 & 0.575 & 0.572 & 0.839 & 0.880 & 0.295 & 0.496 & 0.517 & 0.705 & 0.581 & 0.609 & 0.747 & 0.709 & 0.727 & 0.606 \\
\textcolor{purp}{\texttt{MLP-SaProt-LastLayer}} & 0.608 & 0.629 & 0.536 & 0.851 & 0.903 & 0.238 & 0.568 & 0.583 & 0.633 & 0.644 & 0.640 & 0.789 & 0.751 & 0.759 & 0.609 \\
\midrule
\texttt{Ridge-ESM2-8M} & 0.606 & 0.617 & 0.558 & 0.837 & 0.877 & 0.301 & 0.476 & 0.490 & 1.628 & 0.629 & 0.653 & 0.781 & 0.719 & 0.740 & 0.685 \\
\textcolor{purp}{\texttt{Ridge-ESM2-650M}} & 0.625 & 0.630 & 0.552 & 0.836 & 0.877 & 0.304 & 0.506 & 0.523 & 1.007 & 0.623 & 0.649 & 0.864 & 0.710 & 0.729 & 0.685 \\
\textcolor{purp}{\texttt{Ridge-OH-ESM2-650M-Aug}} & 0.568 & 0.570 & 0.582 & 0.804 & 0.805 & 0.419 & 0.549 & 0.552 & 0.716 & 0.620 & 0.624 & 0.909 & 0.703 & 0.705 & 0.864 \\
\textcolor{purp}{\texttt{Ridge-ESM2-8M-MeanPool}} & 0.552 & 0.550 & 0.603 & 0.786 & 0.829 & 0.376 & 0.361 & 0.370 & 1.617 & 0.552 & 0.564 & 0.886 & 0.654 & 0.658 & 0.762 \\
\midrule
\texttt{Ridge-OH} & 0.547 & 0.553 & 0.590 & 0.802 & 0.800 & 0.422 & 0.514 & 0.519 & 0.682 & 0.589 & 0.599 & 0.939 & 0.694 & 0.698 & 0.877 \\
\texttt{SigGLM-OH} & 0.550 & 0.569 & 0.570 & 0.804 & 0.849 & 0.342 & 0.515 & 0.535 & 0.661 & 0.587 & 0.610 & 0.800 & 0.697 & 0.724 & 0.669 \\
\midrule
\textcolor{purp}{\texttt{BLOSUM50-ZeroShot}} & 0.396 & 0.386 & -- & 0.396 & 0.386 & -- & 0.291 & 0.281 & -- & 0.270 & 0.256 & -- & 0.270 & 0.256 & -- \\
\textcolor{purp}{\texttt{ESM2-650M-ZeroShot}} & -0.044 & -0.040 & -- & -0.044 & -0.040 & -- & -0.070 & -0.062 & -- & -0.072 & -0.063 & -- & -0.072 & -0.063 & -- \\
\bottomrule
\end{tabular}
}

%% file: metric_summary_table_gp.tex
\centering
\resizebox{\linewidth}{!}{
\begin{tabular}{lccccccccccccccc}
\toprule
 & \multicolumn{3}{c}{\textbf{Cross-validation} (48 data points)} & \multicolumn{3}{c}{\textbf{Cross-validation} (1536 data points)} & \multicolumn{3}{c}{\textbf{Unseen mutations} (96 data points)} & \multicolumn{3}{c}{\textbf{Extrapolation} (128 data points)} & \multicolumn{3}{c}{\textbf{Extrapolation} (512 data points)} \\
 & NLL & ECE & CRPS & NLL & ECE & CRPS & NLL & ECE & CRPS & NLL & ECE & CRPS & NLL & ECE & CRPS \\
\cmidrule(lr){2-4} \cmidrule(lr){5-7} \cmidrule(lr){8-10} \cmidrule(lr){11-13} \cmidrule(lr){14-16}
\texttt{LOCK-GP} & \textbf{0.987} & 0.060 & \textbf{0.360} & \textbf{0.112} & 0.108 & \textbf{0.160} & \textbf{2.56} & 0.113 & \textbf{0.456} & \textbf{1.95} & 0.120 & \textbf{0.442} & \textbf{1.26} & 0.097 & \textbf{0.333} \\
\texttt{Tanimoto-GP} & 1.13 & 0.079 & 0.413 & 0.281 & 0.068 & 0.201 & 2.59 & 0.125 & 0.499 & 2.12 & \textbf{0.120} & 0.573 & 1.55 & \textbf{0.087} & 0.424 \\
\texttt{Kermut-GP} & 19.8 & 0.048 & 0.397 & 0.358 & 0.066 & 0.212 & 2.77 & \textbf{0.097} & 0.465 & 3.87 & 0.159 & 0.657 & 2.30 & 0.108 & 0.498 \\
\texttt{KermutSeq-GP} & 48.1 & 0.056 & 0.440 & 0.476 & 0.061 & 0.248 & 46.7 & 0.109 & 0.507 & 159.8 & 0.138 & 0.635 & 2.98 & 0.122 & 0.540 \\
\texttt{KermutStruc-GP} & 1.80 & \textbf{0.047} & 0.409 & 0.763 & \textbf{0.040} & 0.300 & 2.98 & 0.104 & 0.478 & 4.24 & 0.159 & 0.665 & 4.81 & 0.167 & 0.670 \\
\bottomrule
\end{tabular}
}

%% file: rank_table.tex
\centering
\resizebox{\linewidth}{!}{
\begin{tabular}{lccccccccccccccc}
\toprule
 & \multicolumn{3}{c}{\textbf{Cross-validation} (48 data points)} & \multicolumn{3}{c}{\textbf{Cross-validation} (1536 data points)} & \multicolumn{3}{c}{\textbf{Unseen mutations} (96 data points)} & \multicolumn{3}{c}{\textbf{Extrapolation} (128 data points)} & \multicolumn{3}{c}{\textbf{Extrapolation} (512 data points)} \\
 & Spearman & Pearson & MAE & Spearman & Pearson & MAE & Spearman & Pearson & MAE & Spearman & Pearson & MAE & Spearman & Pearson & MAE \\
\cmidrule(lr){2-4} \cmidrule(lr){5-7} \cmidrule(lr){8-10} \cmidrule(lr){11-13} \cmidrule(lr){14-16}
\texttt{LOCK-GP} & \textbf{2.08} & \textbf{1.95} & \textbf{1.92} & \textbf{1.71} & \textbf{1.67} & \textbf{1.33} & \textbf{3.24} & \textbf{2.95} & \textbf{2.81} & \textbf{2.86} & \textbf{2.38} & \textbf{2.62} & \textbf{2.62} & \textbf{2.43} & \textbf{2.00} \\
\texttt{Tanimoto-GP} & 5.89 & 5.75 & 5.81 & 3.76 & 3.67 & 3.48 & 4.90 & 4.71 & 5.90 & 4.48 & 3.86 & 5.48 & 4.38 & 3.71 & 3.43 \\
\texttt{Kermut-GP} & 4.05 & 4.19 & 4.33 & 4.00 & 3.90 & 4.10 & 3.29 & 3.43 & 3.48 & 4.33 & 4.29 & 4.24 & 4.29 & 3.57 & 4.43 \\
\texttt{KermutSeq-GP} & 6.65 & 6.27 & 6.06 & 6.00 & 5.86 & 6.05 & 6.05 & 6.00 & 5.38 & 6.81 & 7.29 & 6.19 & 7.24 & 6.81 & 6.43 \\
\texttt{KermutStruc-GP} & 4.73 & 5.06 & 5.67 & 6.86 & 7.52 & 7.76 & 3.57 & 4.05 & 4.38 & 5.19 & 4.86 & 5.19 & 5.43 & 6.19 & 5.76 \\
\texttt{MLP-ESM2-LastLayer} & 3.87 & 4.24 & 3.49 & 3.24 & 2.71 & 2.81 & 4.48 & 4.52 & 4.05 & 4.05 & 5.62 & 5.19 & 3.48 & 4.43 & 4.76 \\
\texttt{Ridge-ESM2} & 5.16 & 5.03 & 5.22 & 5.05 & 4.90 & 4.90 & 5.90 & 6.05 & 6.57 & 5.19 & 4.67 & 4.57 & 5.10 & 5.00 & 5.33 \\
\texttt{Ridge-OH} & 6.54 & 6.90 & 6.90 & 7.48 & 8.33 & 8.19 & 7.10 & 7.38 & 6.71 & 5.95 & 6.14 & 6.00 & 6.33 & 7.00 & 7.00 \\
\texttt{SigGLM-OH} & 6.03 & 5.60 & 5.59 & 6.90 & 6.43 & 6.38 & 6.48 & 5.90 & 5.71 & 6.14 & 5.90 & 5.52 & 6.14 & 5.86 & 5.86 \\
\bottomrule
\end{tabular}
}

%% file: app_add_multi.tex
For additional figures for the multi-task experiment in \secref{sec:multi} see 
\figref{fig:app:tsuboperf}-\ref{fig:app:landscape_kernel_scale}.
In particular see \figref{fig:app:tsuboperf} for Pearson R and RMSE metrics.
See \figref{fig:app:blosum-corr-full}-\ref{fig:app:local-clock-corr-quantiles} for visualizations of correlation matrices.
See \figref{fig:app:clock_gp_land}-\ref{fig:app:landscape_pearson_mae} for additional figures pertaining to the landscape subsampling experiment.
For additional quantitative performance metrics see Table~\ref{tab:tsuboperf}.

\input{tsubo_table.tex}

\begin{figure*}[t]
\centering
\includegraphics[width=0.525\textwidth]{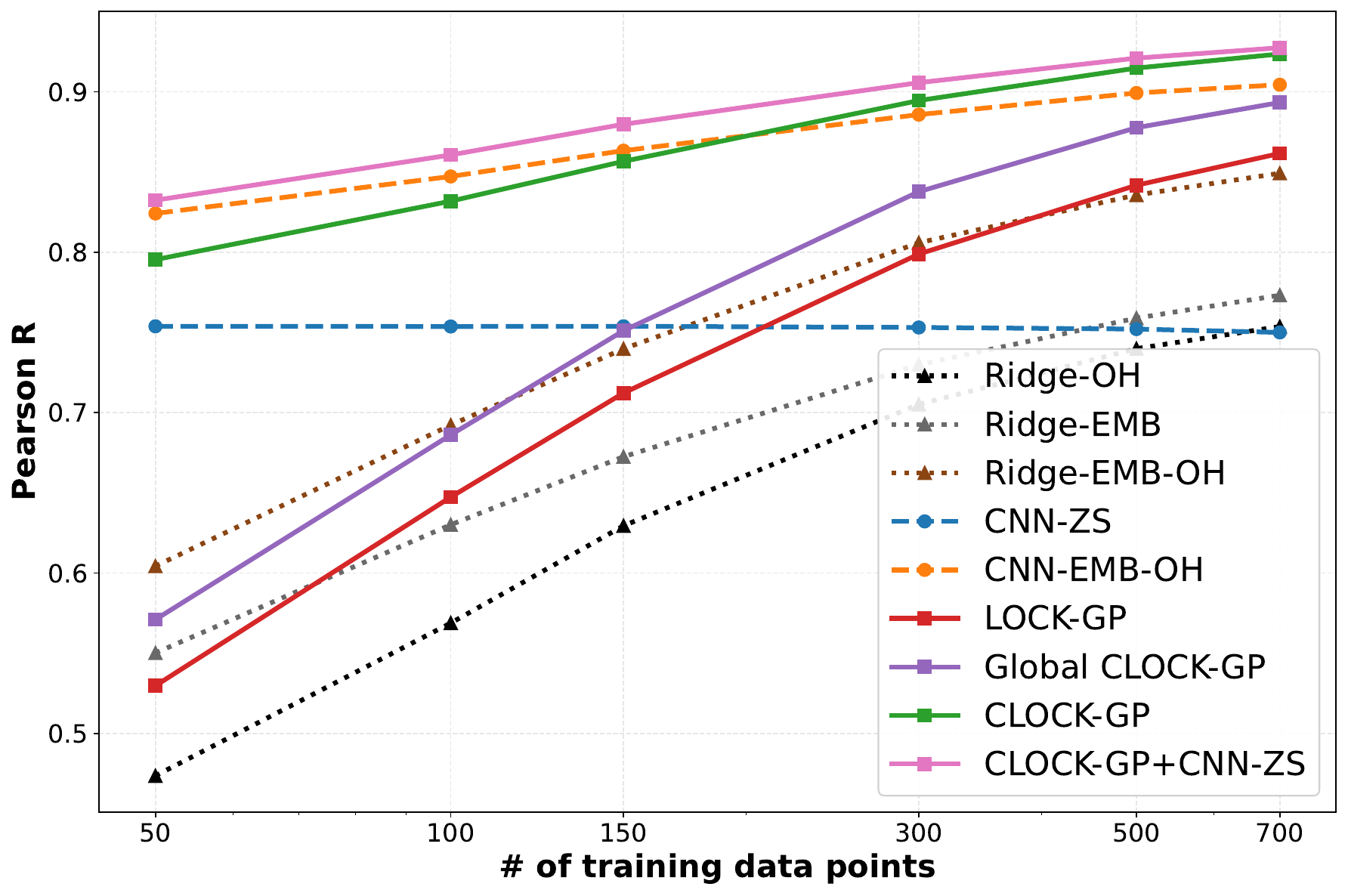}
\includegraphics[width=0.435\textwidth]{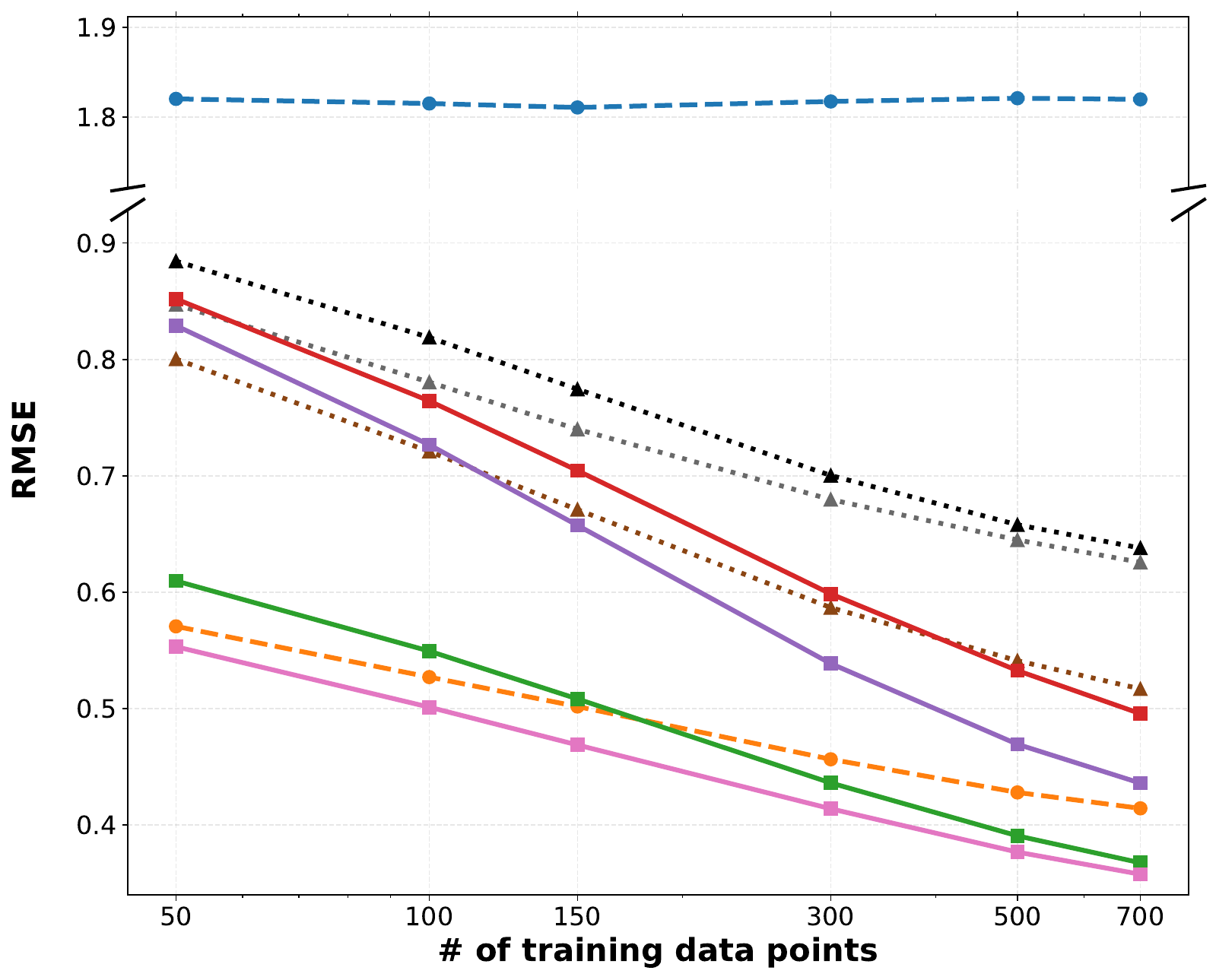}
    \caption{\emph{[Companion figure to \figref{fig:tsuboperf}]} 
    Predictive performance as a function of number of training data on the multi-landscape thermostability
    data described in \secref{sec:multi}. We depict both Pearson R (left) and root mean squared error (right);
    metrics are averaged across 100 train/test splits in 50 held-out landscapes.
    We also include \texttt{Ridge-EMB} and \texttt{Ridge-EMB-OH} models not depicted in the main text;
    see \secref{app:sec:multitaskdetails} for details. 
    The structure-conditioned CLOCK-GP augmented with an additional CNN-based kernel performs best across the board.
    Note the axis break for the RMSE figure: the absolute scale of the zero-shot predictor is very poorly calibrated.
    See Table~\ref{tab:tsuboperf} for more quantitative results.}
\label{fig:app:tsuboperf}
\end{figure*}

\begin{figure}[t]
  \centering
  \includegraphics[width=0.6 \columnwidth]{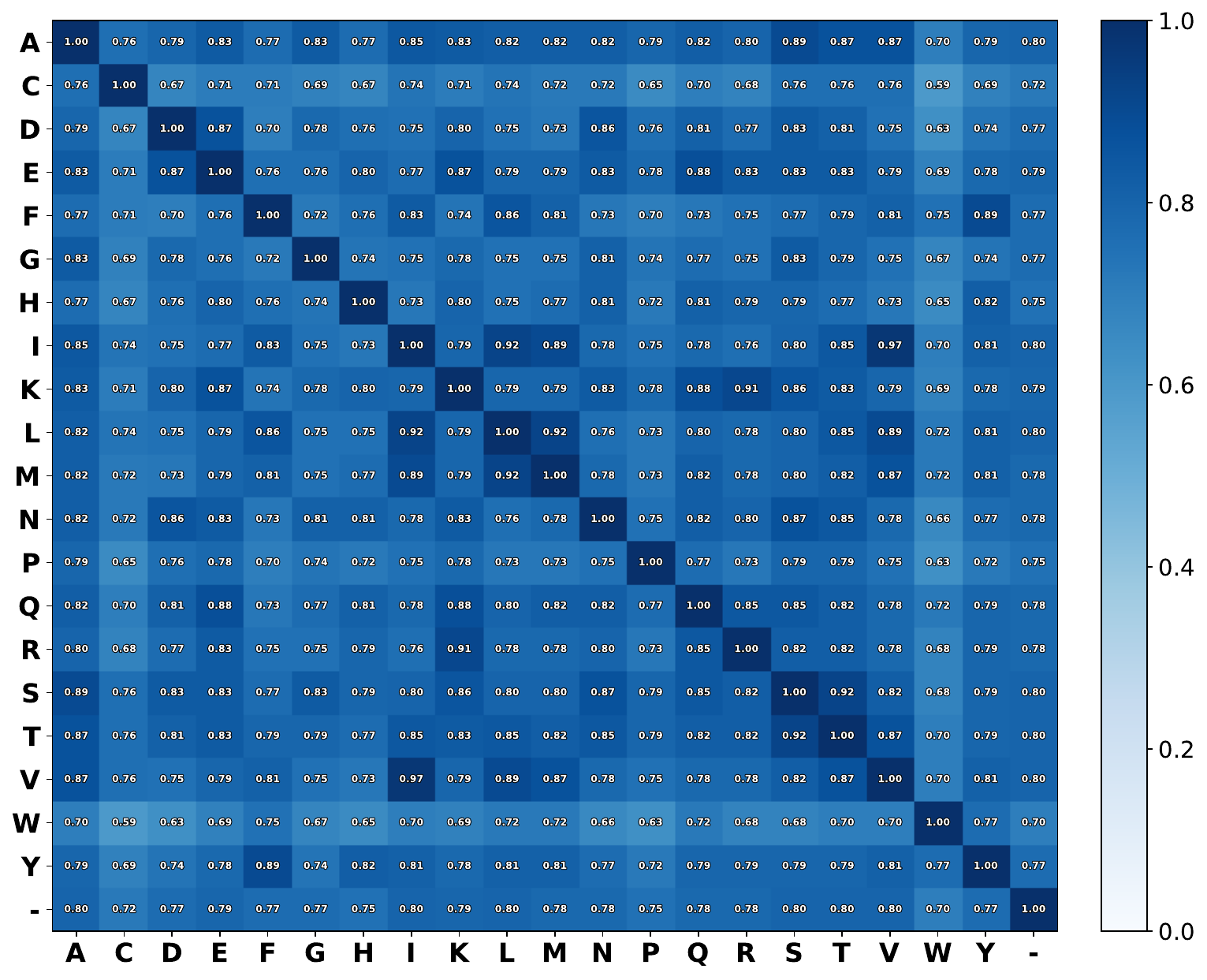}
  \caption{The BLOSUM50 substitution matrix normalized as a $21 \times 21$ correlation matrix.
    See \secref{app:norm} for details on the normalization scheme used.
    We include the gap token `-'. The correlation ranges from $0.59$ to $1.0$. 
    }
  \label{fig:app:blosum-corr-full}
\end{figure}

\begin{figure}[t]
  \centering
  \includegraphics[width=0.6 \columnwidth]{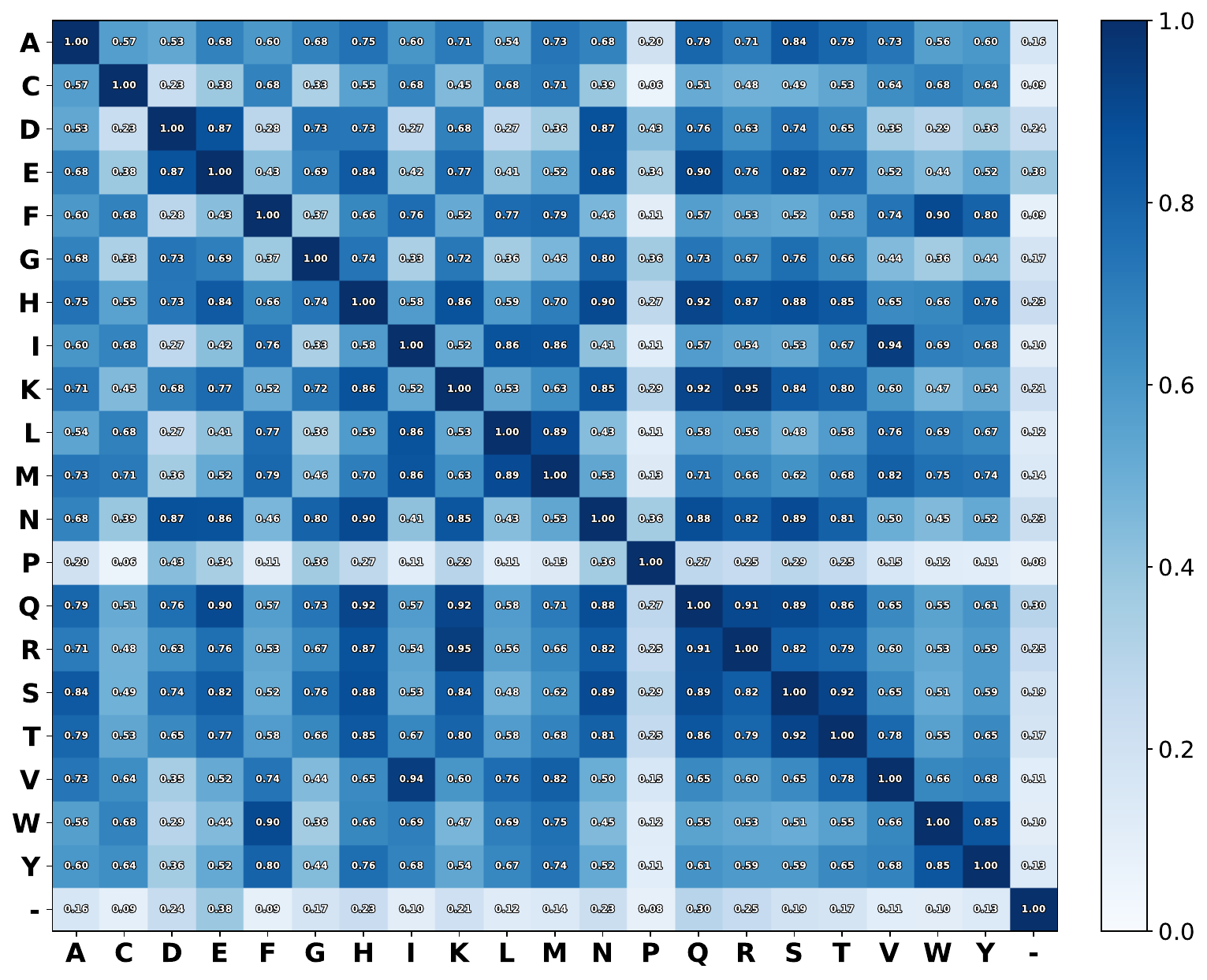}
  \caption{The global correlation matrix fit to the thermostability data described in \secref{sec:multi}.
    We include the gap token `-'. The correlation ranges from $0.06$ to $1.0$.
    }
  \label{fig:app:global-clock-corr}
\end{figure}

\begin{figure}[t]
  \centering
  \includegraphics[width=0.6 \columnwidth]{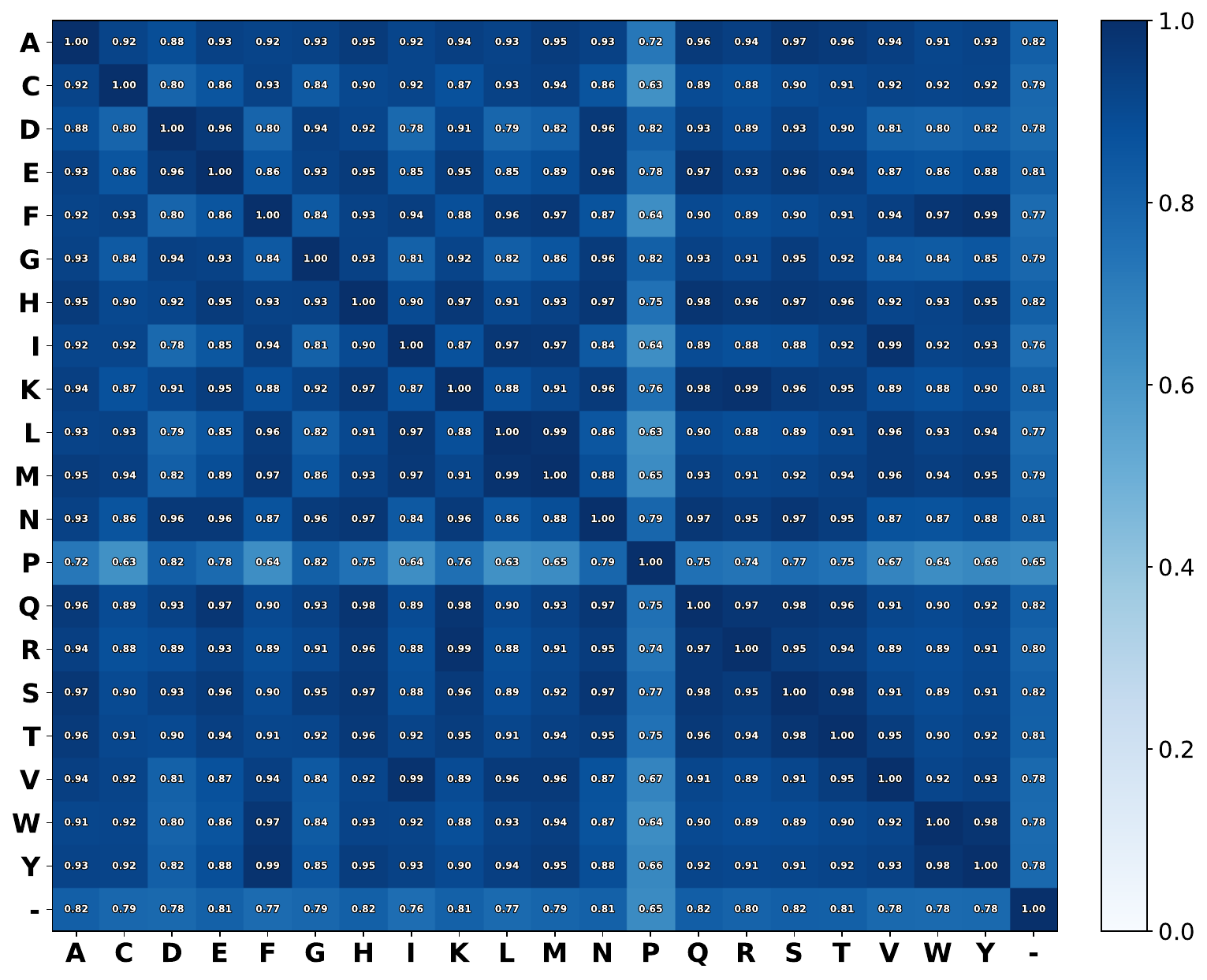}
  \caption{A visual representation of the CLOCK-GP correlation matrix fit to the thermostability data 
    described in \secref{sec:multi}. Since the correlation matrix varies from position to position, we depict
    the \emph{average} correlation matrix across all positions in the $50$ held-out test landscapes.
    We include the gap token `-'. The correlation ranges from $0.63$ to $1.0$.
    }
  \label{fig:app:local-clock-corr}
\end{figure}

\begin{figure}[t]
  \centering
  \includegraphics[width=0.6 \columnwidth]{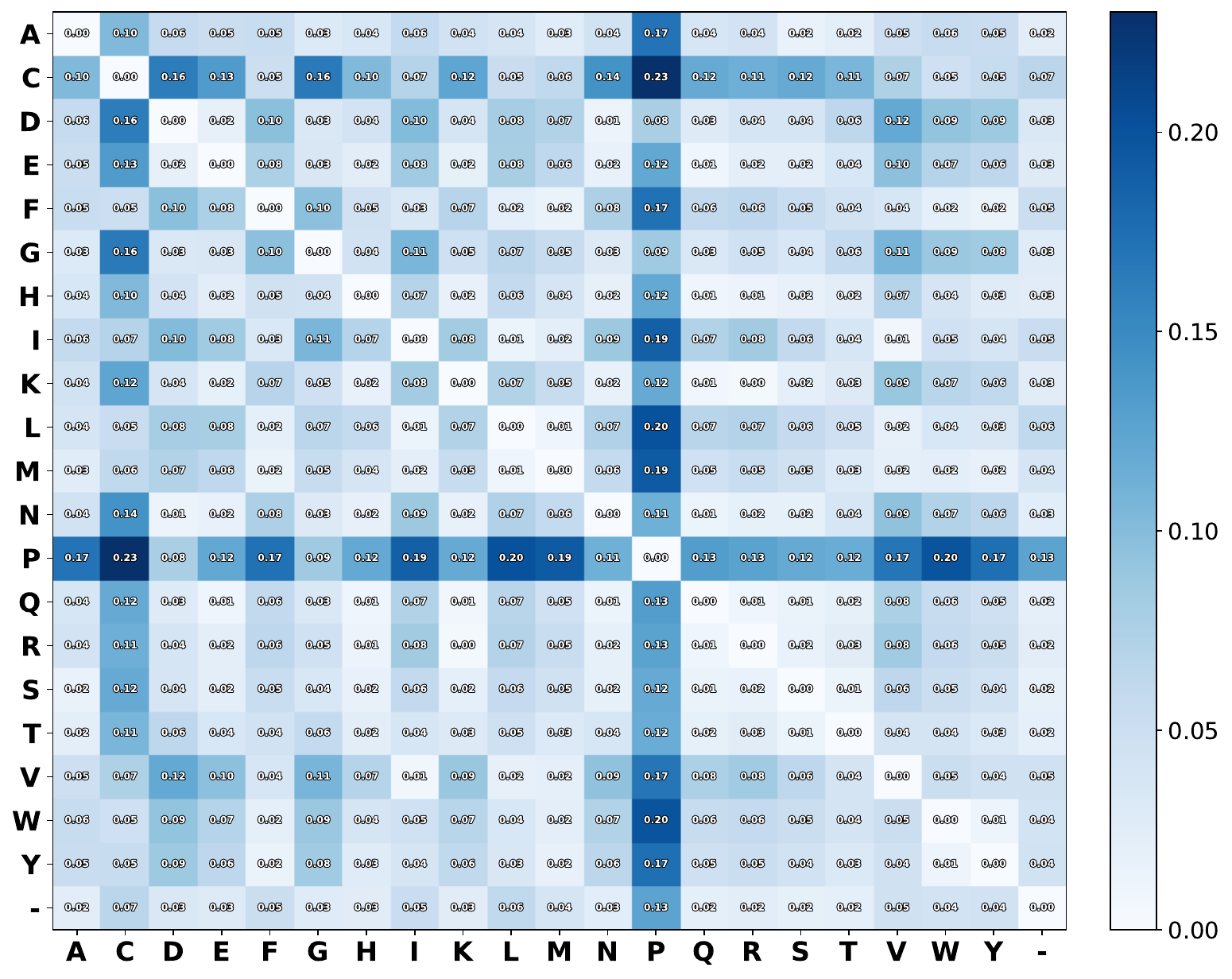}
    \caption{[Companion figure to \figref{fig:app:local-clock-corr}] 
    We depict the \emph{variability} of the CLOCK-GP correlation matrix fit to the thermostability data 
    described in \secref{sec:multi}. In particular we depict the quantile difference defined by $\delta = q_{95} - q_5$ where $q_{95}$ and $q_{5}$
    are the $0.95$ and $0.05$ quantiles, respectively. 
    This quantile difference is computed using all positions in the $50$ held-out test landscapes and ranges from $0.0$ (there is no variability
    at all in the diagonal) to $0.23$ (for proline-cysteine).
    }
  \label{fig:app:local-clock-corr-quantiles}
\end{figure}

\begin{figure*}[t]
\centering
\includegraphics[width=0.49\textwidth]{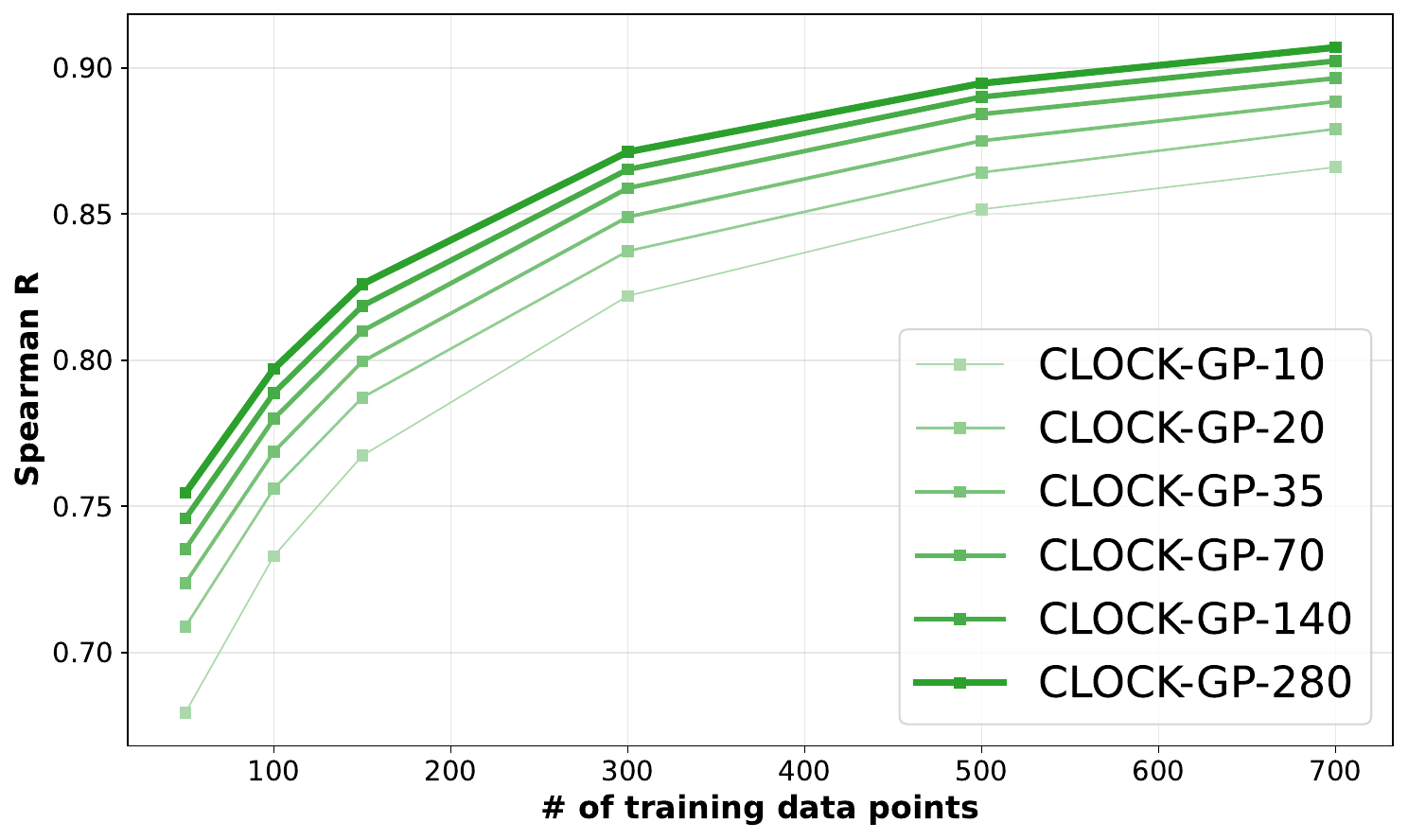}
\includegraphics[width=0.49\textwidth]{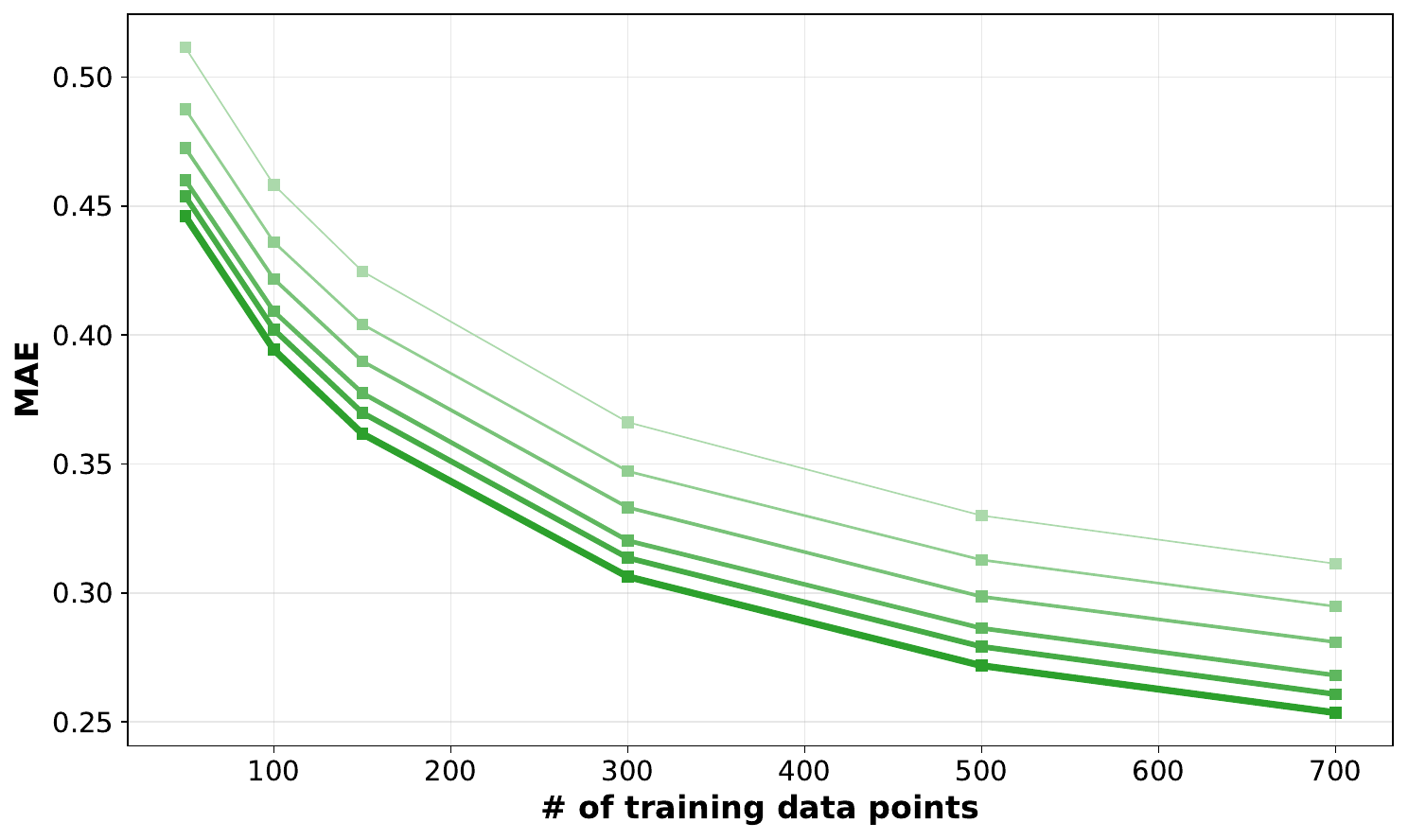}
    \caption{\emph{[Companion figure to \figref{fig:landscape_spearman}]} 
    We depict how \texttt{CLOCK-GP} performance changes as we vary the number of training landscapes from $10$ to $280$.
    Spearman R (left) and MAE (right) are averaged across 100 train/test splits in 50 held-out landscapes.
    For models trained on fewer than $280$ landscapes, we train two models on distinct random 
    subsets of the $280$ training landscapes, in which case metrics are averaged across both models.
    See \secref{sec:multi} for additional details on the setup.
    }
\label{fig:app:clock_gp_land}
\end{figure*}

\begin{figure*}[t]
\centering
\includegraphics[width=0.49\textwidth]{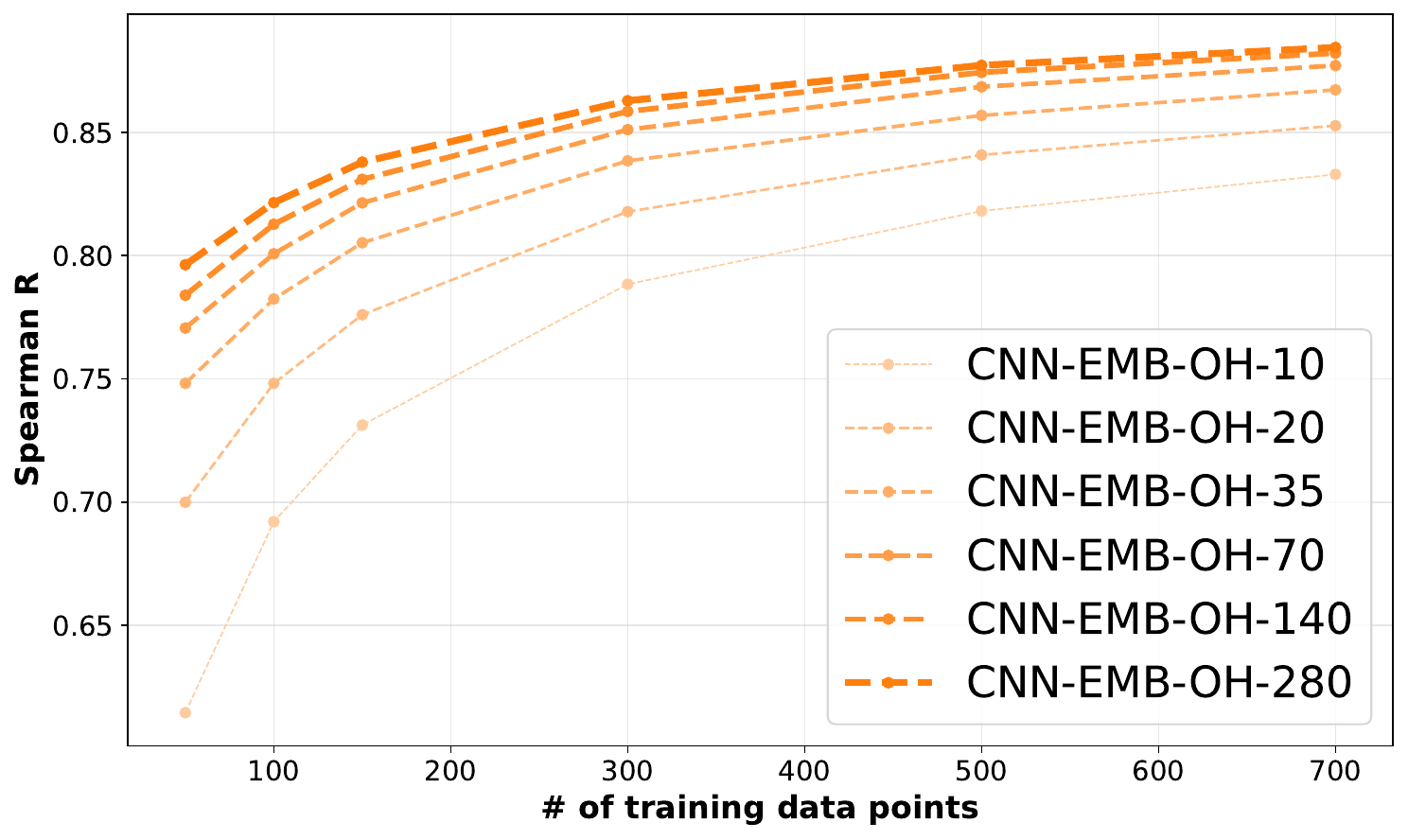}
\includegraphics[width=0.49\textwidth]{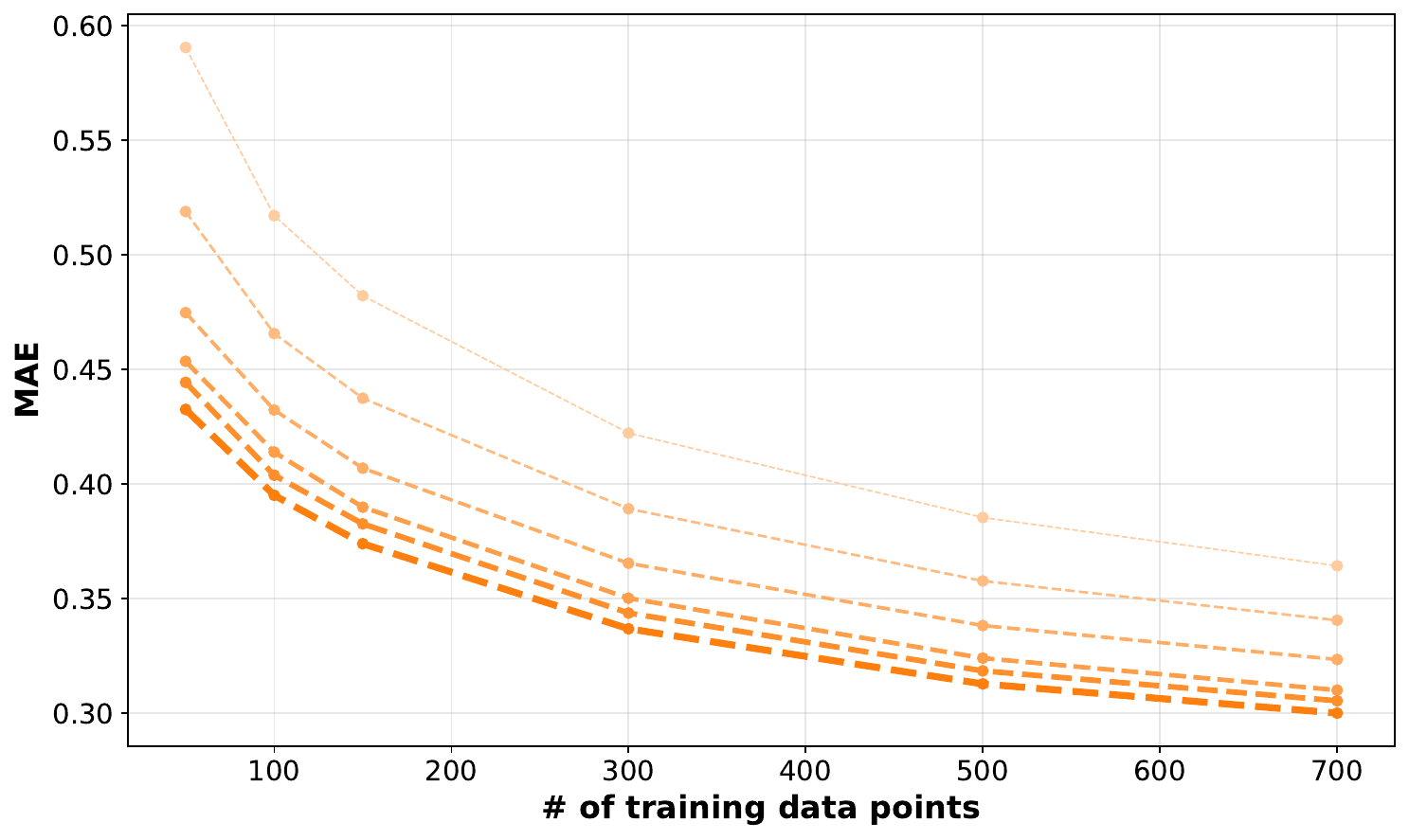}
    \caption{\emph{[Companion figure to \figref{fig:landscape_spearman}]}
    We depict how \texttt{CNN-EMB-OH} performance changes as we vary the number of training landscapes from $10$ to $280$.
    Spearman R (left) and MAE (right) are averaged across 100 train/test splits in 50 held-out landscapes.
    For models trained on fewer than $280$ landscapes, we train two models on distinct random
    subsets of the $280$ training landscapes, in which case metrics are averaged across both models.
    See \secref{sec:multi} for additional details on the setup.
    }
\label{fig:app:cnn_oh_land}
\end{figure*}

\begin{figure*}[t]
\centering
\includegraphics[width=0.49\textwidth]{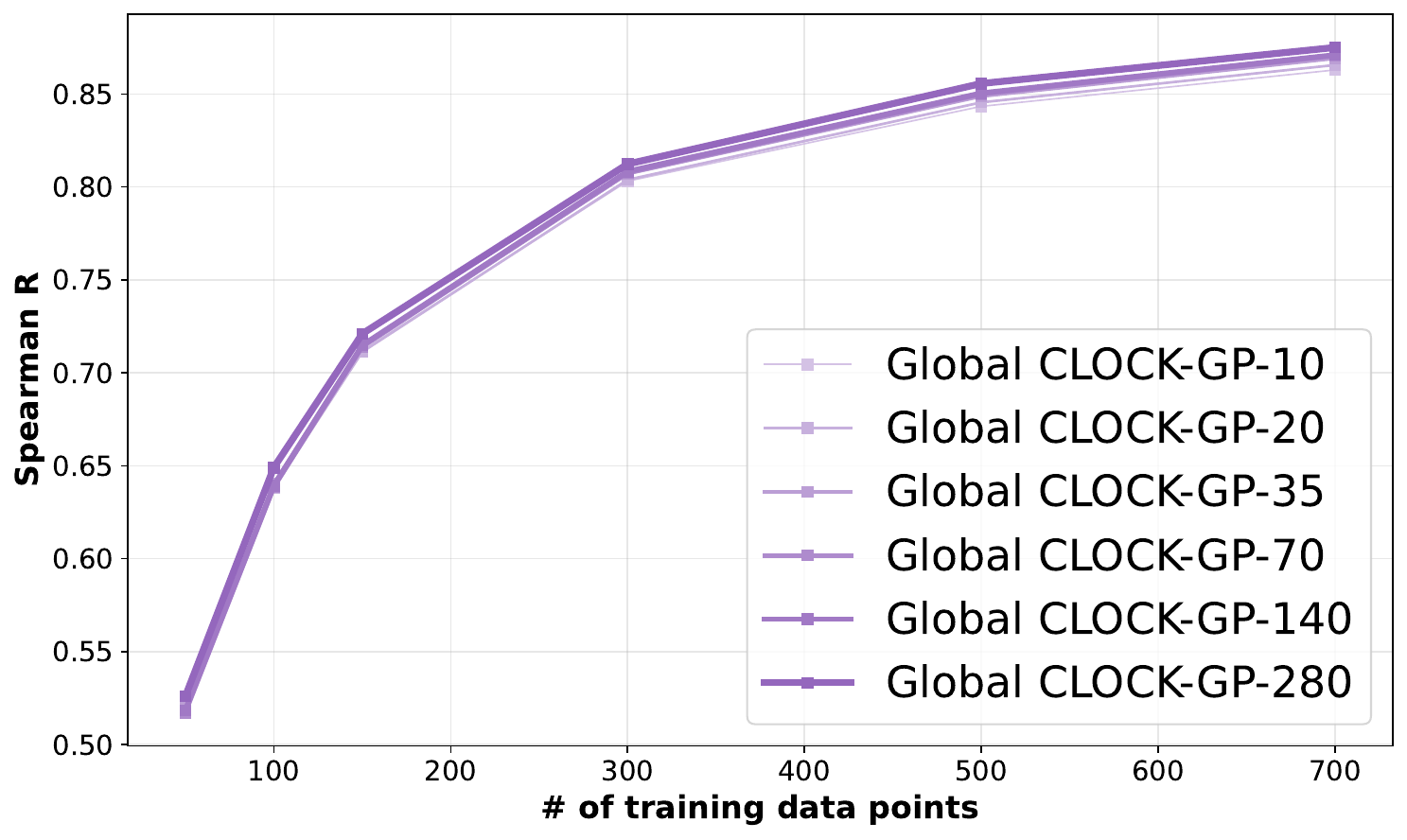}
\includegraphics[width=0.49\textwidth]{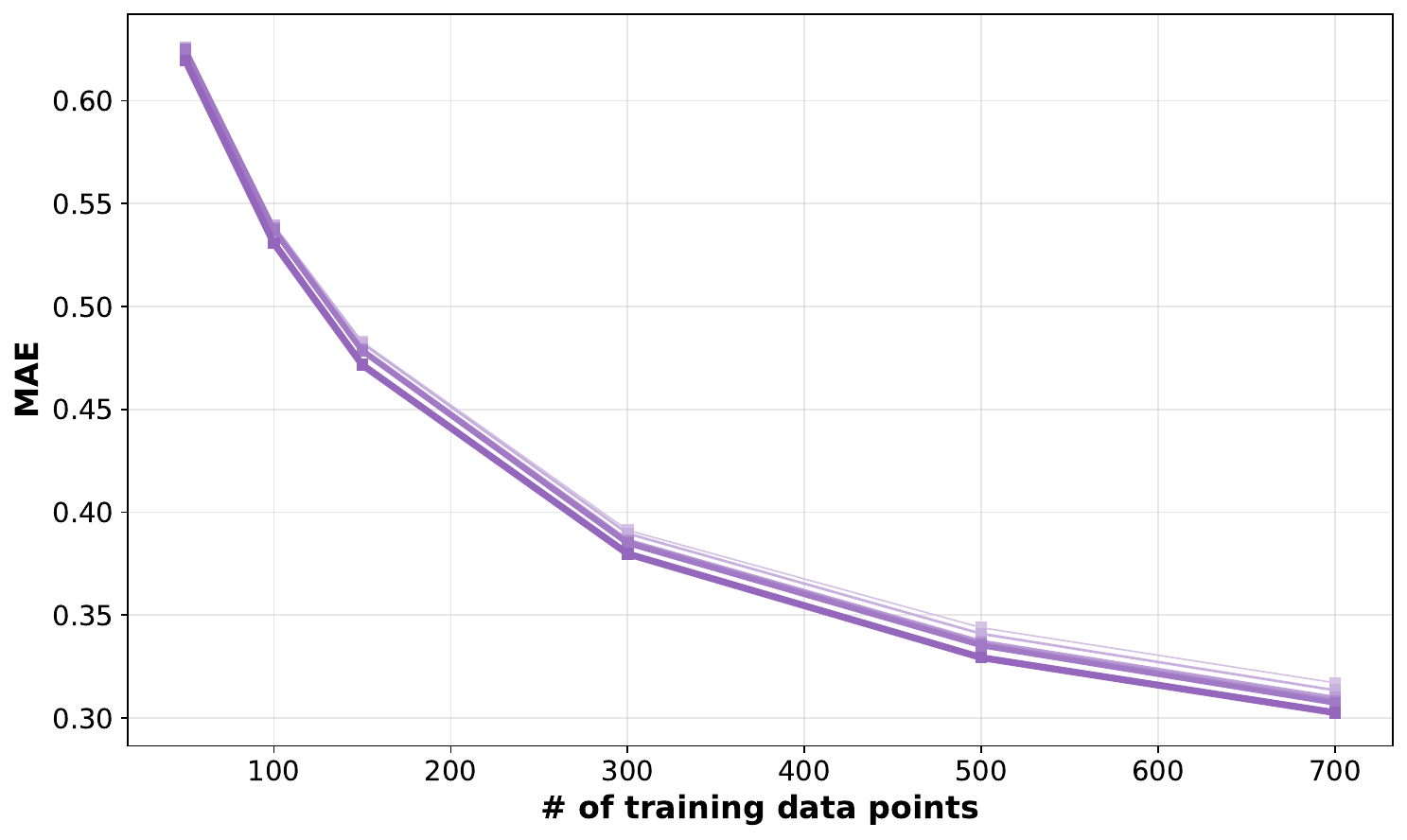}
    \caption{\emph{[Companion figure to \figref{fig:landscape_spearman}]}
    We depict how \texttt{Global CLOCK-GP} performance changes as we vary the number of training landscapes from $10$ to $280$.
    Spearman R (left) and MAE (right) are averaged across 100 train/test splits in 50 held-out landscapes.
    For models trained on fewer than $280$ landscapes, we train two models on distinct random
    subsets of the $280$ training landscapes, in which case metrics are averaged across both models.
    See \secref{sec:multi} for additional details on the setup.
    }
\label{fig:app:global_clock_gp_land}
\end{figure*}

\begin{figure*}[t]
\centering
\includegraphics[width=0.49\textwidth]{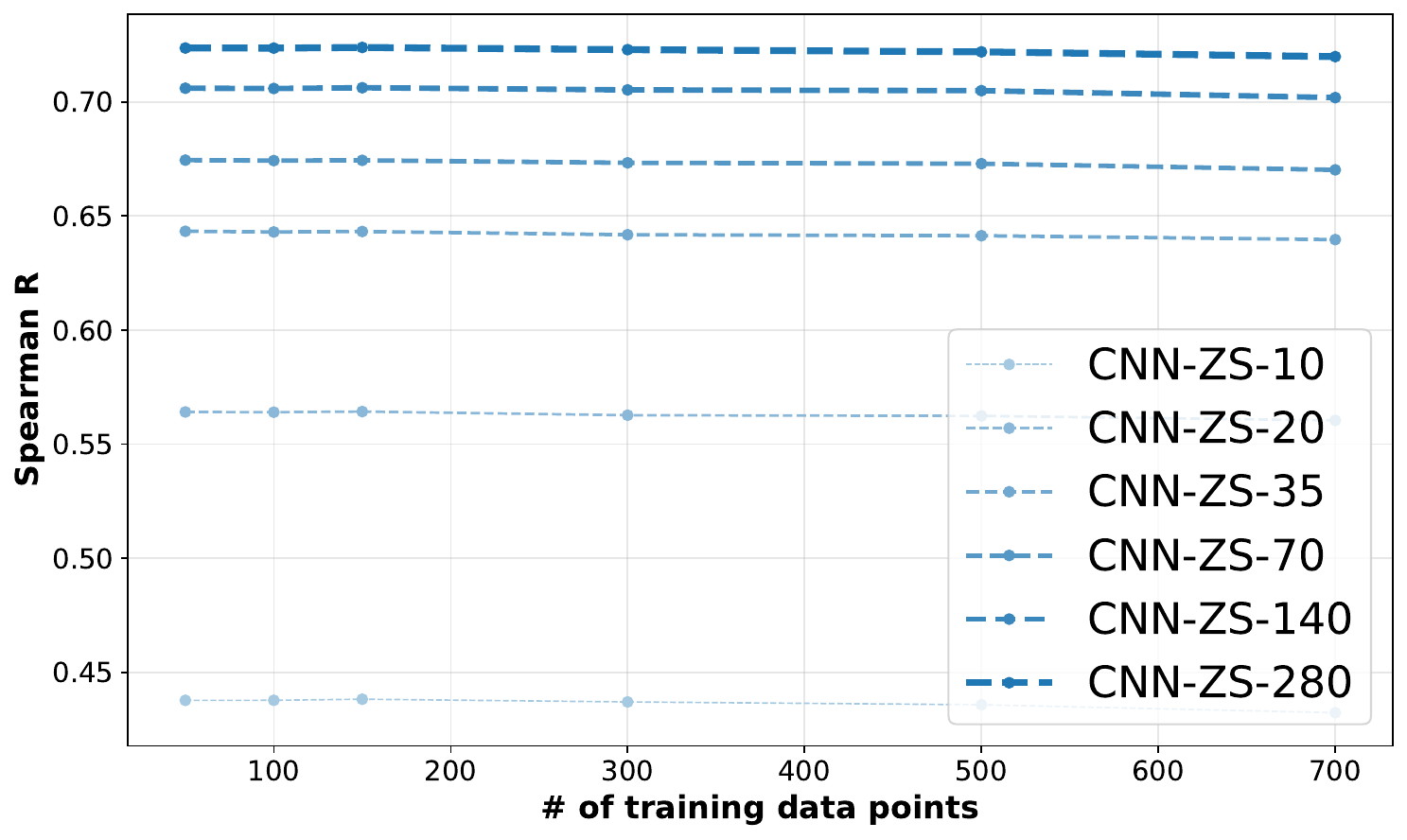}
\includegraphics[width=0.49\textwidth]{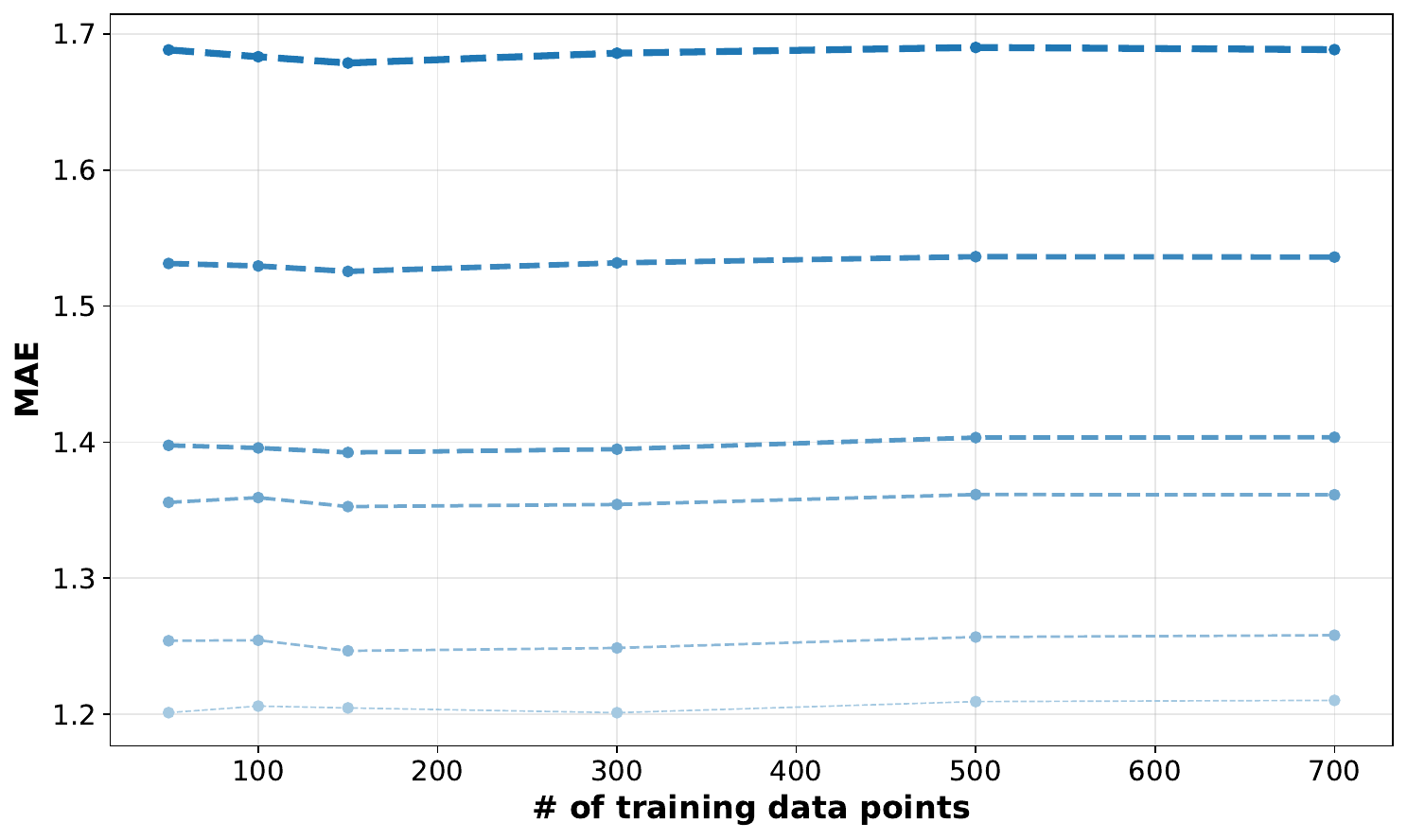}
    \caption{\emph{[Companion figure to \figref{fig:landscape_spearman}]}
    We depict how \texttt{CNN-ZS} performance changes as we vary the number of training landscapes from $10$ to $280$.
    Spearman R (left) and MAE (right) are averaged across 100 train/test splits in 50 held-out landscapes.
    Note that even though \texttt{CNN-ZS} is zero-shot, the performance varies slightly from left to right because
    the held-out test data varies as the size of the ``training set'' increases and thus the test set changes.
    For models trained on fewer than $280$ landscapes, we train two models on distinct random
    subsets of the $280$ training landscapes, in which case metrics are averaged across both models.
    See \secref{sec:multi} for additional details on the setup.
    }
\label{fig:app:cnn_zs_land}
\end{figure*}

\begin{figure*}[t]
\centering
\includegraphics[width=0.49\textwidth]{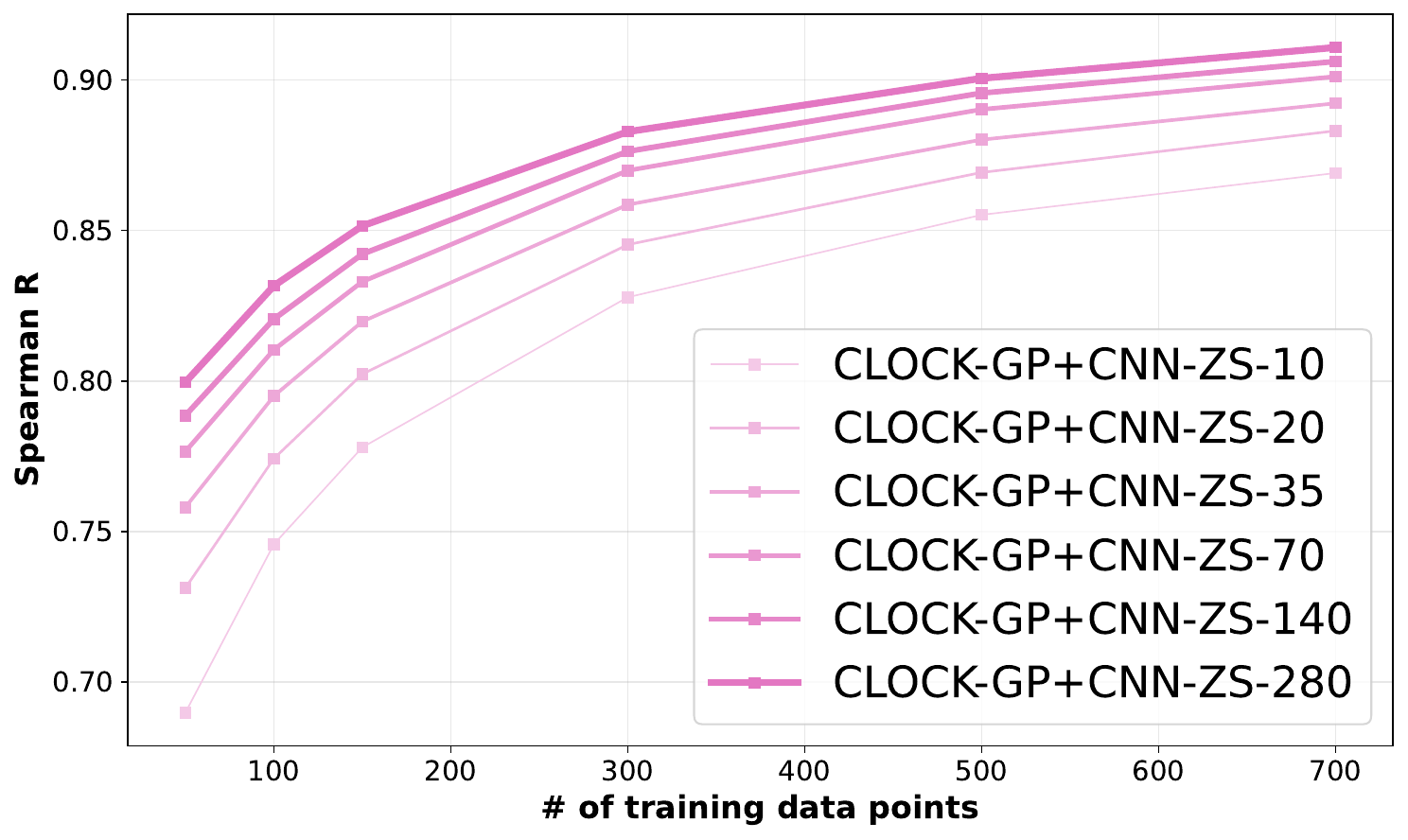}
\includegraphics[width=0.49\textwidth]{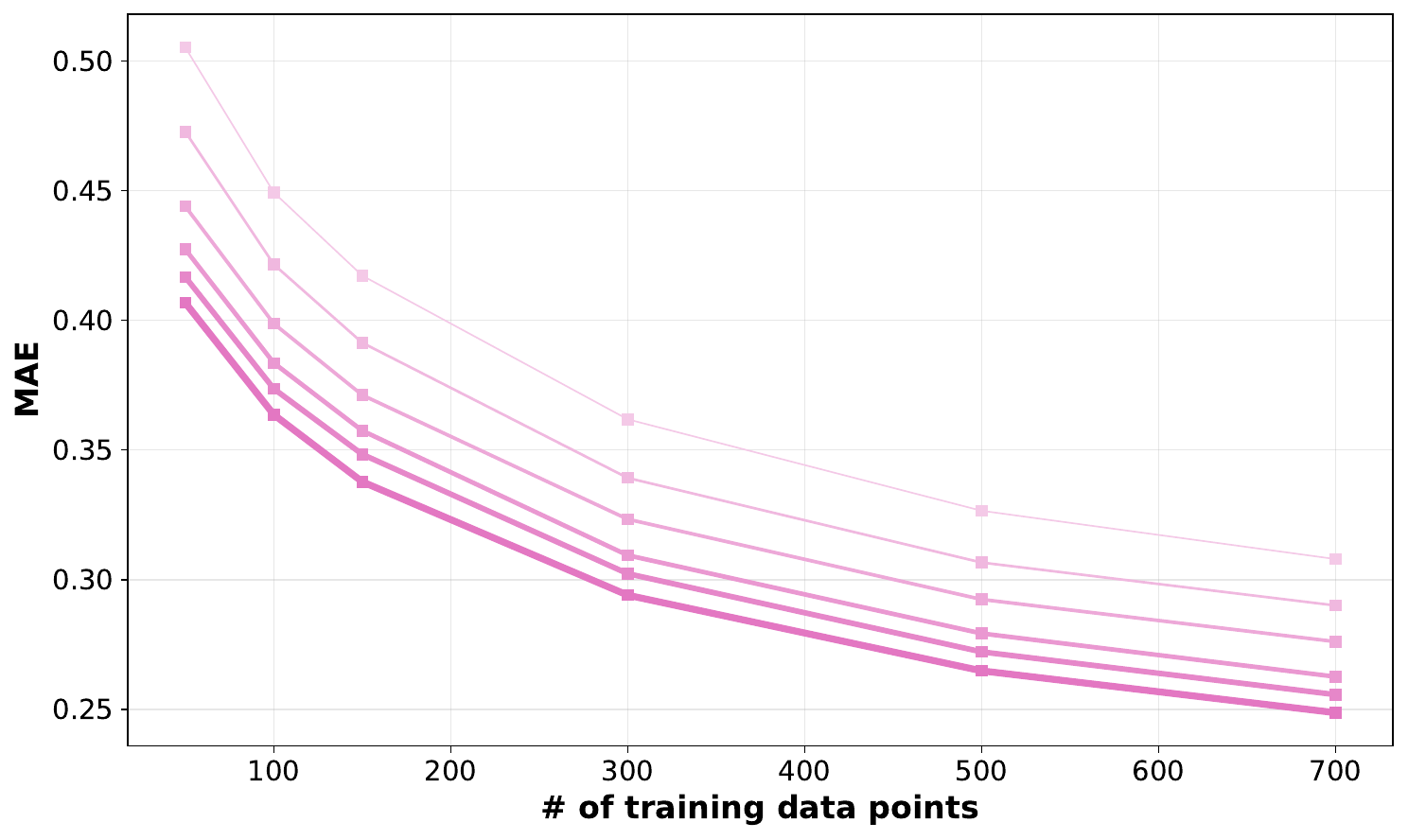}
    \caption{\emph{[Companion figure to \figref{fig:landscape_spearman}]}
    We depict how \texttt{CLOCK-GP+CNN-ZS} performance changes as we vary the number of training landscapes from $10$ to $280$.
    Spearman R (left) and MAE (right) are averaged across 100 train/test splits in 50 held-out landscapes.
    For models trained on fewer than $280$ landscapes, we train two models on distinct random
    subsets of the $280$ training landscapes, in which case metrics are averaged across both models.
    See \secref{sec:multi} for additional details on the setup.
    }
\label{fig:app:clock_gp_cnn_zs_land}
\end{figure*}

\begin{figure*}[t]
\centering
\includegraphics[width=\textwidth]{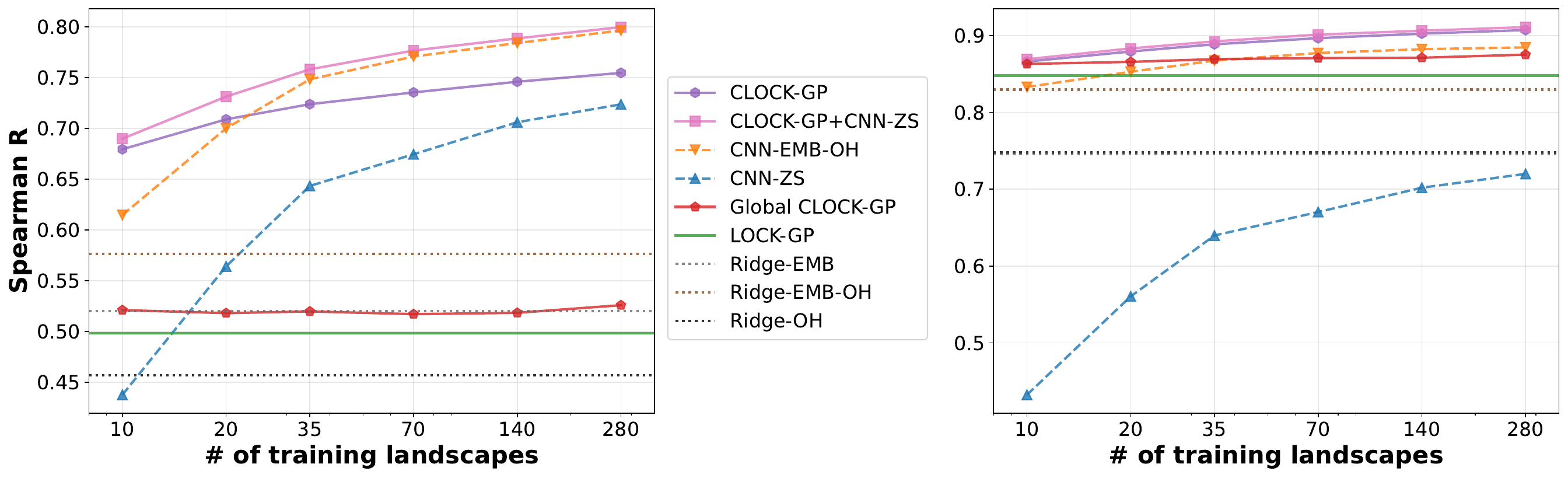}
    \caption{\emph{[Companion figure to \figref{fig:landscape_spearman}]}
    We depict how various multi-task models' performance changes as we vary the number of training landscapes from $10$ to $280$
    for a fixed number of training points: (left) $N_{\rm train}=50$; (right) $N_{\rm train}=700$.
    Spearman R is averaged across 100 train/test splits in 50 held-out landscapes.
    Note that \texttt{LOCK-GP}, \texttt{Ridge-OH}, \texttt{Ridge-EMB} and \texttt{Ridge-EMB-OH} are landscape-local models and so their performance
    does not vary with the number of training landscapes.
    For models trained on fewer than $280$ landscapes, we train two models on distinct random
    subsets of the $280$ training landscapes, in which case metrics are averaged across both models.
    See \secref{sec:multi} for additional details on the setup.
    }
\label{fig:app:landscape_50_700_spearman}
\end{figure*}

\begin{figure*}[t]
\centering
\includegraphics[width=\textwidth]{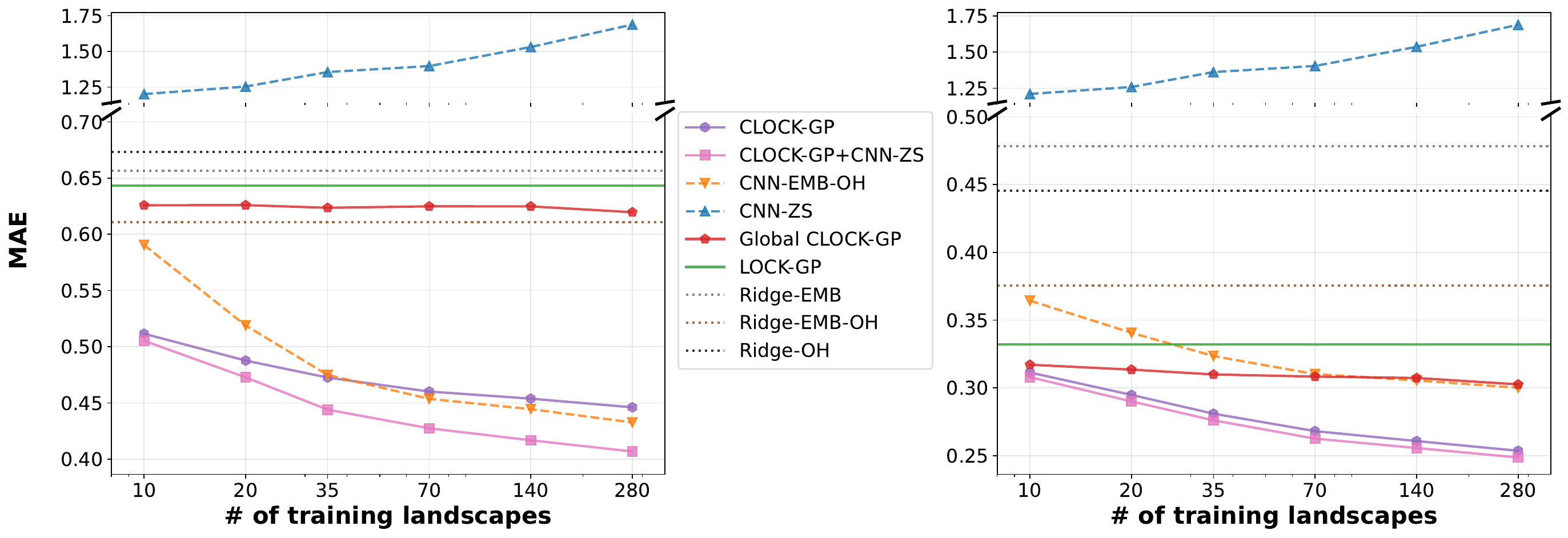}
    \caption{\emph{[Companion figure to \figref{fig:landscape_spearman}]}
    We depict how various multi-task models' performance changes as we vary the number of training landscapes from $10$ to $280$
    for a fixed number of training points: (left) $N_{\rm train}=50$; (right) $N_{\rm train}=700$.
    MAE is averaged across 100 train/test splits in 50 held-out landscapes.
    Note that \texttt{LOCK-GP}, \texttt{Ridge-OH}, \texttt{Ridge-EMB} and \texttt{Ridge-EMB-OH} are landscape-local models and so their performance
    does not vary with the number of training landscapes.
    For models trained on fewer than $280$ landscapes, we train two models on distinct random
    subsets of the $280$ training landscapes, in which case metrics are averaged across both models.
    See \secref{sec:multi} for additional details on the setup.
    }
\label{fig:app:landscape_50_700_mae}
\end{figure*}

\begin{figure}[t]
\centering
\includegraphics[width=0.48\textwidth]{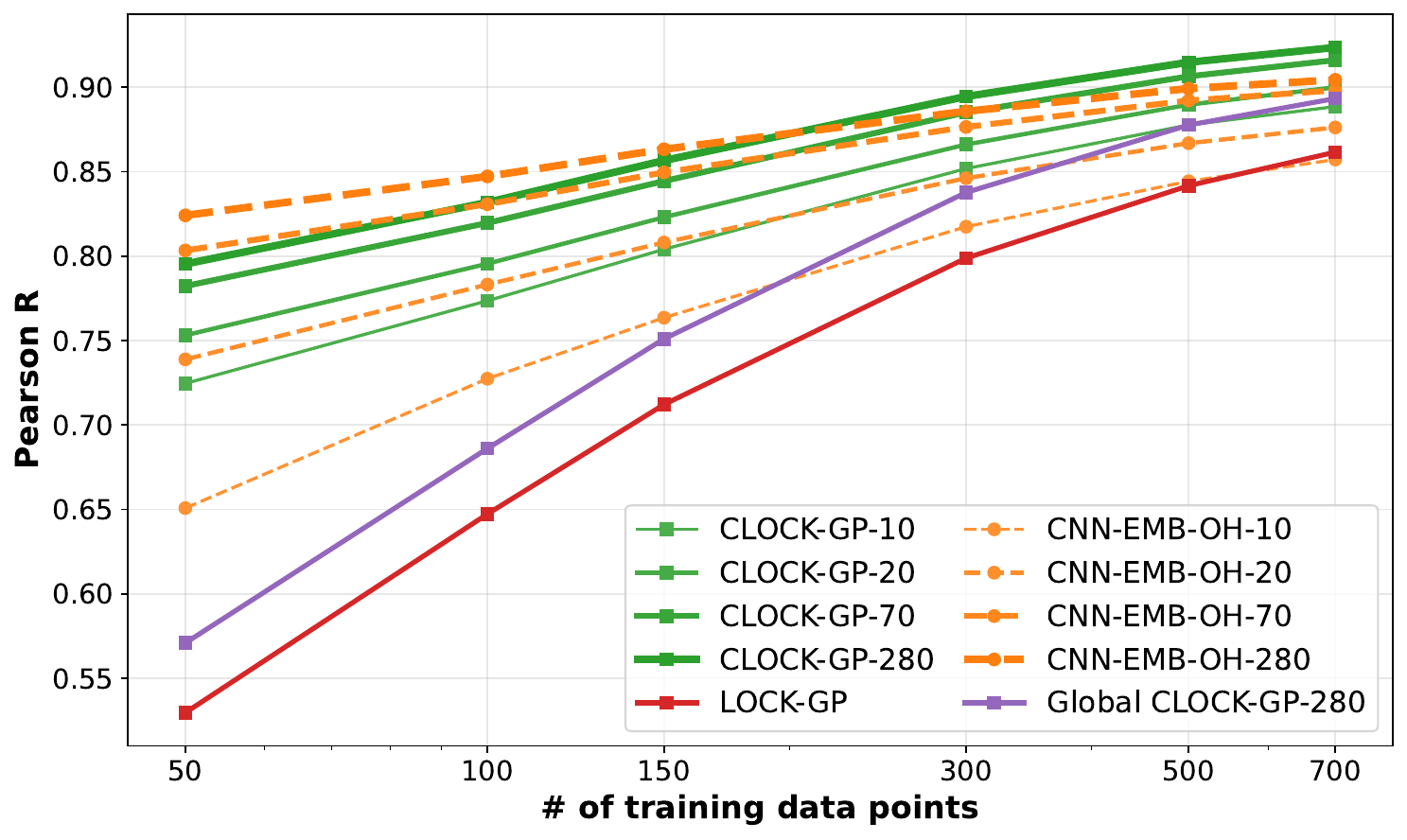}
\includegraphics[width=0.48\textwidth]{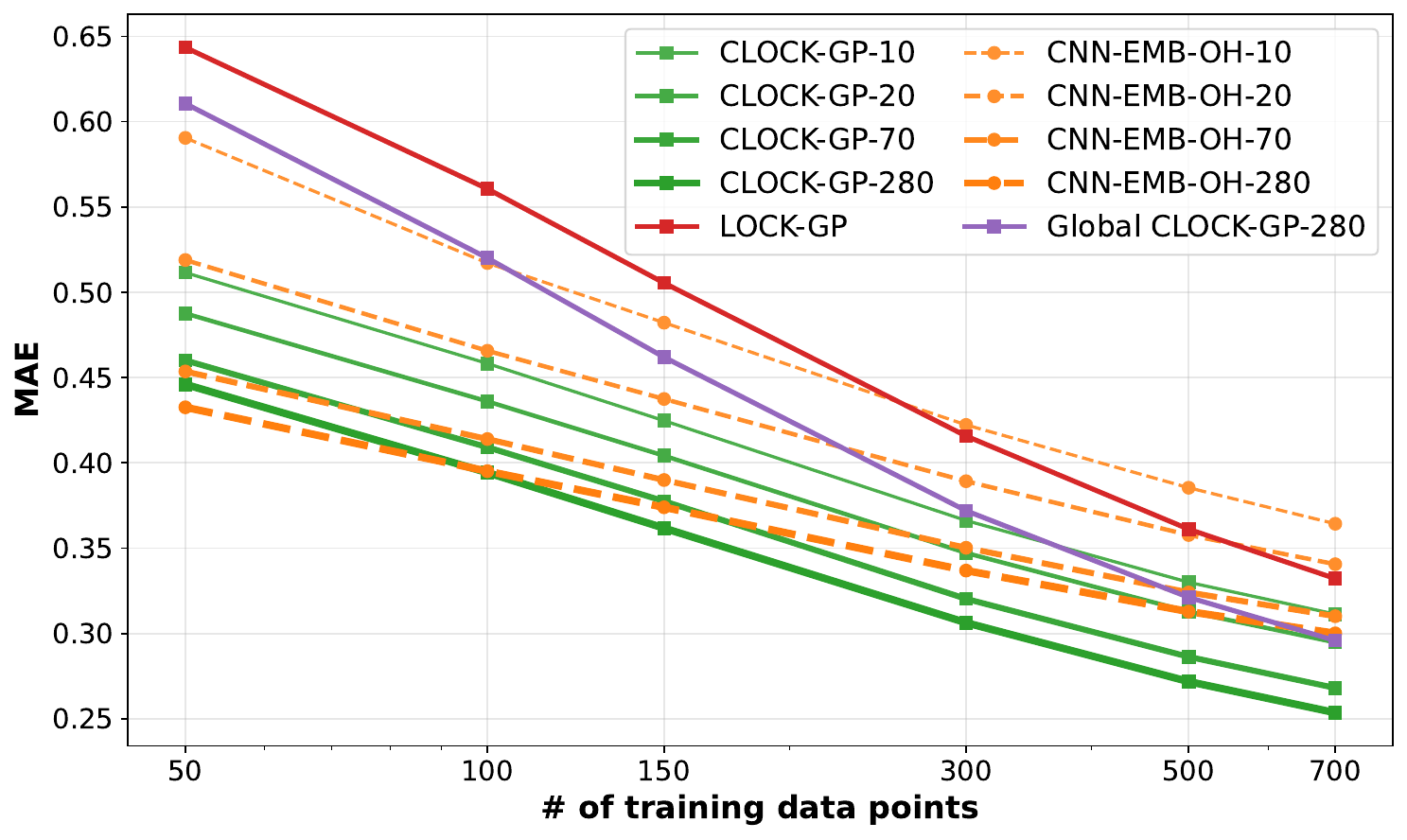}
    \caption{\emph{[Companion figure to \figref{fig:landscape_spearman}]} 
    We depict how multi-task model performance changes as we vary the number of training landscapes from $10$ to $280$.
    Pearson R (left) and MAE (right) are averaged across $100$ train/test splits in $50$ held-out landscapes.
    The number of (landscape-local) training data points is plotted on the horizontal axis.
    \texttt{CLOCK-GP} performs best in the low landscape regime: e.g.~\texttt{CLOCK-GP-10} approximately matches the
    performance of \texttt{CNN-EMB-OH-20}.
    }
    \label{fig:app:landscape_pearson_mae}
\end{figure}

\begin{figure*}[t]
\centering
\includegraphics[width=0.5\textwidth]{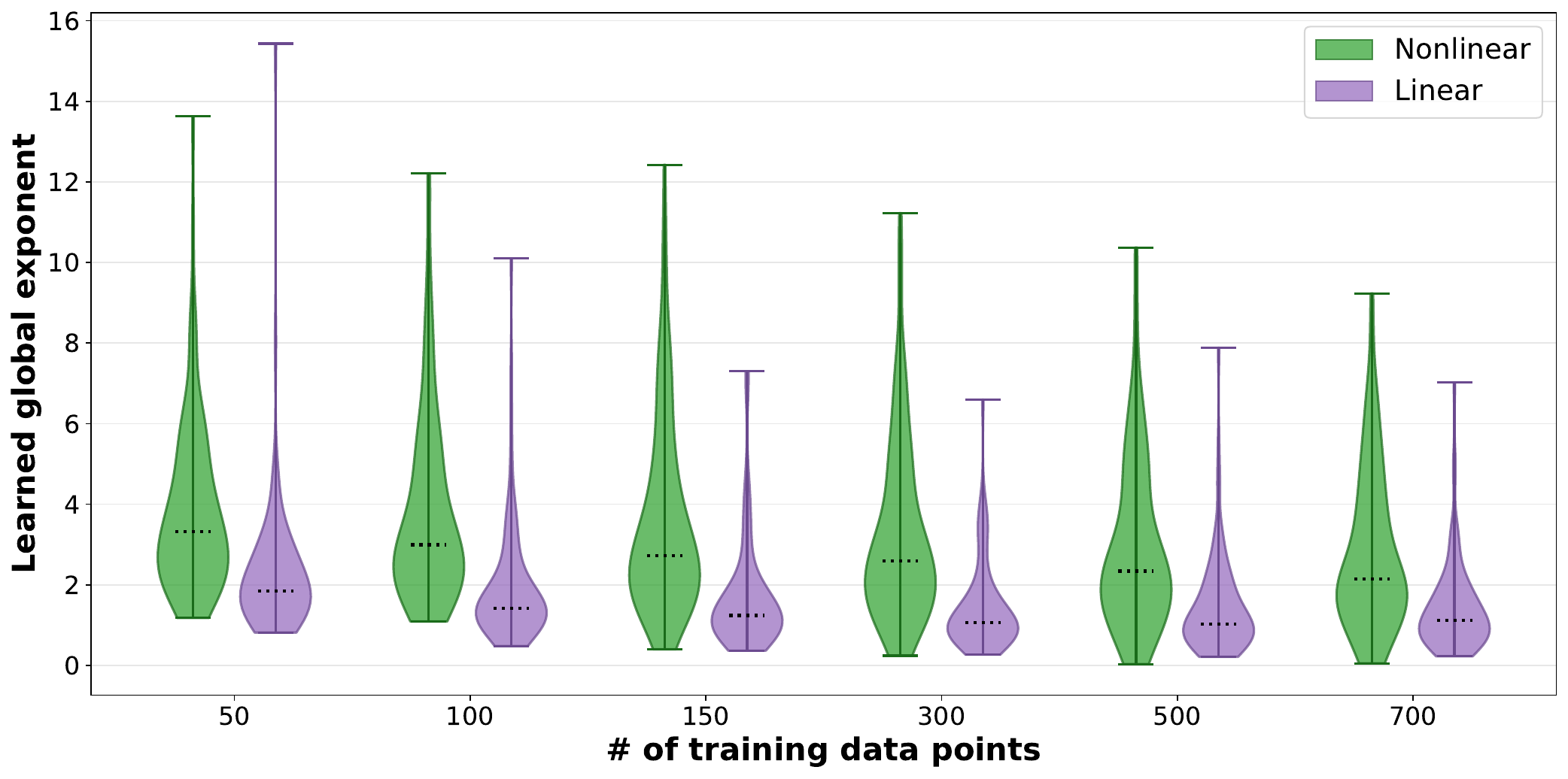}
    \caption{\emph{[Companion figure to \figref{fig:landscape_spearman}]}
    We depict the learned exponent $\alpha$ in the CLOCK kernel across 100 train/test splits in 50 held-out landscapes.
    Notably, the learned exponent is larger for the non-linear kernel $\kLnl$ than for the linear kernel $\kLlin$.
    }
\label{fig:app:landscape_exponents}
\end{figure*}

\begin{figure*}[t]
\centering
\includegraphics[width=0.5\textwidth]{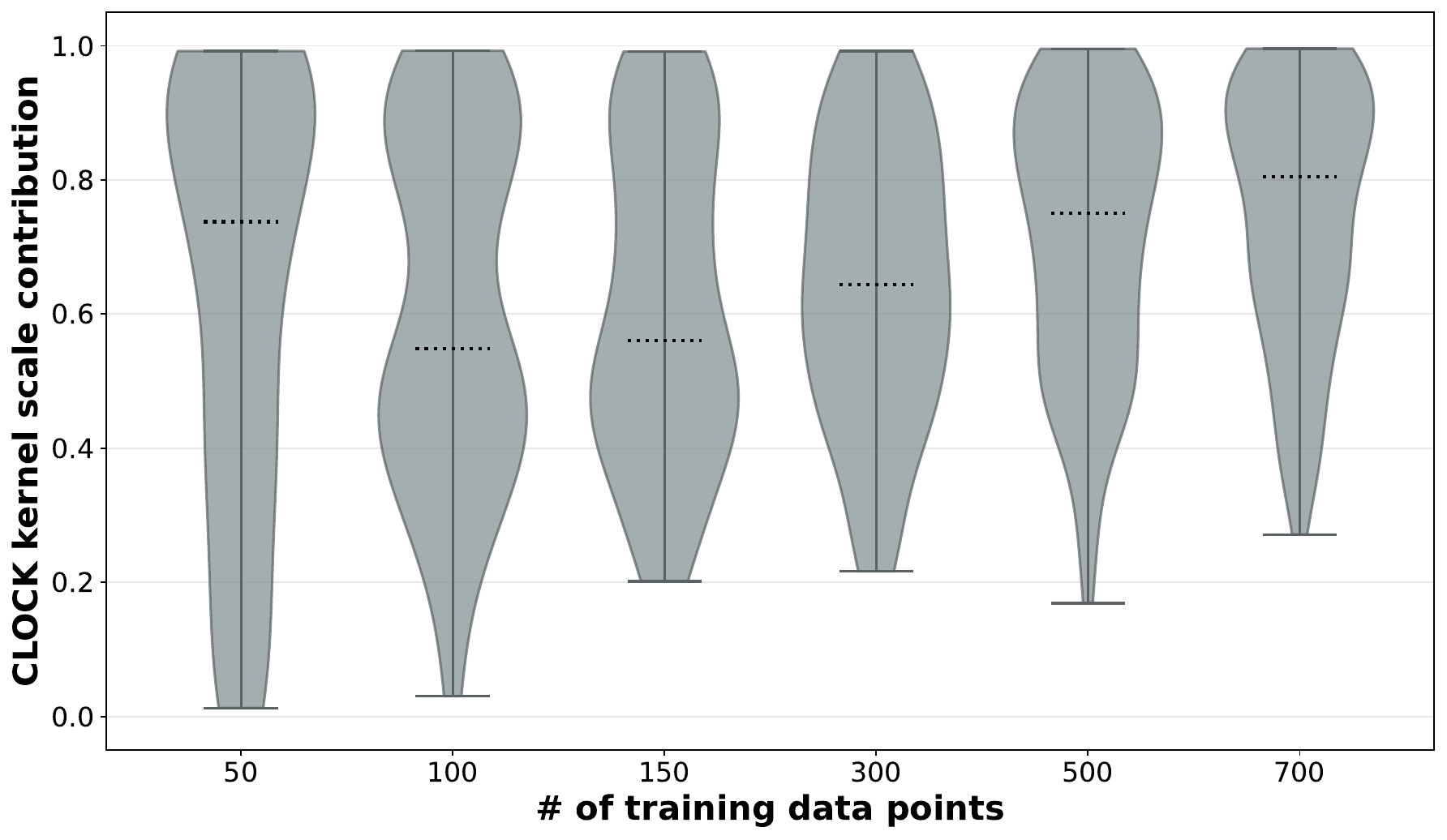}
    \caption{\emph{[Companion figure to \figref{fig:landscape_spearman}]}
    We depict the \emph{relative} contribution of the CLOCK kernel scale to \texttt{CLOCK-GP+CNN-ZS} 
    across 100 train/test splits in 50 held-out landscapes.
    That is the full kernel is of the form $\sigma_{0}^2 \kLnl + \sigma_{\rm rbf}^2 k_{\rm rbf}$
    and we depict distributions of $\frac{\sigma_{0}^2}{\sigma_{0}^2 + \sigma_{\rm rbf}^2}$.
    As we would expect---given its good performance for larger numbers of training data points---the CLOCK kernel plays an increasingly larger 
    role as the number of training data points increases.
    We note that across all training sizes the median relative contribution is greater than one half.
    }
\label{fig:app:landscape_kernel_scale}
\end{figure*}

%% file: tsubo_table.tex
\begin{table*}[t]
\centering
\caption{This table gives quantitative performance metrics for the multi-task experiment in \secref{sec:multi}.
    As such it complements \figref{fig:tsuboperf} and \figref{fig:app:tsuboperf}. 
    We show results for a few model classes that are not included in the main text.
    This includes \texttt{CNN-ZS-OH} which concatenates CNN-based zero shot predictions (i.e.~these are scalar features) with one-hot
    sequences and passes the concatenated features to a ridge regression head (as opposed to \texttt{CNN-EMB-OH}, which
    uses neural features extracted from the CNN together with one-hot features). 
    We also include two local models, \texttt{Ridge-EMB} and \texttt{Ridge-EMB-OH}, which
    combine a ridge regression head with mean-pooled sequence-and-structure features from Chroma (in the latter case the mean-pooled features
    are concatenated with one-hot features). In addition, we include $4$ GP models (marked by `\texttt{-LIN}') where the 
    kernel is \emph{linear} (i.e.~$\kLlin$), in contrast to the $4$ GPs in the main text, which use non-linear kernels (i.e.~$\kLnl$).
    \texttt{CLOCK-GP+CNN-ZS} performs best across the board w.r.t.~all training sizes and all four metrics.}
\label{tab:tsuboperf}
\small
\begin{tabular}{lcccccccccccc}
\toprule
\textbf{Model} & \multicolumn{4}{c}{\textbf{50 training points}} & \multicolumn{4}{c}{\textbf{150 training points}} & \multicolumn{4}{c}{\textbf{700 training points}} \\
 & \textbf{Spear.} & \textbf{MAE} & \textbf{Pear.} & \textbf{RMSE} & \textbf{Spear.} & \textbf{MAE} & \textbf{Pear.} & \textbf{RMSE} & \textbf{Spear.} & \textbf{MAE} & \textbf{Pear.} & \textbf{RMSE} \\
\midrule
\texttt{Ridge-OH} & 0.457 & 0.673 & 0.474 & 0.884 & 0.621 & 0.567 & 0.629 & 0.774 & 0.748 & 0.446 & 0.754 & 0.638 \\
\texttt{Ridge-EMB} & 0.520 & 0.657 & 0.550 & 0.847 & 0.643 & 0.567 & 0.672 & 0.740 & 0.746 & 0.478 & 0.773 & 0.626 \\
\texttt{Ridge-EMB-OH} & 0.576 & 0.611 & 0.604 & 0.800 & 0.717 & 0.499 & 0.740 & 0.671 & 0.829 & 0.375 & 0.849 & 0.517 \\
\midrule
\texttt{CNN-ZS} & 0.724 & 1.688 & 0.754 & 1.820 & 0.724 & 1.679 & 0.754 & 1.811 & 0.720 & 1.688 & 0.750 & 1.820 \\
\texttt{CNN-EMB-OH} & 0.796 & 0.433 & 0.824 & 0.571 & 0.838 & 0.374 & 0.863 & 0.502 & 0.884 & 0.300 & 0.904 & 0.414 \\
\texttt{CNN-ZS-OH} & 0.791 & 0.439 & 0.819 & 0.578 & 0.826 & 0.388 & 0.853 & 0.520 & 0.863 & 0.326 & 0.887 & 0.449 \\
\midrule
\texttt{LOCK-GP} & 0.498 & 0.643 & 0.530 & 0.852 & 0.689 & 0.505 & 0.712 & 0.704 & 0.848 & 0.332 & 0.862 & 0.496 \\
\texttt{Global CLOCK-GP} & 0.529 & 0.611 & 0.571 & 0.829 & 0.720 & 0.462 & 0.751 & 0.657 & 0.877 & 0.296 & 0.893 & 0.436 \\
\texttt{CLOCK-GP} & 0.755 & 0.446 & 0.795 & 0.610 & 0.826 & 0.362 & 0.857 & 0.508 & 0.907 & 0.254 & 0.924 & 0.367 \\
\texttt{CLOCK-GP+CNN-ZS} & \textbf{0.800} & \textbf{0.407} & \textbf{0.832} & \textbf{0.553} & \textbf{0.852} & \textbf{0.338} & \textbf{0.880} & \textbf{0.469} & \textbf{0.911} & \textbf{0.249} & \textbf{0.927} & \textbf{0.358} \\
\midrule
\texttt{LOCK-GP-LIN} & 0.478 & 0.641 & 0.514 & 0.860 & 0.670 & 0.513 & 0.695 & 0.720 & 0.833 & 0.350 & 0.849 & 0.518 \\
\texttt{Global CLOCK-GP-LIN} & 0.484 & 0.618 & 0.534 & 0.862 & 0.698 & 0.480 & 0.722 & 0.698 & 0.859 & 0.322 & 0.874 & 0.476 \\
\texttt{CLOCK-GP-LIN} & 0.509 & 0.581 & 0.582 & 0.853 & 0.743 & 0.424 & 0.771 & 0.641 & 0.890 & 0.279 & 0.904 & 0.412 \\
\texttt{CLOCK-GP-LIN+CNN-ZS} & 0.778 & 0.435 & 0.805 & 0.595 & 0.837 & 0.360 & 0.862 & 0.502 & 0.899 & 0.269 & 0.916 & 0.387 \\
\bottomrule
\end{tabular}
\end{table*}

%% file: correlation_machine.tex
\begin{figure}[h]
    \begin{pycode}[numbers=none,basicstyle=\ttfamily]%\scriptsize,xleftmargin=0pt,xrightmargin=-50pt]
class CorrelationMachine(nn.Module):
    """Takes structure embeddings and computes residue-wise correlation matrices."""

    def __init__(self, in_dim=128, alphabet_size=21):
        super(CorrelationMachine, self).__init__()
        self.in_dim = in_dim
        self.alphabet_size = alphabet_size
        self.embedding_dim = alphabet_size - 1
        self.layer_norm = nn.LayerNorm(in_dim)
        self.projection = nn.Linear(in_dim, alphabet_size * self.embedding_dim, bias=False)

    def forward(self, embeddings):
        """
        Args:
            embeddings: Structure embeddings (num_landscapes, max_seq_len, in_dim)

        Returns:
            correlation_matrices: (num_landscapes, max_seq_len, alphabet_size, alphabet_size)
        """
        assert embeddings.ndim == 3
        num_landscapes, max_seq_len, _ = embeddings.shape

        zs = self.projection(self.layer_norm(embeddings))
        zs = zs.reshape(num_landscapes, max_seq_len, self.alphabet_size, self.embedding_dim)
        zs = zs / math.sqrt(self.embedding_dim)
        correlation_matrices = torch.exp(-torch.cdist(zs, zs, p=2).pow(2))

        expected_shape = (num_landscapes, max_seq_len, self.alphabet_size, self.alphabet_size)
        assert correlation_matrices.shape == expected_shape 
        return correlation_matrices
\end{pycode}
    \caption{Pytorch implementation of the machinery described in \secref{sec:clock}, which takes positional structure embeddings $\bh_{1:L}(\SSS)$ as 
    input and returns correlation matrices $\bC_{1:L}$.}
\label{code:corrmachine}
\end{figure}

%% file: loss.tex
\begin{figure}[h]
    \begin{pycode}[numbers=none,basicstyle=\ttfamily]%\scriptsize]
def loss(
    seqs: torch.FloatTensor,
    targets: torch.FloatTensor,
    correlation_matrices: torch.FloatTensor,
    signal_to_noise_ratio: torch.FloatTensor, 
    mask: torch.FloatTensor = None,
):
    """
    Compute the loss for a given set of sequences, targets, and correlation matrices.

    Args:
        seqs: One-hot encoded sequences of shape (num_landscapes, batch_size, L, A)
        targets: Target values of shape (num_landscapes, batch_size)
        correlation_matrices: Correlation matrices of shape (num_landscapes, L, A, A)
        signal_to_noise_ratio: Signal-to-noise ratio of shape ()
        mask: Optional mask of shape (num_landscapes, batch_size)

    Returns:
        loss: loss tensor of shape (num_landscapes,)
    """
    num_landscapes, batch_size, L, A = seqs.shape

    assert targets.shape == (num_landscapes, batch_size)
    assert correlation_matrices.shape == (num_landscapes, L, A, A)
    assert seqs.dtype == targets.dtype and seqs.device == targets.device
    assert correlation_matrices.dtype == seqs.dtype 
    assert correlation_matrices.device == seqs.device
    assert signal_to_noise_ratio.shape == ()

    if mask is not None:
        assert mask.shape == (num_landscapes, batch_size) 
        assert mask.dtype == seqs.dtype and mask.device == seqs.device
        seqs = seqs * mask[..., None, None]

    kernels = torch.einsum("nbla,nBlA,nlaA->nbB", seqs, seqs, correlation_matrices)
    kernels = signal_to_noise_ratio * kernels + \
        torch.eye(batch_size, device=kernels.device, dtype=kernels.dtype)

    Ls = torch.linalg.cholesky(kernels, upper=False)
    L_inv_y = torch.linalg.solve_triangular(Ls, targets[..., None], upper=False).squeeze(-1)
    assert L_inv_y.shape == (num_landscapes, batch_size)

    if mask is None:
        num_seqs_per_landscape = batch_size 
        sigma_f_hat_sq = torch.einsum("nb,nb->n", L_inv_y, L_inv_y) / batch_size
    else:
        num_seqs_per_landscape = mask.sum(1)
        dot_products = torch.einsum("nb,nb,nb->n", L_inv_y, L_inv_y, mask) 
        sigma_f_hat_sq = dot_products / num_seqs_per_landscape
    assert sigma_f_hat_sq.shape == (num_landscapes,)

    loss = 0.5 * sigma_f_hat_sq.log() + \
        torch.diagonal(Ls, dim1=-1, dim2=-2).log().sum(-1) / num_seqs_per_landscape
    assert loss.shape == (num_landscapes,)
    return loss
\end{pycode}
\caption{Pytorch implementation of the concentration-likelihood-based training loss used in CLOCK. See \secref{app:clock} for details.}
\label{code:loss}
\end{figure}